\theoremstyle{plain}
\newtheorem{theorem}{Theorem}[section]
\newtheorem{proposition}[theorem]{Proposition}
\theoremstyle{definition}
\theoremstyle{remark}
\definecolor{lightred}{rgb}{0.8, 0.22, 0.29}
\definecolor{turquoise}{rgb}{0.25, 0.89, 0.82}
\definecolor{mediumorchid}{rgb}{0.73, 0.33, 0.83}
\definecolor{darkorange}{rgb}{1.0, 0.65, 0.0}
\definecolor{navy}{rgb}{0.0, 0.0, 0.5}
\definecolor{dodgerblue}{rgb}{0.12, 0.565, 1.0}
\definecolor{crimson}{rgb}{0.86, 0.08, 0.235}
\DeclareRobustCommand{\redGeod}{\raisebox{2pt}{\tikz{\draw[lightred,solid,line width = 1.1pt](0,0) -- (4mm,0);}}}
\DeclareRobustCommand{\turquoisecircle}{\tikz{ \filldraw[color=white, fill=turquoise, thick](0,0) circle (.075);}}
\DeclareRobustCommand{\orchidcircle}{\tikz{ \filldraw[color=white, fill=mediumorchid, thick](0,0) circle (.075);}}
\DeclareRobustCommand{\yellowarrow}{\tikz{\draw[color=darkorange, -{Triangle[width = 4pt, length = 2pt]}, line width = 1.2pt] (0.0, 0.0) -- (.5, 0.0);}}
\DeclareRobustCommand{\crimsonline}{\raisebox{2pt}{\tikz{\draw[crimson,solid,line width = 1.1pt](0,0) -- (4mm,0);}}}
\DeclareRobustCommand{\crimsondashedline}{\raisebox{2pt}{\tikz{\draw[crimson,dashed,line width = 1.1pt](0,0) -- (4mm,0);}}}
\DeclareRobustCommand{\navyline}{\raisebox{2pt}{\tikz{\draw[navy,solid,line width = 1.1pt](0,0) -- (4mm,0);}}}
\DeclareRobustCommand{\navydashedline}{\raisebox{2pt}{\tikz{\draw[navy,dashed,line width = 1.1pt](0,0) -- (4mm,0);}}}
\DeclareRobustCommand{\dodgerblueline}{\raisebox{2pt}{\tikz{\draw[dodgerblue,solid,line width = 1.1pt](0,0) -- (4mm,0);}}}
\DeclareRobustCommand{\dodgerbluedashedline}{\raisebox{2pt}{\tikz{\draw[dodgerblue,dashed,line width = 1.1pt](0,0) -- (4mm,0);}}}
\DeclareRobustCommand{\vertblackline}{\raisebox{-1pt}{\tikz{\draw[black,solid,line width = 1.1pt](0,-1.5mm) -- (0,1mm);}}}
\newcommand{\trsp}{\mathsf{T}}
\newcommand{\ty}[1]{{\scriptscriptstyle{\mathcal{#1}}}}
\DeclareMathOperator*{\argmax}{argmax} % thin space, limits underneath in displays
\DeclareMathOperator{\GP}{GP}
\newcommand{\Rho}{\mathrm{P}}
\DeclareSymbolFont{bbold}{U}{bbold}{m}{n}
\DeclareSymbolFontAlphabet{\mathbbold}{bbold}
\newcommand{\euclideanspace}{\mathbb{R}}
\newcommand{\manifold}{\mathcal{M}}
\newcommand{\tangentspace}[1]{\mathcal{T}_{#1}\mathcal{M}}
\newcommand{\innerprod}[3]{\langle #2, #3 \rangle_{#1}}  % Inner product of #2 and #3 at #1
\newcommand{\norm}[2]{\| #2\|_{#1}}  % Norm of #2 at #1
\newcommand{\expmap}[2]{\text{Exp}_{#1}(#2)}  % Exponential map of #2 at #1
\newcommand{\logmap}[2]{\text{Log}_{#1}(#2)}  % Logarithmic map of #2 at #1
\newcommand{\prltrsp}[3]{\Rho_{#1 \rightarrow #2}\big(#3\big)}  % Parallel transport of #3 from #1 to #2
\newcommand{\hyperbolic}[1]{\mathbb{H}^{#1}} % Notation of hyperbolic manifold of dimensionality #1
\newcommand{\lorentz}[1]{\mathcal{L}^{#1}} % Notation of hyperbolic manifold of dimensionality #1
\newcommand{\tangentspacelorentz}[2]{\mathcal{T}_{#1}\mathcal{L}^{#2}}  % Tangent space for hyperbolic manifold
\newcommand{\poincare}[1]{\mathcal{P}^{#1}} % Notation of Poincaré model of hyperbolic manifold of dimensionality #1
\newcommand{\manifolddist}[2]{d_{\manifold}(#1, #2)}
\newcommand{\hypenormal}[3]{\mathcal{N}_{\lorentz{d}}(#1;#2,#3)} % Hyperbolic Gaussian with sample #1, mean #2 and covariance #3
\newcommand{\hypenormallatent}[3]{\mathcal{N}_{\lorentz{Q}}(#1;#2,#3)} % Hyperbolic Gaussian with sample #1, mean #2 and covariance #3
\newcommand{\hypenormalprior}[2]{\mathcal{N}_{\lorentz{Q}}(#1,#2)} % Hyperbolic prior with mean #1 and covariance #2
\newcommand{\gaussiandist}[3]{\mathcal{N}(#1;#2,#3)} % Gaussian with sample #1, mean #2 and covariance #3
\newcommand{\kl}[2]{\text{KL}\big( #1 || #2\big)}  % KL divergence between #1 and #2
\newcommand{\expectation}[2]{\mathbb{E}_{#1}\left[#2\right]}  % Expectation 
\icmltitlerunning{Bringing Motion Taxonomies to Continuous Domains via GPLVM on Hyperbolic manifolds}
\begin{document}

\twocolumn[
\icmltitle{Bringing Motion Taxonomies to Continuous Domains \\ via GPLVM on Hyperbolic Manifolds}

\begin{icmlauthorlist}
\icmlauthor{No\'emie Jaquier}{kit}
\icmlauthor{Leonel Rozo}{bcai}
\icmlauthor{Miguel Gonz\'alez-Duque}{cop}
\icmlauthor{Viacheslav Borovitskiy}{eth}
\icmlauthor{Tamim Asfour}{kit}
\end{icmlauthorlist}

\icmlaffiliation{kit}{Karlsruhe Institute of Technology}
\icmlaffiliation{bcai}{Bosch Center for Artificial Intelligence}
\icmlaffiliation{cop}{University of Copenhagen}
\icmlaffiliation{eth}{ETH Z\"urich}

\icmlcorrespondingauthor{No\'emie Jaquier}{\href{mailto:noemie.jaquier@kit.edu}{noemie.jaquier@kit.edu}}
\icmlkeywords{GPLVM, hyperbolic manifold, Riemannian geometry, motion taxonomies}

\vskip 0.3in
]

\printAffiliationsAndNotice{}  

\begin{abstract}
Human motion taxonomies serve as high-level hierarchical abstractions that classify how humans move and interact with their environment. 
They have proven useful to analyse grasps, manipulation skills, and whole-body support poses.
Despite substantial efforts devoted to design their hierarchy and underlying categories, their use remains limited.
This may be attributed to the lack of computational models that fill the gap between the discrete hierarchical structure of the taxonomy and the high-dimensional heterogeneous data associated to its categories.
To overcome this problem, we propose to model taxonomy data via hyperbolic embeddings that capture the associated hierarchical structure.
We achieve this by formulating a novel Gaussian process hyperbolic latent variable model that incorporates the taxonomy structure through graph-based priors on the latent space and distance-preserving back constraints.
We validate our model on three different human motion taxonomies to learn hyperbolic embeddings that faithfully preserve the original graph structure.
We show that our model properly encodes unseen data from existing or new taxonomy categories, and outperforms its Euclidean and VAE-based counterparts. 
Finally, through proof-of-concept experiments, we show that our model may be used to generate realistic trajectories between the learned embeddings. 
\vspace{-0.3cm}
\end{abstract}

\section{Introduction}
Robotic systems or virtual characters that exhibit human- or animal-like capabilities are often inspired by biological insights~\citep{SicilianoKhatib16:Handbook}.
In the particular context of motion generation, it is first necessary to understand how humans move and interact with their environment to then generate biologically-inspired motions and behaviors of robotic hands, arms, humanoids, or animated characters.
In this endeavor, researchers have proposed to structure and categorize human hand grasps and body poses into hierarchical classifications known as \emph{taxonomies}. 
Their structure depends on the sensory variables considered to categorize human motions and the interactions with the environment, as well as on associated qualitative measures.

Different taxonomies have been proposed in the area of human and robot grasping~\citep{Cutkosky89:GraspTaxonomy,Feix16:GRASPtaxonomy,Abbasi16:ForceGraspTaxonomy,Stival19:HumanGraspTaxonomy}. 
\citet{Feix16:GRASPtaxonomy} introduced a hand grasp taxonomy whose structure was mainly defined by the hand pose and the type of contact with the object.
As such taxonomy heavily depends on subjective qualitative measures, \citet{Stival19:HumanGraspTaxonomy} proposed a quantitative tree-like hand grasp taxonomy based on muscular and kinematic patterns.
A similar data-driven approach was used to design a grasp taxonomy based on contact forces in~\citep{Abbasi16:ForceGraspTaxonomy}.
\citet{Bullock13:HandTaxonomy} introduced a hand-centric manipulation taxonomy that classifies manipulation skills according to the type of contact with the objects and the object motion imparted by the hand.
A different strategy was developed by~\citet{Paulius19:MotionTaxonomy}, who designed a manipulation taxonomy based on a categorization of contacts and motion trajectories.
Humanoid robotics also made significant efforts to analyze human motions, thus proposing taxonomies as high-level abstractions of human motion configurations.  
For example, \citet{Borras17:WholeBodyTaxonomy} analyzed the contacts between the human limbs and the environment to design a whole-body support pose taxonomy. 

Besides their analytical purpose in biomechanics or robotics, some of the aforementioned taxonomies were employed for modeling grasp actions~\citep{Romero10:SpatioTempGraspsGPLVM,Lin15:GraspPlanning}, for planning contact-aware whole-body pose sequences~\citep{Mandery16:LanguageWholeBody}, and for learning manipulation skills embeddings~\citep{Paulius20:ManipulationTaxonomy}. 
However, despite most of these taxonomies carry a well-defined hierarchical structure, it was often overlooked. 
First, these taxonomies were usually employed for classification tasks where only the tree leaves were used to define target classes, disregarding the full taxonomy structure~\citep{Feix16:GRASPtaxonomy,Abbasi16:ForceGraspTaxonomy}.
Second, the discrete representation of the taxonomy categories hindered their use for motion generation~\citep{Romero10:SpatioTempGraspsGPLVM}.

Arguably the main difficulty of leveraging human motion taxonomies is the lack of computational models that exploit \emph{(i)} the domain knowledge encoded in the hierarchy, and \emph{(ii)} the information of the high-dimensional data associated to the taxonomy categories. 
We tackle this problem from a representation learning perspective by modeling taxonomy data as embeddings that capture the associated hierarchical structure.
Inspired by the pioneer work of~\citet{Krioukov10:HyperbolicComplexNet} on the use of hyperbolic geometry on complex hierarchies, and by recent advances on hierarchical representation learning~\citep{Nickel2017:Poincare,Nickel2018:Lorentz,Mathieu19:HyperbolicVAE,Montanaro22:HyperbolicPointCloud}, we propose to leverage the \emph{hyperbolic manifold}~\citep{Ratcliffe19:HyperbolicManifold} to learn such embeddings. 
An important property of the hyperbolic manifold is that distances grow exponentially when moving away from the origin, and shortest paths between distant points tend to pass through it, resembling a \emph{continuous hierarchical structure}.
Therefore, we hypothesize that the geometry of the hyperbolic manifold allows us to learn embeddings that comply with the hierarchical structure of human motion taxonomies. 

\textbf{In this paper} we propose a Gaussian process hyperbolic latent variable model (GPHLVM) to learn embeddings of taxonomy data on the hyperbolic manifold. 
Our \emph{first contribution} tackles the challenges that arise when imposing a hyperbolic geometry to the latent space of the well-known GPLVM~\citep{Lawrence03:GPLVM,Titsias10:BayesGPVLM}, a model that has been successfully applied in human pose estimation and motion generation~\citep{Lawrence06:BackConstrGPLVM,Urtasun08:TopologicalGPLVM,Gupta08:GPLVMhumanpose,Ding15:GPLVMhumangait,Lalchand22:GPLVMstochasticVI}, and in complex settings such as robotic dressing assistants~\cite{Koganti19:ClothRobotGPLVM}. Specifically, we reformulate the Gaussian distribution, the kernel, and the optimization process of the GPLVM to account for the geometry of the hyperbolic latent space.
To do so, we leverage the hyperbolic wrapped Gaussian distribution~\citep{Nagano19:HyperbolicNormal}, and provide a positive-definite-guaranteed approximation of the hyperbolic kernel proposed by~\citet{McKean70:HyperbolicKernel}.
Moreover, we resort to Riemannian optimization~\citep{Absil07:RiemannOpt,Boumal22:RiemannOpt} to optimize the GPHLVM embeddings.
Our GPHLVM is conceptually similar to the GPLVM for Lie groups~\citep{Jensen20:ManifoldGPLVM}, which also imposes geometric properties to the GPLVM latent space. 
However, our formulation is specifically designed for the hyperbolic manifold and fully built on tools from Riemannian geometry. 
Moreover, unlike~\citep{Tosi14:RiemannianGPLVM} and~\citep{Jorgensen21:RiemannianGPLVM}, where the latent space was endowed with a pullback Riemannian metric learned via the GPLVM mapping, we impose the hyperbolic geometry to the GPHLVM latent space as an inductive bias to comply with the hierarchical structure of taxonomy data.
As a \emph{second contribution}, we introduce mechanisms to enforce the taxonomy structure in the learned embeddings through graph-based priors on the latent space and via graph-distance-preserving back constraints~\citep{Lawrence06:BackConstrGPLVM,Urtasun08:TopologicalGPLVM}.

We validate our approach on three distinct human motion taxonomies: a bimanual manipulation taxonomy~\citep{Krebs22:BimanualTaxonomy}, a hand grasps taxonomy~\citep{Stival19:HumanGraspTaxonomy}, and a whole-body support pose taxonomy~\citep{Borras17:WholeBodyTaxonomy}.
The proposed GPHLVM successfully learns hyperbolic embeddings that comply with the original graph structure of all the considered taxonomies, and it properly encodes unseen poses from existing or new taxonomy nodes. 
Moreover, we show how we can exploit the continuous geometry of the hyperbolic manifold to generate trajectories between different embeddings pairs via geodesic paths in the latent space.
We leverage this to, for example, generate realistic trajectories that are competitive with state-of-the-art character animation, while being trained at low data regimes.
Our results show that GPHLVM consistently outperforms its Euclidean and VAE-based counterparts. The source code and video accompanying the paper are available at \url{https://sites.google.com/view/gphlvm/}.

\begin{figure*}
	\vspace{-0.1cm}
	\centering
	\includegraphics[trim={0.cm 2.0cm 0.cm 1.8cm},clip,width=0.29\textwidth]{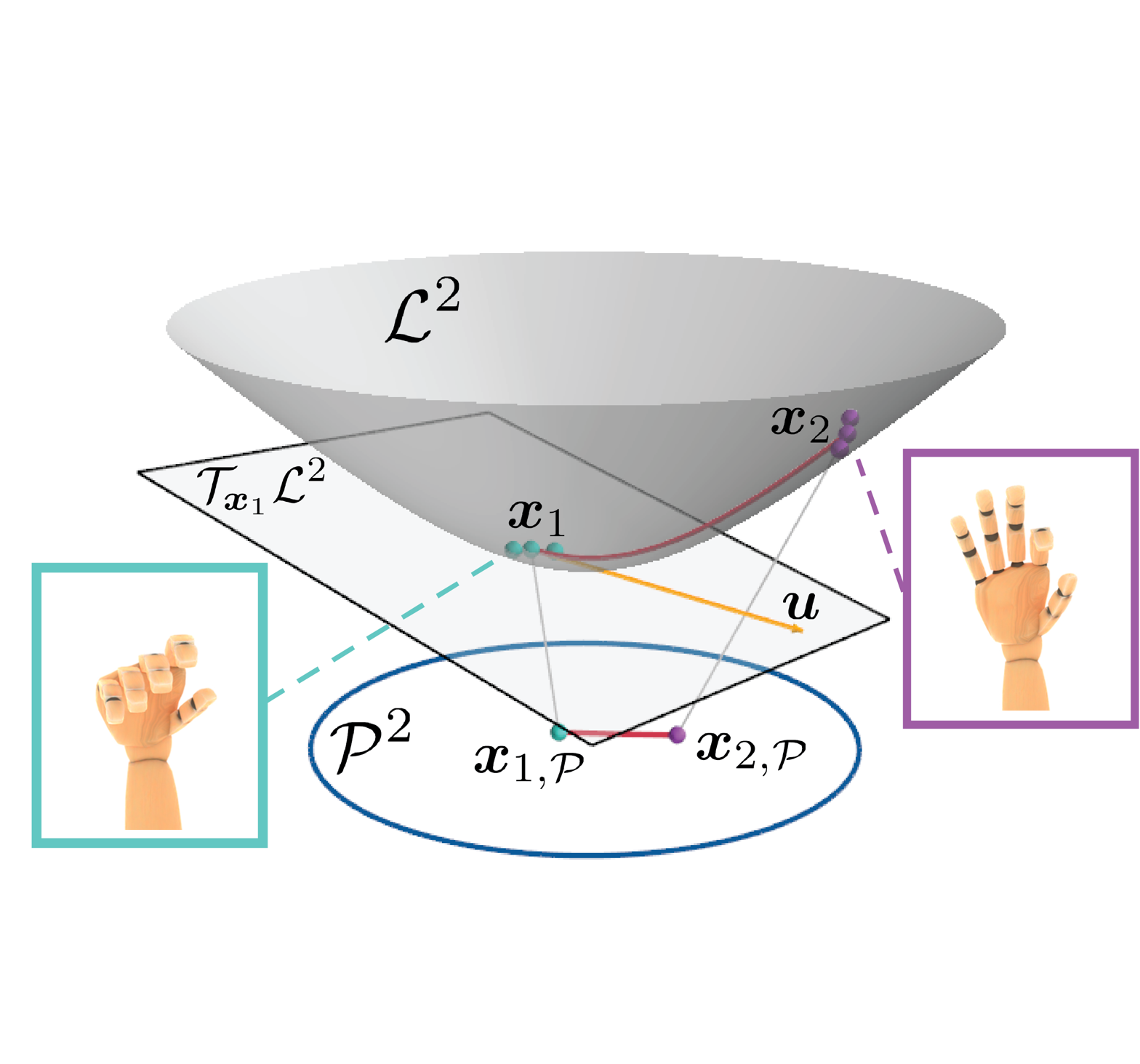}
    \hspace{0.2cm}
	\includegraphics[trim={0.cm 0.0cm 0.cm 0.0cm},clip,width=0.68\textwidth]{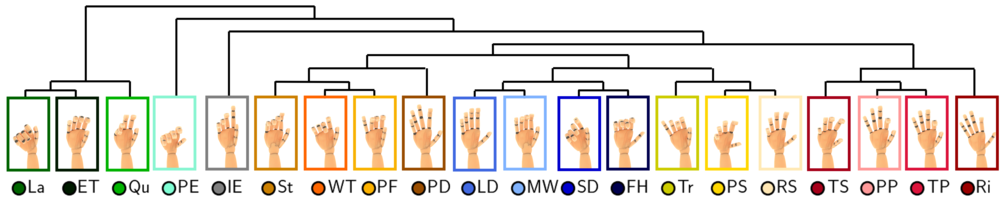}
	\caption{\emph{Left:} Illustration of the Lorentz $\mathcal{L}^2$ and Poincaré $\mathcal{P}^2$ models of the hyperbolic manifold. The former is depicted as the gray hyperboloid, while the latter is represented by the blue circle. Both models show a geodesic (\redGeod) between two points $\bm{x}_1$ (\turquoisecircle) and $\bm{x}_2$ (\orchidcircle). The vector $\bm{u}$ (\yellowarrow) lies on the tangent space of $\bm{x}_1$ such that $\bm{u} = \logmap{\bm{x}_1}{\bm{x}_2}$. \emph{Right:} Hand grasp taxonomy~\citep{Stival19:HumanGraspTaxonomy} used in one of our experiments. Grasp types are organized in a tree structure based on their muscular and kinematic properties. Each leaf node of the tree is a hand grasp type. The lines represent the depth of the leaves, e.g., $\mathsf{PE}$ and $\mathsf{IE}$ are at distance $2$ and $3$ from the root node.}
	\label{fig:HyperbolicAndTaxonomy}
	\vspace{-0.5cm}
\end{figure*}

%===============================================================================
\vspace{-0.3cm}
\section{Background}
\label{sec:background}
\paragraph{Gaussian Process Latent Variable Models:}
A GPLVM defines a generative mapping from latent variables $\{\bm{x}_n\}_{n=1}^N, \bm{x}_n\in\euclideanspace^Q$ to observations $\{\bm{y}_n\}_{n=1}^N, \bm{y}_n\in\euclideanspace^D$ by modeling the corresponding non-linear transformation with Gaussian processes (GPs)~\citep{Lawrence03:GPLVM}.
The GPLVM is described as,
\begin{equation}
\begin{split}
    &y_{n,d} \sim \gaussiandist{y_{n,d}}{f_{n,d}}{\sigma^2_d} \\ & \text{ with }\;\;\; f_{n,d} \sim \GP(m_d (\bm{x}_n), k_{d}(\bm{x}_n,\bm{x}_n)), \\ & \text{ and }\;\;\; \bm{x}_n \sim \mathcal{N}(\bm{0},\bm{I}),
\end{split}
\label{eq:GPLVM}
\end{equation}
where $y_{n,d}$ denotes the $d$-th dimension of the observation $\bm{y}_n$, $m_d(\cdot):\euclideanspace^Q\to \euclideanspace$ and $k_d(\cdot, \cdot):\euclideanspace^Q \times \euclideanspace^Q \to \euclideanspace$ are the GP mean and kernel function, respectively, and $\sigma_d^2$ is a hyperparameter.
Conventionally, the hyperparameters and latent variables of the GPLVM were optimized using \textit{maximum likelihood} or \textit{maximum a posteriori} (MAP) estimates. 
As this does not scale gracefully to large datasets, contemporary methods use inducing points and variational approximations of the evidence~\citep{Titsias10:BayesGPVLM}. 
In contrast to neural-network-based generative models, GPLVMs are data efficient and provide automatic uncertainty quantification.

\vspace{-0.35cm}
\paragraph{Riemannian geometry:}
To understand the hyperbolic manifold, it is necessary to first define some basic Riemannian geometry concepts~\citep{Lee18:RiemannManifold}. 
To begin with, consider a Riemannian manifold $\manifold$, which is a locally Euclidean topological space with a globally-defined differential structure. 
For each point $\bm{x}\!\in\!\manifold$, there exists a tangent space $\tangentspace{\bm{x}}$ that is a vector space consisting of the tangent vectors of all the possible smooth curves passing through $\bm{x}$. 
A Riemannian manifold is equipped with a Riemannian metric, which permits to define curve lengths in $\manifold$. 
Shortest-path curves, called geodesics, can be seen as the generalization of straight lines on the Euclidean space to Riemannian manifolds, as they are minimum-length curves between two points in $\manifold$.
To operate with Riemannian manifolds, it is common practice to exploit the Euclidean tangent spaces. 
To do so, we resort to mappings back and forth between $\tangentspace{\bm{x}}$ and $\manifold$, which are the exponential and logarithmic maps.
The exponential map $\expmap{\bm{x}}{\bm{u}}: \tangentspace{\bm{x}} \to \manifold$ maps a point $\bm{u}$ in the tangent space of $\bm{x}$ to a point $\bm{y}$ on the manifold, so that it lies on the geodesic starting at $\bm{x}$ in the direction $\bm{u}$, and such that the geodesic distance $d_{\manifold}$ between $\bm{x}$ and $\bm{y}$ equals the distance between $\bm{x}$ and $\bm{u}$. 
The inverse operation is the logarithmic map $\logmap{\bm{x}}{\bm{u}}: \manifold \to \tangentspace{\bm{x}}$.
Finally, the parallel transport $\prltrsp{\bm{x}}{\bm{y}}{\bm{u}}: \tangentspace{\bm{x}}\to\tangentspace{\bm{y}}$ operates with manifold elements lying on different tangent spaces.

\vspace{-0.35cm}
\paragraph{Hyperbolic manifold:}
The hyperbolic space $\hyperbolic{d}$ is the unique simply-connected complete $d$-dimensional Riemannian manifold with a constant negative sectional curvature~\citep{Ratcliffe19:HyperbolicManifold}.
There are several isometric models for the hyperbolic space, in particular, the Poincar\'{e} ball $\poincare{d}$ and the Lorentz (hyperboloid) model $\lorentz{d}$ (see Fig.~\ref{fig:HyperbolicAndTaxonomy}-\emph{left}).
The latter representation is chosen here as it is numerically more stable than the former, and thus better suited for Riemannian optimization (see App.~\ref{app:hyperbolic-operations} for the principal Riemannian operations and their illustration on the Lorentz model). 
However, the Poincar\'{e} model provides a more intuitive representation and is here used for visualization. 
This is easily achieved by leveraging the isometric mapping between both models (see App.~\ref{app:hyperbolic-isometry} for details).
An important property of the hyperbolic manifold is the exponential rate of the volume growth of a ball with respect to its radius.
In other words, distances in $\hyperbolic{d}$ grow exponentially when moving away from the origin, and shortest paths between distant points on the manifold tend to pass through the origin, resembling a continuous hierarchical structure.
Because of this, the hyperbolic manifold is often exploited to embed hierarchical data such as trees or graphs~\citep{Nickel2017:Poincare,Chami2020:TreesHyperbolic}.
Although its potential to embed discrete data structures into a continuous space is well known in the machine learning community, its application in motion analysis and generation is presently scarce.

\vspace{-0.35cm}
\paragraph{Hyperbolic wrapped Gaussian distribution:}
Probabilistic models on Riemannian manifolds demand to work with probability distributions that consider the manifold geometry.
We use the hyperbolic wrapped distribution~\citep{Nagano19:HyperbolicNormal}, which builds on a Gaussian distribution on the tangent space at the origin $\bm{\mu}_0=(1,0,\ldots,0)^\trsp$ of $\lorentz{d}$, that is then projected onto the hyperbolic space after transporting the tangent space to the desired location.
Intuitively, the construction of this wrapped distribution is as follows (see also Fig.~\ref{fig:appendix:hyperbolic-wrapped}): \emph{(1)} sample a point $\tilde{\bm{v}} \in \euclideanspace^d$ from the Euclidean normal distribution $\mathcal{N}(\bm{0}, \bm{\Sigma})$, \emph{(2)} transform $\tilde{\bm{v}}$ to an element of $\tangentspacelorentz{\bm{\mu}_0}{d} \subset \euclideanspace^{d+1}$ by setting $\bm{v} = (0, \tilde{\bm{v}})^\trsp$, \emph{(3)} apply the parallel transport $\bm{u} = \prltrsp{\bm{\mu}_0}{\bm{\mu}}{\bm{v}}$, and \emph{(4)} project $\bm{u}$ to $\lorentz{d}$ via $\expmap{\bm{\mu}}{\bm{u}}$.  
The resulting probability density function is,
\begin{align}
\log \hypenormal{\bm{x}}{\bm{\mu}}{\bm{\Sigma}} = &\log \gaussiandist{\bm{v}}{\bm{0}}{\bm{\Sigma}} \label{eq:WrappedHypeNormal} \\ &- (d-1) \log \left( \sinh(\norm{\ty{\mathcal{L}}}{\bm{u}}) / \norm{\ty{\mathcal{L}}}{\bm{u}} \right) ,
\nonumber
\end{align}
with $\bm{v} = \prltrsp{\bm{\mu}}{\bm{\mu}_0}{\bm{u}}$, $\bm{u} = \logmap{\bm{\mu}}{\bm{x}}$, and \mbox{$\norm{\ty{\mathcal{L}}}{\bm{u}}^2 = \innerprod{\bm{\mu}}{\bm{u}}{\bm{u}}$}.
The hyperbolic wrapped distribution~\citep{Nagano19:HyperbolicNormal} has a more general expression given in~\citep{Skopek2020:MixedCurvatureVAE}.

%===============================================================================
\vspace{-0.2cm}
\section{The proposed GPHLVM}
\label{sec:GPHLVM}
\vspace{-0.1cm}
We present the GPHLVM, that extends GPLVM to hyperbolic latent spaces.  
A GPHLVM defines a generative mapping from the hyperbolic latent space $\lorentz{Q}$ to the observation space, e.g., the data associated to the taxonomy, based on GPs. 
By considering independent GPs across the observation dimensions, the GPHLVM is formally described as,
\begin{equation}
\begin{split}
    & y_{n,d} \sim \gaussiandist{y_{n,d}}{f_{n,d}}{\sigma^2_d} \\ & \text{ with }\;\; f_{n,d} \sim \GP(m_d (\bm{x}_n), k_{d}^{\lorentz{Q}}(\bm{x}_n,\bm{x}_n)) \\ & \text{and}\;\; \bm{x}_n \sim \hypenormalprior{\bm{\mu}_0}{\alpha \bm{I}},
\end{split}
\label{eq:GPHLVM}
\end{equation}
where $y_{n,d}$ denotes the $d$-th dimension of the observation $\bm{y}_n\in\mathbb{R}^D$ and $\bm{x}_n\in\lorentz{Q}$ is the corresponding latent variable. 
Our GPHLVM is built on hyperbolic GPs, characterized by a mean function $m_d(\cdot):\lorentz{Q} \to \euclideanspace$ (usually set to $0$), and a kernel $k_d^{\lorentz{Q}}(\cdot, \cdot):\lorentz{Q} \times \lorentz{Q} \to \euclideanspace$. 
These kernels encode similarity information in the latent hyperbolic manifold and should reflect its geometry to perform effectively, as detailed in \S.~\ref{subsec:HyperbolicKernels}. 
Also, the latent variable $\bm{x}\in\lorentz{Q}$ is assigned a hyperbolic wrapped Gaussian prior $\hypenormalprior{\bm{\mu}_0}{\alpha \bm{I}}$ based on~\eqref{eq:WrappedHypeNormal}, where $\bm{\mu}_0$ is the origin of $\lorentz{Q}$, and the parameter $\alpha$ controls the spread of the latent variables in $\lorentz{Q}$. 
As Euclidean GPLVMs, our GPHLVM can be trained by finding a MAP estimate or via variational inference. However, special care must be taken to guarantee that the latent variables belong to the hyperbolic manifold, as explained in \S.~\ref{subsec:trainingGPHLVM}.
\vspace{-0.2cm}

\subsection{Hyperbolic kernels}
\label{subsec:HyperbolicKernels}
\vspace{-0.2cm}
For GPs in Euclidean spaces, the squared exponential (SE) and Mat\'ern kernels are standard choices~\citep{Rasmussen06:GPML}.
In the modern machine learning literature these were generalized to non-Euclidean spaces such as manifolds~\citep{Borovitskiy20:GPManifolds,Jaquier21:GaBOMatern} or graphs~\citep{Borovitskiy21:GPGraph}. 
The generalized SE kernels can be connected to the much studied \emph{heat kernels}. 
These are given (cf.~\citet{GrigoryanNoguchi98:HyperbolicHeatKernel}) by,
\begin{align}
\label{eq:hyp_heat}
\!\!\!\!\!k^{\lorentz{2}}(\bm{x}, \bm{x}')
&=
\frac{\sigma^2}{C_{\infty}}
\int_{\rho}^{\infty}
\frac{s e^{-s^2/(2 \kappa^2)}}{(\cosh(s) - \cosh(\rho))^{1/2}} \mathrm{d} s,
\\
k^{\lorentz{3}}(\bm{x}, \bm{x}')
&=
\frac{\sigma^2}{C_{\infty}}
\frac{\rho}{\sinh{\rho}} e^{-\rho^2/(2 \kappa^2)},
\end{align}
where $\rho = \operatorname{dist}_{\lorentz{d}}(\bm{x}, \bm{x}')$ is the geodesic distance between $\bm{x}, \bm{x}' \in \lorentz{d}$,  $\kappa$ and $\sigma^2$ are the kernel lengthscale and variance, and $C_{\infty}$ is a normalizing constant. 
To the best of our knowledge, no closed form expression for $\lorentz{2}$ is known.
In this case, the kernel is approximated via a discretization of the integral. 
One appealing option is the Monte Carlo approximation based on the truncated Gaussian density.
Unfortunately, such approximations easily fail to be positive semidefinite if the number of samples is not very large.
We address this via an alternative Monte Carlo approximation, 
\begin{equation} 
\label{eq:heat_monte_carlo}
k^{\lorentz{2}}(\bm{x}, \bm{x}')
\approx
\frac{\sigma^2}{C_{\infty}'}
\frac{1}{L} \sum_{l=1}^L s_l \tanh(\pi s_l)  \; w \; \overline{w} ,
\end{equation}
where $\langle \bm{x}_{\mathcal{P}}, \bm{b} \rangle = \frac{1}{2}\log\frac{1-|\bm{x}_{\mathcal{P}}|^2}{|\bm{x}_{\mathcal{P}}-\bm{b}|^2}$ is the hyperbolic outer product with $\bm{x}_{\mathcal{P}}$ being the representation of $\bm{x}$ as a point on the Poincar\'e disk $\mathcal{P}^2 = \mathbb{D}$, $w = e^{(2 s_l i + 1) \langle \bm{x}_{\mathcal{P}}, \bm{b}_l \rangle}$ with $i$, $\overline{z}$ denoting the imaginary unit and complex conjugation, respectively,
${\bm{b}_l \stackrel{\text{i.i.d.}}{\sim} U(\mathbb{T})}$ with $\mathbb{T}$ the unit circle, and $s_l \stackrel{\text{i.i.d.}}{\sim} e^{- s^2 \kappa^2 / 2} \mathbbold{1}_{[0, \infty)}(s)$.
The distributions of $\bm{b}_l$ and $s_l$ are easy to sample from: The former is sampled by applying $x \to e^{2 \pi i x}$ to $x \sim U([0, 1])$ and the latter is (proportional to) a truncated normal distribution.
Importantly, the right-hand side of \eqref{eq:heat_monte_carlo} is easily recognized to be an inner product in the space $\mathbb{C}^L$, which implies its positive semidefiniteness. Notice that we leverage the isometry between Lorentz and Poincaré models (see App.~\ref{app:hyperbolic}) for computing the kernel~\eqref{eq:heat_monte_carlo} (see App.~\ref{app:kernels} for details on~\eqref{eq:heat_monte_carlo}). Note that hyperbolic kernels for $\lorentz{Q}$ with $Q>3$ are generally defined as integrals of the kernels~\eqref{eq:hyp_heat}~\citep{GrigoryanNoguchi98:HyperbolicHeatKernel}. Analogs of Mat\'ern kernels for $\lorentz{Q}$ are obtained as integral of the SE kernel of the same dimension~\citep{Jaquier21:GaBOMatern}.

\vspace{-0.25cm}
\subsection{Model training}
\label{subsec:trainingGPHLVM}
\vspace{-0.2cm}
As in the Euclidean case, training the GPHLVM is equivalent to finding optimal latent variables $\bm{\mathcal{X}} = \{\bm{x}_n\}_{n=1}^N$ and hyperparameters $\bm{\Theta}=\{\theta_d\}_{d=1}^D$ by solving $\argmax_{\bm{\mathcal{X}},\bm{\Theta} } \ell$, with $\bm{x}_n\in\lorentz{Q}$, $\theta_d$ being the hyperparameters of the $d$-th GP, and $\ell$ as a loss function. We introduce a GPHLVM trained via MAP estimation for small datasets and a variational GPHLVM that handles larger datasets, providing users with the most appropriate tool for their needs. Note that recent extensions of GPLVM~\citep{Lalchand22:scalableGPLVM,Lalchand22:GPLVMstochasticVI} scale to massively large datasets via stochastic variational inference. 
For small datasets, the GPHLVM can be trained by maximizing the log posterior, i.e., $\ell_{\text{MAP}} = \log \big( p(\bm{Y} | \bm{X}) p(\bm{X}) \big)$ with $\bm{Y}=(\bm{y}_1 \ldots \bm{y}_N)^\trsp$ and $\bm{X}=(\bm{x}_1 \ldots \bm{x}_N)^\trsp$. For large datasets, the GPHLVM can be trained, similarly to the so-called Bayesian GPLVM~\citep{Titsias10:BayesGPVLM}, by maximizing the marginal likelihood of the data, i.e., $\ell_{\text{VA}} =\log p(\bm{Y}) = \log \int p(\bm{Y} | \bm{X}) p(\bm{X}) d\bm{X}$.
As this quantity is intractable, it is approximated via variational inference by adapting the methodology of~\citet{Titsias10:BayesGPVLM} to hyperbolic latent spaces, as explained next. Corresponding algorithms are provided in App.~\ref{app:Algorithms}.
\vspace{-0.2cm}

\paragraph{Variational inference:} We approximate the posterior $ p(\bm{X} | \bm{Y})$ by a variational distribution defined as a hyperbolic wrapped normal distribution over the latent variables, 
\begin{equation}
q_\phi(\bm{X}) = \prod_{n=1}^N \hypenormallatent{\bm{x}_n}{\bm{\mu}_n}{\bm{\Sigma_n}},
\label{eq:hyperbolic_variational_distribution}
\end{equation}
with variational parameters $\phi = \{\bm{\mu}_n, \bm{\Sigma}_n\}_{n=1}^N$, with $\bm{\mu}_n\in\lorentz{Q}$ and $\bm{\Sigma}_n\in\tangentspacelorentz{\bm{\mu}_n}{Q}$. 
Similarly to the Euclidean case~\citep{Titsias10:BayesGPVLM}, this variational distribution allows the formulation of a lower bound, 
\begin{equation}
\log p(\bm{Y}) \geq 
\expectation{q_\phi(\bm{X})}{\log p(\bm{Y} | \bm{X})} - \kl{q_\phi(\bm{X})}{p(\bm{X})}.
\label{eq:hyperbolicVariational}
\end{equation}
The KL divergence $\kl{q_\phi(\bm{X})}{p(\bm{X})}$ between two hyperbolic wrapped normal distributions can easily be evaluated via Monte-Carlo sampling (see App.~\ref{app:KLdivergenceWrappedGP} for details).
Moreover, the expectation $\expectation{q_\phi(\bm{X})}{\log p(\bm{Y} | \bm{X})}$ can be decomposed into individual terms for each observation dimension as $\sum_{d=1}^D \expectation{q_\phi(\bm{X})}{\log p(\bm{y}_d | \bm{X})}$, where $\bm{y}_d$ is the $d$-th column of $\bm{Y}$.
For large datasets, each term can be evaluated via a variational sparse GP approximation~\citep{Titsias09:SparseGP,Hensman15:ScalableVariationalGPs}. To do so, we introduce $M$ inducing inputs $\{\bm{z}_{d,m}\}_{m=1}^M$ with $\bm{z}_{d,m}\in\lorentz{Q}$ for each observation dimension $d$, whose corresponding inducing variables $\{u_{d,m}\}_{m=1}^M$ are defined as noiseless observations of the GP in~\eqref{eq:GPHLVM}, i.e, $u_d \sim \GP(m_d (\bm{z}_d), k_{d}^{\lorentz{Q}}(\bm{z}_d,\bm{z}_d))$. Similar to~\citep{Hensman15:ScalableVariationalGPs}, we can write,
\begin{equation}
\begin{split}
\log p(\bm{y}_d | \bm{X}) \geq &\expectation{q_\lambda(\bm{f}_{d})}{\log \gaussiandist{\bm{y}_{d}}{\bm{f}_{d}(\bm{X})}{\sigma^2_d}} \\ & - \kl{q_\lambda(\bm{u}_d)}{p(\bm{u}_d | \bm{Z}_d)} ,
\end{split}
\label{eq:ELBOsparse}
\end{equation}
where we defined $q_\lambda(\bm{f}_{d}) = \int p(\bm{f}_d|\bm{u}_d) q_\lambda(\bm{u}_{d})d\bm{u}_{d}$ with the variational distribution $q_\lambda(\bm{u}_{d})=\gaussiandist{\bm{u}_{d}}{\tilde{\bm{\mu}}_{d}}{\tilde{\bm{\Sigma}}_{d}}$, and variational parameters $\lambda = \{\tilde{\bm{\mu}}_d, \tilde{\bm{\Sigma}}_d\}_{d=1}^D$. Remember that the inducing variables $u_{d,m}$ are Euclidean, i.e., the variational distribution $q_\lambda(\bm{u}_{d})$ is a Euclidean Gaussian and the KL divergence in~\eqref{eq:ELBOsparse} has a closed-form solution.
In this case, the training parameters of the GPHLVM are the set of inducing inputs $\{\bm{z}_{d,m}\}_{m=1}^M$, the variational parameters $\phi$ and $\lambda$, and the hyperparameters $\bm{\Theta}$ (see App.~\ref{app:GPHLVMvariational} for the derivation of the GPHLVM variational inference process).
\vspace{-0.3cm}

\paragraph{Optimization:}
As several training parameters of the GPHLVM belong to $\lorentz{Q}$, i.e., the latent variables $\bm{x}_n$ for the MAP estimation, or the inducing inputs $\bm{z}_{d,m}$ and means $\bm{\mu}_n$ for variational inference, we need to account for their hyperbolic geometry during optimization.
To do so, we leverage Riemannian optimization methods~\citep{Absil07:RiemannOpt,Boumal22:RiemannOpt} to train the GPHLVM. 
Each step of first order (stochastic) Riemannian optimization methods is generally of the form,
\begin{equation}
\begin{split}
&\bm{\eta}_t \gets h\big(\text{grad}\: \ell(\bm{x}_t), \bm{\tau}_{t-1}\big), \\
&\bm{x}_{t+1} \gets \expmap{\bm{x}_t}{-\alpha_t \bm{\eta}_t}, \\
&\bm{\tau}_t \gets \prltrsp{\bm{x}_t}{\bm{x}_{t+1}}{\bm{\eta}_t}.
\end{split}
\label{eq:RiemannianGD}
\end{equation}
The update $\bm{\eta}_t\in\tangentspace{\bm{x}_t}$ is first computed as a function $h$ of the Riemannian gradient $\text{grad}$ of the loss $\ell(\bm{x}_t)$ and of $\bm{\tau}_{t-1}$, the previous update that is parallel-transported to the tangent space of the new estimate $\bm{x}_t$. The estimate $\bm{x}_t$ is then updated by projecting the update $\bm{\eta}_t$ scaled by a learning rate $\alpha_t$ onto the manifold using the exponential map. The function $h$ is equivalent to computing the update of the Euclidean algorithm, e.g., $\bm{\eta}_t \gets \text{grad}\: \ell(\bm{x}_t)$ for a simple gradient descent. Notice that~\eqref{eq:RiemannianGD} is applied on a product of manifolds when optimizing several parameters.
In this paper, we used the Riemannian Adam~\citep{Becigneul19:RiemannianAdaptiveOpt} implemented in Geoopt~\citep{Kochurov20:geoopt} to optimize the GPHLVM parameters.
\vspace{-0.2cm}

\section{Incorporating Taxonomy Knowledge} 
\label{sec:taxonomyKnowledge}
\vspace{-0.15cm}
While we are now able to learn hyperbolic embeddings of the data associated to a taxonomy using our GPHLVM, they do not necessarily follow the taxonomy graph structure. 
In other words, the manifold distances between pairs of embeddings do not necessarily match the graph distances.
To overcome this, we introduce graph-distance information as inductive bias to learn the embeddings.  
To do so, we leverage two well-known techniques in the GPLVM literature: priors on the embeddings and back constraints~\citep{Lawrence06:BackConstrGPLVM,Urtasun08:TopologicalGPLVM}.
Both are reformulated to preserve the taxonomy graph structure in the hyperbolic latent space as a function of the node-to-node shortest paths.

\vspace{-0.3cm}
\paragraph{Graph-distance priors:}
As shown by~\citet{Urtasun08:TopologicalGPLVM}, the structure of the latent space can be modified by adding priors of the form $p(\bm{X})\propto e^{-\phi(\bm{X})/\sigma^2_{\phi}}$ to the GPLVM, where $\phi(\bm{X})$ is a function that we aim at minimizing. 
Incorporating such a prior may also be alternatively understood as augmenting the GPLVM loss $\ell$ with a regularization term $-\phi(\bm{X})$.
Therefore, we propose to augment the loss of the GPHLVM with a distance-preserving graph-based regularizer. 
Several such losses have been proposed in the literature, see~\citep{Cruceru21:GraphEmbeddings} for a review. Specifically, we define $\phi(\bm{X})$ as the stress loss,
\begin{equation}
\ell_{\text{stress}}(\bm{X}) = \sum_{i<j}
\big( \operatorname{dist}_{\mathbb{G}}(c_i, c_j) -\operatorname{dist}_{\lorentz{Q}}(\bm{x}_i, \bm{x}_j) \big)^2 ,
\label{eq:stressLoss}
\end{equation}
where $c_i$ denotes the taxonomy node to which the observation $\bm{y}_i$ belongs, and $\operatorname{dist}_{\mathbb{G}}$, $\operatorname{dist}_{\lorentz{Q}}$ are the taxonomy graph distance and the geodesic distance on $\lorentz{Q}$, respectively. 
The loss~\eqref{eq:stressLoss} encourages the preservation of all distances of the taxonomy graph in $\lorentz{Q}$. 
It therefore acts \emph{globally}, thus allowing the complete taxonomy structure to be reflected by the GPHLVM.
Notice that~\citet{Cruceru21:GraphEmbeddings} also survey a distortion loss that encourages the distance of the embeddings to match the graph distance by considering their ratio.
However, this distortion loss is only properly defined when the embeddings $\bm{x}_i$ and $\bm{x}_j$ correspond to different classes $c_i \neq c_j$.
Interestingly, our empirical results using this loss were lackluster and numerically unstable (see App.~\ref{app:distortion}).

\vspace{-0.3cm}
\paragraph{Back-constraints:}
The back-constrained GPLVM \citep{Lawrence06:BackConstrGPLVM}  defines the latent variables as a function of the observations, i.e., $x_{n,q} = g_q(\bm{y}_1 \ldots, \bm{y}_n; \bm{w}_q)$ with parameters $\{\bm{w}_q\}_{q=1}^Q$. This allows us to incorporate new observations in the latent space after training, while preserving local similarities between observations in the embeddings. To incorporate graph-distance information into the GPHLVM and ensure that latent variables lie on the hyperbolic manifold, we propose the back-constraints mapping,
\begin{equation}
\begin{split}
    &\bm{x}_{n} = \expmap{\bm{\mu}_0}{\tilde{\bm{x}}_n} \\ & \text{ with } \;\;\;\; \tilde{x}_{n,q} = \sum_{m=1}^N w_{q,m} k^{\mathbb{R}^D}(\bm{y}_n,\bm{y}_m) k^{\mathbb{G}}(c_n, c_m).
\end{split}
\label{eq:backconstraints}
\end{equation}
The mapping~\eqref{eq:backconstraints} not only expresses the similarities between data in the observation space via the kernel $k^{\mathbb{R}^J}$, but encodes the relationships between data belonging to nearby taxonomy nodes via $k^{\mathbb{G}}$. 
In other words, similar observations associated to the same (or near) taxonomy nodes will be close to each other in the resulting latent space. 
The kernel $k^{\mathbb{G}}$ is a Mat\'ern kernel on the taxonomy graph following the formulation introduced in~\citep{Borovitskiy21:GPGraph}, which accounts for the graph geometry (see also App.~\ref{app:graphKernels}). 
We also use a Euclidean SE kernel for $k^{\mathbb{R}^D}$. 
Notice that the back constraints only incorporate \emph{local} information into the latent embedding. 
Therefore, to preserve the \emph{global} graph structure, we pair the proposed back-constrained GPHLVM with the stress prior~\eqref{eq:stressLoss}.
Note that both kernels are required in~\eqref{eq:backconstraints}: By defining the mapping as a function of the graph kernel only, the observations of each taxonomy node would be encoded by a single latent point. 
When using the observation kernel only, dissimilar observations of the same taxonomy node would be distant in the latent space, despite the additional stress prior, as $k^{\mathbb{R}^D}(\bm{y}_n,\bm{y}_m) \approx 0$. 
\vspace{-0.2cm}
%===============================================================================

\section{Experiments}
\label{sec:experiments}
\vspace{-0.1cm}
\begin{figure*}
	\centering
	\includegraphics[trim={5.8cm 2.2cm 4.3cm 2.2cm},clip,width=0.8\textwidth]{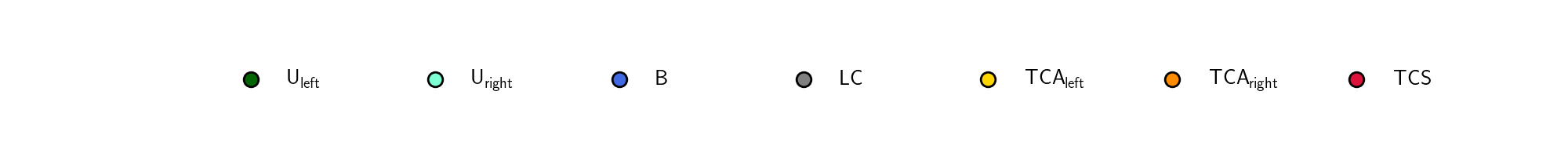}
	\begin{subfigure}[b]{0.15\textwidth}
		\centering
		\includegraphics[trim={2.5cm 2.5cm 2.5cm 2.5cm},clip,width=\textwidth]{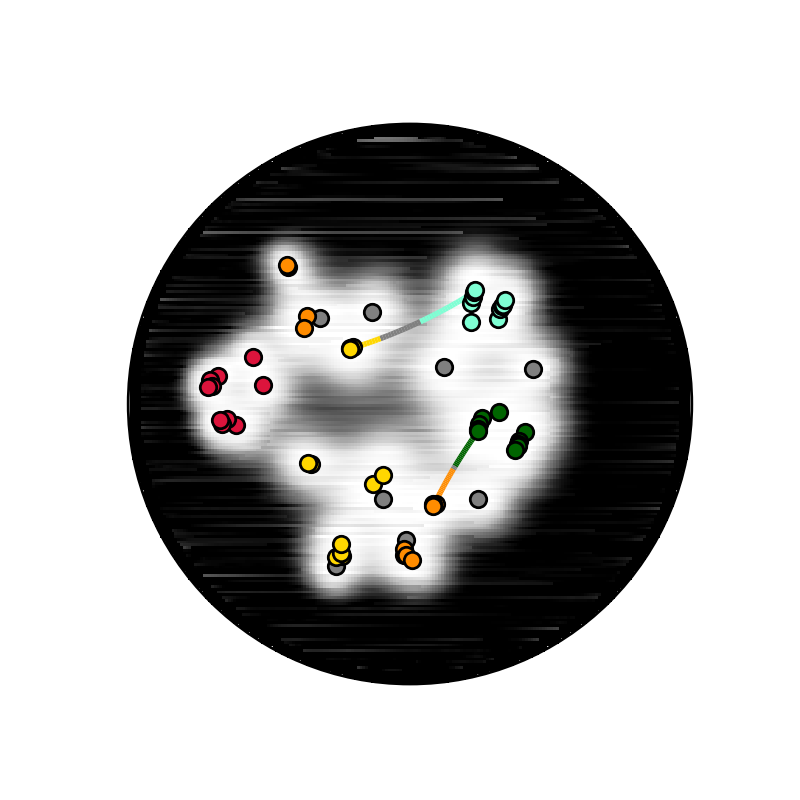}
		\includegraphics[trim={2.0cm 2.0cm 0.5cm 2.0cm},clip,width=0.9\textwidth]{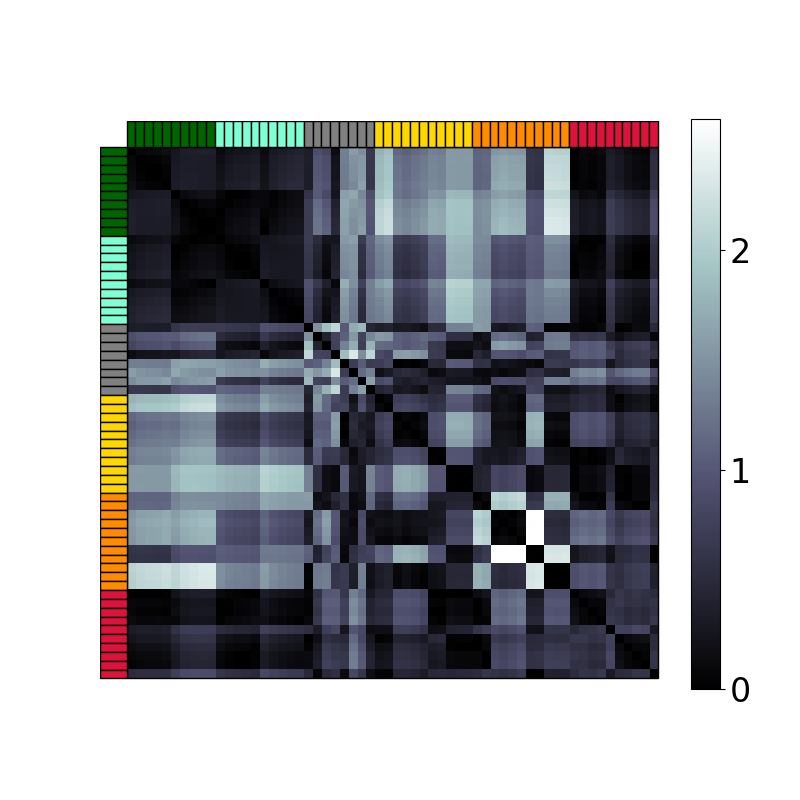}
		\includegraphics[trim={2.0cm 2.0cm 2.0cm 2.0cm},clip,width=0.9\textwidth]{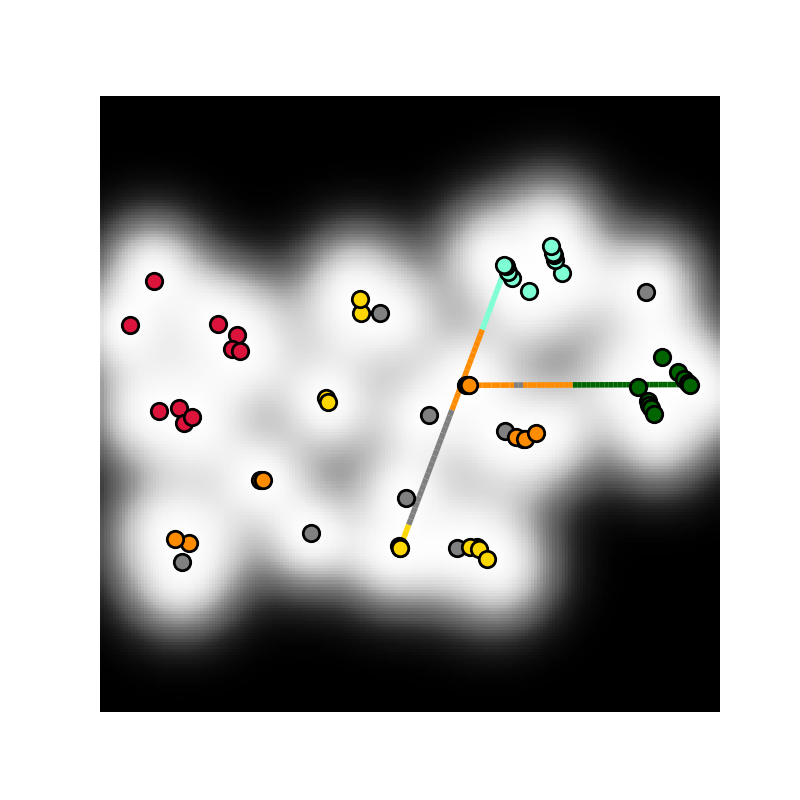}
		\includegraphics[trim={2.0cm 2.0cm 0.5cm 2.0cm},clip,width=0.9\textwidth]{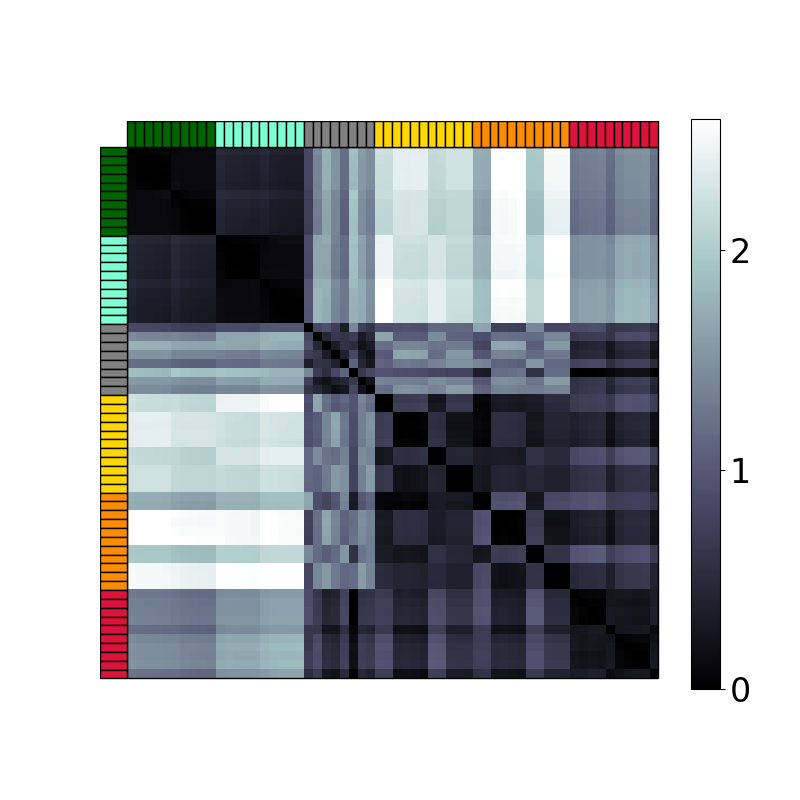}
		\caption{Vanilla}
		\label{fig:GPHLVM-bimanual:vanilla}
	\end{subfigure}%
	\begin{subfigure}[b]{0.15\textwidth}
		\centering
		\includegraphics[trim={2.5cm 2.5cm 2.5cm 2.5cm},clip,width=\textwidth]{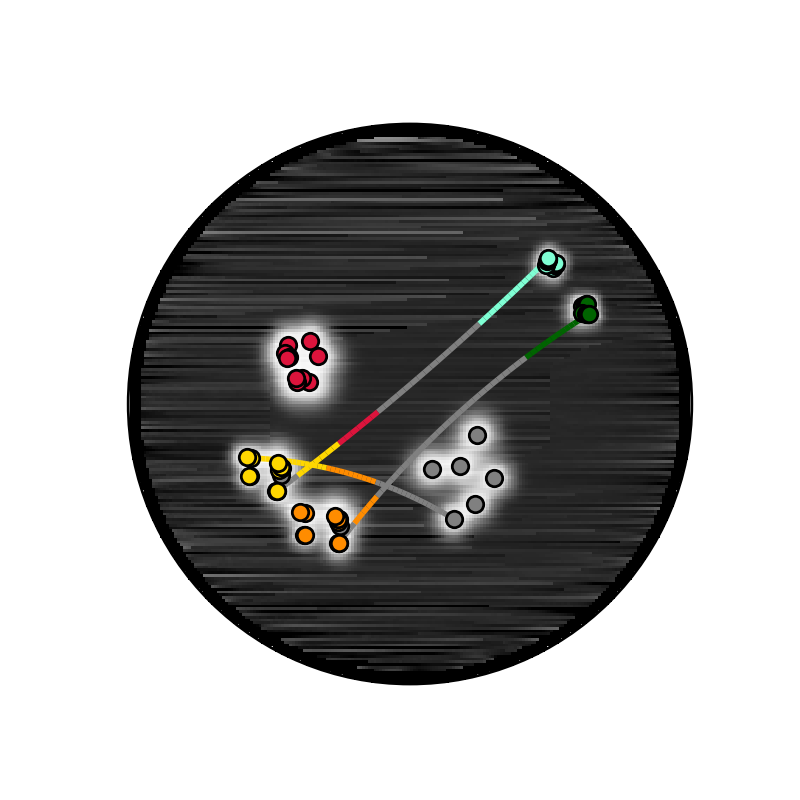}
		\includegraphics[trim={2.0cm 2.0cm 0.5cm 2.0cm},clip,width=0.9\textwidth]{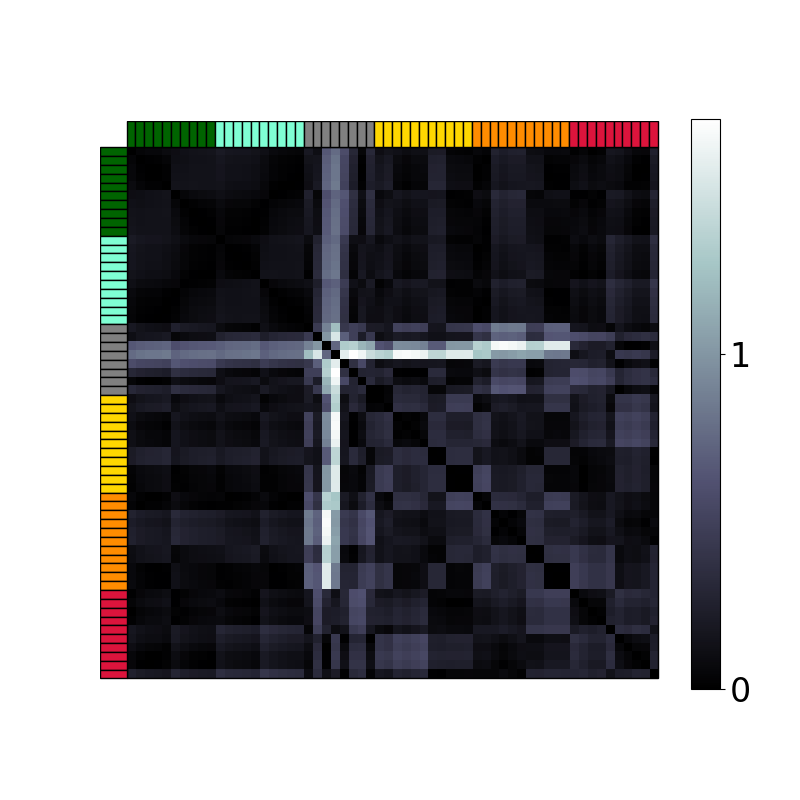}
		\includegraphics[trim={2.0cm 2.0cm 2.0cm 2.0cm},clip,width=0.9\textwidth]{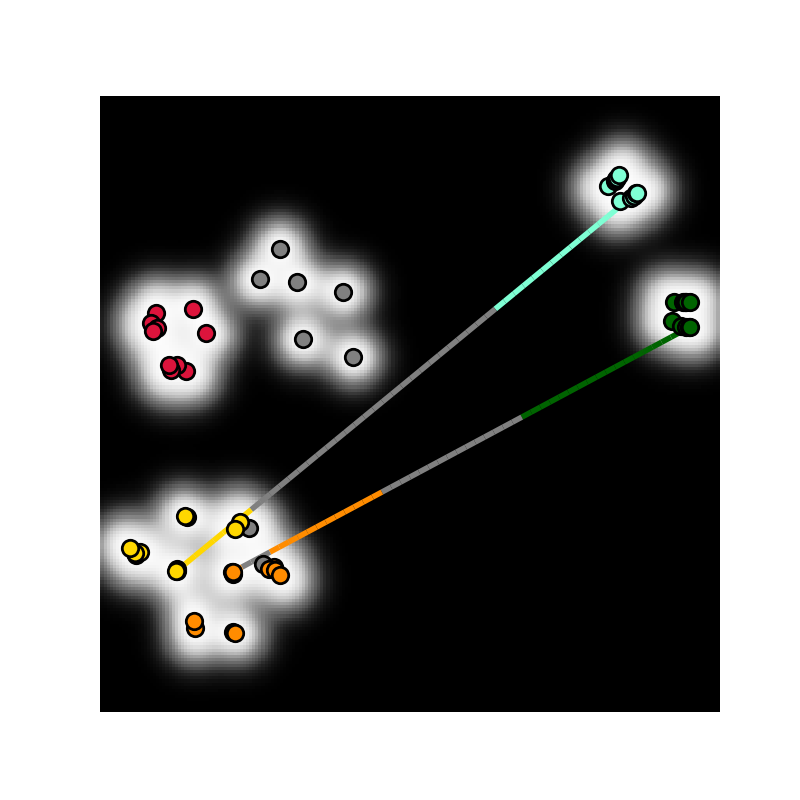}
		\includegraphics[trim={2.0cm 2.0cm 0.5cm 2.0cm},clip,width=0.9\textwidth]{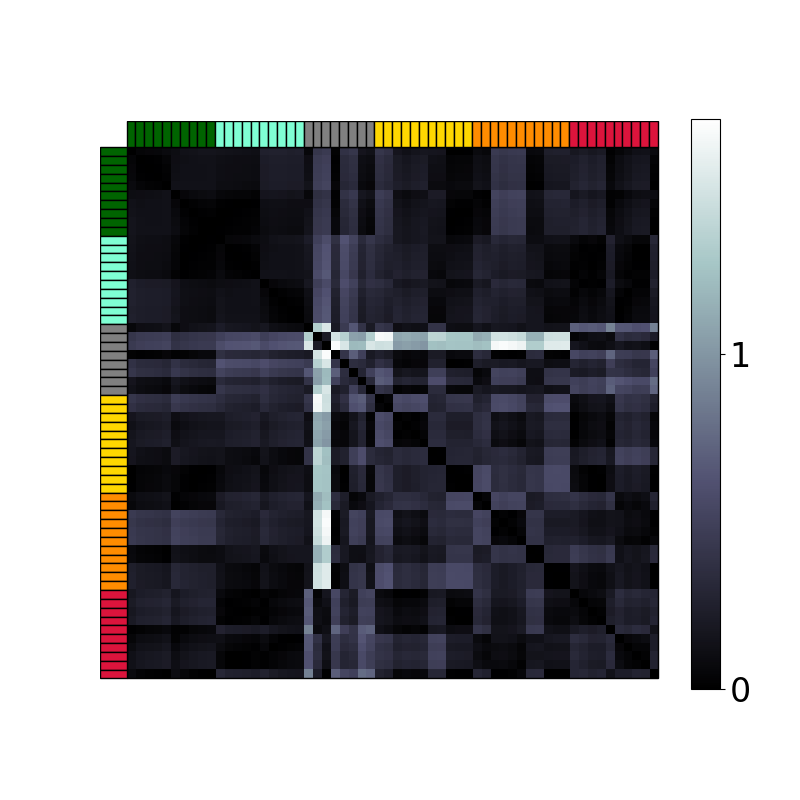}
		\caption{Stress prior}
		\label{fig:GPHLVM-bimanual:stress_prior}
	\end{subfigure}%
	\begin{subfigure}[b]{0.15\textwidth}
		\centering
		\includegraphics[trim={2.5cm 2.5cm 2.5cm 2.5cm},clip,width=\textwidth]{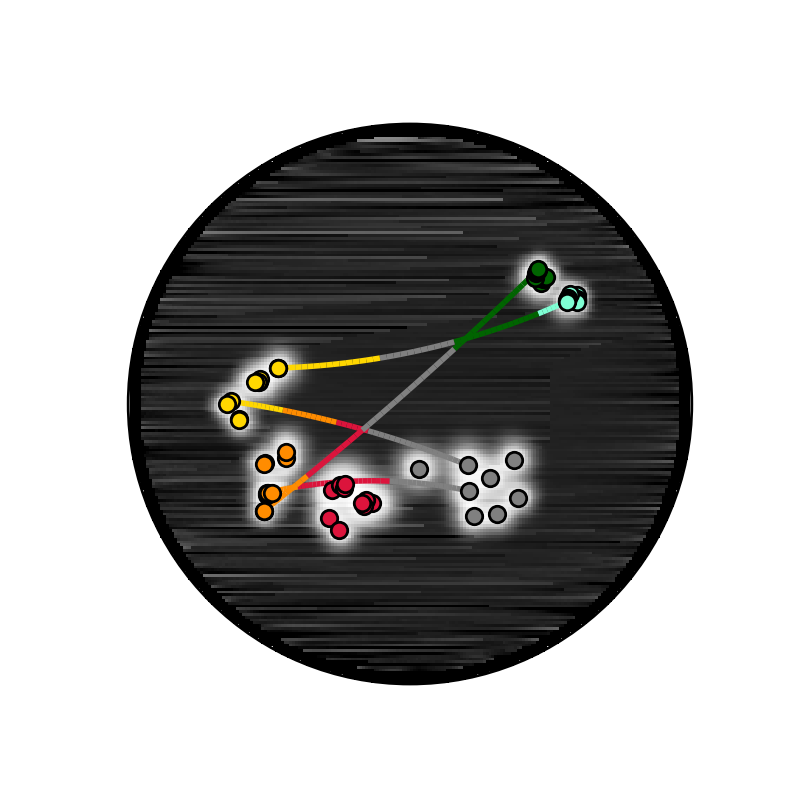}
		\includegraphics[trim={2.0cm 2.0cm 0.5cm 2.0cm},clip,width=0.9\textwidth]{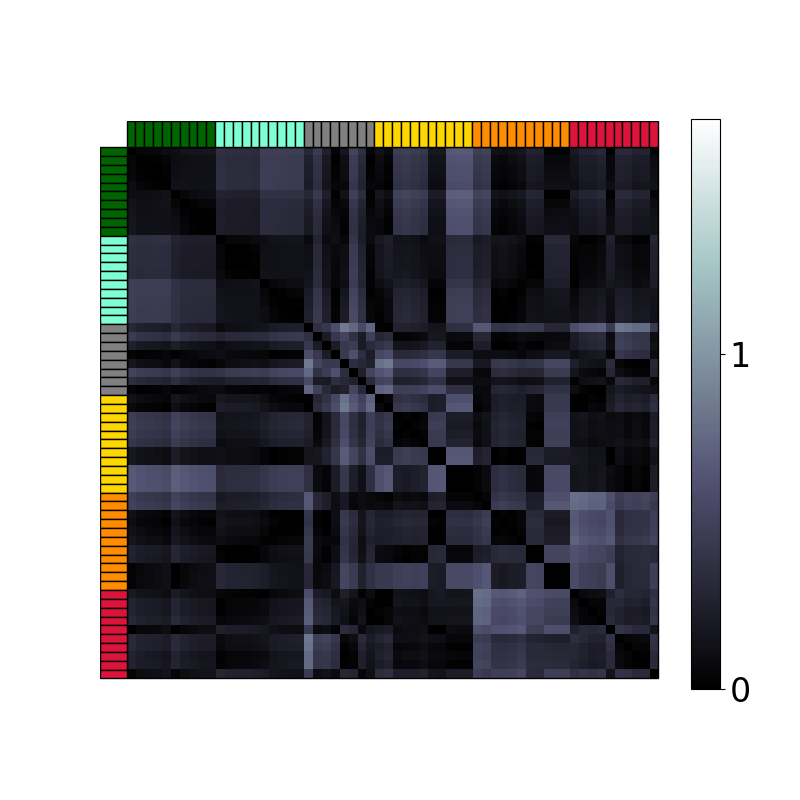}
		\includegraphics[trim={2.0cm 2.0cm 2.0cm 2.0cm},clip,width=0.9\textwidth]{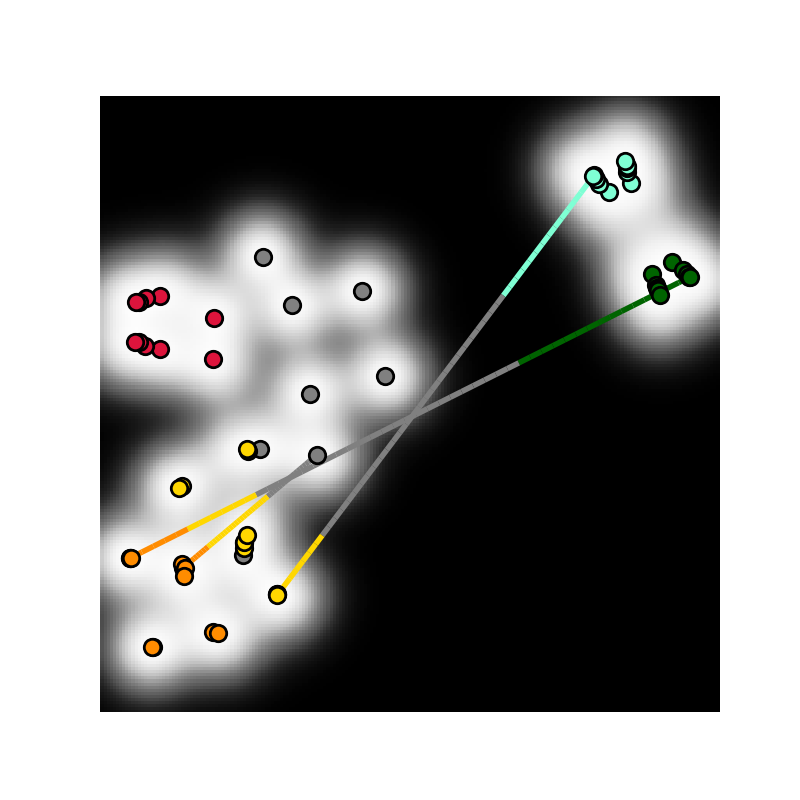}
		\includegraphics[trim={2.0cm 2.0cm 0.5cm 2.0cm},clip,width=0.9\textwidth]{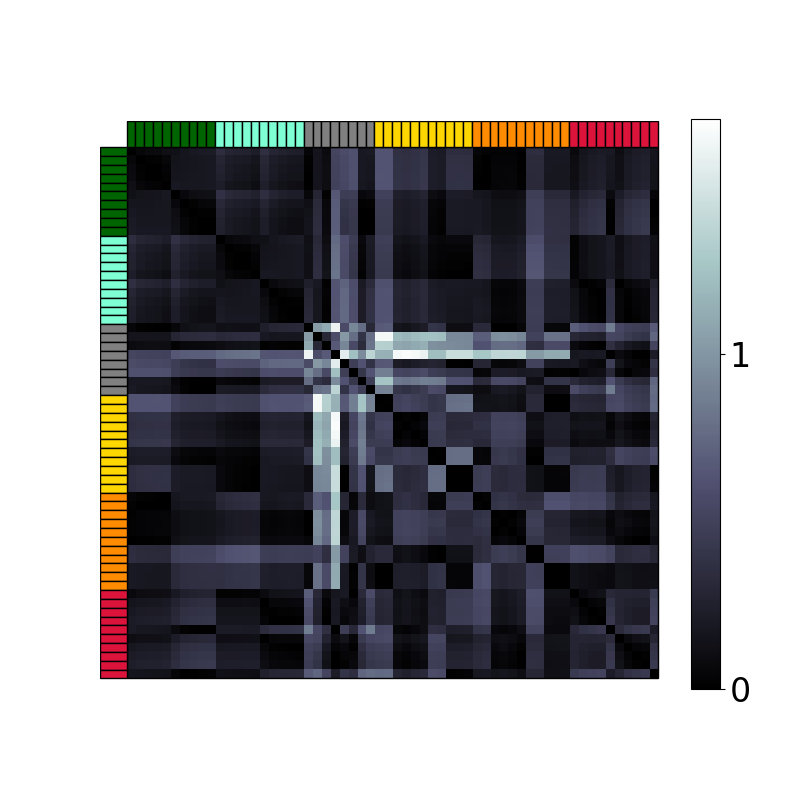}
		\caption{BC + stress prior}
		\label{fig:GPHLVM-bimanual:backconstrained_and_stress}
	\end{subfigure}%
	\begin{subfigure}[b]{0.15\textwidth}
		\centering
		\includegraphics[trim={2.5cm 2.5cm 2.5cm 2.5cm},clip,width=\textwidth]{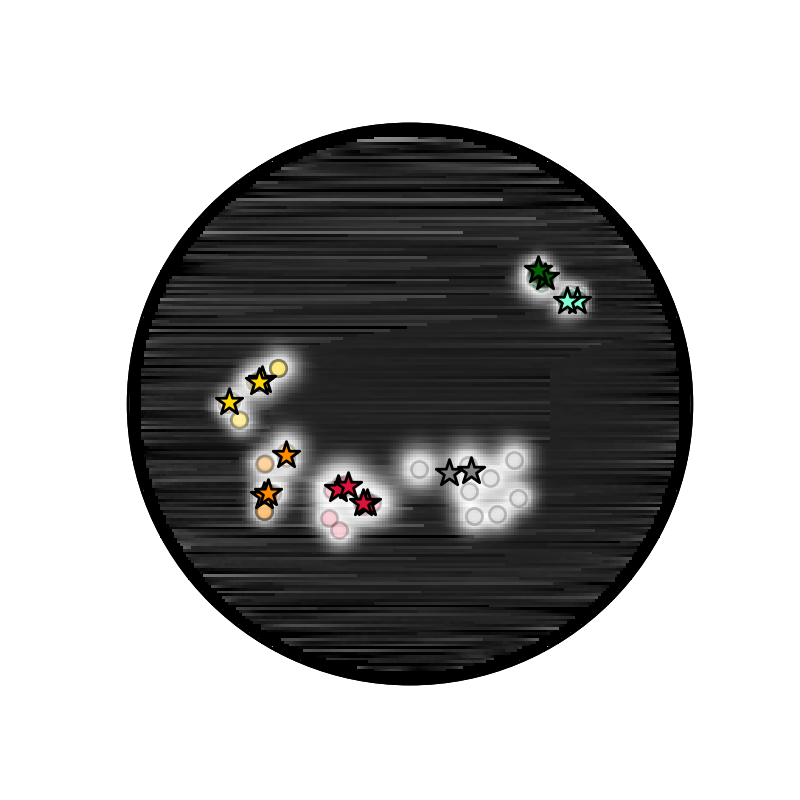}
		\includegraphics[trim={2.0cm 2.0cm 0.5cm 2.0cm},clip,width=0.9\textwidth]{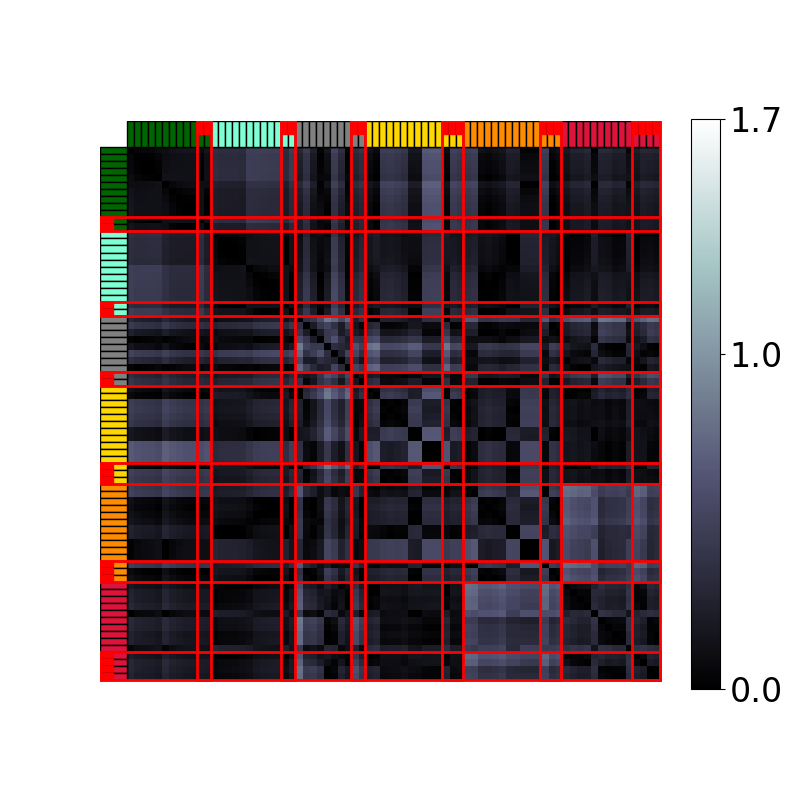}
		\includegraphics[trim={2.0cm 2.0cm 2.0cm 2.0cm},clip,width=0.9\textwidth]{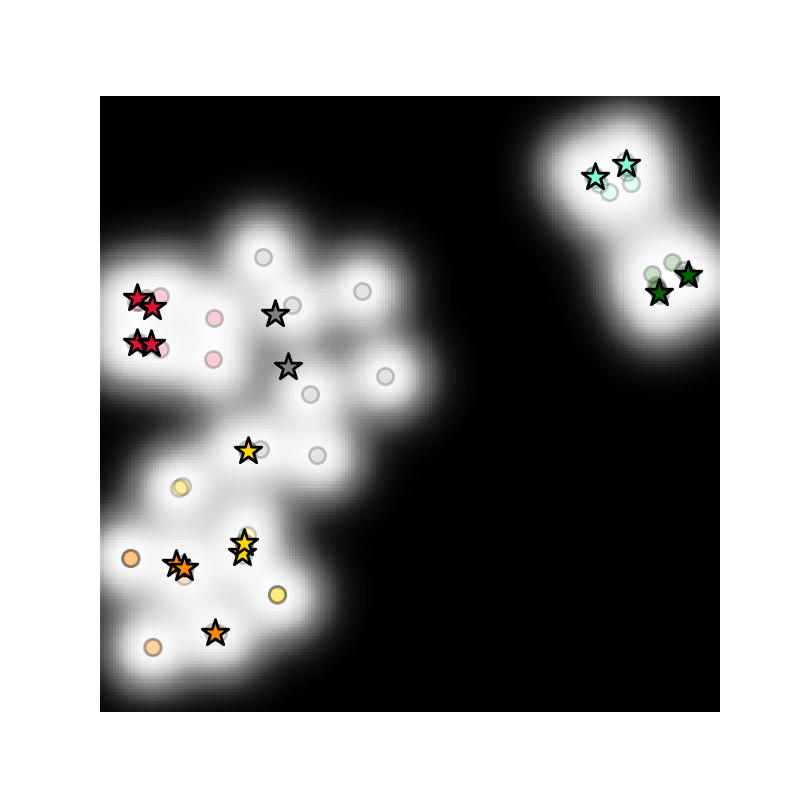}
		\includegraphics[trim={2.0cm 2.0cm 0.5cm 2.0cm},clip,width=0.9\textwidth]{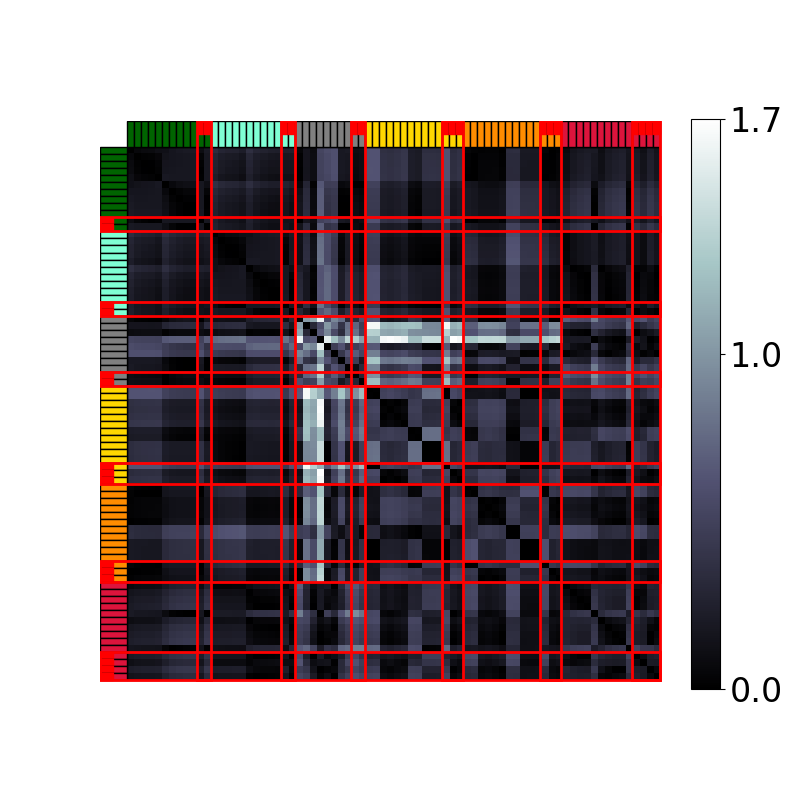}
		\caption{Adding poses}
		\label{fig:GPHLVM-bimanual:added_poses}
	\end{subfigure}%
	\begin{subfigure}[b]{0.15\textwidth}
		\centering
		\includegraphics[trim={2.5cm 2.5cm 2.5cm 2.5cm},clip,width=\textwidth]{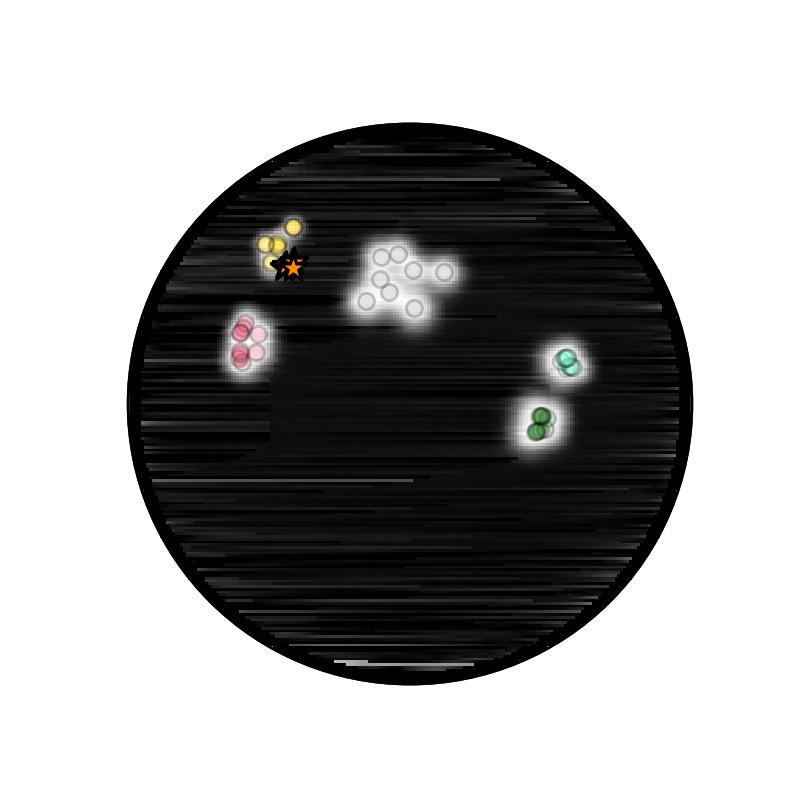}
		\includegraphics[trim={2.0cm 2.0cm 0.5cm 2.0cm},clip,width=0.9\textwidth]{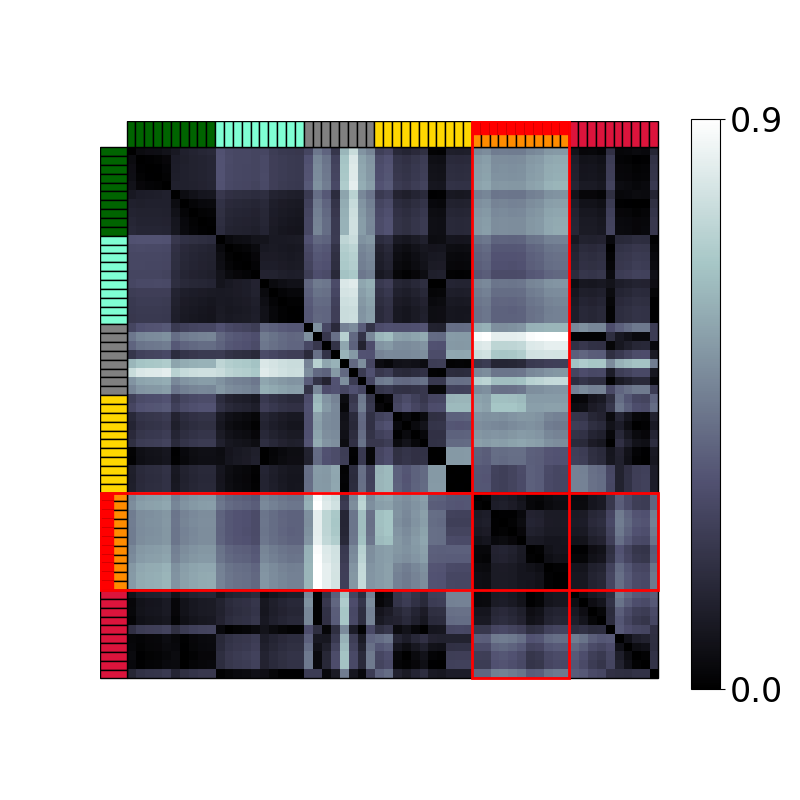}
		\includegraphics[trim={2.0cm 2.0cm 2.0cm 2.0cm},clip,width=0.9\textwidth]{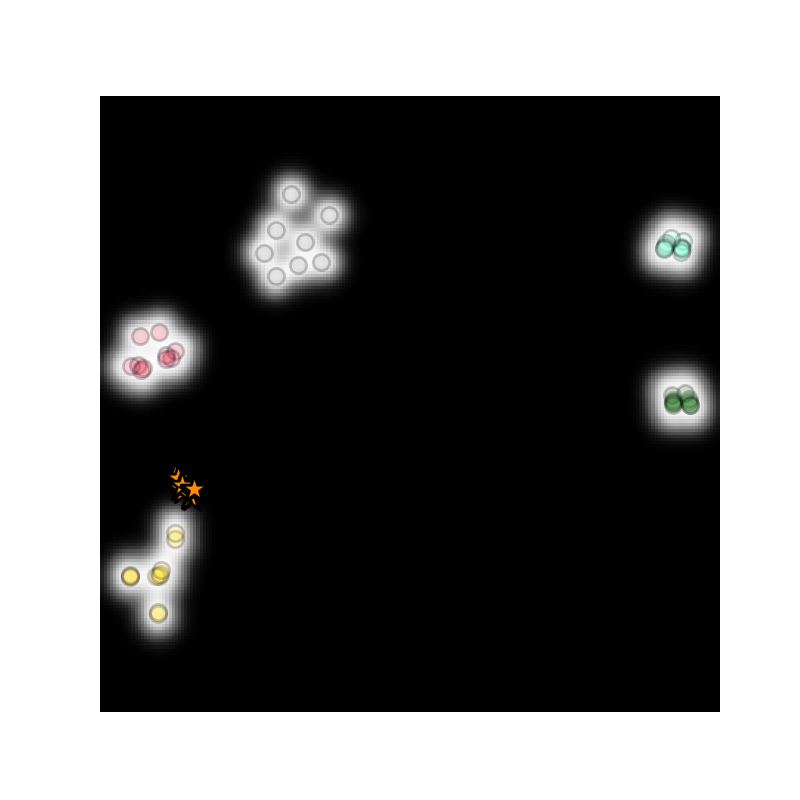}
		\includegraphics[trim={2.0cm 2.0cm 0.5cm 2.0cm},clip,width=0.9\textwidth]{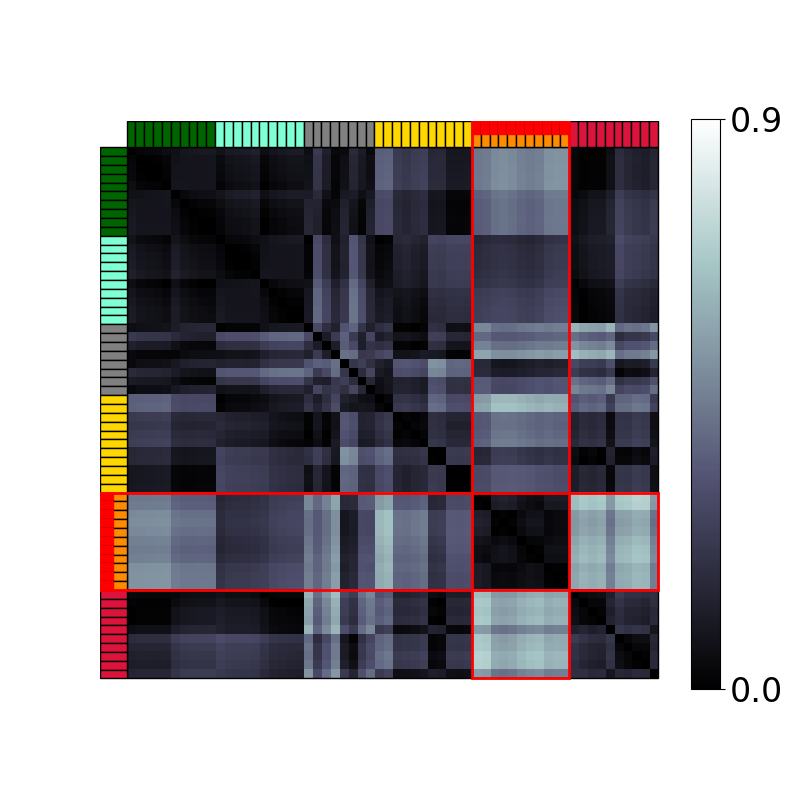}
		\caption{Adding a class}
		\label{fig:GPHLVM-bimanual:added_class}
	\end{subfigure}
    \vspace{-0.2cm}
	\caption{Bimanual manipulation categories: The first and last two rows show the latent embeddings and examples of interpolating geodesics in $\mathcal{P}^2$ and $\mathbb{R}^2$, followed by pairwise error matrices between geodesic and taxonomy graph distances. Background colors indicate the GPLVM uncertainty. Added poses \emph{(d)} and classes $\mathsf{TCA}_{\text{right}}$ \emph{(e)} are marked with stars and highlighted with red in the error matrices.}
	\label{fig:GPHLVM_trained_models-bimanual}
\end{figure*}
We test the proposed GPHLVM on three distinct robotics taxonomies. 
First, we model the data from the bimanual manipulation taxonomy~\citep{Krebs22:BimanualTaxonomy}, that is a simple binary tree whose nodes represent coordination patterns of human bimanual manipulation skills. We use a balanced dataset of $60$ whole-body poses extracted from recordings of bimanual household activities, as in~\citep{Krebs22:BimanualTaxonomy}. Each pose is a vector of joint angles $\bm{y}_n \in \mathbb{R}^{86}$.
Second, we consider a hand grasp taxonomy~\cite{Stival19:HumanGraspTaxonomy} that organizes common grasp types into a tree structure based on their muscular and kinematic properties (see Fig.~\ref{fig:HyperbolicAndTaxonomy}-\emph{right}). We use $94$ grasps of $19$ types obtained from recordings of humans grasping different objects. Each grasp is encoded by a vector of wrist and finger joint angles $\bm{y}_n \in \mathbb{R}^{24}$.
Third, we model data from the whole-body support pose taxonomy~\citep{Borras17:WholeBodyTaxonomy}. 
Each node of this taxonomy graph is a support pose defined by its contacts, so that the distance between nodes can be viewed as the number of contact changes required to go from a support pose to another.
We use standing and kneeling poses of the datasets in~\citep{Mandery16:LanguageWholeBody} and~\citep{Langenstein20:ThesisWholeBodyMotion}. 
The former were extracted from recordings of a human walking without hand support, or using supports from a handrail or from a table on one side or on both sides. 
The latter were obtained from a human standing up from a kneeling position.
Each pose is identified with a node of the graph of Fig.~\ref{fig:support-poses-taxonomy}.
We test our approach on an unbalanced dataset of $100$ poses ($72$ standing and $28$ kneeling poses), where each pose is represented by a vector of joint angles $\bm{y}_n \in \mathbb{R}^{44}$. Note that we augment the taxonomy to explicitly distinguish between left and right contacts.
The main results are analyzed in the sequel, while additional experimental details, results, and comparisons are given in App.~\ref{app:experiment_details} and~\ref{app:comparisons}.

\vspace{-0.4cm}
\paragraph{Implementation details:}
App.~\ref{app:experiment_data_details} and App.~\ref{app:experiment_training_details} describe the data and hyperparameters used for all experiments. 
We used the hyperbolic SE kernels of \S~\ref{subsec:HyperbolicKernels} for the GPHLVMs, and the classical SE kernel for the Euclidean models. 
As GPLVMs are generally prone to local optima during training, they benefit from a good initialization~\citep{Bitzer10:KickStartingGPLVM,Ko11:GPBayesFilters}.
Here, we initialize the embeddings of all GPLVMs by minimizing the stress associated with their taxonomy nodes, so that $\bm{X} = \min_{\bm{X}} \ell_{\text{stress}}$ with $\ell_{\text{stress}}$ as in~\eqref{eq:stressLoss}, using the hyperbolic and Euclidean distance for the GPHLVMs and GPLVMs, respectively (see App.~\ref{app:initialization}).
Since our experiments deal with low-data scenarios, all models were trained via MAP estimation by maximizing the loss $\ell = \ell_{\text{MAP}} - \gamma \ell_{\text{stress}}$, where $\gamma$ is a parameter balancing the two losses (see App.~\ref{app:LossScale} for details). 

\begin{table*}[t]
    \caption{Average stress per geometry and regularization. The stress is computed using~\eqref{eq:stressLoss} and averaged over all pairs of training embeddings. For models with unseen poses and classes, the stress is computed over all pairs of training and additional embeddings. Lower stress values indicate better compliance with the taxonomy structure.}
    \vspace{-0.4cm}
    \label{table:mean_stress_of_models}
    %\centering
    \begin{center}
    \begin{small}
    \begin{sc}
    \resizebox{\textwidth}{!}{
		\begin{tabular}{lllllll}
			\toprule
			\textbf{Taxonomy} & \textbf{Model} & \textbf{No regularizer} & \textbf{Stress} & \textbf{BC + Stress} & \textbf{Unseen poses} & \textbf{Unseen class} \\
            \toprule
			\multirow{4}{*}{\parbox{2.1cm}{Bimanual manipulation categories}} & GPLVM, $\mathbb{R}^2$ &  $2.03\pm2.15$ &  $0.13\pm0.33$ &  $0.15\pm0.31$&  $0.15\pm 0.29$ &  $\bm{0.08\pm0.11}$\\
			& GPHLVM, $\lorentz{2}$ &  $\bm{0.98\pm1.26}$ &  $\bm{0.11\pm0.33}$ &  $\bm{0.09\pm0.12}$&  $\bm{0.09\pm 0.11}$ &  $0.13\pm2.15$ \\
            \cmidrule(lr){2-7}
			& GPLVM, $ \mathbb{R}^3$ &  $2.39\pm2.36$ &  $\bm{0.01\pm0.01}$ &  $0.20\pm0.38$ &  $0.20\pm 0.38$ &  $\bm{0.05\pm0.07}$ \\
            & GPHLVM, $ \lorentz{3}$ &  $\bm{1.18\pm1.35}$ &  $\bm{0.01\pm0.03}$ &  $\bm{0.04\pm0.08}$&  $\bm{0.03\pm 0.07}$ &  $\bm{0.05\pm0.07}$ \\
            \toprule
			\multirow{4}{*}{Grasps} & GPLVM, $\mathbb{R}^2$ &  $7.25\pm 5.40$ &  $0.39\pm 0.41$ &  $0.40\pm 0.44$&  $0.53\pm 0.77$ &  $0.60\pm 0.73$ \\
			& GPHLVM, $ \lorentz{2}$ &  $\bm{5.47\pm 4.07}$ &  $\bm{0.14\pm 0.16}$ &  $\bm{0.18\pm 0.29}$&  $\bm{0.35\pm 0.78}$ &  $\bm{0.48\pm 0.76}$ \\
            \cmidrule(lr){2-7}
			& GPLVM, $ \mathbb{R}^3$ &  $\bm{8.15\pm 5.85}$ &  $0.14\pm 0.18$ &  $0.15\pm 0.19$&  $0.29\pm 0.64$ &  $0.38\pm 0.66$ \\
            & GPHLVM, $ \lorentz{3}$ &  $8.37\pm 5.71$ &  $\bm{0.04\pm 0.08}$ &  $\bm{0.07\pm 0.18}$&  $\bm{0.23\pm 0.68}$ &  $\bm{0.37\pm0.72}$ \\
			\toprule
			\multirow{4}{*}{\parbox{1.5cm}{Support poses}} & GPLVM, $\mathbb{R}^2$ &  $3.93\pm 3.97$ &  $0.58\pm 0.94$ &  $0.63\pm 0.94$&  $0.66\pm 0.99$ &  $\bm{0.85\pm 1.73}$ \\
			& GPHLVM, $ \lorentz{2}$ &  $\bm{2.05\pm 2.50}$ &  $\bm{0.51\pm 0.82}$ &  $\bm{0.53\pm 0.83}$&  $\bm{0.56\pm 0.86}$ & $0.86\pm 1.70$ \\
            \cmidrule(lr){2-7}
			& GPLVM, $ \mathbb{R}^3$ &  $\bm{3.76\pm 3.74}$ &  $\bm{0.24\pm 0.40}$ &  $\bm{0.29\pm 0.39}$&  $\bm{0.30\pm 0.43}$ &  $\bm{0.55\pm 1.26}$ \\
            & GPHLVM, $ \lorentz{3}$ &  $3.78\pm 3.71$ &  $0.30\pm 0.38$ &  $0.35\pm 0.45$&  $0.37\pm 0.50$ &  $0.69\pm 1.36$ \\
			\bottomrule
	\end{tabular}
 }
 \end{sc}
 \end{small}
 \end{center}
 \vspace{-0.5cm}
\end{table*}
\begin{figure*}
	\centering
	\includegraphics[trim={5.8cm 2.2cm 4.3cm 2.2cm},clip,width=.8\textwidth]{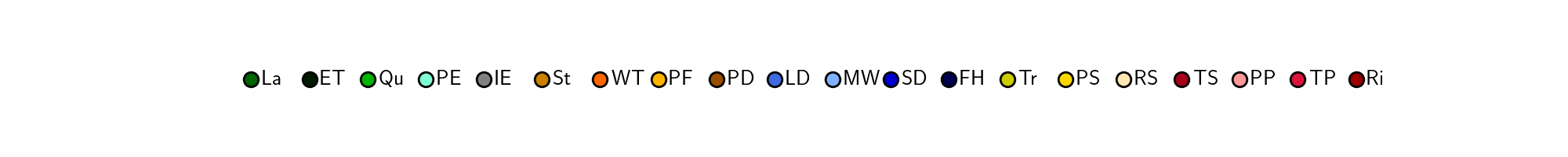}
	\begin{subfigure}[b]{0.15\textwidth}
		\centering
		\includegraphics[trim={2.5cm 2.5cm 2.5cm 2.5cm},clip,width=\textwidth]{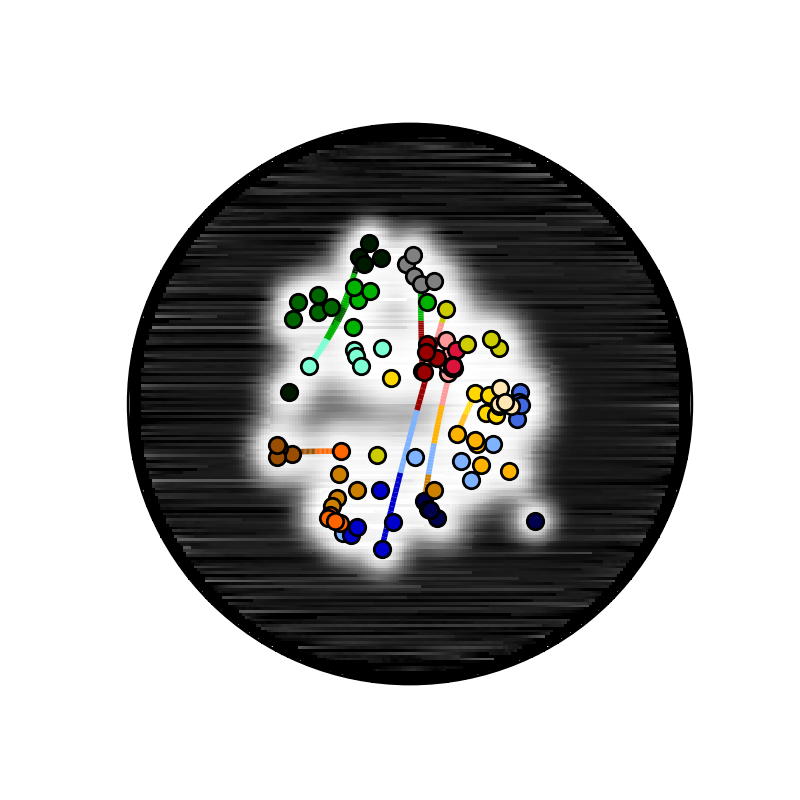}
		\includegraphics[trim={2.0cm 2.0cm 0.5cm 2.0cm},clip,width=0.9\textwidth]{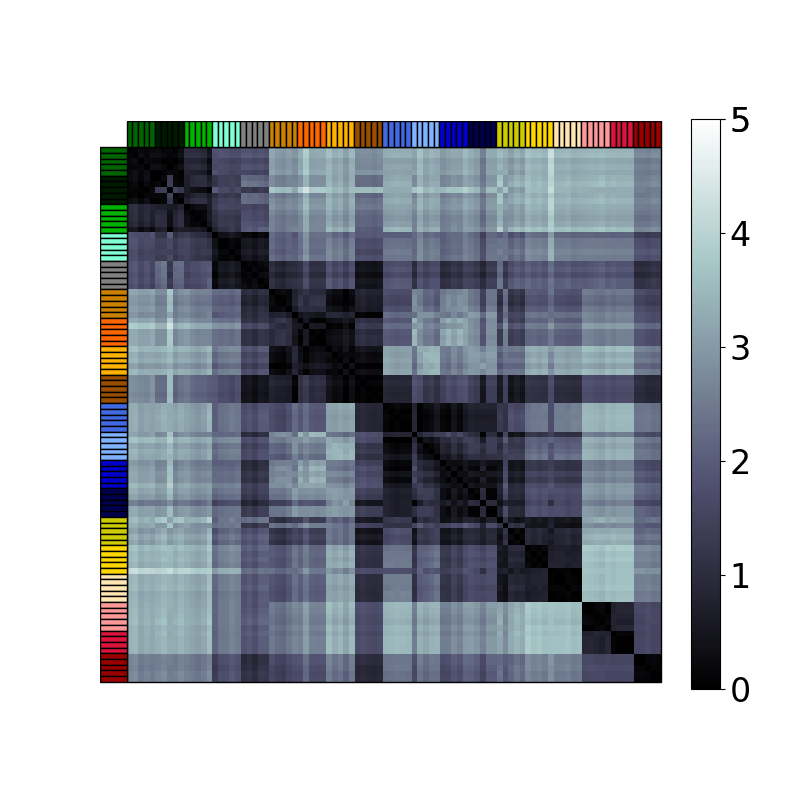}
		\includegraphics[trim={2.0cm 2.0cm 2.0cm 2.0cm},clip,width=0.9\textwidth]{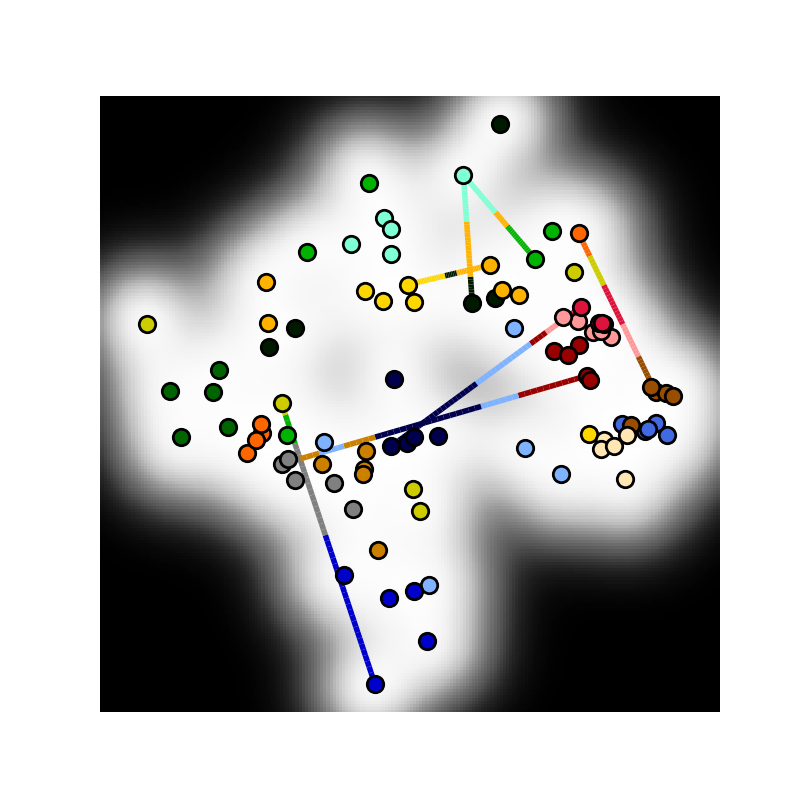}
		\includegraphics[trim={2.0cm 2.0cm 0.5cm 2.0cm},clip,width=0.9\textwidth]{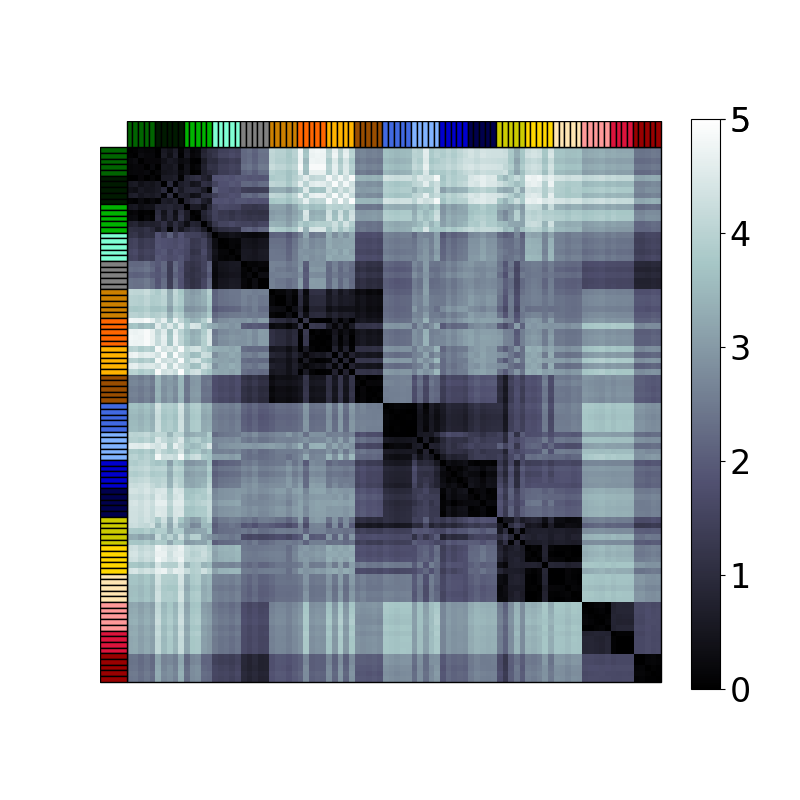}
		\caption{Vanilla}
		\label{fig:GPHLVM-grasps:vanilla}
	\end{subfigure}%
	\begin{subfigure}[b]{0.15\textwidth}
		\centering
		\includegraphics[trim={2.5cm 2.5cm 2.5cm 2.5cm},clip,width=\textwidth]{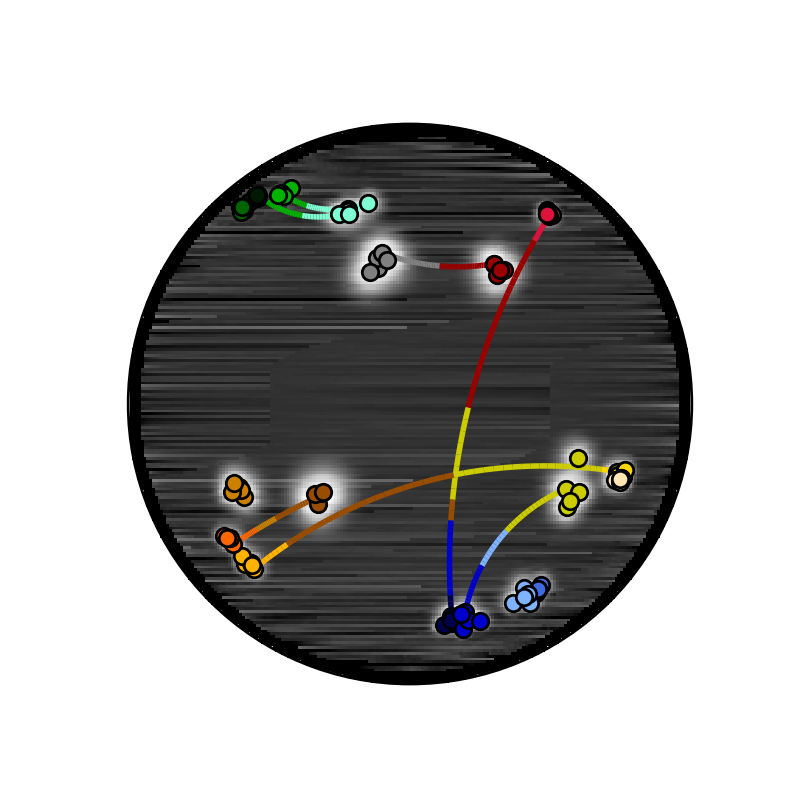}
		\includegraphics[trim={2.0cm 2.0cm 0.5cm 2.0cm},clip,width=0.9\textwidth]{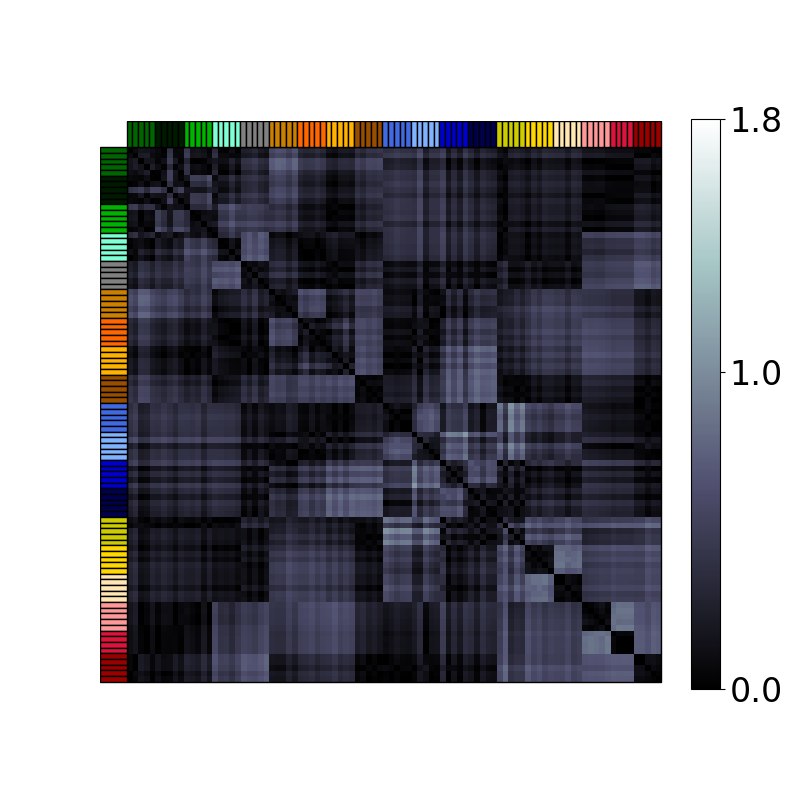}
		\includegraphics[trim={2.0cm 2.0cm 2.0cm 2.0cm},clip,width=0.9\textwidth]{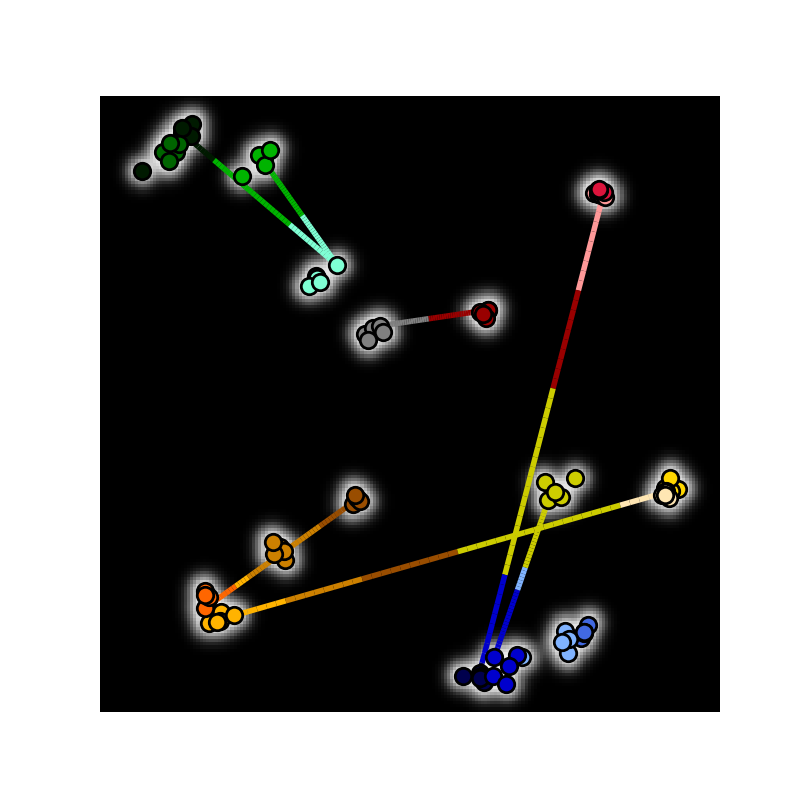}
		\includegraphics[trim={2.0cm 2.0cm 0.5cm 2.0cm},clip,width=0.9\textwidth]{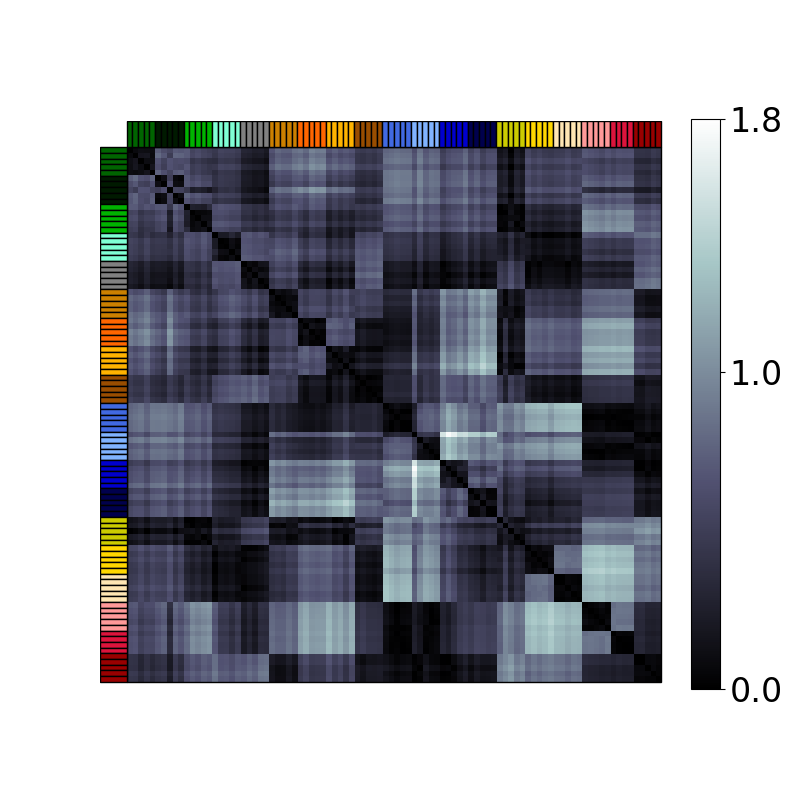}
		\caption{Stress prior}
		\label{fig:GPHLVM-grasps:stress_prior}
	\end{subfigure}%
	\begin{subfigure}[b]{0.15\textwidth}
		\centering
		\includegraphics[trim={2.5cm 2.5cm 2.5cm 2.5cm},clip,width=\textwidth]{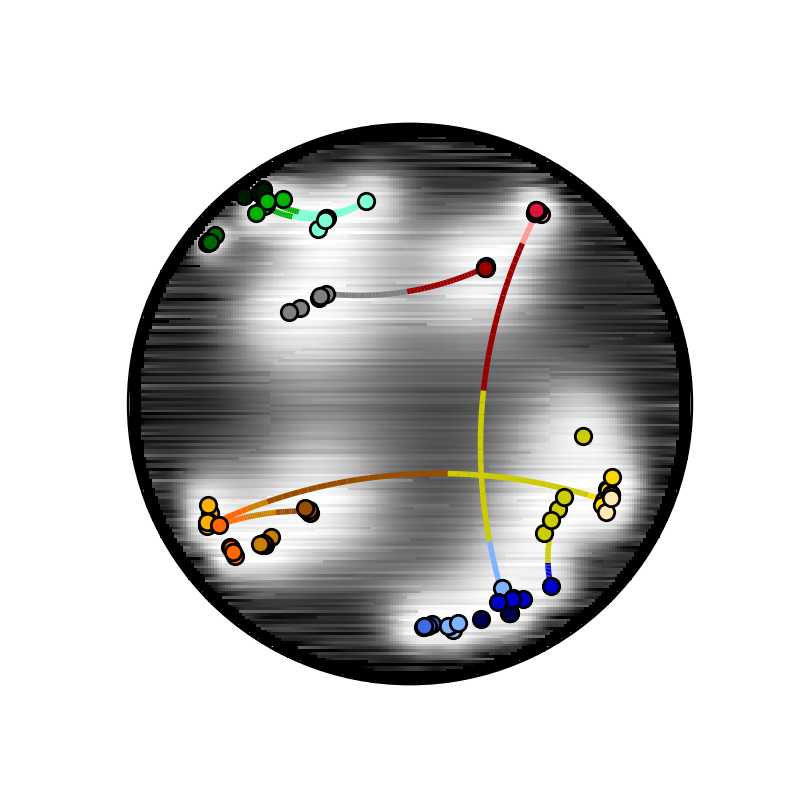}
		\includegraphics[trim={2.0cm 2.0cm 0.5cm 2.0cm},clip,width=0.9\textwidth]{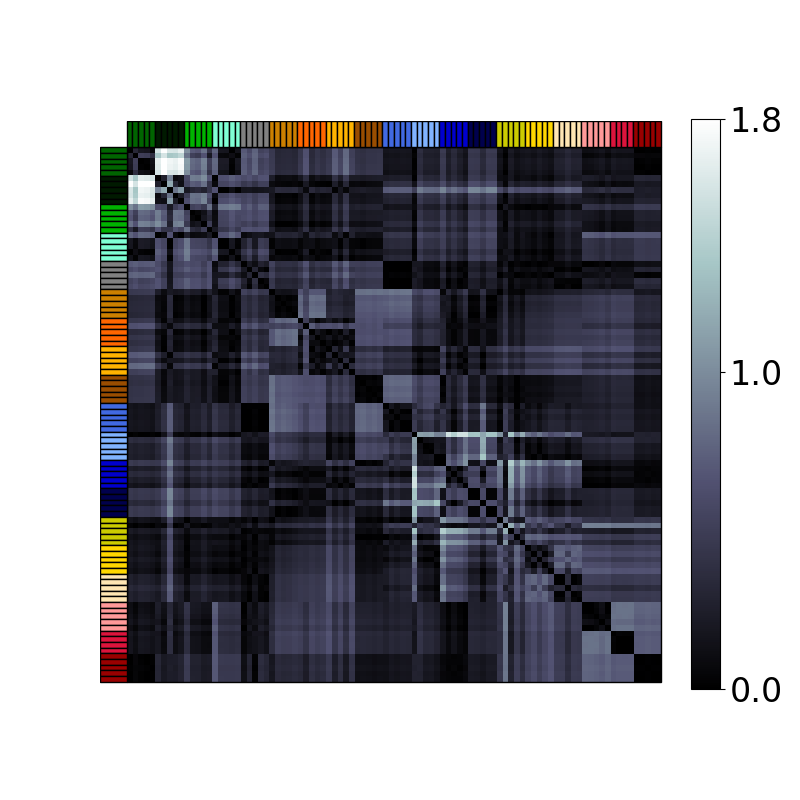}
		\includegraphics[trim={2.0cm 2.0cm 2.0cm 2.0cm},clip,width=0.9\textwidth]{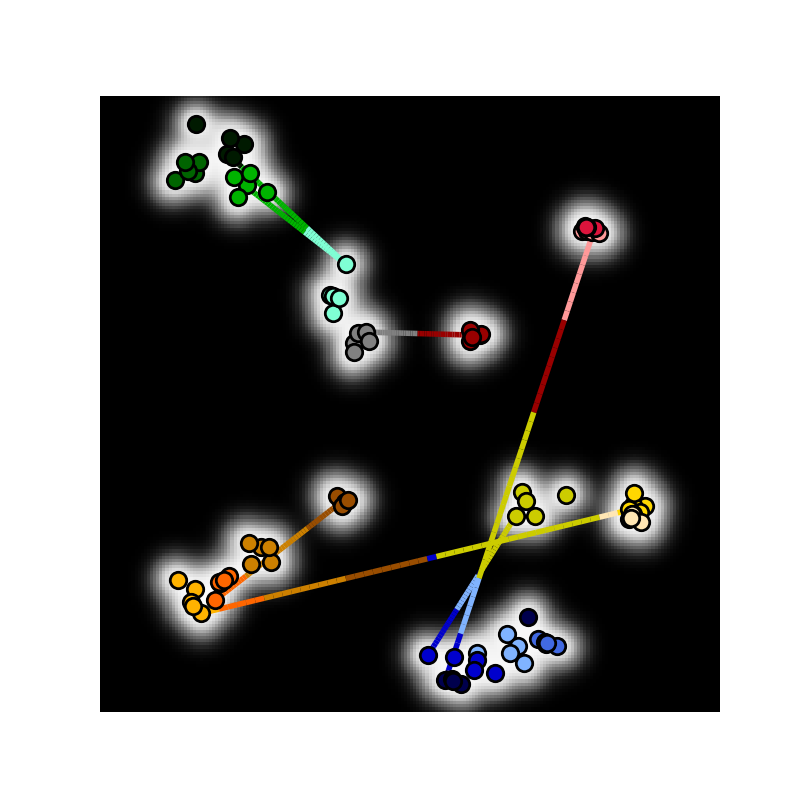}
		\includegraphics[trim={2.0cm 2.0cm 0.5cm 2.0cm},clip,width=0.9\textwidth]{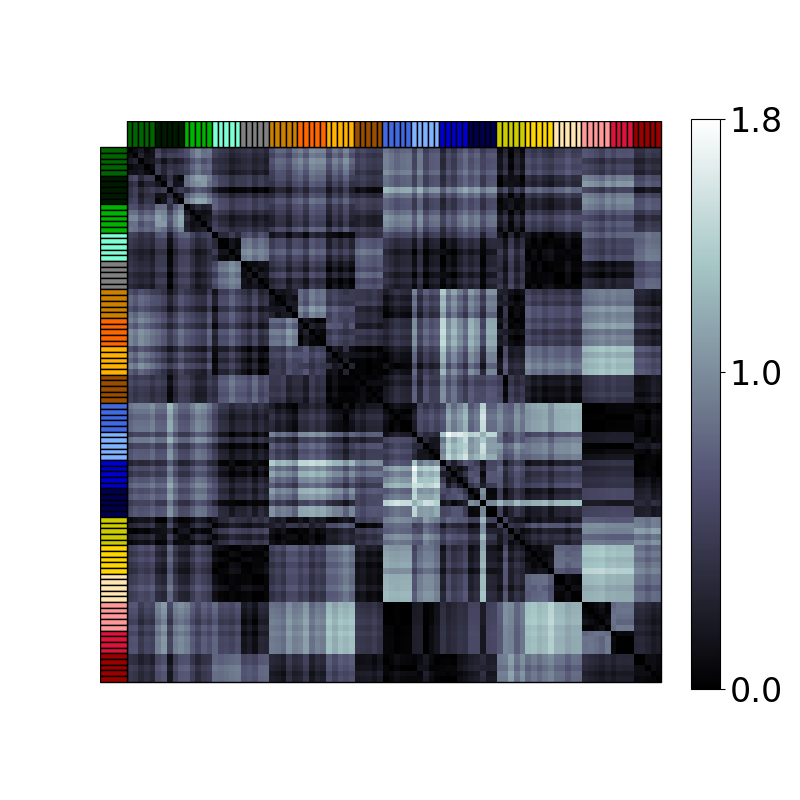}
		\caption{BC + stress prior}
		\label{fig:GPHLVM-grasps:backconstrained_and_stress}
	\end{subfigure}%
	\begin{subfigure}[b]{0.15\textwidth}
		\centering
		\includegraphics[trim={2.5cm 2.5cm 2.5cm 2.5cm},clip,width=\textwidth]{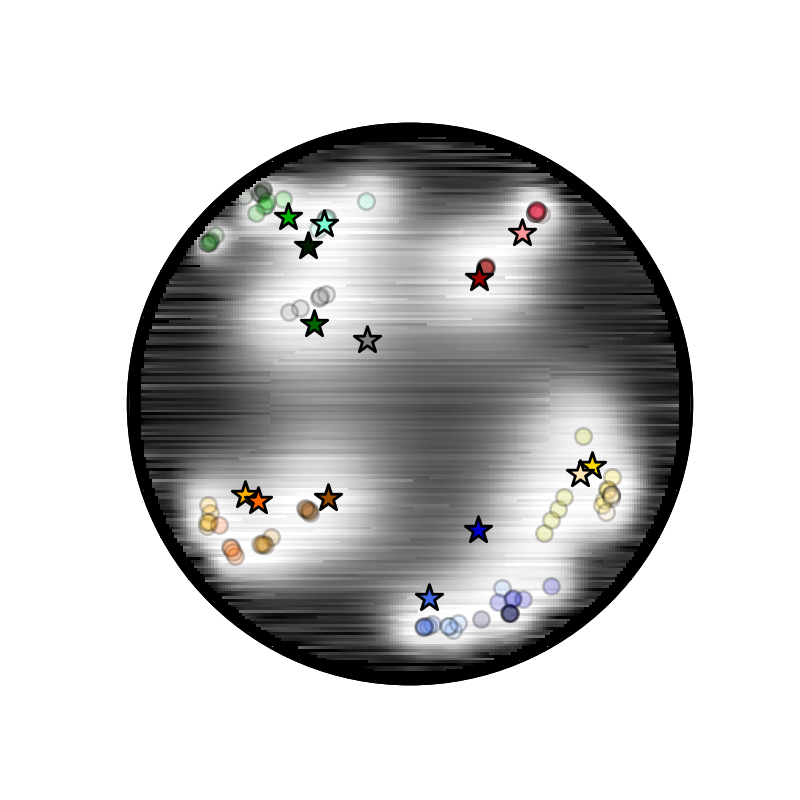}
		\includegraphics[trim={2.0cm 2.0cm 0.5cm 2.0cm},clip,width=0.9\textwidth]{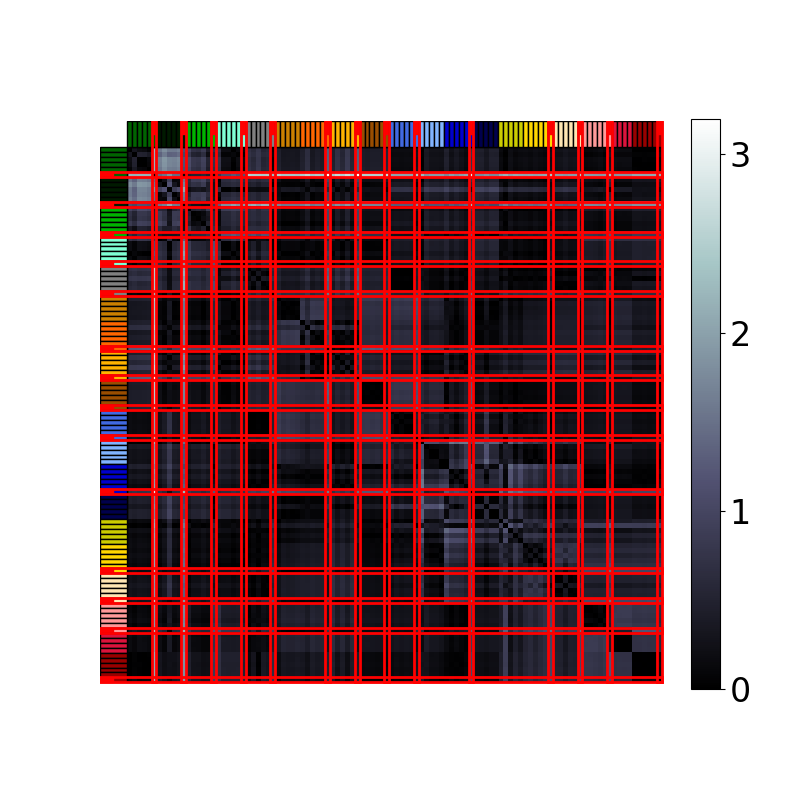}
		\includegraphics[trim={2.0cm 2.0cm 2.0cm 2.0cm},clip,width=0.9\textwidth]{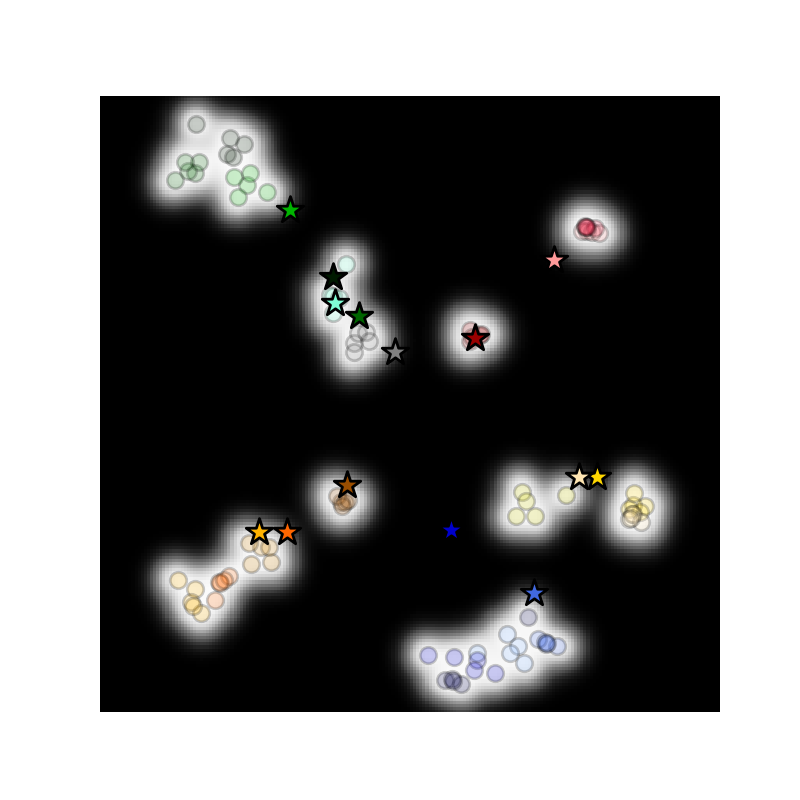}
		\includegraphics[trim={2.0cm 2.0cm 0.5cm 2.0cm},clip,width=0.9\textwidth]{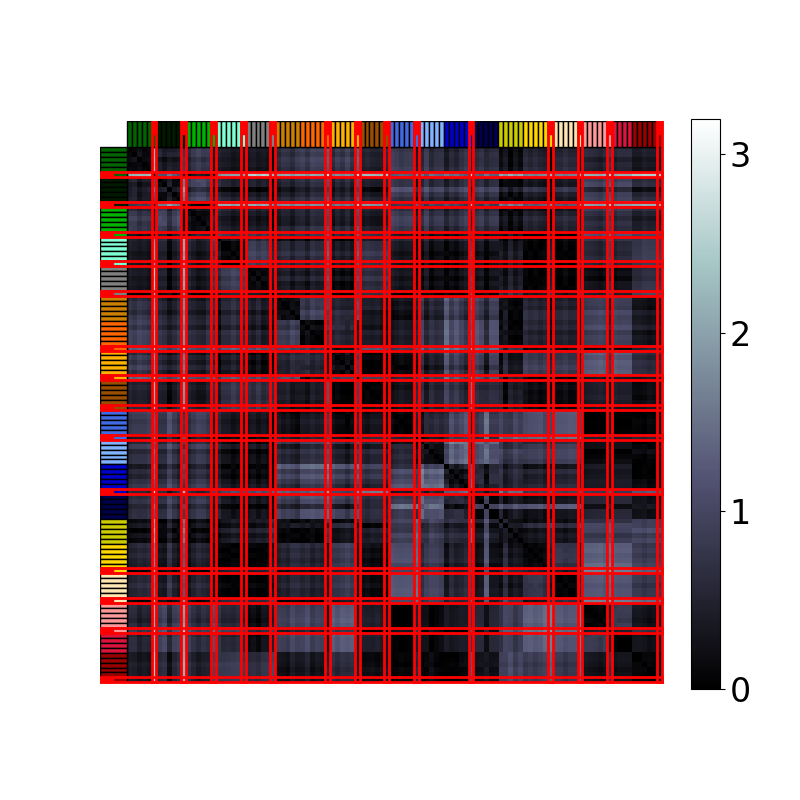}
		\caption{Adding poses}
		\label{fig:GPHLVM-grasps:added_poses}
	\end{subfigure}%
	\begin{subfigure}[b]{0.15\textwidth}
		\centering
		\includegraphics[trim={2.5cm 2.5cm 2.5cm 2.5cm},clip,width=\textwidth]{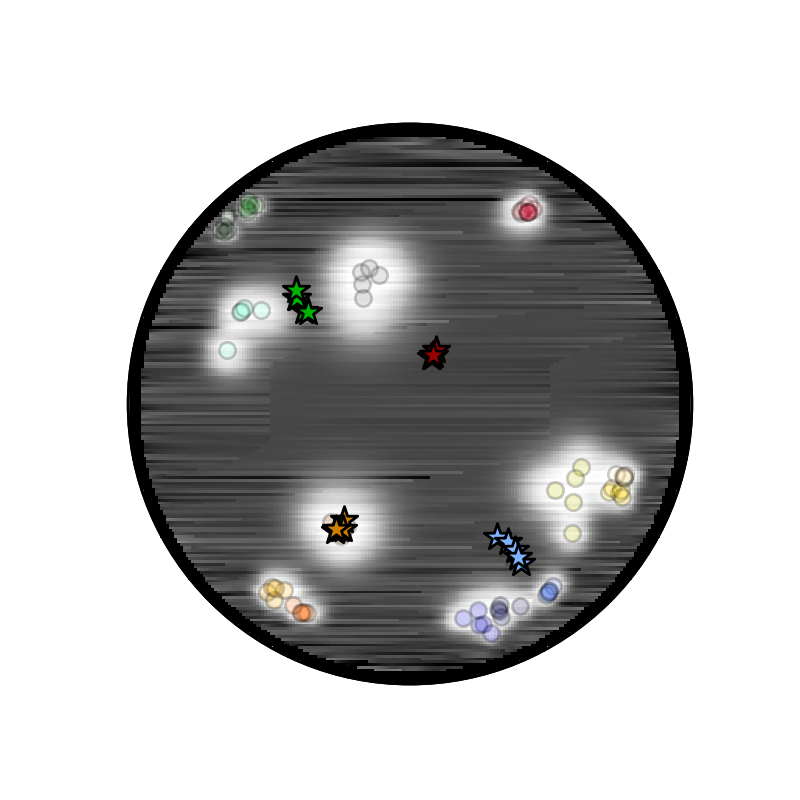}
		\includegraphics[trim={2.0cm 2.0cm 0.5cm 2.0cm},clip,width=0.9\textwidth]{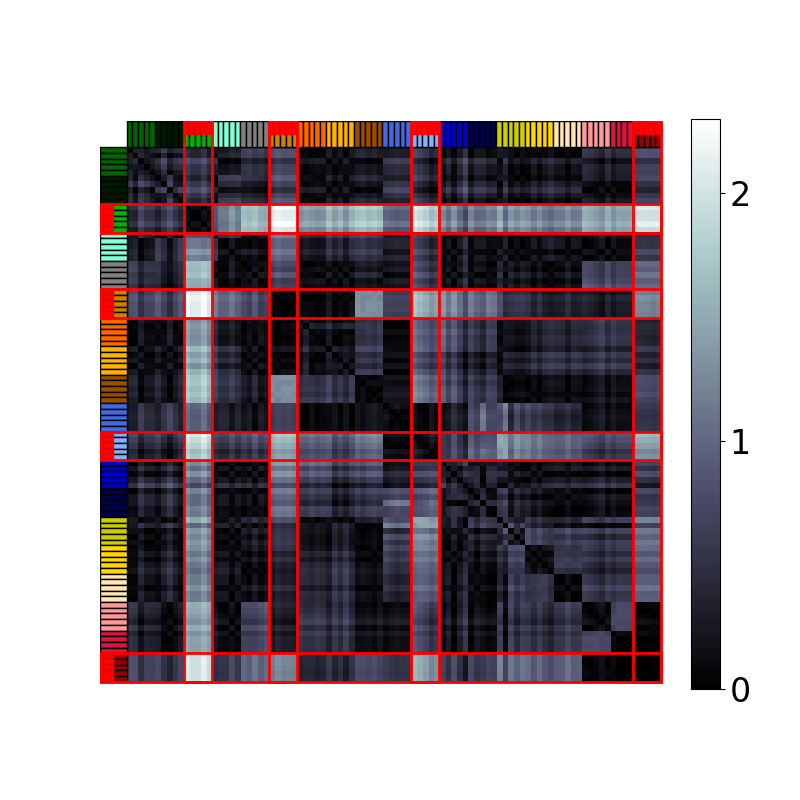}
		\includegraphics[trim={2.0cm 2.0cm 2.0cm 2.0cm},clip,width=0.9\textwidth]{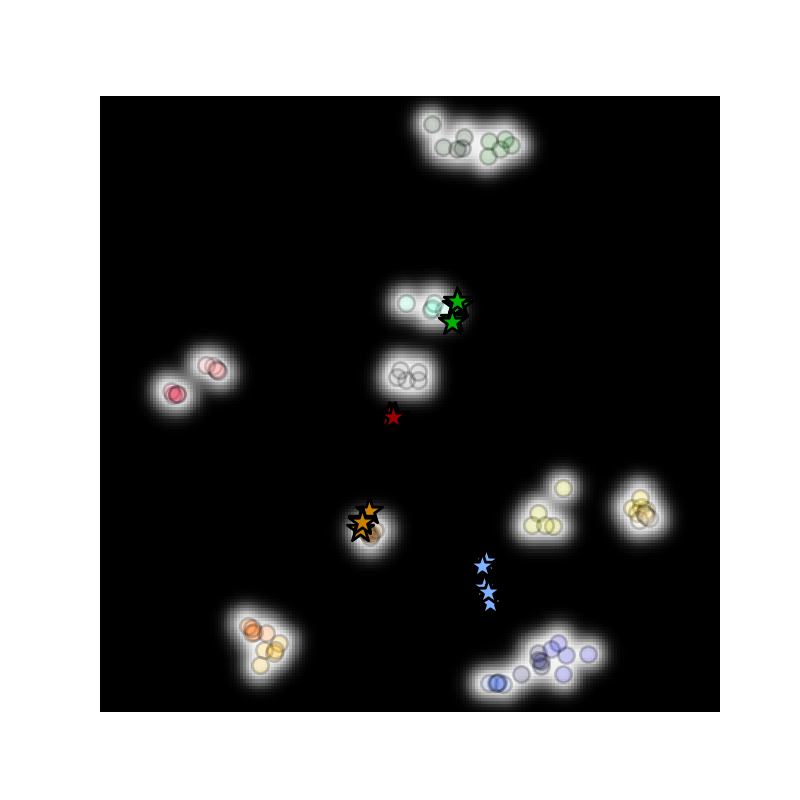}
		\includegraphics[trim={2.0cm 2.0cm 0.5cm 2.0cm},clip,width=0.9\textwidth]{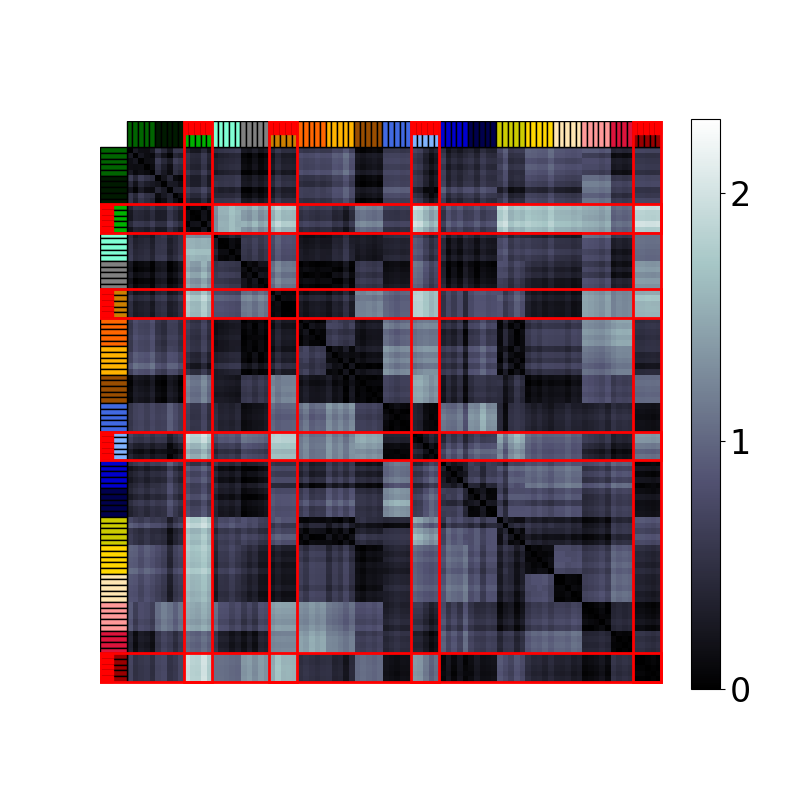}
		\caption{Adding a class}
		\label{fig:GPHLVM-grasps:added_class}
	\end{subfigure}
    \vspace{-0.25cm}
	\caption{Grasps: The first and last two rows show the latent embeddings and examples of interpolating geodesics in $\mathcal{P}^2$ and $\mathbb{R}^2$, followed by pairwise error matrices between geodesic and graph distances. Embeddings colors match those of Fig.~\ref{fig:HyperbolicAndTaxonomy}-\emph{right}, and background colors indicate the GPLVM uncertainty. Added poses \emph{(d)} and classes $\mathsf{Qu}, \mathsf{St}, \mathsf{MW},$ and $\mathsf{Ri}$ \emph{(e)} are marked with stars and highlighted with red in the error matrices.}
	\label{fig:GPHLVM_trained_models-grasps}
	\vspace{-0.55cm}
\end{figure*}

\vspace{-0.4cm}
\paragraph{Hyperbolic embeddings of taxonomy data:}
We embed the taxonomy data of the aforementioned taxonomies into $2$-dimensional hyperbolic and Euclidean spaces using GPHLVM and GPLVM.
For each, we test the model without regularization, with stress prior, and with back-constraints coupled with stress prior.
Figures~\ref{fig:GPHLVM-bimanual:vanilla}-\ref{fig:GPHLVM-bimanual:backconstrained_and_stress},~\ref{fig:GPHLVM-grasps:vanilla}-\ref{fig:GPHLVM-grasps:backconstrained_and_stress}, and~\ref{fig:GPHLVM:vanilla}-\ref{fig:GPHLVM:backconstrained_and_stress} show the learned embeddings alongside error matrices depicting the difference between geodesic and taxonomy graph distances for the bimanual manipulation, hand grasps, and support pose taxonomies, respectively. 
As shown in Figs.~\ref{fig:GPHLVM-bimanual:vanilla},~\ref{fig:GPHLVM-grasps:vanilla},~\ref{fig:GPHLVM:vanilla}, the models without regularization do not encode any meaningful distance structure in latent space. 
In contrast, the models with stress prior result in embeddings that comply with the taxonomy graph structure: The embeddings are grouped and organized according to the taxonomy nodes, the geodesic distances match the graph ones, and arguably more so in the hyperbolic case (see error matrices in Figs.~\ref{fig:GPHLVM-bimanual:stress_prior}-\ref{fig:GPHLVM-bimanual:backconstrained_and_stress},~\ref{fig:GPHLVM-grasps:stress_prior}-\ref{fig:GPHLVM-grasps:backconstrained_and_stress},~\ref{fig:GPHLVM:stress_prior}-\ref{fig:GPHLVM:backconstrained_and_stress}). 
Moreover, the GPHLVM with back constraints further organizes the embeddings inside a class according to the similarity between their observations (see Figs.~\ref{fig:GPHLVM-bimanual:backconstrained_and_stress},~\ref{fig:GPHLVM-grasps:backconstrained_and_stress},~\ref{fig:GPHLVM:backconstrained_and_stress}).
Note that augmenting the support pose taxonomy leads to several groups of the same support pose in Figs.~\ref{fig:GPHLVM:stress_prior}-\ref{fig:GPHLVM:backconstrained_and_stress}, e.g., $\mathsf{F}$ splits into $\mathsf{LF}$ and $\mathsf{RF}$. % for left-foot and right-foot contacts, respectively.

A quantitative comparison of the stress values of the latent embeddings with respect to the graph distances confirms that a hyperbolic geometry captures better the data structure (see Table~\ref{table:mean_stress_of_models}). 
All regularized GPHLVMs with $2$-dimensional latent spaces outperform their Euclidean counterparts.
In general, we observe a prominent stress reduction for the Euclidean and hyperbolic $3$-dimensional latent spaces compared to the $2$-dimensional ones. 
This is due to the increase of volume available to match the graph structure in $3$-dimensional spaces relative to $2$-dimensional ones. 
Interestingly, the hyperbolic models of the bimanual manipulation and hand grasps taxonomies also outperform the Euclidean models with $3$-dimensional latent spaces (see models in App.~\ref{appendix:3d-embeddings}).
This is due to the fact that the volume of balls in hyperbolic space increases exponentially with respect to the radius of the ball rather than polynomially as in Euclidean space, which translates to significantly more space for embedding tree-like data with minimal distortion. This inherent property makes hyperbolic spaces a natural fit to embed hierarchical taxonomies with a tree-like structure, as the bimanual manipulation and hand grasps taxonomies.
In the case of the support pose taxonomy, the Euclidean models with $3$-dimensional latent space slightly outperform the $3$-dimensional hyperbolic embeddings. We attribute this to the cyclic graph structure of the taxonomy. Such type of structure has been shown to be better embedded in spherical or Euclidean spaces~\citep{Gu19:MixedCurvatureEmbeddings}. Interestingly, despite the cyclic graph structure of the support pose taxonomy, the Euclidean models are still outperformed by the hyperbolic embeddings in the $2$-dimensional case (see Table~\ref{table:mean_stress_of_models}). This suggests that the increase of volume available to match the graph structure in hyperbolic spaces compared to Euclidean spaces leads to better low-dimensional representations of taxonomy data, including those with cyclic graph structure.  

Importantly, a comparative study reported in App.~\ref{app:comparison_VAEs} shows that the GPHLVM also outperformed vanilla and hyperbolic versions of a VAE to encode meaningful taxonomy information in the latent space.
For all taxonomies, adding the stress regularization~\eqref{eq:stressLoss} to the VAEs helps to preserve the graph distance structure, although the embeddings of different taxonomy nodes are not as clearly separated as in the GPHLVMs. This is illustrated by the higher average stress of the VAEs' latent embeddings and their higher reconstruction error compared to the GPHLVMs’ (see Table~\ref{table:mean_stress_and_reconstruction_of_models}).
Finally, we also tested a GPLVM for learning a Riemannian manifold~\citep{Tosi14:RiemannianGPLVM} of the taxonomy data, reported in App.~\ref{app:comparison_Tosi}, which is unable to capture the local and global data structure as this model was not originally designed for hierarchical discrete data. 

\vspace{-0.35cm}
\paragraph{Runtimes:} 
Table~\ref{tab:ApproachesRuntime} shows the runtime measurements for the training and decoding phases of GPHLVM and GPLVM. 
The main computational burden arises in the GPHLVM with a $2$-dimensional latent space, which is in sharp contrast with the experiments using a $3$-dimensional latent space. 
This increase in computational cost is mainly attributed to the $2$-dimensional hyperbolic kernel (see Table~\ref{tab:app:kernel_opt_times} in App.~\ref{app:runtimes}). This may be alleviated by reducing the number of samples or via more efficient sampling strategies.

\begin{table}[t]
    \caption{Average runtime for training and decoding phases over $10$ experiments of the hand grasps taxonomy. Training time was measured over $500$ iterations for both models. The implementations are fully developed on Python, and the runtime measurements were taken using a standard laptop with $32$ GB RAM, Intel Xeon CPU E3-1505M v6 processor, and Ubuntu 20.04 LTS.}
    \vspace{-0.25cm}
	\label{tab:ApproachesRuntime}
	\begin{center}
    \begin{small}
    \begin{sc}
	\begin{tabular}{lll}
        \toprule
        \textbf{Model} & \textbf{Training} & \textbf{Decoding} \\
        \toprule
        GPLVM, $\mathbb{R}^2$ & $2.978 \si{\second} \pm 0.082$ & $6.256 \si{\milli \second} \pm 0.314$ \\
        GPHLVM, $\lorentz{2}$ & $414.67 \si{\second} \pm 30.87$ & $2.74 \si{\second} \pm 0.487$ \\
        \midrule
        GPLVM, $\mathbb{R}^3$ & $3.148 \si{\second} \pm 0.171$ & $6.774 \si{\milli \second} \pm 0.545$ \\
        GPHLVM, $\lorentz{3}$ & $6.887 \si{\second} \pm 0.307$ & $10.34 \si{\milli \second} \pm 1.05$ \\
        \bottomrule
	\end{tabular}
    \end{sc}
    \end{small}
    \end{center}
    \vspace{-0.4cm}
\end{table}

\vspace{-0.35cm}
\paragraph{Taxonomy expansion and unseen poses encoding:}
An advantage of back-constrained GPLVMs is their affordance to ``embed'' new observations into the latent space. We test the GPHLVM ability to place unseen class instances or unobserved taxonomy classes into the latent space, hypothesizing that their respective embeddings should be positioned to preserve the relative distances within the taxonomy graph compared to the other latent points.
First, we consider back-constrained GPHLVMs with stress prior previously trained on a subset of the taxonomies data (i.e., the models in Figs.~\ref{fig:GPHLVM-bimanual:backconstrained_and_stress},~\ref{fig:GPHLVM-grasps:backconstrained_and_stress},~\ref{fig:GPHLVM:backconstrained_and_stress}) and embedded unseen class instances. Figures~\ref{fig:GPHLVM-bimanual:added_poses},~\ref{fig:GPHLVM-grasps:added_poses} and~\ref{fig:GPHLVM:added_poses} show how the new data land close to their respective class cluster. 
Second, we train new GPHLVMs for the three taxonomies while withholding all data instance from one or several classes (see App.~\ref{app:UnseenClassesAndPoses}).
We then encode these data and find that they preserve the relative taxonomy graph distances when compared to the model trained on the full dataset.
Although this is accomplished by both models, our GPHLVMs display lower stress values (see Table~\ref{table:mean_stress_of_models}).

\vspace{-0.35cm}
\paragraph{Trajectory generation via geodesics:}
The geometry of the GPHLVM latent space can also be exploited to generate trajectories in the latent space by following the geodesic, i.e., the shortest path, between two embeddings. 
In other words, our GPHLVM intrinsically provides a mechanism to plan motions via geodesics in the low-dimensional latent space.
Examples of geodesics between two embeddings for the three taxonomies are shown in Figs.~\ref{fig:GPHLVM-bimanual:stress_prior}-\ref{fig:GPHLVM-bimanual:backconstrained_and_stress},~\ref{fig:GPHLVM-grasps:stress_prior}-\ref{fig:GPHLVM-grasps:backconstrained_and_stress}, and~\ref{fig:GPHLVM:stress_prior}-\ref{fig:GPHLVM:backconstrained_and_stress}, with the colors along the trajectory matching the class corresponding to the closest hyperbolic latent point. 
Importantly, the geodesics in GPHLVMs latent space follow the transitions between classes defined in the taxonomy. 
In other words, the shortest paths in the hyperbolic embedding correspond to the shortest paths in the taxonomy graph. 
For instance, in the case of the support pose taxonomy, the geodesic from $\mathsf{LF}$ to $\mathsf{F}_2\mathsf{RH}$ follows $\mathsf{LF} \to \mathsf{F}_2 \to \mathsf{F}_2\mathsf{RH}$. Straight lines in the Euclidean embeddings are more likely to deviate from the graph shortest path, resulting in transitions that do not exist in the taxonomy, e.g., $\mathsf{RF}\mathsf{RH}\to \mathsf{F}_2$ in the Euclidean latent space of Figs.~\ref{fig:GPHLVM:stress_prior}-\ref{fig:GPHLVM:backconstrained_and_stress}.
Figure~\ref{fig:geodesic_trajectories} and App.~\ref{app:experiment_geodesic_motions} show motions resulting from geodesic interpolation in the GPHLVM latent space. The obtained motions are more realistic than those obtained via linear interpolation in the GPLVM latent space and as realistic as those obtained via VPoser~\citep{Pavlakos19:VPoser} (see Figs.~\ref{fig:GPHLVM:trajectories_grasps2}-\ref{fig:GPHLVM:trajectories_support_poses1}).

\begin{figure}[t]
	\centering
    \begin{subfigure}[b]{0.48\textwidth}
    \centering
        \includegraphics[trim={0.14cm 0.2cm 0.0cm 0.3cm},clip,width=0.95\textwidth]{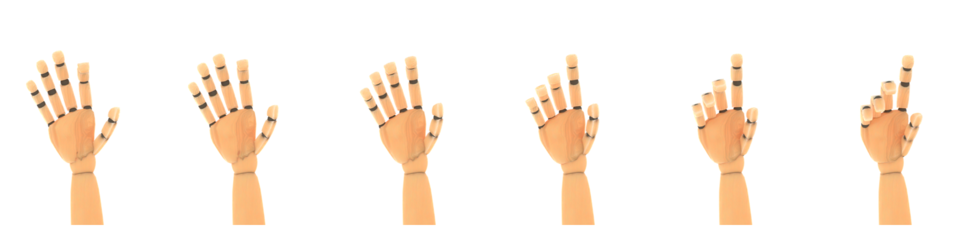}
        \includegraphics[trim={0.15cm 0.85cm 0.0cm 0.25cm},clip,width=0.95\textwidth]{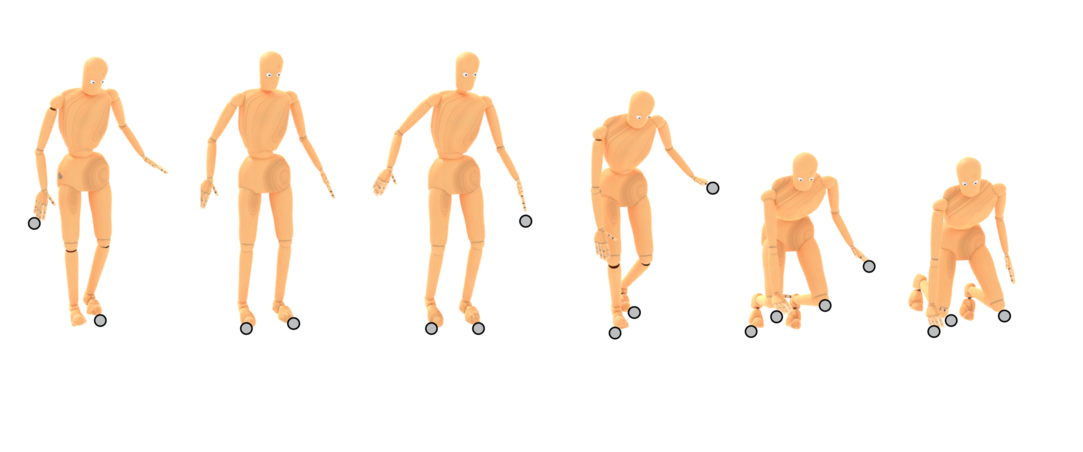}
    \end{subfigure}
	\caption{Motions obtained via geodesic interpolation in the back-constrained GPHLVM latent space. \emph{Top}: Grasp taxonomy from ring ($\mathsf{Ri}$) to index finger extension ($\mathsf{IE}$). \emph{Bottom}: Support pose taxonomy from $\mathsf{LF}\mathsf{RH}$ to $\mathsf{K}_2\mathsf{RH}$. Gray circles denote contacts. }
	\label{fig:geodesic_trajectories}
	\vspace{-0.6cm}
\end{figure}

\vspace{-0.2cm}
\section{Conclusions}
\label{sec:conclusion}
\vspace{-0.2cm}
Inspired by the recent developments of human motion taxonomies, we proposed a computational model GPHLVM that leveraged two types of domain knowledge: the structure of a human-designed taxonomy and a hyperbolic geometry on the latent space which complies with the intrinsic taxonomy's hierarchical structure.
Our GPHLVM allows us to learn hyperbolic embeddings of the features of the taxonomy nodes while capturing the associated hierarchical structure.
To achieve this, our model exploited the curvature of the hyperbolic manifold and the graph-distance information as inductive bias. 
We showed that these two forms of inductive biases are essential to learn taxonomy-aware embeddings, encode unseen data, and potentially expand the learned taxonomy. 
Moreover, we reported that vanilla Euclidean approaches underperformed on all the foregoing cases.
Finally, we introduced a mechanism to generate taxonomy-aware motions in the hyperbolic latent space. 

Note that we assumed that the desired hierarchy is mainly provided by the given taxonomy, which we use as inductive bias in our model. Although our assumption is that the provided taxonomy structure is accurate, our model may also be encouraged to discover additional hierarchical structure by adjusting the scale $\gamma$ of the stress loss function. This is particularly interesting for cases where the provided taxonomy is incomplete or inexact, and thus a lower scale may allow the model to prioritize unsupervised discovery of a hierarchical structure from the dataset itself, mitigating the impact of potential taxonomy errors. 
An interesting extension to our work would be incorporating uncertainty measures for the taxonomy graph. If quantifiable measures of uncertainty for specific nodes or relationships exist, we could integrate them into the stress loss. This would allow us to down-weight the influence of unreliable nodes of the taxonomy, further improving robustness.

Our proposed GPHLVM opens the door to potential applications in fields like bioinformatics. For instance, GPHLVM may uncover hierarchical structures associated to protein interactions~\citep{AlanisLobato18:HyperbolicProteins} or within biological sequences~\citep{Corso21:HyperbolicBioSequence,Macaulay23:HyperbolicPhylogenetic}.
Moreover, the availability of a motion taxonomy structure empowers the GPHLVM to impact various downstream tasks, including robot motion generation, robotic grasping and manipulation, human motion prediction, and character animation. 
In particular, the taxonomy prior may compensate for the lack of data in some of the foregoing applications. 
Unlike other LVMs such as VPoser~\citep{Pavlakos19:VPoser}, GAN-S~\citep{Davydov22:AdversarialParametricPosePrior}, and TEACH~\citep{Athanasiou22:TEACH}, which are trained on full human motion trajectories and thousands of datapoints, our model leverages the taxonomies as inductive bias to better structure the learned embeddings, and uses geodesics as a simple and effective motion generator between single poses.
However, as other models, our geodesic motion generation does not use explicit knowledge on how physically feasible the generated trajectories are. We plan to investigate how to include physics constraints or explicit contact data into the GPHLVM to obtain physically-feasible motions. % that can be executed on real robots.
We will also work on alleviating the computational cost of the hyperbolic kernel by using more efficient sampling strategies. 
For example, instead of sampling from a Gaussian distribution for the approximation~\eqref{eq:heat_monte_carlo}, we could sample from the Rayleigh distribution. This is because complex numbers, whose real and imaginary components are i.i.d. Gaussian, have absolute value that is Rayleigh-distributed.
Finally, we will investigate other manifold geometries to accommodate more complex structures coming from highly-heterogeneous graphs~\citep{Giovanni2022:heterogeneous}.

\clearpage
\section*{Impact Statement}
This paper presents work whose goal is to advance the field of Machine Learning by incorporating different types of inductive bias in latent variable models. The introduction of such inductive bias --- particularly those related to taxonomy structures --- may lead to more explainable Machine Learning models. 

\section*{Acknowledgements}
NJ and TA were supported by the Carl Zeiss Foundation through the JuBot project and by the European Union's Horizon Europe Framework Programme under grant agreement No 101070596 (euROBIN).
MGD collaborated in this work during his PhD sabbatical at the Bosch Center for Artificial Intelligence (BCAI). 
VB was supported by an ETH Z\"urich Postdoctoral Fellowship.

\bibliography{References}

\begin{thebibliography}{70}
\providecommand{\natexlab}[1]{#1}
\providecommand{\url}[1]{\texttt{#1}}
\expandafter\ifx\csname urlstyle\endcsname\relax
  \providecommand{\doi}[1]{doi: #1}\else
  \providecommand{\doi}{doi: \begingroup \urlstyle{rm}\Url}\fi

\bibitem[Abbasi et~al.(2016)Abbasi, Noohi, Parastegari, and
  Žefran]{Abbasi16:ForceGraspTaxonomy}
Abbasi, B., Noohi, E., Parastegari, S., and Žefran, M.
\newblock Grasp taxonomy based on force distribution.
\newblock In \emph{IEEE International Symposium on Robot and Human Interactive
  Communication (RO-MAN)}, pp.\  1098--1103, 2016.
\newblock \doi{10.1109/ROMAN.2016.7745245}.

\bibitem[Abramowitz \& Stegun(1964)Abramowitz and
  Stegun]{Abramowitz64:Handbook}
Abramowitz, M. and Stegun, I.~A.
\newblock \emph{Handbook of mathematical functions with formulas, graphs, and
  mathematical tables}, volume~55.
\newblock US Government printing office, 1964.
\newblock \doi{10.5555/1098650}.

\bibitem[Absil et~al.(2007)Absil, Mahony, and Sepulchre]{Absil07:RiemannOpt}
Absil, P.-A., Mahony, R., and Sepulchre, R.
\newblock \emph{Optimization Algorithms on Matrix Manifolds}.
\newblock Princeton University Press, 2007.
\newblock URL \url{https://press.princeton.edu/absil}.

\bibitem[Alanis-Lobato et~al.(2018)Alanis-Lobato, Mier, and
  Andrade-Navarro]{AlanisLobato18:HyperbolicProteins}
Alanis-Lobato, G., Mier, P., and Andrade-Navarro, M.
\newblock {The latent geometry of the human protein interaction network}.
\newblock \emph{Bioinformatics}, 34\penalty0 (16):\penalty0 2826--2834, 2018.
\newblock \doi{10.1093/bioinformatics/bty206}.

\bibitem[Athanasiou et~al.(2022)Athanasiou, Petrovich, Black, and
  Varol]{Athanasiou22:TEACH}
Athanasiou, N., Petrovich, M., Black, M.~J., and Varol, G.
\newblock Teach: Temporal action compositions for 3d humans.
\newblock In \emph{International Conference on 3D Vision (3DV)}, pp.\
  414--423, 2022.
\newblock \doi{10.1109/3DV57658.2022.00053}.

\bibitem[Balasubramanian et~al.(2015)Balasubramanian, Melendez-Calderon,
  Roby-Brami, and Burdet]{Balasubramanian15:Smoothness}
Balasubramanian, S., Melendez-Calderon, A., Roby-Brami, A., and Burdet, E.
\newblock On the analysis of movement smoothness.
\newblock \emph{Journal of NeuroEngineering and Rehabilitation}, 12\penalty0
  (112), 2015.
\newblock \doi{10.1186/s12984-015-0090-9}.

\bibitem[Bitzer \& Williams(2010)Bitzer and
  Williams]{Bitzer10:KickStartingGPLVM}
Bitzer, S. and Williams, C. K.~I.
\newblock Kick-starting {GPLVM} optimization via a connection to metric mds.
\newblock In \emph{Proceedings of the {NIPS} 2010 Workshop on Challenges of
  Data Visualization}, 2010.
\newblock URL
  \url{https://citeseerx.ist.psu.edu/document?repid=rep1&type=pdf&doi=ee20fa087731a2b2f90d196e98aa959a196cfeb5}.

\bibitem[Borovitskiy et~al.(2020)Borovitskiy, Terenin, Mostowsky, and
  Deisenroth]{Borovitskiy20:GPManifolds}
Borovitskiy, V., Terenin, A., Mostowsky, P., and Deisenroth, M.~P.
\newblock Mat{\'e}rn {G}aussian processes on {R}iemannian manifolds.
\newblock In \emph{Neural Information Processing Systems ({NeurIPS})}, pp.\
  12426--12437, 2020.
\newblock URL
  \url{https://proceedings.neurips.cc/paper/2020/file/92bf5e6240737e0326ea59846a83e076-Paper.pdf}.

\bibitem[Borovitskiy et~al.(2021)Borovitskiy, Azangulov, Terenin, Mostowsky,
  Deisenroth, and Durrande]{Borovitskiy21:GPGraph}
Borovitskiy, V., Azangulov, I., Terenin, A., Mostowsky, P., Deisenroth, M., and
  Durrande, N.
\newblock Mat{\'e}rn {G}aussian processes on graphs.
\newblock In \emph{Intl. Conf. on Artificial Intelligence and Statistic
  ({AISTATS})}, pp.\  2593--2601, 2021.
\newblock URL \url{https://proceedings.mlr.press/v130/borovitskiy21a.html}.

\bibitem[Borr\`as et~al.(2017)Borr\`as, Mandery, and
  Asfour]{Borras17:WholeBodyTaxonomy}
Borr\`as, J., Mandery, C., and Asfour, T.
\newblock A whole-body support pose taxonomy for multi-contact humanoid robot
  motions.
\newblock \emph{Science Robotics}, 2\penalty0 (13), 2017.
\newblock \doi{10.1126/scirobotics.aaq0560}.

\bibitem[Bose et~al.(2020)Bose, Smofsky, Liao, Panangaden, and
  Hamilton]{Bose20:HyperbolicNFs}
Bose, J., Smofsky, A., Liao, R., Panangaden, P., and Hamilton, W.
\newblock Latent variable modelling with hyperbolic normalizing flows.
\newblock In \emph{International Conference on Machine Learning (ICML)}, pp.\
  1045--1055, 2020.
\newblock URL \url{https://proceedings.mlr.press/v119/bose20a.html}.

\bibitem[Boumal(2023)]{Boumal22:RiemannOpt}
Boumal, N.
\newblock \emph{An introduction to optimization on smooth manifolds}.
\newblock Cambridge University Press, 2023.
\newblock URL \url{http://www.nicolasboumal.net/book}.

\bibitem[Bullock et~al.(2013)Bullock, Ma, and Dollar]{Bullock13:HandTaxonomy}
Bullock, I.~M., Ma, R.~R., and Dollar, A.~M.
\newblock A hand-centric classification of human and robot dexterous
  manipulation.
\newblock \emph{IEEE Transactions on Haptics}, 6\penalty0 (2):\penalty0
  129--144, 2013.
\newblock \doi{10.1109/TOH.2012.53}.

\bibitem[Bécigneul \& Ganea(2019)Bécigneul and
  Ganea]{Becigneul19:RiemannianAdaptiveOpt}
Bécigneul, G. and Ganea, O.-E.
\newblock {R}iemannian adaptive optimization methods.
\newblock In \emph{International Conference on Learning Representations
  ({ICLR})}, 2019.
\newblock URL \url{https://openreview.net/pdf?id=r1eiqi09K7}.

\bibitem[Chami et~al.(2020)Chami, Gu, Chatziafratis, and
  Ré]{Chami2020:TreesHyperbolic}
Chami, I., Gu, A., Chatziafratis, V., and Ré, C.
\newblock From trees to continuous embeddings and back: Hyperbolic hierarchical
  clustering.
\newblock In \emph{Neural Information Processing Systems (NeurIPS)}, pp.\
  15065--15076, 2020.
\newblock URL
  \url{https://proceedings.neurips.cc/paper/2020/file/ac10ec1ace51b2d973cd87973a98d3ab-Paper.pdf}.

\bibitem[Chavel(1984)]{Chavel84:Eigenvalues}
Chavel, I.
\newblock \emph{Eigenvalues in Riemannian geometry}.
\newblock Academic press, 1984.

\bibitem[Cohen \& Lifshits(2012)Cohen and Lifshits]{Cohen12:Stationary}
Cohen, S. and Lifshits, M.
\newblock Stationary {G}aussian random fields on hyperbolic spaces and on
  {E}uclidean spheres.
\newblock \emph{ESAIM: Probability and Statistics}, 16:\penalty0 165--221,
  2012.
\newblock URL \url{https://eudml.org/doc/222466}.

\bibitem[Corso et~al.(2021)Corso, Ying, P\'{a}ndy, Veli\v{c}kovi\'{c},
  Leskovec, and Li\`{o}]{Corso21:HyperbolicBioSequence}
Corso, G., Ying, Z., P\'{a}ndy, M., Veli\v{c}kovi\'{c}, P., Leskovec, J., and
  Li\`{o}, P.
\newblock Neural distance embeddings for biological sequences.
\newblock In \emph{Neural Information Processing Systems ({NeurIPS})},
  volume~34, pp.\  18539--18551, 2021.
\newblock URL
  \url{https://proceedings.neurips.cc/paper_files/paper/2021/file/9a1de01f893e0d2551ecbb7ce4dc963e-Paper.pdf}.

\bibitem[Cruceru et~al.(2021)Cruceru, B\'ecigneul, and
  Ganea]{Cruceru21:GraphEmbeddings}
Cruceru, C., B\'ecigneul, G., and Ganea, O.-E.
\newblock Computationally tractable {R}iemannian manifolds for graph
  embeddings.
\newblock In \emph{{AAAI} Conf. on Artificial Intelligence}, pp.\  7133--7141,
  2021.
\newblock URL \url{https://ojs.aaai.org/index.php/AAAI/article/view/16877}.

\bibitem[Cutkosky(1989)]{Cutkosky89:GraspTaxonomy}
Cutkosky, M.~R.
\newblock On grasp choice, grasp models, and the design of hands for
  manufacturing tasks.
\newblock \emph{IEEE Transactions on Robotics and Automation}, 5\penalty0
  (3):\penalty0 269--279, 1989.
\newblock \doi{10.1109/70.34763}.

\bibitem[Davydov et~al.(2022)Davydov, Remizova, Constantin, Honari, Salzmann,
  and Fua]{Davydov22:AdversarialParametricPosePrior}
Davydov, A., Remizova, A., Constantin, V., Honari, S., Salzmann, M., and Fua,
  P.
\newblock Adversarial parametric pose prior.
\newblock In \emph{Conf. on Computer Vision and Pattern Recognition ({CVPR})},
  pp.\  10987--10995, 2022.
\newblock \doi{10.1109/CVPR52688.2022.01072}.

\bibitem[Ding \& Fan(2015)Ding and Fan]{Ding15:GPLVMhumangait}
Ding, M. and Fan, G.
\newblock Multilayer joint gait-pose manifolds for human gait motion modeling.
\newblock \emph{{IEEE} Transactions on Cybernetics}, 45\penalty0 (11):\penalty0
  2413--2424, 2015.
\newblock \doi{10.1109/TCYB.2014.2373393}.

\bibitem[Feix et~al.(2016)Feix, Romero, Schmiedmayer, Dollar, and
  Kragic]{Feix16:GRASPtaxonomy}
Feix, T., Romero, J., Schmiedmayer, H.-B., Dollar, A.~M., and Kragic, D.
\newblock The {GRASP} taxonomy of human grasp types.
\newblock \emph{IEEE Transactions on Human-Machine Systems}, 46\penalty0
  (1):\penalty0 66--77, 2016.
\newblock \doi{10.1109/THMS.2015.2470657}.

\bibitem[Giovanni et~al.(2022)Giovanni, Luise, and
  Bronstein]{Giovanni2022:heterogeneous}
Giovanni, F.~D., Luise, G., and Bronstein, M.~M.
\newblock Heterogeneous manifolds for curvature-aware graph embedding.
\newblock In \emph{ICLR 2022 Workshop on Geometrical and Topological
  Representation Learning}, 2022.
\newblock URL \url{https://openreview.net/forum?id=rtUxsN-kaxc}.

\bibitem[Grigoryan \& Noguchi(1998)Grigoryan and
  Noguchi]{GrigoryanNoguchi98:HyperbolicHeatKernel}
Grigoryan, A. and Noguchi, M.
\newblock The heat kernel on hyperbolic space.
\newblock \emph{Bulletin of the London Mathematical Society}, 30\penalty0
  (6):\penalty0 643--650, 1998.
\newblock \doi{10.1112/S0024609398004780}.

\bibitem[Gu et~al.(2019)Gu, Sala, Gunel, and
  Ré]{Gu19:MixedCurvatureEmbeddings}
Gu, A., Sala, F., Gunel, B., and Ré, C.
\newblock Learning mixed-curvature representations in products of model spaces.
\newblock In \emph{International Conference on Learning Representations
  ({ICLR})}, 2019.
\newblock URL \url{https://openreview.net/pdf?id=HJxeWnCcF7}.

\bibitem[Gupta et~al.(2008)Gupta, Chen, Chen, Kimber, and
  Davis]{Gupta08:GPLVMhumanpose}
Gupta, A., Chen, T., Chen, F., Kimber, D., and Davis, L.~S.
\newblock Context and observation driven latent variable model for human pose
  estimation.
\newblock In \emph{Conf. on Computer Vision and Pattern Recognition ({CVPR})},
  pp.\  1--8, 2008.
\newblock \doi{10.1109/CVPR.2008.4587511}.

\bibitem[Hensman et~al.(2015)Hensman, Matthews, and
  Ghahramani]{Hensman15:ScalableVariationalGPs}
Hensman, J., Matthews, A., and Ghahramani, Z.
\newblock Scalable variational {G}aussian process classification.
\newblock In \emph{Intl. Conf. on Artificial Intelligence and Statistic
  ({AISTATS})}, 2015.
\newblock URL \url{https://proceedings.mlr.press/v38/hensman15.html}.

\bibitem[Jaquier et~al.(2021)Jaquier, Borovitskiy, Smolensky, Terenin, Asfour,
  and Rozo]{Jaquier21:GaBOMatern}
Jaquier, N., Borovitskiy, V., Smolensky, A., Terenin, A., Asfour, T., and Rozo,
  L.
\newblock Geometry-aware {B}ayesian optimization in robotics using {R}iemannian
  {M}at\'ern kernels.
\newblock In \emph{Conference on Robot Learning ({CoRL})}, 2021.
\newblock URL \url{https://openreview.net/forum?id=ovRdr3FOIIm}.

\bibitem[Jensen et~al.(2020)Jensen, Kao, Tripodi, and
  Hennequin]{Jensen20:ManifoldGPLVM}
Jensen, K., Kao, T.-C., Tripodi, M., and Hennequin, G.
\newblock Manifold {GPLVMs} for discovering non-{E}uclidean latent structure in
  neural data.
\newblock In \emph{Neural Information Processing Systems ({NeurIPS})}, pp.\
  22580--22592, 2020.
\newblock URL
  \url{https://proceedings.neurips.cc/paper/2020/file/fedc604da8b0f9af74b6cfc0fab2163c-Paper.pdf}.

\bibitem[Jørgensen \& Hauberg(2021)Jørgensen and
  Hauberg]{Jorgensen21:RiemannianGPLVM}
Jørgensen, M. and Hauberg, S.
\newblock Isometric {G}aussian process latent variable model for dissimilarity
  data.
\newblock In \emph{Intl. Conf. on Machine Learning ({ICML})}, pp.\  5127--5136,
  2021.
\newblock URL \url{https://proceedings.mlr.press/v139/jorgensen21a.html}.

\bibitem[Ko \& Fox(2011)Ko and Fox]{Ko11:GPBayesFilters}
Ko, J. and Fox, D.
\newblock Learning {GP-BayesFilters} via {Gaussian} process latent variable
  models.
\newblock \emph{Autonomous Robots}, 30\penalty0 (1):\penalty0 3--23, 2011.
\newblock ISSN 0929-5593.
\newblock \doi{10.1007/s10514-010-9213-0}.

\bibitem[Kochurov et~al.(2020)Kochurov, Karimov, and
  Kozlukov]{Kochurov20:geoopt}
Kochurov, M., Karimov, R., and Kozlukov, S.
\newblock Geoopt: {R}iemannian optimization in {PyTorch}.
\newblock \emph{arXiv:2005.02819}, 2020.
\newblock URL \url{https://github.com/geoopt/geoopt}.

\bibitem[Krebs \& Asfour(2022)Krebs and Asfour]{Krebs22:BimanualTaxonomy}
Krebs, F. and Asfour, T.
\newblock A bimanual manipulation taxonomy.
\newblock \emph{{IEEE} Robotics and Automation Letters}, 7\penalty0
  (4):\penalty0 11031--11038, 2022.
\newblock \doi{10.1109/lra.2022.3196158}.

\bibitem[Krioukov et~al.(2010)Krioukov, Papadopoulos, Kitsak, Vahdat, and
  Bogu\~n\'a]{Krioukov10:HyperbolicComplexNet}
Krioukov, D., Papadopoulos, F., Kitsak, M., Vahdat, A., and Bogu\~n\'a, M.
\newblock Hyperbolic geometry of complex networks.
\newblock \emph{Phys. Rev. E}, 82:\penalty0 036106, 2010.
\newblock \doi{10.1103/PhysRevE.82.036106}.

\bibitem[Lalchand et~al.(2022{\natexlab{a}})Lalchand, Ravuri, Dann, Kumasaka,
  Sumanaweera, Lindeboom, Madad, Teichmann, and
  Lawrence]{Lalchand22:scalableGPLVM}
Lalchand, V., Ravuri, A., Dann, E., Kumasaka, N., Sumanaweera, D., Lindeboom,
  R. G.~H., Madad, S., Teichmann, S., and Lawrence, N.~D.
\newblock Modelling technical and biological effects in {scRNA-seq} data with
  scalable {GPLVMs}.
\newblock In \emph{Machine Learning in Computational Biology meeting},
  Proceedings of Machine Learning Research, pp.\  46--60, 2022{\natexlab{a}}.
\newblock URL \url{https://proceedings.mlr.press/v200/lalchand22a.html}.

\bibitem[Lalchand et~al.(2022{\natexlab{b}})Lalchand, Ravuri, and
  Lawrence]{Lalchand22:GPLVMstochasticVI}
Lalchand, V., Ravuri, A., and Lawrence, N.~D.
\newblock Generalised {GPLVM} with stochastic variational inference.
\newblock In \emph{Intl. Conf. on Artificial Intelligence and Statistic
  ({AISTATS})}, pp.\  7841--7864, 2022{\natexlab{b}}.
\newblock URL \url{https://proceedings.mlr.press/v151/lalchand22a.html}.

\bibitem[Langenstein(2020)]{Langenstein20:ThesisWholeBodyMotion}
Langenstein, A.
\newblock Generating whole-body multi-contact motions between support poses
  using dynamical movement primitives.
\newblock Master's thesis, Karlsruhe Institute of Technology, 2020.

\bibitem[Lawrence(2003)]{Lawrence03:GPLVM}
Lawrence, N.~D.
\newblock Gaussian process latent variable models for visualisation of high
  dimensional data.
\newblock In \emph{Neural Information Processing Systems ({NeurIPS})}, 2003.
\newblock URL
  \url{https://proceedings.neurips.cc/paper/2003/file/9657c1fffd38824e5ab0472e022e577e-Paper.pdf}.

\bibitem[Lawrence \& Qui\~{n}onero Candela(2006)Lawrence and Qui\~{n}onero
  Candela]{Lawrence06:BackConstrGPLVM}
Lawrence, N.~D. and Qui\~{n}onero Candela, J.
\newblock Local distance preservation in the {GP-LVM} through back constraints.
\newblock In \emph{Intl. Conf. on Machine Learning ({ICML})}, pp.\  513–520,
  2006.
\newblock \doi{10.1145/1143844.1143909}.

\bibitem[Lebedev et~al.(1965)Lebedev, Silverman, and
  Livhtenberg]{Lebedev65:Special}
Lebedev, N.~N., Silverman, R.~A., and Livhtenberg, D.
\newblock Special functions and their applications.
\newblock \emph{Physics Today}, 18\penalty0 (12):\penalty0 70, 1965.

\bibitem[Lee(2018)]{Lee18:RiemannManifold}
Lee, J.
\newblock \emph{Introduction to {R}iemannian Manifolds}.
\newblock Springer, 2nd edition, 2018.
\newblock \doi{10.1007/978-3-319-91755-9}.

\bibitem[Lin \& Sun(2015)Lin and Sun]{Lin15:GraspPlanning}
Lin, Y. and Sun, Y.
\newblock Robot grasp planning based on demonstrated grasp strategies.
\newblock \emph{The International Journal of Robotics Research (IJRR)},
  34\penalty0 (1):\penalty0 26--42, 2015.
\newblock \doi{10.1177/0278364914555544}.

\bibitem[Macaulay et~al.(2023)Macaulay, Darling, and
  Fourment]{Macaulay23:HyperbolicPhylogenetic}
Macaulay, M., Darling, A., and Fourment, M.
\newblock Fidelity of hyperbolic space for bayesian phylogenetic inference.
\newblock \emph{PLOS Computational Biology}, 19\penalty0 (4):\penalty0 1--20,
  04 2023.
\newblock URL \url{https://doi.org/10.1371/journal.pcbi.1011084}.

\bibitem[Mahmood et~al.(2019)Mahmood, Ghorbani, F.~Troje, Pons-Moll, and
  Black]{AMASS:2019}
Mahmood, N., Ghorbani, N., F.~Troje, N., Pons-Moll, G., and Black, M.~J.
\newblock {AMASS}: Archive of motion capture as surface shapes.
\newblock In \emph{Intl. Conf. on Computer Vision ({ICCV})}, pp.\  5441--5450,
  2019.
\newblock \doi{10.1109/ICCV.2019.00554}.

\bibitem[Mandery et~al.(2016{\natexlab{a}})Mandery, Borràs, Jöchner, and
  Asfour]{Mandery16:LanguageWholeBody}
Mandery, C., Borràs, J., Jöchner, M., and Asfour, T.
\newblock Using language models to generate whole-body multi-contact motions.
\newblock In \emph{{IEEE/RSJ} Intl. Conf. on Intelligent Robots and Systems
  ({IROS})}, pp.\  5411--5418, 2016{\natexlab{a}}.
\newblock \doi{10.1109/IROS.2016.7759796}.

\bibitem[Mandery et~al.(2016{\natexlab{b}})Mandery, Terlemez, Do, Vahrenkamp,
  and Asfour]{Mandery16:KITmotionDatabase}
Mandery, C., Terlemez, O., Do, M., Vahrenkamp, N., and Asfour, T.
\newblock Unifying representations and large-scale whole-body motion databases
  for studying human motion.
\newblock \emph{{IEEE} Trans. on Robotics}, 32\penalty0 (4):\penalty0 796--809,
  2016{\natexlab{b}}.
\newblock \doi{10.1109/TRO.2016.2572685}.

\bibitem[Mathieu et~al.(2019)Mathieu, Le~Lan, Maddison, Tomioka, and
  Teh]{Mathieu19:HyperbolicVAE}
Mathieu, E., Le~Lan, C., Maddison, C.~J., Tomioka, R., and Teh, Y.~W.
\newblock Continuous hierarchical representations with {P}oincar\'{e}
  variational auto-encoders.
\newblock In \emph{Neural Information Processing Systems ({NeurIPS})}, 2019.
\newblock URL
  \url{https://proceedings.neurips.cc/paper/2019/file/0ec04cb3912c4f08874dd03716f80df1-Paper.pdf}.

\bibitem[McKean(1970)]{McKean70:HyperbolicKernel}
McKean, H.~P.
\newblock An upper bound to the spectrum of {$\Delta$} on a manifold of
  negative curvature.
\newblock \emph{Journal of Differential Geometry}, 4\penalty0 (3):\penalty0
  359--366, 1970.
\newblock \doi{10.4310/jdg/1214429509}.

\bibitem[Montanaro et~al.(2022)Montanaro, Valsesia, and
  Magli]{Montanaro22:HyperbolicPointCloud}
Montanaro, A., Valsesia, D., and Magli, E.
\newblock Rethinking the compositionality of point clouds through
  regularization in the hyperbolic space.
\newblock In \emph{Neural Information Processing Systems ({NeurIPS})}, 2022.
\newblock URL \url{https://openreview.net/forum?id=Z9ldMhplBrT}.

\bibitem[Nagano et~al.(2019)Nagano, Yamaguchi, Fujita, and
  Koyama]{Nagano19:HyperbolicNormal}
Nagano, Y., Yamaguchi, S., Fujita, Y., and Koyama, M.
\newblock A wrapped normal distribution on hyperbolic space for gradient-based
  learning.
\newblock In \emph{Intl. Conf. on Machine Learning ({ICML})}, pp.\  4693--4702,
  2019.
\newblock URL \url{https://proceedings.mlr.press/v97/nagano19a.html}.

\bibitem[Nickel \& Kiela(2017)Nickel and Kiela]{Nickel2017:Poincare}
Nickel, M. and Kiela, D.
\newblock Poincaré embeddings for learning hierarchical representations.
\newblock In \emph{Neural Information Processing Systems ({NeurIPS})}, 2017.
\newblock URL \url{https://arxiv.org/abs/1705.08039}.

\bibitem[Nickel \& Kiela(2018)Nickel and Kiela]{Nickel2018:Lorentz}
Nickel, M. and Kiela, D.
\newblock Learning continuous hierarchies in the {L}orentz model of hyperbolic
  geometry.
\newblock In \emph{Intl. Conf. on Machine Learning ({ICML})}, pp.\  3779--3788,
  2018.
\newblock URL \url{http://proceedings.mlr.press/v80/nickel18a.html}.

\bibitem[Nishanth~Koganti \& Ikeda(2019)Nishanth~Koganti and
  Ikeda]{Koganti19:ClothRobotGPLVM}
Nishanth~Koganti, Tomohiro~Shibata, T.~T. and Ikeda, K.
\newblock Data-efficient learning of robotic clothing assistance using
  {B}ayesian {G}aussian process latent variable model.
\newblock \emph{Advanced Robotics}, 33\penalty0 (15-16):\penalty0 800--814,
  2019.
\newblock \doi{10.1080/01691864.2019.1610061}.

\bibitem[Paulius et~al.(2019)Paulius, Huang, Meloncon, and
  Sun]{Paulius19:MotionTaxonomy}
Paulius, D., Huang, Y., Meloncon, J., and Sun, Y.
\newblock Manipulation motion taxonomy and coding for robots.
\newblock In \emph{{IEEE/RSJ} Intl. Conf. on Intelligent Robots and Systems
  ({IROS})}, pp.\  5596--5601, 2019.
\newblock \doi{10.1109/IROS40897.2019.8967754}.

\bibitem[Paulius et~al.(2020)Paulius, Eales, and
  Sun]{Paulius20:ManipulationTaxonomy}
Paulius, D., Eales, N., and Sun, Y.
\newblock A motion taxonomy for manipulation embedding.
\newblock In \emph{Robotics: Science and Systems ({R:SS})}, 2020.
\newblock URL \url{http://www.roboticsproceedings.org/rss16/p045.pdf}.

\bibitem[Pavlakos et~al.(2019)Pavlakos, Choutas, Ghorbani, Bolkart, Osman,
  Tzionas, and Black]{Pavlakos19:VPoser}
Pavlakos, G., Choutas, V., Ghorbani, N., Bolkart, T., Osman, A.~A., Tzionas,
  D., and Black, M.~J.
\newblock Expressive body capture: 3d hands, face, and body from a single
  image.
\newblock In \emph{Conf. on Computer Vision and Pattern Recognition ({CVPR})},
  pp.\  10967--10977, 2019.
\newblock \doi{10.1109/CVPR.2019.01123}.

\bibitem[Peng et~al.(2021)Peng, Varanka, Mostafa, Shi, and
  Zhao]{Peng21:SurveyHperbolic}
Peng, W., Varanka, T., Mostafa, A., Shi, H., and Zhao, G.
\newblock Hyperbolic deep neural networks: {A} survey.
\newblock \emph{ArXiv}, abs/2101.04562, 2021.
\newblock URL \url{https://arxiv.org/abs/2101.04562}.

\bibitem[Rasmussen \& Williams(2006)Rasmussen and Williams]{Rasmussen06:GPML}
Rasmussen, C.~E. and Williams, C.~K.
\newblock \emph{Gaussian Processes for Machine Learning}.
\newblock MIT Press, 2006.
\newblock URL \url{http://www.gaussianprocess.org/gpml/}.

\bibitem[Ratcliffe(2019)]{Ratcliffe19:HyperbolicManifold}
Ratcliffe, J.~G.
\newblock \emph{Foundations of Hyperbolic Manifolds}.
\newblock Springer, 3rd edition, 2019.
\newblock \doi{10.1007/978-3-030-31597-9}.

\bibitem[Romero et~al.(2010)Romero, Feix, Kjellström, and
  Kragic]{Romero10:SpatioTempGraspsGPLVM}
Romero, J., Feix, T., Kjellström, H., and Kragic, D.
\newblock Spatio-temporal modeling of grasping actions.
\newblock In \emph{{IEEE/RSJ} Intl. Conf. on Intelligent Robots and Systems
  ({IROS})}, pp.\  2103--2108, 2010.
\newblock \doi{10.1109/IROS.2010.5650701}.

\bibitem[Shafir et~al.(2023)Shafir, Tevet, Kapon, and Bermano]{Shafir23:MDM}
Shafir, Y., Tevet, G., Kapon, R., and Bermano, A.~H.
\newblock Human motion diffusion as a generative prior.
\newblock \emph{arXiv preprint 2303.01418}, 2023.

\bibitem[Siciliano \& Khatib(2016)Siciliano and
  Khatib]{SicilianoKhatib16:Handbook}
Siciliano, B. and Khatib, O.
\newblock \emph{Springer Handbook of Robotics}.
\newblock Springer Cham, 2016.
\newblock ISBN 978-3-319-32550-7.
\newblock \doi{10.1007/978-3-319-32552-1}.

\bibitem[Skopek et~al.(2020)Skopek, Ganea, and
  Bécigneul]{Skopek2020:MixedCurvatureVAE}
Skopek, O., Ganea, O.-E., and Bécigneul, G.
\newblock Mixed-curvature variational autoencoders.
\newblock In \emph{International Conference on Learning Representations
  (ICLR)}, 2020.
\newblock URL \url{https://openreview.net/forum?id=S1g6xeSKDS}.

\bibitem[Stival et~al.(2019)Stival, Michieletto, Cognolato, Pagello, Müller,
  and Atzori]{Stival19:HumanGraspTaxonomy}
Stival, F., Michieletto, S., Cognolato, M., Pagello, E., Müller, H., and
  Atzori, M.
\newblock A quantitative taxonomy of human hand grasps.
\newblock \emph{Journal of NeuroEngineering and Rehabilitation}, 16\penalty0
  (28), 2019.
\newblock \doi{10.1186/s12984-019-0488-x}.

\bibitem[Titsias(2009)]{Titsias09:SparseGP}
Titsias, M.~K.
\newblock {Variational Learning of Inducing Variables in Sparse {G}aussian
  Processes}.
\newblock In \emph{Intl. Conf. on Artificial Intelligence and Statistic
  ({AISTATS})}, pp.\  567--574, 2009.
\newblock URL \url{https://proceedings.mlr.press/v5/titsias09a.html}.

\bibitem[Titsias \& Lawrence(2010)Titsias and Lawrence]{Titsias10:BayesGPVLM}
Titsias, M.~K. and Lawrence, N.~D.
\newblock Bayesian {G}aussian process latent variable model.
\newblock In \emph{Intl. Conf. on Artificial Intelligence and Statistic
  ({AISTATS})}, pp.\  844--851, 2010.
\newblock URL \url{https://proceedings.mlr.press/v9/titsias10a.html}.

\bibitem[Tosi et~al.(2014)Tosi, Hauberg, Vellido, and
  Lawrence]{Tosi14:RiemannianGPLVM}
Tosi, A., Hauberg, S., Vellido, A., and Lawrence, N.~D.
\newblock Metrics for probabilistic geometries.
\newblock In \emph{Conference on Uncertainty in Artificial Intelligence (UAI)},
  2014.

\bibitem[Urtasun et~al.(2008)Urtasun, Fleet, Geiger, Popovi\'{c}, Darrell, and
  Lawrence]{Urtasun08:TopologicalGPLVM}
Urtasun, R., Fleet, D.~J., Geiger, A., Popovi\'{c}, J., Darrell, T.~J., and
  Lawrence, N.~D.
\newblock Topologically-constrained latent variable models.
\newblock In \emph{Intl. Conf. on Machine Learning ({ICML})}, pp.\
  1080–1087, 2008.
\newblock \doi{10.1145/1390156.1390292}.

\bibitem[Whittle(1963)]{whittle1963:SPDEs}
Whittle, P.
\newblock Stochastic processes in several dimensions.
\newblock \emph{Bulletin of the International Statistical Institute},
  40\penalty0 (2):\penalty0 974--994, 1963.

\end{thebibliography}
\bibliographystyle{icml2024}

%%%%%%%%%%%%%%%%%%%%%%%%%%%%%%%%%%%%%%%%%%%%%%%%%%%%%%%%%%%%%%%%%%%%%%%%%%%%%%%
%%%%%%%%%%%%%%%%%%%%%%%%%%%%%%%%%%%%%%%%%%%%%%%%%%%%%%%%%%%%%%%%%%%%%%%%%%%%%%%
% APPENDIX
%%%%%%%%%%%%%%%%%%%%%%%%%%%%%%%%%%%%%%%%%%%%%%%%%%%%%%%%%%%%%%%%%%%%%%%%%%%%%%%
%%%%%%%%%%%%%%%%%%%%%%%%%%%%%%%%%%%%%%%%%%%%%%%%%%%%%%%%%%%%%%%%%%%%%%%%%%%%%%%
\newpage
\appendix
\onecolumn

\section{Hyperbolic manifold}
\label{app:hyperbolic}
\subsection{Manifold operations}
\label{app:hyperbolic-operations}
As mentioned in the main text (\S~\ref{sec:background}), we resort to the exponential and logarithmic maps to operate with Riemannian manifold data. 
The exponential map $\expmap{\bm{x}}{\bm{u}}: \tangentspace{\bm{x}} \to \manifold$ maps a point $\bm{u}$ in the tangent space of $\bm{x}$ to a point $\bm{y}$ on the manifold, while the logarithmic map $\logmap{\bm{x}}{\bm{u}}: \manifold \to \tangentspace{\bm{x}}$ performs the corresponding inverse operation. 
In some settings, it is necessary to work with data lying on different tangent spaces of the manifold.
In this case, one needs to operate with all data on a single tangent space, which can be achieved by leveraging the parallel transport $\prltrsp{\bm{x}}{\bm{y}}{\bm{u}}: \tangentspace{\bm{x}}\to\tangentspace{\bm{y}}$.
All the aforementioned operators are defined in Table~\ref{tab:HyperbolicOperations} for the Lorentz model $\lorentz{d}$ and illustrated in Fig.~\ref{fig:appendix:hyperbolic-basics} for $\mathcal{L}^2$.
Moreover, we introduce the inner product $\innerprod{\bm{x}}{\bm{u}}{\bm{v}}$ between two points on $\lorentz{d}$, which is used to compute the geodesic distance $d_{\manifold}(\bm{u}, \bm{v})$ and all the foregoing operations in the Lorentz model, as shown in Table~\ref{tab:HyperbolicOperations}. 
\begin{figure}
	\centering
	\begin{subfigure}[b]{0.45\textwidth}
		\centering
		\includegraphics[trim={0.0cm 0.0cm 0.0cm 0.0cm},clip, width=.8\textwidth]{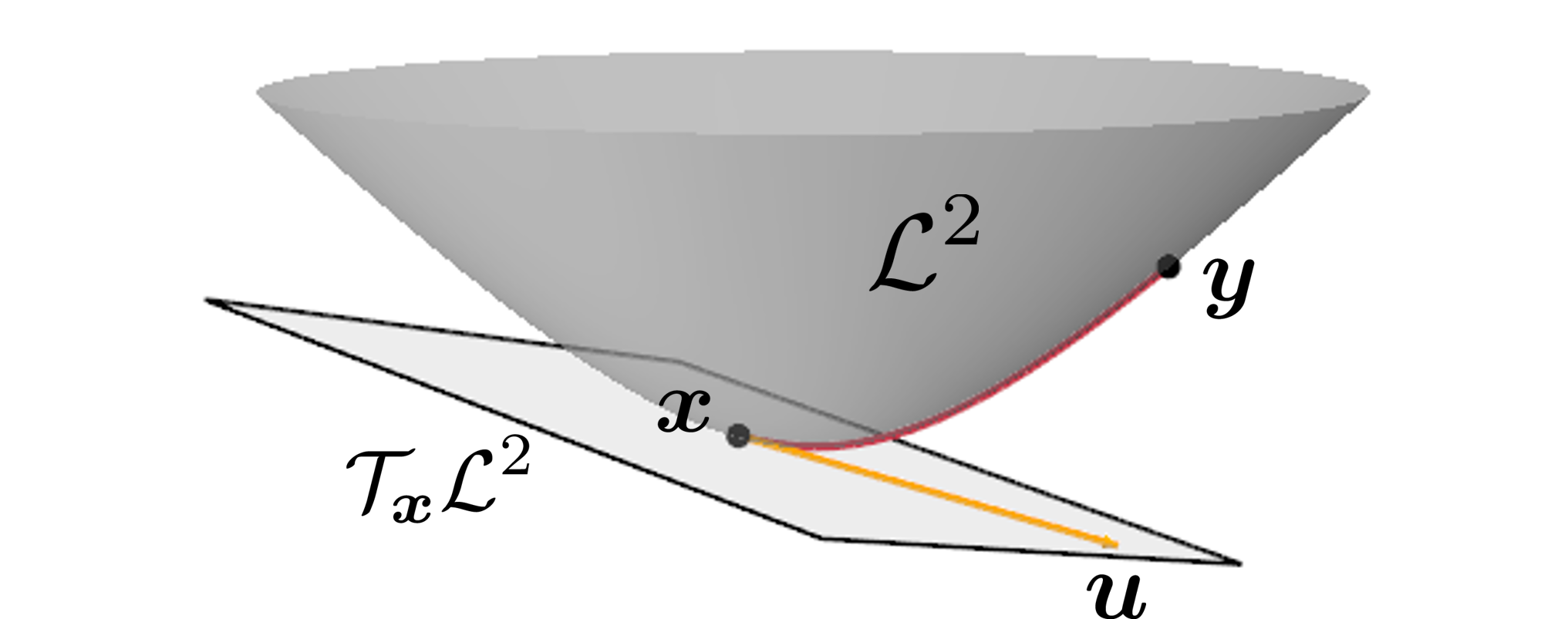}
		\caption{Exponential and logarithmic maps.}
		\label{fig:appendix:explog_maps}
	\end{subfigure}%
	\begin{subfigure}[b]{0.45\textwidth}
		\centering
		\includegraphics[trim={0.0cm 0.0cm 0.0cm 0.0cm},clip, width=.8\textwidth]{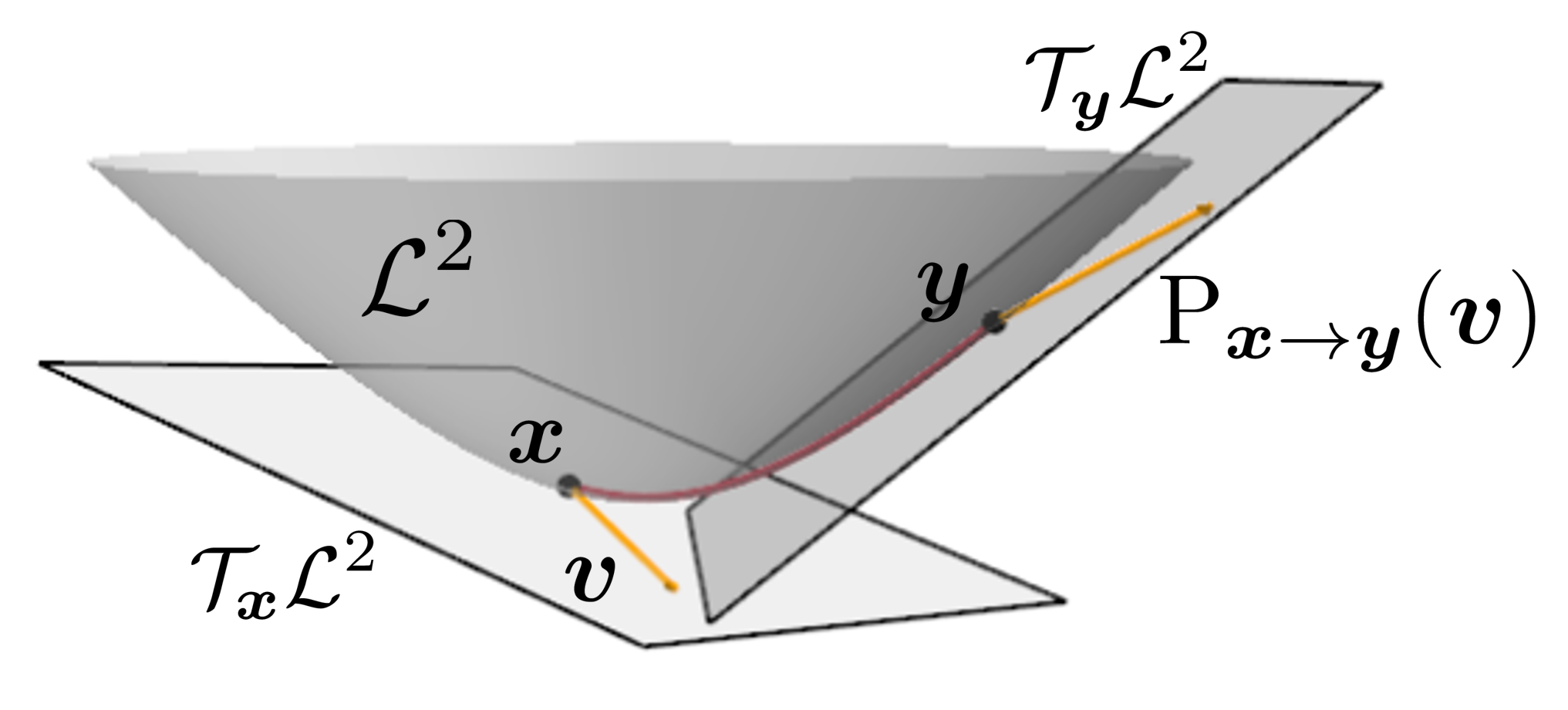}
		\caption{Parallel transport.}
		\label{fig:appendix:prl_trsp}
	\end{subfigure}
	\caption{Principal Riemannians operation on the Lorentz model $\lorentz{2}$. \emph{(a)} The geodesic (\crimsonline) is the shortest path between the two points $\bm{x}$ to $\bm{y}$ on the manifold. The vector $\bm{u}$ (\yellowarrow) lies on the tangent space of $\bm{x}$ such that $\bm{y} = \expmap{\bm{x}}{\bm{u}}$. %and $\bm{u} = \logmap{\bm{x}}{\bm{y}}$. 
    \emph{(b)} $\prltrsp{\bm{x}}{\bm{y}}{\bm{v}}$ is the parallel transport of the vector $\bm{v}$ from $\mathcal{T}_{\bm{x}}\lorentz{2}$ to $\mathcal{T}_{\bm{y}}\lorentz{2}$. }
	\label{fig:appendix:hyperbolic-basics}
    \vspace{-0.2cm}
\end{figure}

\setlength{\extrarowheight}{2pt}
\begin{table}[h]
\caption{Principal operations on the Lorentz model $\lorentz{d}$. For more details, see~\citep{Bose20:HyperbolicNFs} and \citep{Peng21:SurveyHperbolic}.}
\vspace{-0.3cm}
\label{tab:HyperbolicOperations}
    \begin{center}
    \begin{small}
	\begin{tabular}{ll}
		\rowcolor{lightgray}
		\hline
		\textbf{\textsc{Operation}} & \textbf{\textsc{Formula}} \\ [0.3ex]
		\hline 
		$\innerprod{\bm{x}}{\bm{u}}{\bm{v}}$ & $-u_0v_0 + \sum_{i=1}^{d} u_iv_i$ \\ [0.3ex] 
		\hline 
		$\manifolddist{\bm{u}}{\bm{v}}$ & $\operatorname{arcosh}(-\innerprod{\bm{x}}{\bm{u}}{\bm{v}})$ \\ [0.3ex] 
		\hline
		$\expmap{\bm{x}}{\bm{u}}$ & $\operatorname{cosh}(\norm{\ty{\mathcal{L}}}{\bm{u}})\bm{x} + \operatorname{sinh}(\norm{\ty{\mathcal{L}}}{\bm{u}}) \frac{\bm{u}}{\norm{\ty{\mathcal{L}}}{\bm{u}}}$ with $\norm{\ty{\mathcal{L}}}{\bm{u}} = \sqrt{\innerprod{\bm{x}}{\bm{u}}{\bm{u}}}$\\ [0.3ex]
		\hline 
		$\logmap{\bm{x}}{\bm{y}}$ & $\frac{\manifolddist{\bm{x}}{\bm{y}}}{\sqrt{\alpha^2-1}}(\bm{y}+\alpha\bm{x})$ with $\alpha=\innerprod{\bm{x}}{\bm{x}}{\bm{y}}$\\ [0.3ex]
		\hline
		$\prltrsp{\bm{x}}{\bm{y}}{\bm{v}}$ & $\bm{v} + \frac{\innerprod{\bm{x}}{\bm{y}}{\bm{v}}}{1-\innerprod{\bm{x}}{\bm{x}}{\bm{y}}}(\bm{x}+\bm{y})$\\ [0.3ex]
		\hline 
	\end{tabular}
    \end{small}
    \end{center}
\end{table}

\subsection{Equivalence of Poincaré and Lorentz models}
\label{app:hyperbolic-isometry}
As pointed out in the main text (\S~\ref{sec:background}), it is possible to map points from the Lorentz model to the Poincaré ball via an isometric mapping. 
Formally, such  an isometry is defined as the mapping function $f:\lorentz{d} \to \poincare{d}$ such that
\begin{equation}
f(\bm{x}) = \frac{\left(x_1, \dots, x_d \right)^\trsp}{x_0 + 1} ,
\label{eq:HypeDiffeomorphism}
\end{equation}
where $\bm{x} \in \lorentz{d}$ with components $x_0, x_1, \dots, x_d$.
The inverse mapping $f^{-1}:\poincare{d} \to \lorentz{d}$ is defined as follows 
\begin{equation}
f^{-1}(\bm{y}) = \frac{\left(1+\norm{}{\bm{y}}^2, 2y_1, \dots, 2y_d\right)^\trsp}{1 - \norm{}{\bm{y}^2}} ,
\end{equation}
with $\bm{y} \in \poincare{d}$ with components $y_1, \dots, y_d$. Notice that we used the mapping~\eqref{eq:HypeDiffeomorphism} to represent the hyperbolic embeddings in the Poincaré disk throughout the paper, as well as in the computation of the kernel $k^{\lorentz{2}}$~\eqref{eq:hyp_heat}.

\subsection{Hyperbolic wrapped Gaussian distribution}
\label{app:hyperbolic-wGD}
Fig.~\ref{fig:appendix:hyperbolic-wrapped} illustrates the hyperbolic wrapped Gaussian distribution~\citep{Nagano19:HyperbolicNormal}, which is introduced in \S~\ref{sec:background} and utilized as prior distribution for the GPHLVM's embeddings.

\begin{figure}
	\centering
	\includegraphics[trim={0.0cm 0.0cm 0.0cm 0.0cm},clip, width=.3\textwidth]{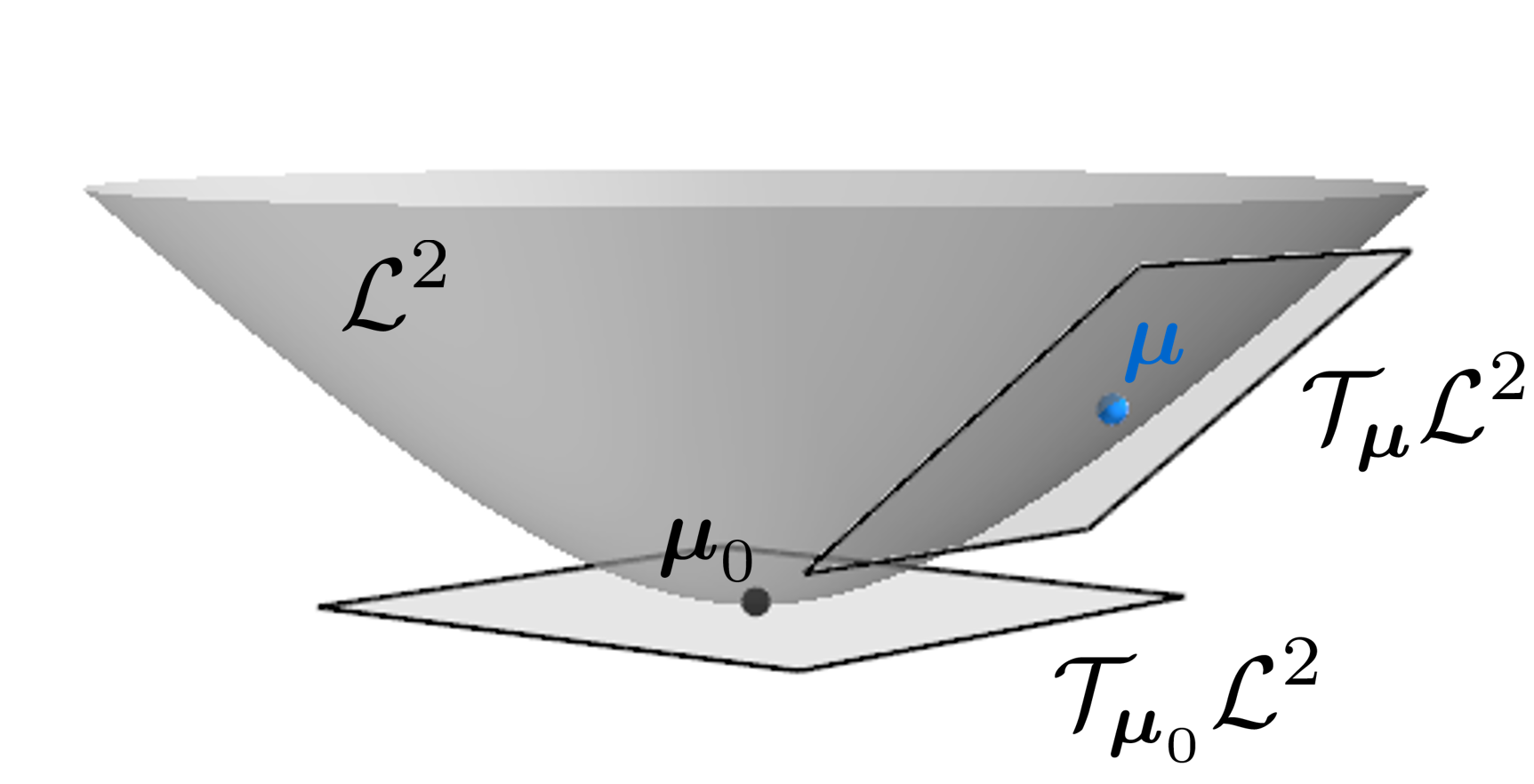}
	\includegraphics[trim={0.0cm 0.0cm 0.0cm 0.0cm},clip, width=.3\textwidth]{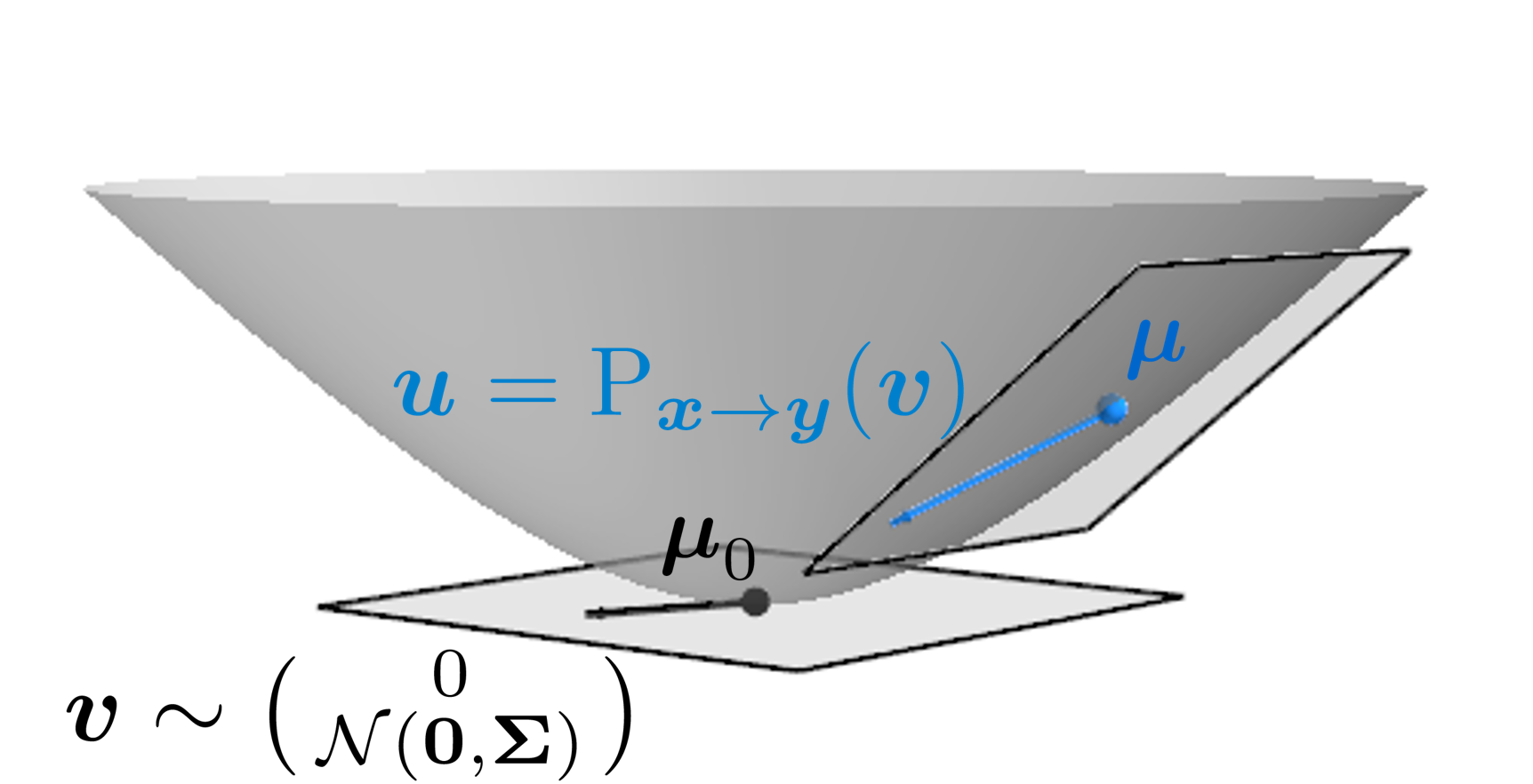}
    \includegraphics[trim={0.0cm 0.0cm 0.0cm 0.0cm},clip, width=.3\textwidth]{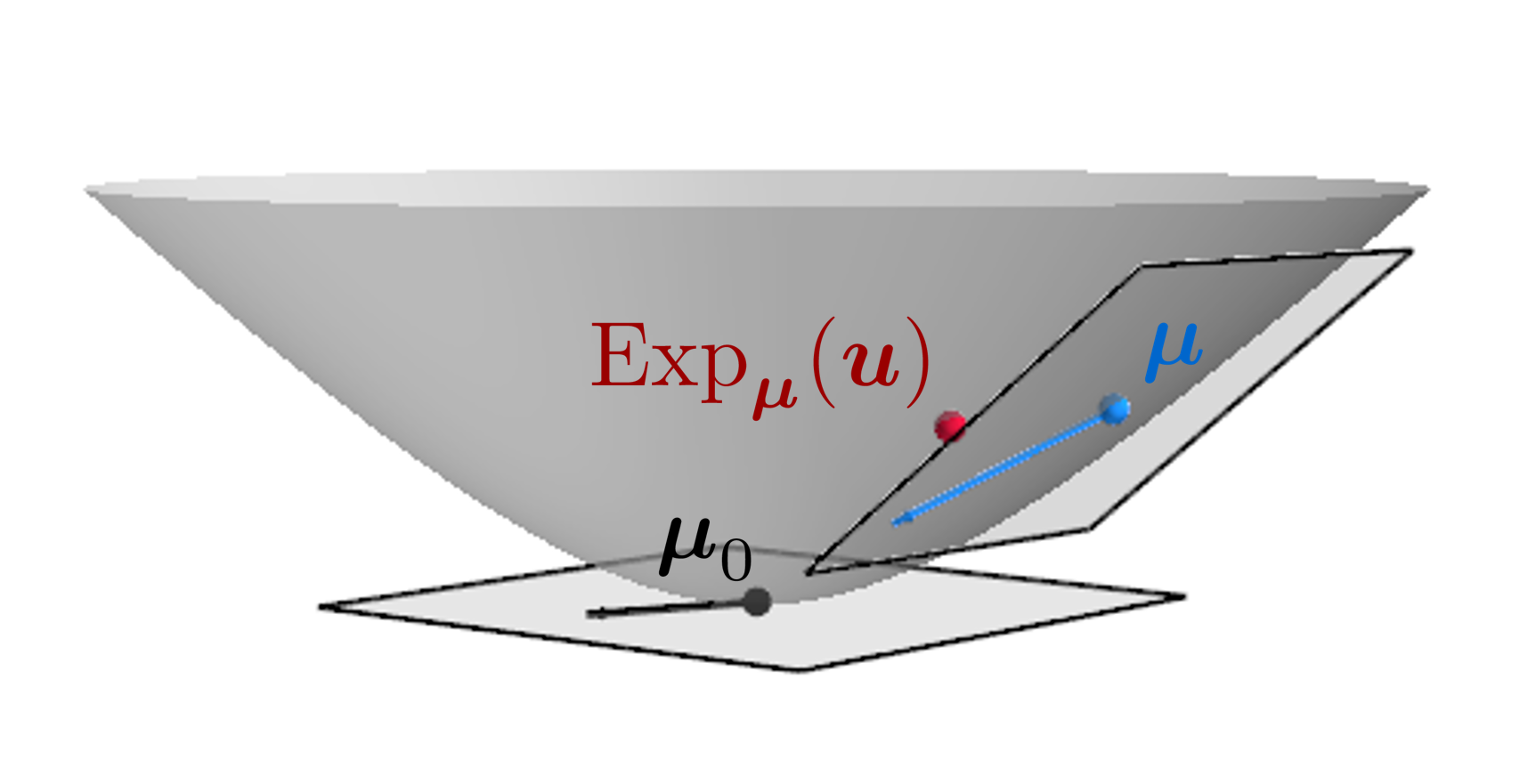}
	\caption{Illustration of the hyperbolic wrapped Gaussian distribution $\hypenormal{\bm{x}}{\bm{\mu}}{\bm{\Sigma}}$ on the Lorenz model $\lorentz{2}$ of the hyperbolic manifold. \emph{Left:} Manifold origin $\bm{\mu}_0$, mean $\bm{\mu}$, and corresponding tangent spaces. \emph{Middle:} A point $\bm{v}$ is sampled from a Euclidean Gaussian distribution in the tangent space of $\bm{\mu}_0$ and moved to $\mathcal{T}_{\bm{\mu}}\lorentz{2}$ via the parallel transport. \emph{Right:} The parallel transported sample $\bm{u}$ is projected onto the manifold using the exponential map. The resulting hyperbolic sample is depicted as a red dot.}
	\label{fig:appendix:hyperbolic-wrapped}
    \vspace{-0.2cm}
\end{figure}

%===============================================================================

\section{Hyperbolic kernels}
\label{app:kernels}
As mentioned in the main text (\S~\ref{subsec:HyperbolicKernels}), following the developments on kernels on manifolds like \citep{Borovitskiy20:GPManifolds,Jaquier21:GaBOMatern}, we may identify the generalized squared exponential kernel with the \emph{heat kernel} --- an important object studied on its own in the mathematical literature.
Due to this, we can obtain the expressions~\eqref{eq:hyp_heat}.
The expression for the case of $\lorentz{2}$ requires discretizing the integral, which may lead to an approximation that is not positive semidefinite.
We address this by suggesting another approximation guaranteed to be positive semidefinite.

\subsection{Alternative Monte Carlo approximation}

Reversing the derivation in \citep[p. 246]{Chavel84:Eigenvalues}, we obtain
\begin{equation} \label{eqn:vanilla_hyperbolic}
k^{\poincare{2}}_{\infty, \kappa, \sigma^2}(\bm{x}, \bm{x}')
=
\frac{\sigma^2}{C_{\infty}'}
\int_0^{\infty}
\exp(-s^2/(2 \kappa^2))
P_{-1/2 + i s}(\cosh(\rho))
s \tanh(\pi s)
\mathrm{d} s ,
\end{equation}
where $\rho = \operatorname{dist}_{\poincare{d}}(\bm{x}, \bm{x}')$ denotes the geodesic distance between $\bm{x}, \bm{x}' \in \poincare{2}$,  $\kappa$ and $\sigma^2$ are the kernel lengthscale and variance, $C_{\infty}'$ is a normalizing constant and $P_{\alpha}$ are Legendre functions \cite{Abramowitz64:Handbook}. Note that we leverage the isometry between the Lorentz and Poincaré models for the computation of the kernel.
Now we prove that these Legendre functions are connected to the \emph{spherical functions} --- special functions closely tied to the geometry of the hyperbolic space and possessing a very important property.

\begin{proposition}
	Assume the disk model of $\poincare{2}$ (i.e. the Poincaré disk).
	Denote the disk by $\mathbb{D}$ and its boundary, the circle, by $\mathbb{T}$.
	Define the hyperbolic outer product by $\langle \bm{z}, \bm{b} \rangle = \frac{1}{2}\log\frac{1-|\bm{z}|^2}{|\bm{z}-\bm{b}|^2}$ for $\bm{z} \in \mathbb{D}, \bm{b} \in \mathbb{T}$.
	Then
	\begin{equation} \label{eqn:magical_property}
	P_{-1/2 + i s}(\cosh(\rho))
	=
	\underbrace{
		\int_{\mathbb{T}}
		e^{(2 s i + 1)\langle \bm{z}, \bm{b}\rangle}
		\mathrm{d} b}_{\text{spherical function } \phi_{2 s}(\bm{z})}
	=
	\int_{\mathbb{T}}
	e^{(2 s i + 1)\langle \bm{z}_1, \bm{b} \rangle}
	\overline{e^{(2 s i + 1)\langle \bm{z}_2, \bm{b} \rangle}}
	\mathrm{d} \bm{b} ,
	\end{equation}
	where $\bm{z} \in \mathbb{D}$ is such that $\rho = \operatorname{dist}_{\poincare{2}}(\bm{z}, \bm{0})$ and $\bm{z}_1, \bm{z}_2 \in \mathbb{D}$ are such that $\rho = \operatorname{dist}_{\poincare{2}}(\bm{z}_1, \bm{z}_2)$.
	Here $i$ denotes the imaginary unit and $\overline{\bm{z}}$ is the complex conjugation.
\end{proposition}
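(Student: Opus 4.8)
The identity bundles two classical facts about harmonic analysis on $\mathbb{H}^{2}$, realized here in the Poincar\'e disk model, and I would establish it in two steps, treating $s$ as a fixed real parameter and writing $\phi_{2s}(\bm{z}) = \int_{\mathbb{T}} e^{(2si+1)\langle \bm{z}, \bm{b}\rangle}\,\mathrm{d}\bm{b}$ for the middle quantity.

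\emph{Step 1 (first equality).} The plan is to recognize $\phi_{2s}$ as a spherical function of $\mathbb{H}^{2}$ and then match it to a Legendre function. First I would note that $\phi_{2s}$ is radial: rotations $\bm{z}\mapsto e^{i\theta}\bm{z}$ preserve the normalized arclength $\mathrm{d}\bm{b}$ and satisfy $\langle e^{i\theta}\bm{z}, e^{i\theta}\bm{b}\rangle = \langle \bm{z},\bm{b}\rangle$, so a change of variables gives $\phi_{2s}(e^{i\theta}\bm{z}) = \phi_{2s}(\bm{z})$; hence $\phi_{2s}$ depends only on $\rho = \operatorname{dist}_{\poincare{2}}(\bm{z},\bm{0})$. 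Next I would use that the ``hyperbolic plane wave'' $\bm{z}\mapsto e^{(2si+1)\langle\bm{z},\bm{b}\rangle}$ is an eigenfunction of the Laplace--Beltrami operator of $\mathbb{H}^{2}$ with an eigenvalue depending only on $s$ --- a direct computation exploiting that $e^{2\langle\bm{z},\bm{b}\rangle}=\tfrac{1-|\bm{z}|^{2}}{|\bm{z}-\bm{b}|^{2}}$ is the Poisson kernel and so harmonic --- and that integration over $\bm{b}$ preserves this; since $\langle\bm{0},\bm{b}\rangle = \tfrac12\log(1/|\bm{b}|^{2}) = 0$ we also get $\phi_{2s}(\bm{0}) = 1$. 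Writing the radial eigenvalue equation in the variable $t=\cosh\rho$ turns it into Legendre's equation with degree $\nu$ satisfying $\nu(\nu+1) = -\tfrac14 - s^{2}$, i.e.\ $\nu = -\tfrac12+is$; the unique solution regular at $t=1$ with value $1$ there is $P_{-1/2+is}(t)$, giving $\phi_{2s}(\bm{z}) = P_{-1/2+is}(\cosh\rho)$. Alternatively this is precisely Helgason's integral representation of the spherical functions of $\mathbb{H}^{2}$ together with the classical spherical-function/Legendre-function dictionary, which one may simply cite.

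\emph{Step 2 (second equality).} Set $I(\bm{z}_{1},\bm{z}_{2}) = \int_{\mathbb{T}} e^{(2si+1)\langle\bm{z}_{1},\bm{b}\rangle}\,\overline{e^{(2si+1)\langle\bm{z}_{2},\bm{b}\rangle}}\,\mathrm{d}\bm{b}$; since $\langle\bm{z},\bm{b}\rangle\in\mathbb{R}$, conjugation only flips $2si+1$ to $-2si+1$. The plan is to show $I$ is invariant under the diagonal action of the orientation-preserving isometry group $G$ of $\mathbb{D}$, and then specialize. I would use two standard properties of the hyperbolic outer product: the cocycle identity $\langle g\bm{z},g\bm{b}\rangle = \langle\bm{z},\bm{b}\rangle + \langle g\bm{0},g\bm{b}\rangle$ for $g\in G$ (the correction term being independent of $\bm{z}$), and the change-of-variables rule $\int_{\mathbb{T}} H(g\bm{b})\,\mathrm{d}\bm{b} = \int_{\mathbb{T}} H(\bm{b})\,e^{2\langle g\bm{0},\bm{b}\rangle}\,\mathrm{d}\bm{b}$, i.e.\ $e^{2\langle g\bm{0},\bm{b}\rangle}$ is the Radon--Nikodym cocycle of the boundary action. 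Substituting $\bm{b}\mapsto g\bm{b}$ inside $I(g\bm{z}_{1},g\bm{z}_{2})$ and applying the cocycle identity to both exponents, the two copies of $\langle g\bm{0},g\bm{b}\rangle$ combine with total coefficient $(2si+1)+(-2si+1)=2$ and exactly cancel the Jacobian $e^{-2\langle g\bm{0},g\bm{b}\rangle}$ produced by the substitution, leaving $I(\bm{z}_{1},\bm{z}_{2})$. Hence $I$ depends only on $\rho = \operatorname{dist}_{\poincare{2}}(\bm{z}_{1},\bm{z}_{2})$. Choosing $g\in G$ with $g\bm{z}_{2}=\bm{0}$ and using $\langle\bm{0},\bm{b}\rangle=0$ then gives $I(\bm{z}_{1},\bm{z}_{2}) = I(g\bm{z}_{1},\bm{0}) = \int_{\mathbb{T}} e^{(2si+1)\langle g\bm{z}_{1},\bm{b}\rangle}\,\mathrm{d}\bm{b} = \phi_{2s}(g\bm{z}_{1}) = P_{-1/2+is}\!\big(\cosh\operatorname{dist}_{\poincare{2}}(g\bm{z}_{1},\bm{0})\big) = P_{-1/2+is}(\cosh\rho)$, using that $g$ is an isometry and Step 1, which closes the full chain.

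\emph{Expected obstacle.} The combinatorics are light; the delicate point is justifying the two structural properties of $\langle\cdot,\cdot\rangle$ invoked in Step 2 --- chiefly that the correction term $\langle g\bm{0},g\bm{b}\rangle$ in the cocycle identity is genuinely $\bm{z}$-independent and coincides with the Jacobian of the $G$-action on $\mathbb{T}$ (equivalently, that $e^{2\langle\bm{z},\bm{b}\rangle}$ behaves as the Poisson kernel, transforming by a conformal density). I expect to handle this by reducing to generators of $G$: rotations about $\bm{0}$, where all terms are trivial, and the one-parameter group of translations along a fixed diameter, where it is an explicit M\"obius-map computation; alternatively the identities can be quoted from the theory of boundary representations on rank-one symmetric spaces. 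The analytic ingredients of Step 1 (the plane wave being a Laplace eigenfunction, and the regular normalized radial eigenfunction being a Legendre function) are classical and can likewise be cited rather than recomputed.
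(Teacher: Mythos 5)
Your proposal is correct, but it proves the result by a genuinely different route than the paper. The paper establishes the first equality by a direct computation: using $1-|\bm{z}|^2=\cosh(\rho)^{-2}$-type identities it rewrites the plane wave as $(\cosh(2\rho)+\sinh(2\rho)\cos\theta)^{-si-1/2}$ and recognizes the boundary integral as the classical Legendre integral representation $P_a(\cosh x)=\frac{1}{\pi}\int_0^\pi(\cosh x+\sinh x\cos\theta)^a\,\mathrm{d}\theta$ (Lebedev, Eq.~7.4.3); it then quotes \citet{Cohen12:Stationary} both for the symmetry $\phi_{-2s}=\phi_{2s}$ and for the second equality (their Lemma 3.5). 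You instead identify $\phi_{2s}$ intrinsically: rotation invariance plus the fact that powers of the Poisson kernel $e^{2\langle\bm{z},\bm{b}\rangle}$ are Laplace--Beltrami eigenfunctions with eigenvalue $-(s^2+\tfrac14)$ reduces the first equality to the radial Legendre ODE with the regularity/normalization condition $\phi_{2s}(\bm{0})=1$, and you prove the second equality from scratch via the cocycle identity $\langle g\bm{z},g\bm{b}\rangle=\langle\bm{z},\bm{b}\rangle+\langle g\bm{0},g\bm{b}\rangle$ and the Radon--Nikodym factor $e^{2\langle g\bm{0},\bm{b}\rangle}$ of the boundary action, whose exponents $(2si+1)+(-2si+1)=2$ indeed cancel the Jacobian exactly as you claim. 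What each buys: the paper's argument is shorter and more elementary given the cited references, while yours is self-contained for the product formula, never needs the $s\mapsto-s$ symmetry of $\phi$ (it is absorbed in the eigenvalue being even in $s$), and is the standard rank-one symmetric-space argument; its cost is the burden of verifying the eigenfunction and cocycle facts, which, as you note, are classical and reduce to a Möbius-map computation. A minor point in your favor: under the curvature $-1$ normalization of $\operatorname{dist}_{\poincare{2}}$ (i.e. $|\bm{z}|=\tanh(\rho/2)$, consistent with the Lorentz-model distance used elsewhere in the paper) your derivation lands directly on $P_{-1/2+is}(\cosh\rho)$ as stated, whereas the paper's computation is carried out with $|\bm{z}|=\tanh\rho$ and produces $\cosh(2\rho)$, a normalization wrinkle your route avoids.
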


\begin{proof}
	Let $\theta$ denote the angle between $\bm{z}$ and $\bm{b}$, and note the following simple identities
	\begin{align}
	|\bm{z}-\bm{b}|^2
	&=
	|\bm{z}|^2 + 1 - 2 |\bm{z}| \cos(\theta)
	=
	\tanh(\rho)^2 + 1 - 2 \tanh(\rho) \cos(\theta),
	\\
	1 - |\bm{z}|^2
	&=
	1 - \tanh(\rho)^2
	=
	\cosh(\rho)^{-2}.
	\end{align}
	Then, we write
	\begin{align}
	e^{(2 s i + 1) \langle \bm{z}, \bm{b} \rangle}
	=
	\left(
	\frac{|\bm{z}-\bm{b}|^2}{1-|\bm{z}|^2}
	\right)^{-s i - 1/2}
	&=
	\left(\cosh(\rho)^{2} (\tanh(\rho)^2 + 1 - 2 \tanh(\rho) \cos(\theta))\right)^{-s i - 1/2} ,
	\\
	&=
	\left(\sinh(\rho)^2 + \cosh(\rho)^{2} - 2 \sinh(\rho) \cosh(\rho) \cos(\theta)\right)^{-s i - 1/2} ,
	\\
	&=
	\left(\cosh(2 \rho) + \sinh(2 \rho) \cos(\theta) \right)^{-s i - 1/2} .
	\end{align}
	
	On the other hand, by \citet[Eq.~7.4.3]{Lebedev65:Special}, we have
	$
	P_a(\cosh(x))
	=
	\frac{1}{\pi}
	\int_{0}^{\pi}
	(\cosh(x) + \sinh(x)\cos(\theta))^a \mathrm{d} \theta,
	$
	hence
	\begin{align}
	P_{-1/2 + i s}(\cosh(2\rho))
	&=
	\frac{1}{\pi}
	\int_{0}^{\pi}
	(\cosh(2 \rho) + \sinh(2 \rho) \cos(\theta))^{-1/2 + i s} \mathrm{d} \theta ,
	\\
	&=
	\frac{1}{2 \pi}
	\int_{-\pi}^{\pi}
	(\cosh(2 \rho) + \sinh(2 \rho) \cos(\theta))^{-1/2 + i s} \mathrm{d} \theta ,
	\\
	&=
	\int_{\mathbb{T}}
	e^{(-2 s i + 1) \langle \bm{z}, \bm{b} \rangle} \mathrm{d} \bm{b}
	= \phi_{-2 s}(\bm{z}).
	\end{align}
	This computation roughly follows \citet[Section 4.3.4]{Cohen12:Stationary}.
	Now, by \citet[Section 3.5]{Cohen12:Stationary}, we have $\phi_{-2 s}(\bm{z}) = \phi_{2 s}(\bm{z})$ which proves the first identity.
	Finally, Lemma 3.5 from \citet{Cohen12:Stationary} proves the second identity.
\end{proof}

By combining expressions~\eqref{eqn:vanilla_hyperbolic} and \eqref{eqn:magical_property}, we get the following Monte Carlo approximation
\begin{equation} \label{eqn:heat_monte_carlo}
k^{\poincare{2}}_{\infty, \kappa, \sigma^2}(\bm{x}, \bm{x}')
\approx
\frac{\sigma^2}{C_{\infty}'}
\frac{1}{L} \sum_{l=1}^L s_l \tanh(\pi s_l) e^{(2 s_l i + 1) \langle \bm{x}_{\mathcal{P}}, \bm{b}_l \rangle} \overline{e^{(2 s_l i + 1) \langle \bm{x}_{\mathcal{P}}', \bm{b}_l \rangle}} ,
\end{equation}
where $\bm{b}_l \stackrel{\text{i.i.d.}}{\sim} U(\mathbb{T})$ and $s_l \stackrel{\text{i.i.d.}}{\sim} e^{- s^2 \kappa^2 / 2} \mathbbold{1}_{[0, \infty)}(s)$.
This gives the approximation used in the main text (see \S~\ref{subsec:HyperbolicKernels}).

Having established a way to evaluate or approximate the heat kernel, analogs of Mat\'ern kernels can be defined by
\begin{equation} \label{eqn:matern_integral_formula}
k_{\nu, \kappa, \sigma^2}(\bm{x}, \bm{x}')
=
\frac{\sigma^2}{C_{\nu}}
\int_0^{\infty}
u^{\nu - 1}
e^{-\frac{2 \nu}{\kappa^2} u}
\tilde{k}_{\infty, \sqrt{2 u}, \sigma^2}(\bm{x}, \bm{x}')
\mathrm{d} u ,
\end{equation}
where $\tilde{k}_{\infty, \sqrt{2 u}, \sigma^2}$ is the same as $k_{\infty, \sqrt{2 u}, \sigma^2}$ but with the normalizing constant $\sigma^2/C_{\infty}$ dropped for simplicity.
Here $C_\nu$ is the normalizing constant ensuring that $k_{\nu, \kappa, \sigma^2}(\bm{x}, \bm{x}) = \sigma^2$ for all $\bm{x}$.

\subsection{Influence of the number of Monte Carlo samples}
The number of Monte Carlo samples influences the quality of the hyperbolic kernel which is used to evaluate the relationship between the latent embeddings in the GPHLVM with $2$-dimensional latent space. Fig.~\ref{fig:KernelSamplesAnalysis} displays the hyperbolic kernel value $k^{\poincare{2}}_{\infty, \kappa, \sigma^2}(\bm{x}, \bm{x}')$ with $\kappa=\sigma=1$ as a function of the distance $\operatorname{dist}_{\lorentz{2}}(\bm{x}, \bm{x}')$ between two embeddings $\bm{x},\bm{x}'$ for different number of Monte Carlo samples. The expected behavior of the hyperbolic kernel is similar to that of the Euclidean SE kernel, i.e., \emph{(1)} $k^{\poincare{2}}_{\infty, \kappa, \sigma^2}(\bm{x}, \bm{x}')=1$ when $\operatorname{dist}_{\hyperbolic{2}}(\bm{x}, \bm{x}')=0$, and (2) $k^{\poincare{2}}_{\infty, \kappa, \sigma^2}(\bm{x}, \bm{x}')$ decreases monotonically when $\operatorname{dist}_{\hyperbolic{2}}(\bm{x}, \bm{x}')$ increases. We observe that this second property is not respected for low number of Monte Carlo samples ($<1000$), as the kernel value oscillates when the distance increases. This would result in inconsistent behaviors of the GPHLVM, as kernel values may be higher for distant embedding pairs than for closer embedding pairs. The kernel generally achieves the expected behavior when the number of Monte Carlo samples is fixed above $1000$. Higher number of samples lead to higher precision and repeatability in the computation of the kernel at the expense of computation time.
For our experiments, we traded-off between kernel quality and computation time by using $3000$ samples for the kernel computation.

\begin{SCfigure}[50][t]
	\centering
    \includegraphics[trim={0.0cm 0.0cm 0.0cm 0.0cm},clip,width=.46\textwidth]{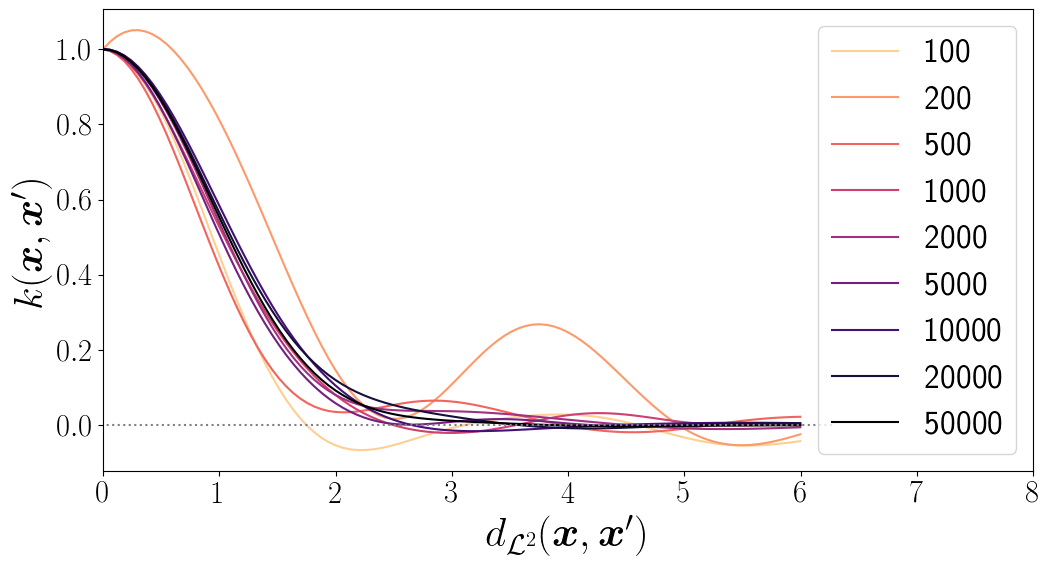}
	\caption{Hyperbolic kernel values as a function of the distance between two embeddings for different number of Monte Carlo samples.}
	\label{fig:KernelSamplesAnalysis}
	\vspace{-0.4cm}
\end{SCfigure}

\section{GPHLVM variational inference}
As mentioned in \S~\ref{subsec:trainingGPHLVM}, when training our GPHLVM on large datasets, we resort to variational inference as originally proposed in~\citep{Titsias10:BayesGPVLM}. 
Here we provide the mathematical details about the changes that are needed to train our model via variational inference.

\label{app:GPHLVMtraining}
\subsection{Computing the KL divergence between two hyperbolic wrapped normal distributions}
\label{app:KLdivergenceWrappedGP}
As mentioned in \S~\ref{subsec:trainingGPHLVM}, we approximate the KL divergence between two hyperbolic wrapped distributions via Monte-Carlo sampling. Namely, given two hyperbolic wrapped distributions $q_\phi(\bm{x})$ and $p(\bm{x})$, we write
\begin{equation}
\kl{q_\phi(\bm{x})}{p(\bm{x})} = \int q_\phi(\bm{x}) \log \frac{q_\phi(\bm{x})}{p(\bm{x})} d\bm{x} \approx \frac{1}{K} \sum_{k=1}^K \log \frac{q_\phi(\bm{x}_k)}{p(\bm{x}_k)},
\end{equation}
where we used $K$ independent Monte-Carlo samples drawn from $q_\phi(\bm{x})$ to approximate the KL divergence. 
These samples are obtained via the procedure described in \S~\ref{sec:background}, i.e., by sampling an element on the tangent space of the origin $\bm{\mu}_0=(1,0,\ldots,0)^\trsp$ of $\lorentz{d}$, via a Euclidean normal distribution, and then applying the parallel transport operation and the exponential map to project it onto $\lorentz{d}$.

\subsection{Details of the variational process}
\label{app:GPHLVMvariational}
As mentioned in the main text (\S~\ref{subsec:trainingGPHLVM}), the marginal likelihood $p(\bm{Y})$ is approximated via variational inference by approximating the posterior $p(\bm{X}|\bm{Y})$ with the hyperbolic variational distribution $q_{\phi}(\bm{X})$ as defined by~\eqref{eq:hyperbolic_variational_distribution}.
The lower bound~\eqref{eq:hyperbolicVariational} is then obtained, similarly as in~\citep{Titsias10:BayesGPVLM}, as
\begin{align}
\log p(\bm{Y}) &= \log \int p(\bm{Y} | \bm{X}) p(\bm{X}) d\bm{X} \\
&= \log \int p(\bm{Y} | \bm{X}) p(\bm{X}) \frac{q_{\phi}(\bm{X})}{q_{\phi}(\bm{X})} d\bm{X} 
= \log \expectation{q_\phi(\bm{X})}{ \frac{p(\bm{Y} | \bm{X}) p(\bm{X})}{q_{\phi}(\bm{X})} } \\
&\geq \expectation{q_\phi(\bm{X})} {\log \frac{p(\bm{Y} | \bm{X}) p(\bm{X})}{q_{\phi}(\bm{X})} } 
= \int q_{\phi}(\bm{X}) \log \frac{p(\bm{Y} | \bm{X}) p(\bm{X})}{q_{\phi}(\bm{X})}  d\bm{X}  \label{eq:ELBOmarginal}\\
&= \int q_{\phi}(\bm{X}) \log p(\bm{Y} | \bm{X}) d\bm{X} - \int q_{\phi}(\bm{X}) \log  \frac{q_{\phi}(\bm{X})}{p(\bm{X})} d\bm{X} \\
&= \expectation{q_\phi(\bm{X})}{\log p(\bm{Y} | \bm{X})} - \kl{q_\phi(\bm{X})}{p(\bm{X})}, \label{eq:ELBOmarginal_final}
\end{align}
following Jensen's inequality in~\eqref{eq:ELBOmarginal}.
As mentioned in \S~\ref{subsec:trainingGPHLVM}, the expectation $\expectation{q_\phi(\bm{X})}{\log p(\bm{Y} | \bm{X})}$ can be decomposed into individual terms for each observation dimension as $\sum_{d=1}^D \expectation{q_\phi(\bm{X})}{\log p(\bm{y}_d | \bm{X})}$, where $\bm{y}_d$ is the $d$-th column of $\bm{Y}$.
We then define the inducing inputs $\bm{Z}_d$ and inducing variables $\bm{u}_d$ the same way as the noiseless observations $\bm{f}_d$, so that the joint distribution of $\bm{f}_d$ and $\bm{u}_d$ can be written as
\begin{equation}
p(\bm{f}_d, \bm{u}_d) = \left( \begin{matrix} \bm{f}_{d} \\ \bm{u}_{d} \end{matrix} \right) = \mathcal{N} \left( \left(\begin{matrix} \bm{m}_d (\bm{X}) \\ \bm{m}_d (\bm{Z}_d) \end{matrix}\right),  \left(\begin{matrix}k_{d}(\bm{X},\bm{X}) & k_{d}(\bm{X},\bm{Z}_d) \\ k_{d}(\bm{Z}_d,\bm{X}) & k_{d}(\bm{Z}_d,\bm{Z}_d) \end{matrix}\right) \right).
\end{equation}
The lower bound~\eqref{eq:ELBOsparse} is then obtained for each dimension, similarly as in~\citep{Hensman15:ScalableVariationalGPs}, as
\begin{align}
\log p(\bm{y}_d | \bm{X}) &= \int \log p(\bm{y}_d | \bm{X}, \bm{u}_d) p(\bm{u}_d) d\bm{u}_d \\
&= \log \int p(\bm{y}_d | \bm{X}, \bm{u}_d) p(\bm{u}_d) \frac{q_{\lambda}(\bm{u}_d)}{q_{\lambda}(\bm{u}_d)} d\bm{u}_d 
= \log \expectation{q_\lambda(\bm{u}_d)}{ \frac{p(\bm{y}_d | \bm{X}, \bm{u}_d) p(\bm{u}_d)}{q_{\lambda}(\bm{u}_d)} } \\
&\geq \expectation{q_\lambda(\bm{u}_d)} {\log \frac{p(\bm{y}_d | \bm{X},\bm{u}_d) p(\bm{u}_d)}{q_{\lambda}(\bm{u}_d)} } 
= \int q_{\lambda}(\bm{u}_d) \log \frac{p(\bm{y}_d | \bm{X},\bm{u}_d) p(\bm{u}_d)}{q_{\lambda}(\bm{u}_d)}  d\bm{u}_d \label{eq:ELBOinducing}\\
&= \int q_{\lambda}(\bm{u}_d) \log p(\bm{y}_d | \bm{X}, \bm{u}_d) d\bm{u}_d - \int q_{\lambda}(\bm{u}_d) \log  \frac{q_{\lambda}(\bm{u}_d)}{p(\bm{u}_d)} d\bm{u}_d \\
&= \expectation{q_\lambda(\bm{u}_d)}{\log p(\bm{y}_d | \bm{X}, \bm{u}_d)} - \kl{q_\lambda(\bm{u}_d)}{p(\bm{u}_d)} \\
&\geq \expectation{q_\lambda(\bm{u}_d)}{\expectation{p(\bm{f}_d | \bm{u}_d)}{\log p(\bm{y}_d | \bm{f}_d(\bm{X}))}} - \kl{q_\lambda(\bm{u}_d)}{p(\bm{u}_d)} \label{eq:approxTitsias}\\
&= \expectation{q_\lambda(\bm{f}_{d})}{\log p(\bm{y}_d | \bm{f}_d(\bm{X}))} - \kl{q_\lambda(\bm{u}_d)}{p(\bm{u}_d | \bm{Z}_d)} \\
&= \expectation{q_\lambda(\bm{f}_{d})}{\log \gaussiandist{\bm{y}_{d}}{\bm{f}_{d}(\bm{X})}{\sigma^2_d}} - \kl{q_\lambda(\bm{u}_d)}{p(\bm{u}_d | \bm{Z}_d)}, \label{eq:ELBOinducing_final}
\end{align}
where we defined $q_\lambda(\bm{f}_{d}) = \int p(\bm{f}_d|\bm{u}_d) q_\lambda(\bm{u}_{d})d\bm{u}_{d}$ with the Euclidean variational distribution $q_\lambda(\bm{u}_{d})=\gaussiandist{\bm{u}_{d}}{\tilde{\bm{\mu}}_{d}}{\tilde{\bm{\Sigma}}_{d}}$, and wrote $p(\bm{u}_d | \bm{Z}_d)=p(\bm{u}_d)$ for simplicity.
The inequality~\eqref{eq:ELBOinducing} corresponds to Jensen's inequality, while~\eqref{eq:approxTitsias} is shown in~\citep{Titsias09:SparseGP}. 

Finally, substituting~\eqref{eq:ELBOinducing_final} in~\eqref{eq:ELBOmarginal_final} results in the following bound on the marginal likelihood
\begin{align}
\log p(\bm{Y}) \geq & \sum_{n=1}^N \sum_{d=1}^D \expectation{q_\phi(\bm{x}_n)}{ \expectation{q_\lambda(f_{n,d})}{\log \gaussiandist{y_{n,d}}{f_{n,d}(\bm{x}_n)}{\sigma^2_d}}} \nonumber \\
& - \sum_{d=1}^D \kl{q_\lambda(\bm{u}_d)}{p(\bm{u}_d | \bm{Z}_d)} - \sum_{n=1}^N  \kl{q_\phi(\bm{x}_n)}{p(\bm{x}_n)}. 
\end{align}

\section{GPHLVM algorithms}
\label{app:Algorithms}
In \S~\ref{subsec:trainingGPHLVM}, we introduced a GPHLVM trained via MAP estimation for small datasets and a variational GPHLVM that handles larger datasets with the aim of providing users with the most appropriate model for their specific problems.
Algorithms~\ref{algo:GPHLVM_MAP} and~\ref{algo:BC-GPHLVM_MAP} summarize the training process of the GPHLVM and back-constrained GPHLVM via MAP estimation. Algorithm~\ref{algo:BayesianGPHLVM} summarizes the training process of the variational GPHLVM.

\begin{algorithm}[h!]
   \caption{GPHLVM training via MAP.}
   \label{algo:GPHLVM_MAP}
\begin{algorithmic}
   \STATE {\bfseries Input:} 
   \STATE Observations $\{\bm{y}_n\}_{n=1}^N$ with $\bm{y}_n\in\euclideanspace^D$, associated taxonomy classes $\{c_n\}_{n=1}^N$, prior on hyperparameters $p(\bm{\Theta})$.
   \STATE {\bfseries Output:} 
   \STATE Latent variables $\{\bm{x}_n\}_{n=1}^N$ with $\bm{x}_n\in\lorentz{Q}$, hyperparameters $\bm{\Theta}=\{\theta_d\}_{d=1}^D$.
   \STATE {\bfseries Initialization:}
   \STATE Set the prior distribution $p(\bm{x})=\hypenormal{\bm{x}}{\bm{\mu}_0}{\alpha \bm{I}}$.
   \STATE Initialize the latent variables $\{\bm{x}_n\}_{n=1}^N$.
   \STATE {\bfseries Training:}
   \REPEAT
   \STATE Compute the MAP loss $\ell_{\text{MAP}}(\bm{X},\bm{\Theta})$.
   \STATE Compute additional losses, e.g., $\ell_{\text{stress}}(\bm{X})$~\eqref{eq:stressLoss}.
   \STATE $\bm{X},\bm{\Theta} \gets \mathsf{RiemannianOptStep}(\ell_{\text{MAP}} + \ell_{\text{stress}})~\eqref{eq:RiemannianGD}.$
   \UNTIL{convergence}
\end{algorithmic}
\end{algorithm}

\begin{algorithm}[tb]
   \caption{Back-constrained GPHLVM training via MAP.}
   \label{algo:BC-GPHLVM_MAP}
\begin{algorithmic}
   \STATE {\bfseries Input:} 
   \STATE Observations $\{\bm{y}_n\}_{n=1}^N$ with $\bm{y}_n\in\euclideanspace^D$, associated taxonomy classes $\{c_n\}_{n=1}^N$, prior on hyperparameters $p(\bm{\Theta})$.
   \STATE {\bfseries Output:} 
   \STATE Back-constraints weights $\{w_{q,n}\}_{n,q=1}^{N,Q}$, hyperparameters $\bm{\Theta}=\{\theta_d\}_{d=1}^D$.
   \STATE {\bfseries Initialization:}
   \STATE Initialize the back-constraints weights $\{w_{q,n}\}_{n,q=1}^{N,Q}$.
   \STATE {\bfseries Training:}
   \REPEAT
   \STATE Compute the latent variables $\{\bm{x}_n\}_{n=1}^N$ from back constraints~\eqref{eq:backconstraints}.
   \STATE Compute the MAP loss $\ell_{\text{MAP}}(\bm{X},\bm{\Theta})$.
   \STATE Compute additional losses, e.g., $\ell_{\text{stress}}(\bm{X})$~\eqref{eq:stressLoss}.
   \STATE $\bm{X},\bm{\Theta} \gets \mathsf{RiemannianOptStep}(\ell_{\text{MAP}} + \ell_{\text{stress}})~\eqref{eq:RiemannianGD}.$
   \UNTIL{convergence}
\end{algorithmic}
\end{algorithm}

\begin{algorithm}[tb]
   \caption{GPHLVM training via variational inference.}
   \label{algo:BayesianGPHLVM}
\begin{algorithmic}
   \STATE {\bfseries Inputs:} 
   \STATE Observations $\{\bm{y}_n\}_{n=1}^N$ with $\bm{y}_n\in\euclideanspace^D$, associated taxonomy classes $\{c_n\}_{n=1}^N$, prior on hyperparameters $p(\bm{\Theta})$.
   \STATE {\bfseries Outputs:} 
   \STATE Inducing inputs $\{\bm{z}_{d,m}\}_{m=1}^M$ with $\bm{z}_{d,m}\in\lorentz{Q}$, hyperbolic variational parameters $\phi = \{\bm{\mu}_n, \bm{\Sigma}_n\}_{n=1}^N$, with $\bm{\mu}_n\in\lorentz{Q}$ and $\bm{\Sigma}_n\in\tangentspacelorentz{\bm{\mu}_n}{Q}$, Euclidean variational parameters $\lambda = \{\tilde{\bm{\mu}}_d, \tilde{\bm{\Sigma}}_d\}_{d=1}^D$, hyperparameters $\bm{\Theta}=\{\theta_d\}_{d=1}^D$.
   \STATE {\bfseries Initialization:}
   \STATE Set the prior distribution $p(\bm{x})=\hypenormal{\bm{x}}{\bm{\mu}_0}{\alpha \bm{I}}$.
   \STATE Initialize the inducing inputs $\{\bm{z}_{d,m}\}_{m=1}^M$.
   \STATE Initialize the hyperbolic variational distribution over the latent variables $q_\phi(\bm{X})$~\eqref{eq:hyperbolic_variational_distribution}.
   \STATE Initialize the Euclidean variational distribution over the inducing variables $q_\lambda(\bm{u}_d)$.
   \STATE {\bfseries Training:}
   \REPEAT
   \STATE Compute the variational loss $\ell_{\text{VA}}$ as the lower bound~\eqref{eq:ELBOsparse}.
   \STATE Compute additional losses, e.g., $\ell_{\text{stress}}(\bm{X})$~\eqref{eq:stressLoss}.
   \STATE $\bm{Z},\phi, \lambda,\bm{\Theta} \gets \mathsf{RiemannianOptStep}(\ell_{\text{VA}} + \ell_{\text{stress}})~\eqref{eq:RiemannianGD}.$
   \UNTIL{convergence}
\end{algorithmic}
\end{algorithm}

\section{Mat\'ern kernels on taxonomy graphs}
\label{app:graphKernels}
As explained in \S~\ref{sec:taxonomyKnowledge} of the main paper, we leverage the Matérn kernel on graphs proposed by~\citet{Borovitskiy21:GPGraph} to design a kernel for our back-constrained GPHLVM that accounts for the geometry of the taxonomy graph. 
Here we provide the main equations of such a kernel, and refer the reader to~\citep{Borovitskiy21:GPGraph} for further details.
Formally, let us define a graph $G=(V,E)$ with vertices $V$ and edges $E$ and the \emph{graph Laplacian} as $\bm{\Delta} = \bm{D} - \bm{W}$, where $\bm{W}$ is the graph adjacency matrix and $\bm{D}$ its corresponding diagonal degree matrix, with $\bm{D}_{ii} = \sum_j \bm{W}_{ij}$. 
The eigendecomposition $\bm{U}\bm{\Lambda}\bm{U}^\trsp$ of the Laplacian $\bm{\Delta}$ is then used to formulate both the SE and Matérn kernels on graphs, as follows,
\begin{equation}
k^{\mathbb{G}}_{\infty, \kappa}(c_n, c_m) = \bm{U} \left( e^{-\frac{\kappa^2}{2}\bm{\Lambda}}\right)\bm{U}^\trsp, \quad \text{and} \quad k^{\mathbb{G}}_{\nu,\kappa}(c_n, c_m) = \bm{U}\left(\frac{2\nu}{\kappa^2} + \bm{\Lambda} \right)^{-\nu}\bm{U}^\trsp ,
\label{eq:GraphKernels}
\end{equation}
where $\kappa$ is the lengthscale (i.e., it controls how distances are measured) and $\nu$ is the smoothness parameter determining mean-squared differentiability of the associated Gaussian process (GP).
Note that the graph kernel expressions in~\eqref{eq:GraphKernels} are obtained by considering the connection between Matérn kernel GPs and stochastic partial differential equations, originally proposed by~\citet{whittle1963:SPDEs} and later extended to Riemannian manifolds in~\citep{Borovitskiy20:GPManifolds}. 
This connection establishes that SE and Matérn GPs satisfy
\begin{equation}
e^{-\frac{\kappa^2}{4}\bm{\Delta}}\bm{f} = \bm{\mathcal{W}}, \quad \text{and} \quad \left(\frac{2\nu}{\kappa^2} + \bm{\Delta} \right)^{\frac{\nu}{2}}\bm{f} = \bm{\mathcal{W}} ,
\end{equation}
where $\bm{\mathcal{W}} \sim \mathcal{N}(\bm{0}, \bm{I})$ and $\bm{f}: V \to \mathbb{R}$, which lead to definition of graph GPs~\citep{Borovitskiy21:GPGraph}.

\section{Distortion loss}
\label{app:distortion}
As explained in the main paper, we focus on two ways of embedding the graph in the hyperbolic space: a global approach using a stress regularization which matches graph distances with geodesic distances, and a combination between this stress regularization and the use of back constraints (see \S~\ref{sec:taxonomyKnowledge}). 
However, the literature on graph embeddings also surveys a \emph{distortion loss}~\citep{Cruceru21:GraphEmbeddings} given by
\begin{equation}
\label{eq:vanilla_distortion}
\ell_{\text{distortion}}(\bm{X}) = \sum_{i<j}\left|
\frac{\operatorname{dist}_{\lorentz{Q}}(\bm{x}_i, \bm{x}_j)^2}{\operatorname{dist}_{\mathbb{G}}(c_i, c_j)^2} - 1\right|^2 ,
\end{equation}
which tries to match the graph and manifold distances by minimizing their ratio's distance to $1$.

\begin{figure}
	\centering
	\includegraphics[trim={5.3cm 2.2cm 4.3cm 2.2cm},clip,width=\textwidth]{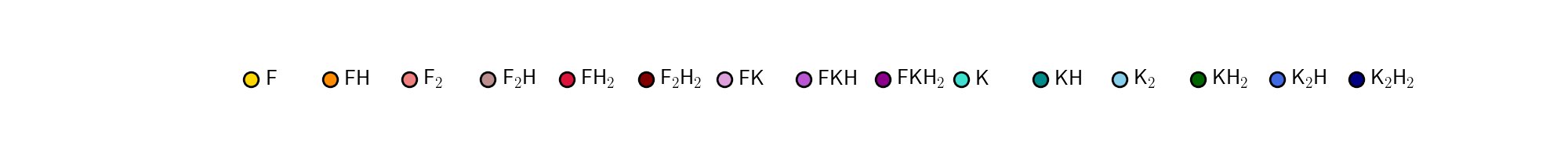}
	\begin{subfigure}[b]{0.45\textwidth}
		\centering
		\includegraphics[trim={2.5cm 2.5cm 2.5cm 2.5cm},clip, width=.45\textwidth]{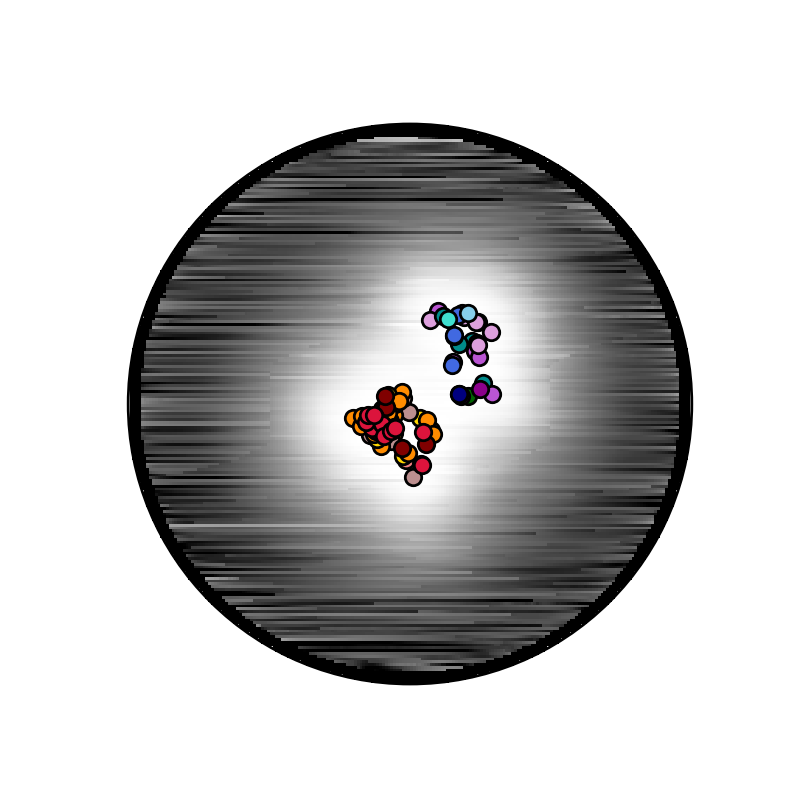}
		\includegraphics[trim={2.0cm 2.0cm 1.0cm 2.0cm},clip, width=.45\textwidth]{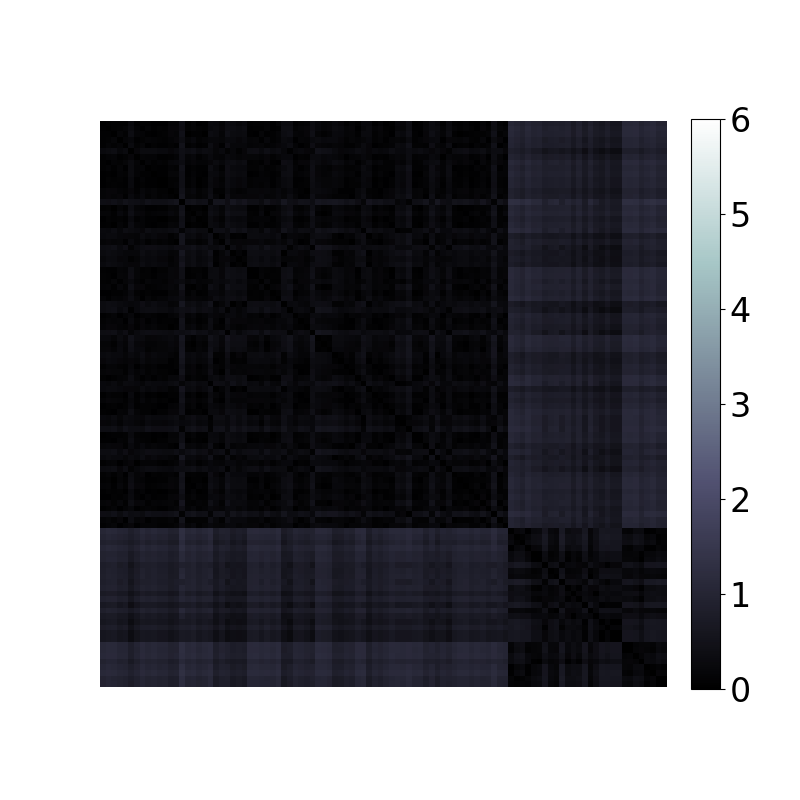}
		\caption{Regularization with $\ell_\text{distortion}$}
		\label{fig:appendix:vanilla_distortion_embeddings}
	\end{subfigure}%
	\begin{subfigure}[b]{0.45\textwidth}
		\centering
		\includegraphics[trim={2.5cm 2.5cm 2.5cm 2.5cm},clip, width=.45\textwidth]{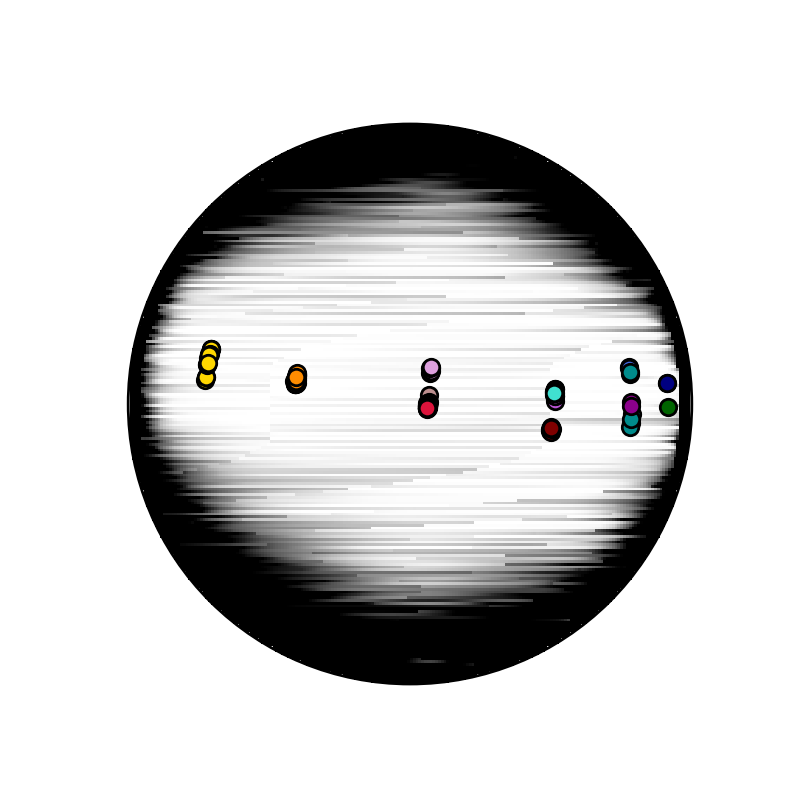}
		\includegraphics[trim={2.0cm 2.0cm 1.0cm 2.0cm},clip, width=.45\textwidth]{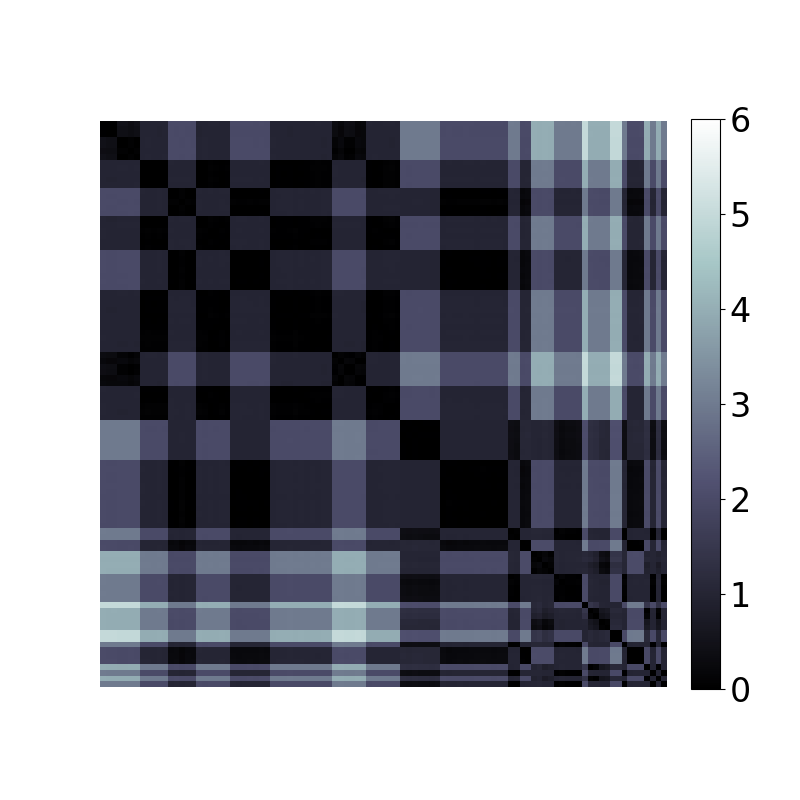}
		\caption{Reg. with $\widetilde{\ell}_\text{distortion}$}
		\label{fig:appendix:distortion_embeddings}
	\end{subfigure}
	\caption{Embeddings learned with distortion regularization. \emph{(a)} and \emph{(b)} display the latent embeddings alongside distance matrices after training our GPHLVM model with an added distortion loss $\ell_{\text{distortion}}$ as it was originally defined, and with our modified distortion loss $\widetilde{\ell}_\text{distortion}$, respectively. These embeddings indeed show that our regularizations failed to encode the distances in the graph.}
	\label{fig:appendix:distortion_results}
    \vspace{-0.3cm}
\end{figure}

We found that our problem is more subtle than usual graph embeddings, given that several points in our dataset may correspond to the same graph node (e.g., two different poses in which the left foot is the only limb in contact). 
Indeed, notice that~\eqref{eq:app:distortion_loss} is ill-defined for the case $i = j$, or equivalently, $\operatorname{dist}_{\mathbb{G}}(c_i, c_j)^2 = 0$. 
This is because all nodes $\bm{x}_i$ are assumed to be different from each other. However, in our setup, several $\bm{x}_i$ may correspond to the exact same class in the taxonomy.

Our first attempt to remediate this was to add a simple regularizer $\varepsilon = 10^{-1}$ to the denominator. However, this caused the loss to give more weight to the points where $\operatorname{dist}_{\mathbb{G}}(c_i, c_j)^2 = 0$ (see Fig.~\ref{fig:appendix:vanilla_distortion_embeddings} for the outcome of training a GPHLVM with this type of regularization).
We then considered an alternate definition of distortion in which the term inside the sum is given by
\begin{equation}
\label{eq:app:distortion_loss}
\widetilde{\ell}_\text{distortion}(\bm{x}_i, \bm{x}_j) = \left\{ \begin{array}{ll}
\lambda_1 \operatorname{dist}_{\lorentz{Q}}(\bm{x}_i, \bm{x}_j) & \text{if }\bm{x}_i\text{ and }\bm{x}_j\text{'s classes are identical} \\
\lambda_2 \ell_\text{distortion}(\bm{x}_i, \bm{x}_j)& \text{otherwise} 
\end{array}\right.
\end{equation}
where $\lambda_1, \lambda_2 \in \mathbb{R}^+$ are hyperparameters. $\lambda_1$ governs how much we encourage latent codes of the same class to collapse into a single point, while $\lambda_2$ weights how much the geodesic distance should match the graph distance. 
After manual hyperparameter tuning, we obtained the latent space and distance matrix portrayed in Figs.~\ref{fig:appendix:vanilla_distortion_embeddings}~\ref{fig:appendix:distortion_embeddings}. 
As can be seen in both accounts, the distortion loss produced lackluster results and failed to properly match the latent space distances with that of the graph. 
For these experiments, we used a loss scale of $50$, $\lambda_1 = 0.01$ and $\lambda_2=10$, meaning that we strongly encouraged the distances between non-identical classes to match in ratio.

\clearpage
\section{Additional details on the experiments of \S~\ref{sec:experiments}}
\label{app:experiment_details}
\subsection{Data}
\label{app:experiment_data_details}
For all experiments, we used humans recordings from the KIT Whole-Body Human Motion Database\footnote{\url{https://motion-database.humanoids.kit.edu/}}~\citep{Mandery16:KITmotionDatabase}. Additional details on the data of each experiments are described in the sequel.

\subsubsection{Bimanual manipulation}
Table~\ref{tab:DataBimanual} describes the data of the bimanual manipulation taxonomy used in the experiments reported in \S~\ref{sec:experiments}. We use data from subject 1723 executing five different bimanual household activities, namely $\mathsf{cut}$ and $\mathsf{peel}$ a cucumber, $\mathsf{roll}$ dough, $\mathsf{stir}$, and $\mathsf{wipe}$. The taxonomy categories are obtained using the annotations provided in~\citep{Krebs22:BimanualTaxonomy}. We obtain data for two bimanual categories that do not appear in the dataset (unimanual right and tightly-coupled asymmetric left dominant) by mirroring the motions. 

\begin{table}[h!]
	\caption{Description of the bimanual manipulation patterns extracted from the bimanual manipulation taxonomy~\citep{Krebs22:BimanualTaxonomy} used in our experiments.}
    \vspace{-0.3cm}
	\label{tab:DataBimanual}
  	\begin{center}
    \begin{small}
    \begin{sc}
	\resizebox{0.85\textwidth}{!}{
		\begin{tabular}{|c|c|c|c|}
			\rowcolor{lightgray}
			\hline
			\textbf{Bimanual category} & \textbf{Abbreviation} & \textbf{Bimanual activity} & \textbf{Number}\\ 
			\hline 
			\hline
			\multirow{2}*{Unimanual left} & \multirow{2}*{$\mathsf{U_{left}}$} & Peel & 5 \\
			& & Wipe & 5 \\
			\hline
			\multirow{2}*{Unimanual right} & \multirow{2}*{$\mathsf{U_{right}}$} & Peel & 5 \\
			& & Wipe & 5 \\
			\hline
            Uncoordinated bimanual & $\mathsf{B}$ &  -  & 0 \\
            \hline
            \multirow{4}*{Loosely coupled} & \multirow{4}*{$\mathsf{LC}$} & Cut & 2 \\
            & & Peel & 4 \\
            & & Stir & 2 \\
            & & Wipe & 2 \\
			\hline
            \multirow{4}*{Tightly-coupled asymmetric left dominant} & \multirow{4}*{$\mathsf{TCA_{left}}$} & Cut & 2 \\
            &  & Peel & 3 \\
            &  & Stir & 2 \\
            &  & Wipe & 3 \\
            \hline
            \multirow{4}*{Tightly-coupled asymmetric right dominant} & \multirow{4}*{$\mathsf{TCA_{right}}$} & Cut & 2 \\
            &  & Peel & 3 \\
            &  & Stir & 2 \\
            &  & Wipe & 3 \\
            \hline
            Tightly-coupled symmetric & $\mathsf{TCS}$ & Roll & 10 \\
            \hline
		\end{tabular}
	}
    \end{sc}
    \end{small}
    \end{center}
\end{table}

\subsubsection{Hand grasps}
Fig.~\ref{fig:HyperbolicAndTaxonomy}-\emph{right} shows the hand grasps taxonomy~\citep{Stival19:HumanGraspTaxonomy} and
Table~\ref{tab:DataGrasps} describes the data used in \S~\ref{sec:experiments}. We use grasp data\footnote{\url{https://motion-database.humanoids.kit.edu/list/motions/?datasets=3534}} from subjects 2122, 2123, 2125, 2177. The considered human recordings consist of a human grasping an object on a table, lifting it, and placing it back. We consider a single object per grasp type and extract the wrist and finger joint angles of the human when the object is at the highest position. Each grasp is identified with a leaf node of the taxonomy tree. Notice that no data was available for the three-fingers-sphere grasp type.

\begin{table}[h!]
	\caption{Description of the grasps extracted from the quantitative grasp taxonomy~\citep{Stival19:HumanGraspTaxonomy} used in our experiments.}
    \vspace{-0.3cm}
	\label{tab:DataGrasps}
 	\begin{center}
    \begin{small}
    \begin{sc}
	\resizebox{0.85\textwidth}{!}{
		\begin{tabular}{|c|c|c|c|c|}
			\rowcolor{lightgray}
			\hline
			\textbf{Category} & \textbf{Grasp type} & \textbf{Abbreviation} & \textbf{Grasped object} & \textbf{Number}\\ 
			\hline 
			\hline
			\multirow{5}*{Flat grasps} & Lateral & $\mathsf{La}$ & Padlock & 5 \\
			& Extension type & $\mathsf{ET}$ & Fruit bars & 5 \\
            & Quadpod & $\mathsf{Qu}$ & Lever red & 5 \\
            & Parallel extension & $\mathsf{PE}$ & Fruit bars & 5 \\
            & Index finger extension & $\mathsf{IE}$ & Knife & 5 \\
			\hline
			\multirow{4}*{Distal grasps} & Stick & $\mathsf{St}$ & Fizzy tablets & 5 \\
            & Writing tripod & $\mathsf{WT}$ & Syringe & 5 \\
            & Prismatic four fingers & $\mathsf{PF}$ & Mixing bowl & 5 \\
            & Power disk & $\mathsf{PD}$ & Dog & 5 \\
			\hline
            \multirow{4}*{Cylindrical grasps} & Large diameter & $\mathsf{LD}$ & Dwarf & 5 \\
            & Medium wrap & $\mathsf{MW}$ & Power tool & 5 \\
            & Small diameter & $\mathsf{SD}$ & Clamp & 5 \\
            & Fixed hook & $\mathsf{FH}$ & Fizzy tablets & 5 \\
			\hline
            \multirow{3}*{Spherical grasps} & Tripod & $\mathsf{Tr}$ & Padlock & 5 \\
            & Power sphere & $\mathsf{PS}$ & Dog & 5 \\
            & Precision sphere & $\mathsf{RS}$ & Ball & 5 \\
            \hline
            \multirow{4}*{Ring grasps} & Three fingers sphere & $\mathsf{TS}$ & - & 0 \\
            & Prismatic pinch & $\mathsf{PP}$ & Flower cup & 5 \\
            & Tip pinch & $\mathsf{TP}$ & Chopsticks & 4 \\
            & Ring & $\mathsf{Ri}$ & Cola bottle & 5 \\
            \hline
		\end{tabular}
	}
    \end{sc}
    \end{small}
    \end{center}
\end{table}

 \begin{SCfigure}[50][t]
	\centering
	\includegraphics[trim={0.cm 0.cm 0.cm 0.0cm},clip,width=0.45\textwidth]{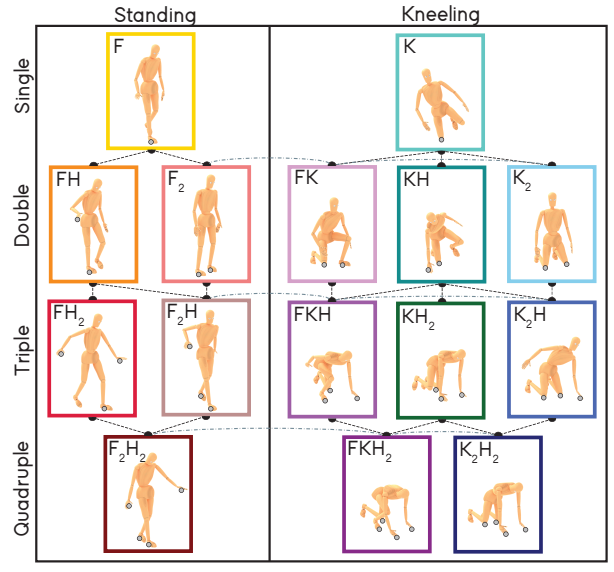}
	\caption{Subset of the whole-body support pose taxonomy~\citep{Borras17:WholeBodyTaxonomy} used in one of our experiments. Each node is a support pose defined by the type of contacts (foot $\mathsf{F}$, hand $\mathsf{H}$, knee $\mathsf{K}$). The lines represent graph transitions between the taxonomy nodes. Contacts are depicted by grey dots.}
	\label{fig:support-poses-taxonomy}
\end{SCfigure}

\subsubsection{Support poses}
Table~\ref{tab:DataSupportPoses} describes the data of the whole-body support pose taxonomy used in the experiments reported in \S~\ref{sec:experiments}.
Each pose is identified with a support pose category, i.e., a node of the graph in Fig.~\ref{fig:support-poses-taxonomy}, and with a set of associated contacts. 
As shown in the table, some support poses include several sets of contacts.
For example, the support pose $\mathsf{F}$ groups all types of support poses where only one foot is in contact with the environment.
In our experiments, we consider an augmented version of the taxonomy that explicitly distinguishes between left and right contacts.
 Notice that some sets of contacts are not represented in the data and thus do not appear in Table~\ref{tab:DataSupportPoses}.

\begin{table}[h!]
    \caption{Description of the support poses extracted from the whole-body support pose taxonomy~\citep{Borras17:WholeBodyTaxonomy} used in our experiments.}
    \vspace{-0.3cm}
	\label{tab:DataSupportPoses}
	\begin{center}
    \begin{small}
    \begin{sc}
	\resizebox{0.85\textwidth}{!}{
		\begin{tabular}{|c|c|c|c|}
			\rowcolor{lightgray}
			\hline
			\textbf{Support pose} & \textbf{Augmented support pose} & \textbf{Contacts} & \textbf{Number}\\ 
			\hline 
			\hline
			\multirow{2}*{$\mathsf{F}$} & $\mathsf{LF}$ & Left foot & 7 \\
			& $\mathsf{RF}$ & Right foot & 6 \\
			\hline
			\multirow{4}*{$\mathsf{FH}$} & $\mathsf{LF LH}$ & Left foot, left hand & 5 \\
			& $\mathsf{RF RH}$& Right foot, right hand & 6 \\
			& $\mathsf{LF RH}$& Left foot, right hand & 5 \\
			& $\mathsf{RFLH}$& Right foot, left hand & 6 \\
			\hline
			$\mathsf{F}_2$ & $\mathsf{F}_2$ & Left foot, right foot & 6 \\
			\hline
			\multirow{2}*{$\mathsf{FH}_2$} & $\mathsf{LF H}_2$ & Left foot, left hand, right hand & 6 \\
			& $\mathsf{RF H}_2$ & Right foot, left hand, right hand & 6 \\
			\hline
			\multirow{2}*{$\mathsf{F}_2\mathsf{H}$} & $\mathsf{F}_2\mathsf{H}^l$ & Left foot, right foot, left hand & 5 \\
			& $\mathsf{F}_2\mathsf{RH}$ & Left foot, right foot, right hand & 7 \\
			\hline
			$\mathsf{F}_2\mathsf{H}_2$ & $\mathsf{F}_2\mathsf{H}_2$ & Left foot, right foot, left hand, right hand & 7 \\
			\hline
			\hline
			\multirow{2}*{$\mathsf{K}$} & $\mathsf{LK}$ & Left knee & 1 \\
			& $\mathsf{RK}$& Right knee & 1 \\
			\hline
			\multirow{2}*{$\mathsf{FK}$} & $\mathsf{LF RK}$ & Left foot, right knee & 2 \\
			& $\mathsf{F LK}$ & Right foot, left knee & 3 \\
			\hline
			\multirow{2}*{$\mathsf{KH}$} & $\mathsf{LK LH}$ & Left knee, left hand & 4 \\
			& $\mathsf{RK RH}$ & Right knee, right hand & 1 \\
			\hline
			$\mathsf{K}_2$ & $\mathsf{K}_2$ & Left knee, right knee & 1 \\
			\hline
			\multirow{2}*{$\mathsf{FKH}$} & $\mathsf{RF LK LH}$ & Right foot, left knee, left hand & 5 \\
			& $\mathsf{LF RK RH}$& Left foot, right knee, right hand & 2 \\
			\hline
			$\mathsf{KH}_2$ & $\mathsf{LK H}_2$ & Left knee, left hand, right hand &  1 \\
			\hline
			\multirow{2}*{$\mathsf{K}_2\mathsf{H}$} & $\mathsf{K}_2\mathsf{LH}$ & Left knee, right knee, left hand & 2 \\
			& $\mathsf{K}_2\mathsf{RH}$ & Left knee, right knee, right hand & 1 \\
			\hline
			$\mathsf{FKH}_2$ & $\mathsf{RF LK H}_2$ & Right foot, left knee, left hand, right hand & 2 \\
			\hline
			$\mathsf{K}_2\mathsf{H}_2$ & $\mathsf{K}_2\mathsf{H}_2$ & Left knee, right knee, left hand, right hand & 2 \\
			\hline
		\end{tabular}
	}
    \end{sc}
    \end{small}
    \end{center}
\end{table}

\subsection{Implementation details}
\label{app:experiment_training_details}
\subsubsection{Training parameters and model choices} 
Table~\ref{tab:experiment_hyperparameters} reports the hyperparameters used for the experiments described in \S~\ref{sec:experiments}. 
We used the hyperbolic SE kernels defined in \S~\ref{subsec:HyperbolicKernels} for the GPHLVMs, and the classical SE kernel for the Euclidean models. 
For the back-constraints mapping~\eqref{eq:backconstraints}, we defined $k^{\mathbb{R}^D}(\bm{y}_n,\bm{y}_m)$ as a Euclidean SE kernel with lengthscale $\kappa_{\euclideanspace^D}$, and $k^{\mathbb{G}}(c_n, c_m)$ as a graph Matérn kernel with smoothness $\nu=2.5$ and lengthscale $\kappa_{\mathbb{G}}$. 
We additionally scaled the product of kernels with a variance $\sigma_{\euclideanspace^D, \mathbb{G}}$.
For training the back-constrained GPHLVM and GPLVM, we used a Gamma prior $\text{Gamma}(\alpha, \beta)$ with shape $\alpha$ and rate $\beta$ on the lengthscale $\kappa$ of the kernels.

\begin{table}[h!]
 	\caption{Summary of experiments and list of hyperparameters.}
    \vspace{-0.3cm}
	\label{tab:experiment_hyperparameters}
   	\begin{center}
    \begin{small}
    \begin{sc}
	\resizebox{\textwidth}{!}{
		\begin{tabular}{p{2.8cm}llclcccl}
			\toprule
			\textbf{Taxonomy} & \textbf{Model} & \textbf{Regularization} & \textbf{Loss scale} $\gamma$ & \textbf{Prior on} $\kappa_{\lorentz/\euclideanspace^Q}$ & $\kappa_{\euclideanspace^D}$ & $\kappa_{\mathbb{G}}$ & $\sigma_{\euclideanspace^D, \mathbb{G}}$ & \textbf{Optimizer} (learning rate $\alpha$)\\
            \toprule
            \multirow{12}{*}{\parbox{2.2cm}{Bimanual manipulation categories}} & \multirow{3}{*}{GPLVM, $\mathbb{R}^2$} & No regularizer & 0 & None & - & - & - & \multirow{3}{*}{Adam (0.01)}\\
			&  & Stress & $1500$ & None & - & - & -\\
			&  & BC + Stress & $1000$ & $\text{Gamma}(2, 2)$ & $3.0$ & $1.5$ & $2$ \\
			& \multirow{3}{*}{GPHLVM, $\lorentz{2}$} & No regularizer & $0$ & None & - & - & - & \multirow{3}{*}{Riemannian Adam (0.025)}\\
			&  & Stress & $1500$ & None & - & - & -\\
			&  & BC + Stress & $1000$ & $\text{Gamma}(2, 2)$ & $3.0$ & $1.5$ & $2$ \\
            \cmidrule(lr){2-9}
            & \multirow{3}{*}{GPLVM, $\mathbb{R}^3$} & No regularizer & 0 & None & - & - & - & \multirow{3}{*}{Adam (0.01)}\\
			&  & Stress & $6000$ & None & - & - & - \\
			&  & BC + Stress & $1200$ & $\text{Gamma}(2, 2)$ & $3.0$ & $1.5$ & $2$ \\
			& \multirow{3}{*}{GPHLVM, $\lorentz{3}$} & No regularizer & $0$ & None & - & - & - & \multirow{3}{*}{Riemannian Adam (0.025)}\\
			&  & Stress & $6000$ & None & - & - & - \\
			&  & BC + Stress & $1200$ & $\text{Gamma}(2, 2)$ & $3.0$ & $1.5$ & $2$ \\
			\toprule
			\multirow{12}{*}{\parbox{3.5cm}{Grasps}} & \multirow{3}{*}{GPLVM, $\mathbb{R}^2$} & No regularizer & 0 & None & - & - & - & \multirow{3}{*}{Adam (0.01)}\\
			&  & Stress & $5500$ & None & - & - & - \\
			&  & BC + Stress & $2000$ & $\text{Gamma}(2, 2)$ & $1.8$ & $1.5$ & $2$ \\
			& \multirow{3}{*}{GPHLVM, $\lorentz{2}$} & No regularizer & $0$ & None & - & - & - & \multirow{3}{*}{Riemannian Adam (0.05)}\\
			&  & Stress & $5500$ & None & - & - & - \\
			&  & BC + Stress & $2000$ & $\text{Gamma}(2, 2)$ & $1.8$ & $1.5$ & $2$ \\
            \cmidrule(lr){2-9}
            & \multirow{3}{*}{GPLVM, $\mathbb{R}^3$} & No regularizer & 0 & None & - & - & - & \multirow{3}{*}{Adam (0.01)}\\
			&  & Stress & $6000$ & None & - & - & - \\
			&  & BC + Stress & $3000$ & $\text{Gamma}(2, 2)$ & $1.8$ & $1.5$ & $2$ \\
			& \multirow{3}{*}{GPHLVM, $\lorentz{3}$} & No regularizer & $0$ & None & - & - & - & \multirow{3}{*}{Riemannian Adam (0.05)}\\
			&  & Stress & $6000$ & None & - & - & - \\
			&  & BC + Stress & $3000$ & $\text{Gamma}(2, 2)$ & $1.8$ & $1.5$ & $2$ \\
			\toprule
            \multirow{12}{*}{\parbox{3.5cm}{Support poses}} & \multirow{3}{*}{GPLVM, $\mathbb{R}^2$} & No regularizer & 0 & None & - & - & - & \multirow{3}{*}{Adam (0.01)}\\
			&  & Stress & $7000$ & None & - & - & -\\
			&  & BC + Stress & $5000$ & $\text{Gamma}(2, 2)$ & $2.0$ & $0.8$ & $2$ \\
			& \multirow{3}{*}{GPHLVM, $\lorentz{2}$} & No regularizer & $0$ & None & - & - & - & \multirow{3}{*}{Riemannian Adam (0.05)}\\
			&  & Stress & $7000$ & None & - & - & -\\
			&  & BC + Stress & $5000$ & $\text{Gamma}(2, 2)$ & $2.0$ & $0.8$ & $2$ \\
            \cmidrule(lr){2-9}
            & \multirow{3}{*}{GPLVM, $\mathbb{R}^3$} & No regularizer & 0 & None & - & - & - & \multirow{3}{*}{Adam (0.01)}\\
			&  & Stress & $10000$ & None & - & - & - \\
			&  & BC + Stress & $8000$ & $\text{Gamma}(2, 2)$ & $2.0$ & $0.8$ & $2$ \\
			& \multirow{3}{*}{GPHLVM, $\lorentz{3}$} & No regularizer & $0$ & None & - & - & - & \multirow{3}{*}{Riemannian Adam (0.05)}\\
			&  & Stress & $10000$ & None & - & - & - \\
			&  & BC + Stress & $8000$ & $\text{Gamma}(2, 2)$ & $2.0$ & $0.8$ & $2$ \\
			\bottomrule
		\end{tabular}
	}
     \end{sc}
    \end{small}
    \end{center}
\end{table}

\subsubsection{Model initialization}
\label{app:initialization}
To provide a good starting point for their optimization, the embeddings of all GPLVMs were initialized by minimizing the stress associated with their taxonomy nodes, so that,
\begin{equation}
    \bm{X} = \min_{\bm{X}} \ell_{\text{stress}}, 
\end{equation}
with $\ell_{\text{stress}}$ as in~\eqref{eq:stressLoss}, using the Euclidean and hyperbolic distance between two embeddings for the GPLVMs and GPHLVM, respectively. The oracle stress possible for each system, achieved by the initialization, is reported in Table~\ref{table:oracle_stress_of_models}.

\begin{table}[h!]
    \caption{Oracle stress achieved by the initialization per geometry and regularization. The stress is computed using~\eqref{eq:stressLoss} and averaged over all pairs of training embeddings. Lower stress values indicate better compliance with the taxonomy structure.}
    \vspace{-0.4cm}
    \label{table:oracle_stress_of_models}
    \begin{center}
    \begin{small}
    \begin{sc}
    \resizebox{.42\textwidth}{!}{
		\begin{tabular}{lll}
			\toprule
			\textbf{Taxonomy} & \textbf{Model} & \textbf{Oracle stress} \\
            \toprule
			\multirow{4}{*}{\parbox{2.1cm}{Bimanual manipulation categories}} & GPLVM, $\mathbb{R}^2$ &  $0.034\pm0.044$ \\
			& GPHLVM, $\lorentz{2}$ &  $\bm{0.018\pm0.022}$ \\
            \cmidrule(lr){2-3}
			& GPLVM, $ \mathbb{R}^3$ &  $0.007\pm0.010$ \\
            & GPHLVM, $ \lorentz{3}$ & $\bm{0.002\pm0.004}$  \\
            \toprule
			\multirow{4}{*}{Grasps} & GPLVM, $\mathbb{R}^2$ &  $0.38\pm 0.40$  \\
			& GPHLVM, $ \lorentz{2}$ &  $\bm{0.13\pm 0.14}$  \\
            \cmidrule(lr){2-3}
			& GPLVM, $ \mathbb{R}^3$ &  $0.13\pm 0.16$  \\
            & GPHLVM, $ \lorentz{3}$ &  $\bm{0.03\pm0.04}$ \\
			\toprule
			\multirow{4}{*}{\parbox{1.5cm}{Support poses}} & GPLVM, $\mathbb{R}^2$ &  $0.56\pm 0.96$  \\
			& GPHLVM, $ \lorentz{2}$ &  $\bm{0.49\pm 0.82}$  \\
            \cmidrule(lr){2-3}
			& GPLVM, $ \mathbb{R}^3$ &  $\bm{0.23\pm 0.45}$  \\
            & GPHLVM, $ \lorentz{3}$ &  $0.29\pm 0.39$ \\
			\bottomrule
	\end{tabular}
 }
 \end{sc}
 \end{small}
 \end{center}
\end{table}

\subsubsection{Influence of the stress loss scale $\gamma$}
\label{app:LossScale}
For our experiments, we trained the GPLVMs and GPHLVMs via MAP estimation by maximizing the loss $\ell = \ell_{\text{MAP}} - \gamma \ell_{\text{stress}}$, where $\gamma$ is a parameter trading-off between the log posterior loss $\ell_{\text{MAP}}$ and the stress-based regularization loss $\ell_{\text{stress}}$. The influence of the loss scale $\gamma$ is illustrated in Fig.~\ref{fig:LossScaleAnalysis} for models trained on the hand grasp taxonomy with $3$-dimensional latent spaces. On one hand, we observe that the stress loss steadily decreases as $\gamma$ increases. This trend continues until the embeddings for each node collapse onto a single point in the latent space, achieving the stress that matches the oracle value. On the other hand, the log-likelihood of the model decreases (a.k.a the negative log-likelihood increases) as $\gamma$ increases. For all our experiments, we chose a loss scale $\gamma$ that trades off between log posterior and stress losses, as depicted by the vertical line in Fig.~\ref{fig:LossScaleAnalysis}.

\begin{SCfigure}[50][t]
	\centering
    \includegraphics[trim={0.0cm 0.0cm 0.0cm 0.0cm},clip,width=.55\textwidth]{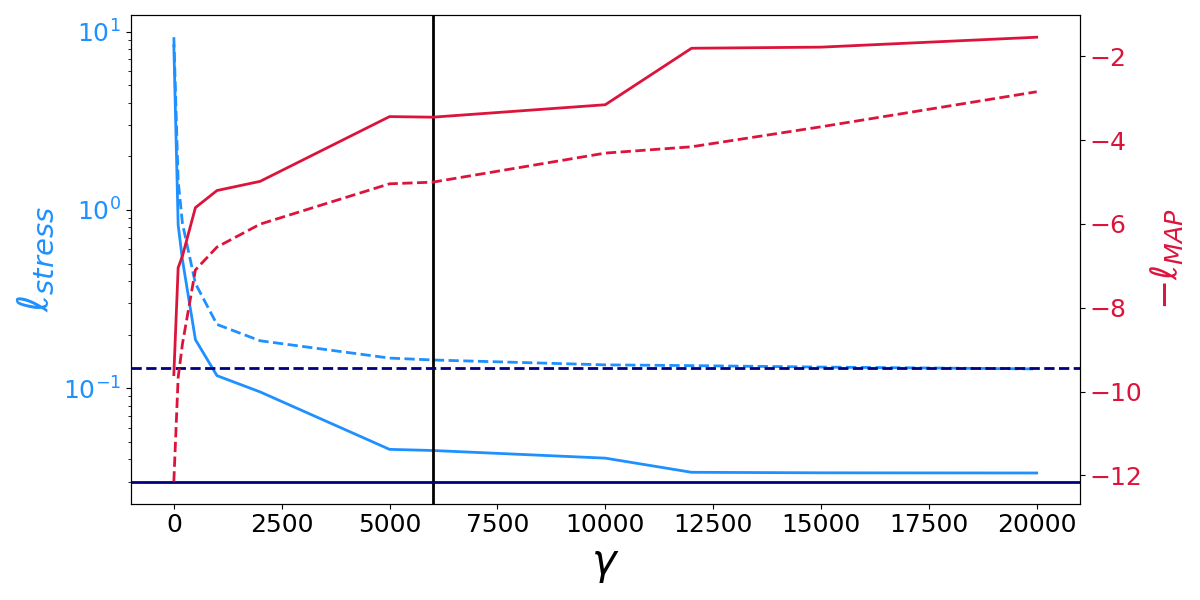}
	\caption{Log-posterior and stress losses for GPHLVMs (\crimsonline, \dodgerblueline) and GPLVMs (\crimsondashedline, \dodgerbluedashedline) with $Q=3$ with stress regularization as a function of the loss scale $\gamma$. The models are trained on the hand grasp taxonomy. When $\gamma$ increases, the models match the oracle stress values (\navyline, \navydashedline). The loss scale $\gamma$ chosen for our experiments ($\;$\vertblackline $\;$) trades off between the two losses.}
	\label{fig:LossScaleAnalysis}
	\vspace{-0.3cm}
\end{SCfigure}

\subsubsection{Taxonomy expansion and unseen poses encoding} 
\label{app:UnseenClassesAndPoses}
For the first part of the experiments on taxonomy expansion, we encoded unseen poses of each class for the back-constrained GPLVM and GPHLVM with a stress regularization using the models presented in Table~\ref{tab:experiment_hyperparameters}. 
For the second part of the experiments, we left one or several classes out during training and we ``embedded'' them using the back-constraints mapping. The left-out classes are: tightly-coupled asymmetric right dominant ($\mathsf{TCA}_{\text{right}}$), $\{\mathsf{Qu}, \mathsf{St}, \mathsf{MW}, \mathsf{Ri}\}$, and $\mathsf{FH}=\{\mathsf{LF LH}, \mathsf{RF LH}, \mathsf{LF RH}, \mathsf{RF RH}\}$, for the bimanual manipulation, hand grasp, and support pose taxonomies, respectively.
The newly-trained models also followed the same hyperparameters presented in Table~\ref{tab:experiment_hyperparameters}.

\subsection{Runtimes}
\label{app:runtimes}
In order to show the computational cost of our approach, we ran a set of experiments to measure the average runtime for the training and decoding phases, using $2$ and $3$-dimensional latent spaces.
As a reference, we added the runtime measurements of Euclidean counterpart, that is, the vanilla GPLVM. 
Table~\ref{tab:ApproachesRuntime} shows the runtime measurements. 
Note that the main computational burden arises in our GPLHVM with a $2$-dimensional latent space, which is in sharp contrast with the experiments using a $3$-dimensional latent space. 
This increase in computational cost is mainly attributed to the $2$-dimensional hyperbolic kernel.
Nevertheless, we also measured the computational cost of evaluating the kernel and the (Riemannian) optimization of the learned embeddings for both GPLVM and GPHLVM in the $2$-dimensional setting. 
Table~\ref{tab:app:kernel_opt_times} shows the average runtimes for both approaches, where it is possible to observe that the highest computational costs comes from the hyperbolic kernel computation.
This may be alleviated by reducing the number of samples or via more efficient sampling strategies.

\begin{table}[h!]
    \caption{Average runtime for kernel evaluation and (Riemannian) optimization of our GPHLVM and vanilla GPLVM over $10$ training iterations of the whole-body support poses taxonomy, using a $2$-dimensional latent space for both models. The implementations are fully developed on Python, and the runtime measurements were taken using a standard laptop with $32$ GB RAM, Intel Xeon CPU E3-1505M v6 processor, and Ubuntu 20.04 LTS. We report the computational cost in miliseconds and percentage w.r.t the total training iteration time.}
	\label{tab:app:kernel_opt_times}
	\begin{center}
    \begin{small}
    \begin{sc}
	\begin{tabular}{lllll}
        \toprule
		\textbf{Model} & \textbf{Kernel comp.} [\si{\milli \second}] & \textbf{Kernel comp.} \% & \textbf{Optimization} [\si{\milli \second}] & \textbf{Optimization} \% \\ [0.3ex]
		\toprule 
		  GPLVM, $\mathbb{R}^2$ & $0.043 \pm 0.009$ & $1\%$ & $0.43 \pm 0.063$ & $10\%$\\ [0.3ex] 
		\midrule
		GPHLVM, $\lorentz{2}$ & $730.69 \pm 75.22$ & $32\%$ & $1.00 \pm 0.15$ & 0.05\% \\ [0.3ex]
		\bottomrule 
	\end{tabular}
    \end{sc}
    \end{small}
    \end{center}
\end{table}

\subsection{Hyperbolic embeddings of support poses}
\label{app:bimanual-taxonomy}
Fig.~\ref{fig:GPHLVM:vanilla}-\ref{fig:GPHLVM:backconstrained_and_stress} show the learned embeddings of the support pose manipulation taxonomy alongside error matrices depicting the difference between geodesic and taxonomy graph distances.
As discussed in \S~\ref{sec:experiments}, the models with stress prior result in embeddings that comply with the taxonomy graph structure, with additional intra-class organizations for the back-constrained models. Note that augmenting the support pose taxonomy leads to several groups of the same support pose in Figs.~\ref{fig:GPHLVM:stress_prior}-\ref{fig:GPHLVM:backconstrained_and_stress}, e.g., $\mathsf{F}$ splits into $\mathsf{LF}$ and $\mathsf{RF}$. It is worth noticing that, despite the cyclic graph structure of the support pose taxonomy, the hyperbolic models outperform the Euclidean models in the $2$-dimensional case
%It is worth noticing that, despite the fact that the bimanual taxonomy graph is smaller than the support pose and grasp taxonomy graphs, all Euclidean GPLVMs remain outperformed by the hyperbolic models, which most closely match the taxonomy structure (see also Table~\ref{table:mean_stress_of_models}). 
As reported in \S~\ref{sec:experiments}, the back-constrained GPHLVM and GPLVM allow us to properly place unseen poses or taxonomy classes into the latent
space (see Figs.~\ref{fig:GPHLVM-bimanual:added_poses}-\ref{fig:GPHLVM-bimanual:added_class}).

\begin{figure}
	\vspace{-0.1cm}
	\centering
	\includegraphics[trim={5.3cm 2.2cm 4.3cm 2.2cm},clip,width=.8\textwidth]{Figures/legend_semifull.png}
	\begin{subfigure}[b]{0.15\textwidth}
		\centering
		\includegraphics[trim={2.5cm 2.5cm 2.5cm 2.5cm},clip,width=\textwidth]{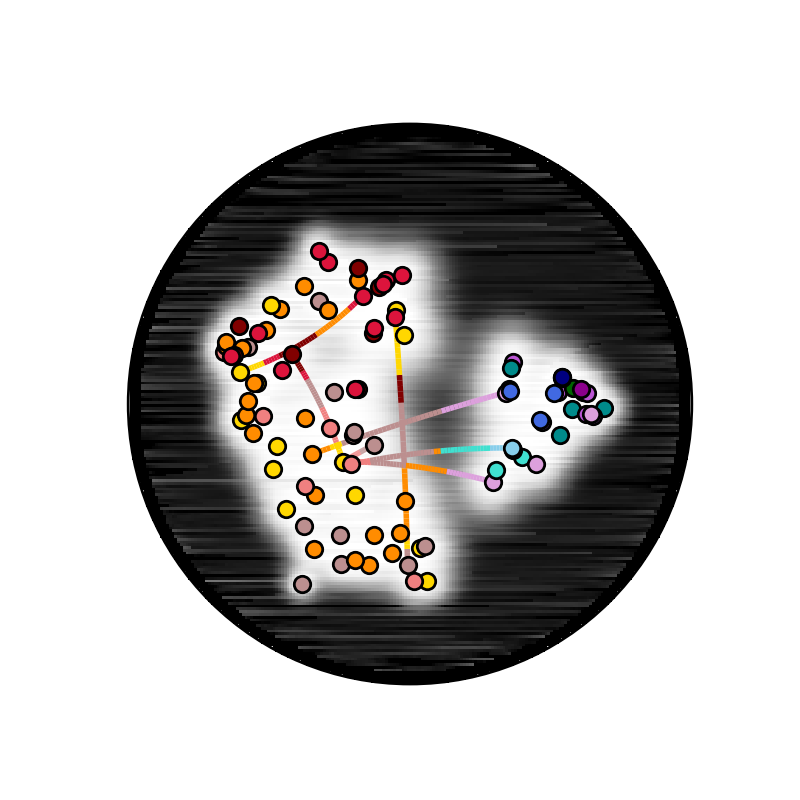}
		\includegraphics[trim={2.0cm 2.0cm 0.5cm 2.0cm},clip,width=0.9\textwidth]{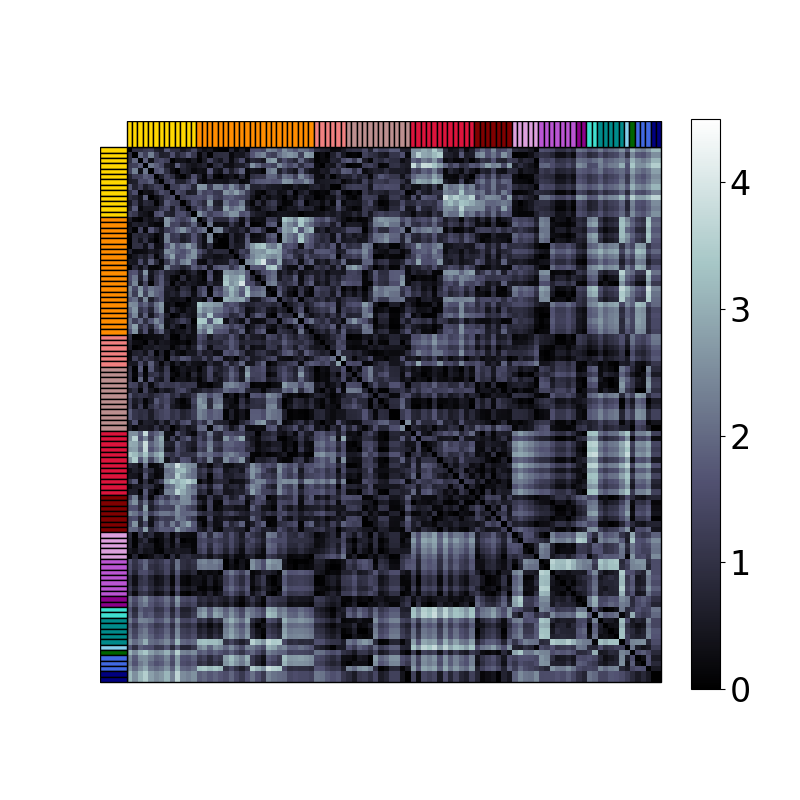}
		\includegraphics[trim={2.0cm 2.0cm 2.0cm 2.0cm},clip,width=0.9\textwidth]{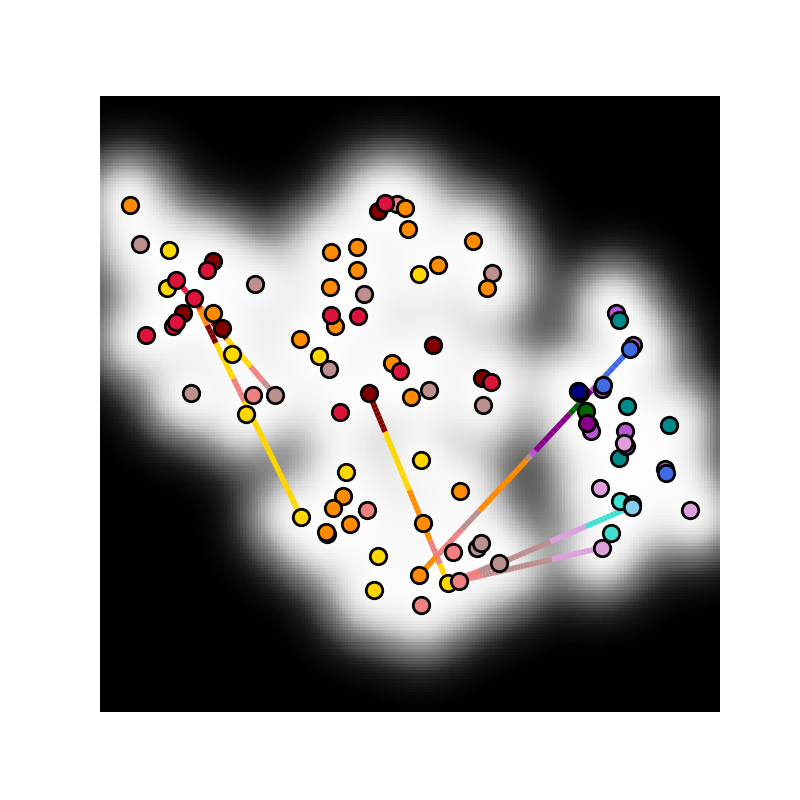}
		\includegraphics[trim={2.0cm 2.0cm 0.5cm 2.0cm},clip,width=0.9\textwidth]{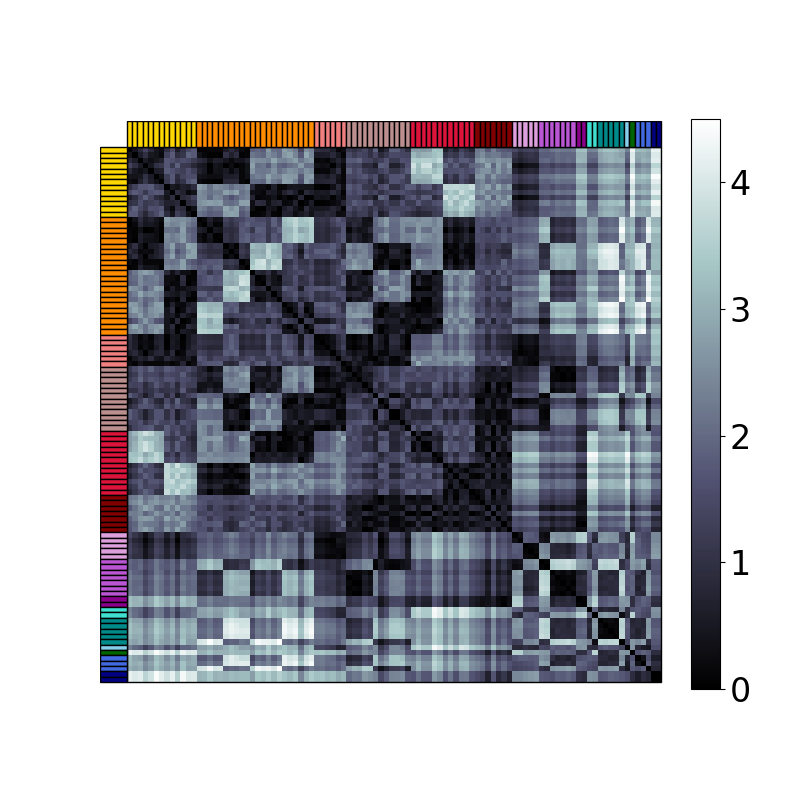}
		\caption{Vanilla}
		\label{fig:GPHLVM:vanilla}
	\end{subfigure}%
	\begin{subfigure}[b]{0.15\textwidth}
		\centering
		\includegraphics[trim={2.5cm 2.5cm 2.5cm 2.5cm},clip,width=\textwidth]{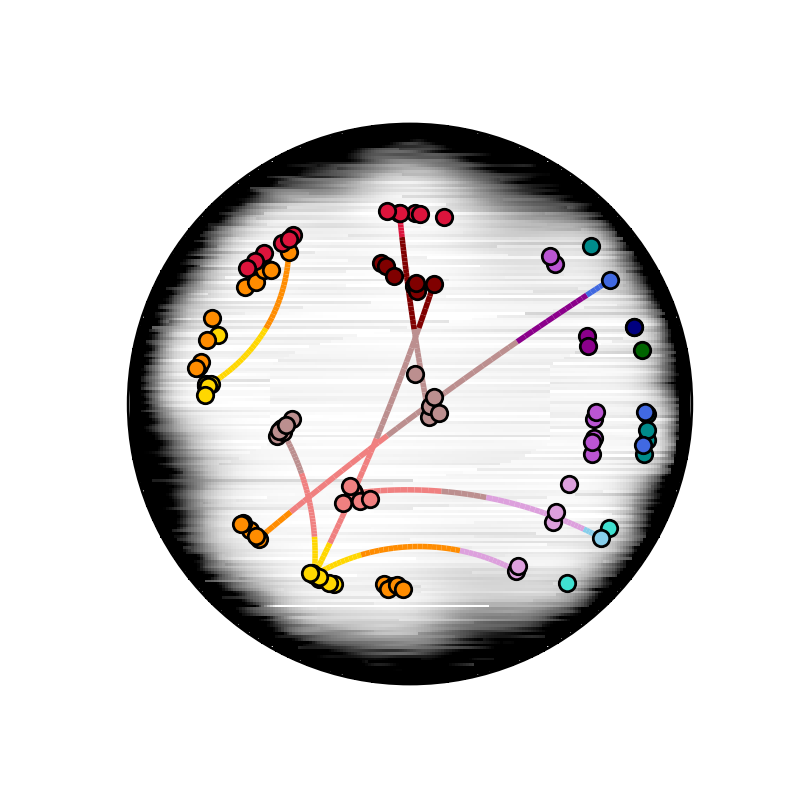}
		\includegraphics[trim={2.0cm 2.0cm 0.5cm 2.0cm},clip,width=0.9\textwidth]{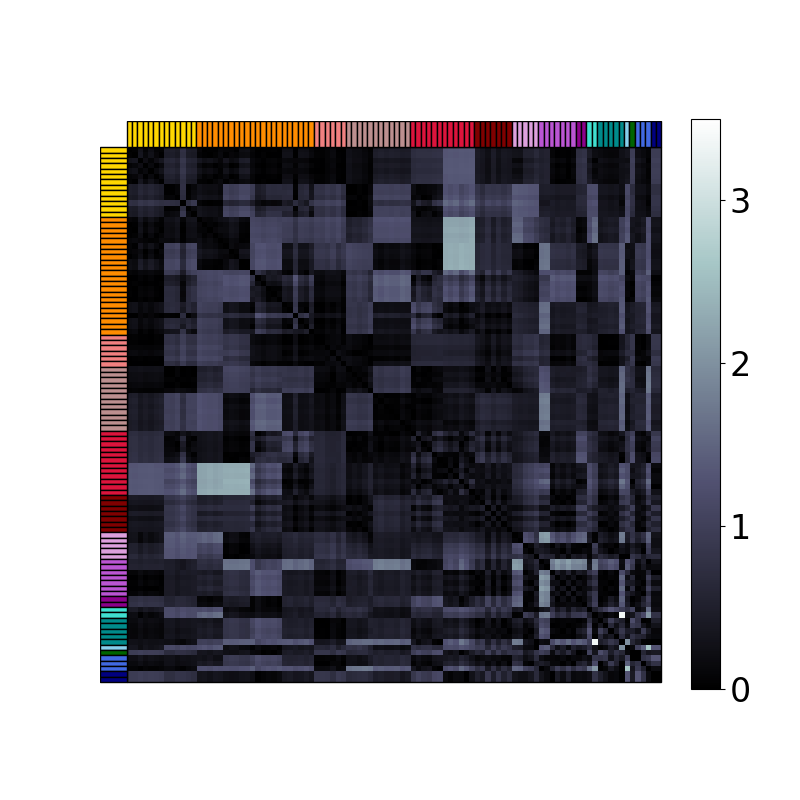}
		\includegraphics[trim={2.0cm 2.0cm 2.0cm 2.0cm},clip,width=0.9\textwidth]{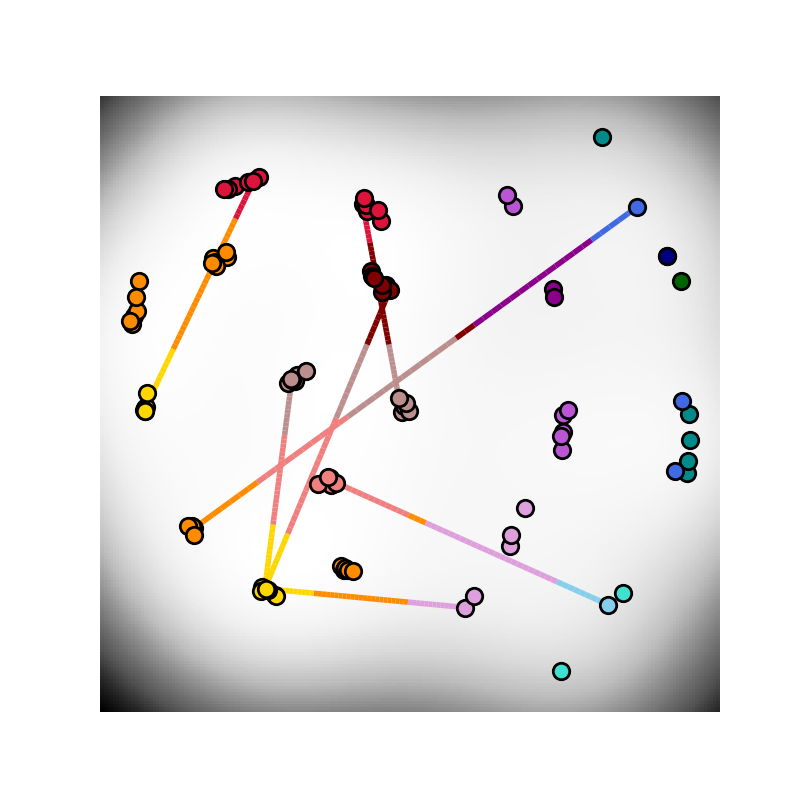}
		\includegraphics[trim={2.0cm 2.0cm 0.5cm 2.0cm},clip,width=0.9\textwidth]{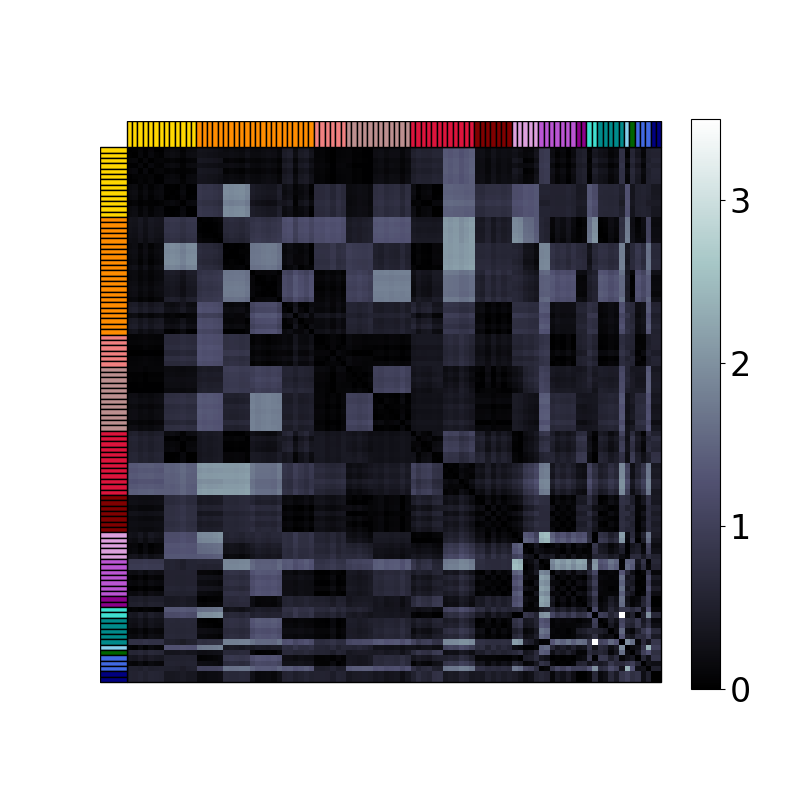}
		\caption{Stress prior}
		\label{fig:GPHLVM:stress_prior}
	\end{subfigure}%
	\begin{subfigure}[b]{0.15\textwidth}
		\centering
		\includegraphics[trim={2.5cm 2.5cm 2.5cm 2.5cm},clip,width=\textwidth]{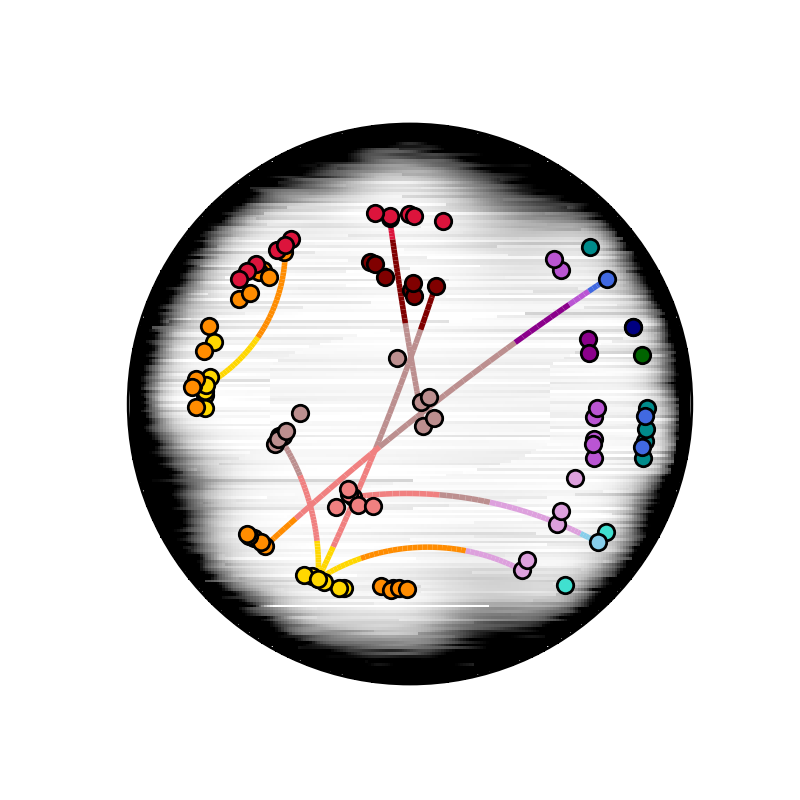}
		\includegraphics[trim={2.0cm 2.0cm 0.5cm 2.0cm},clip,width=0.9\textwidth]{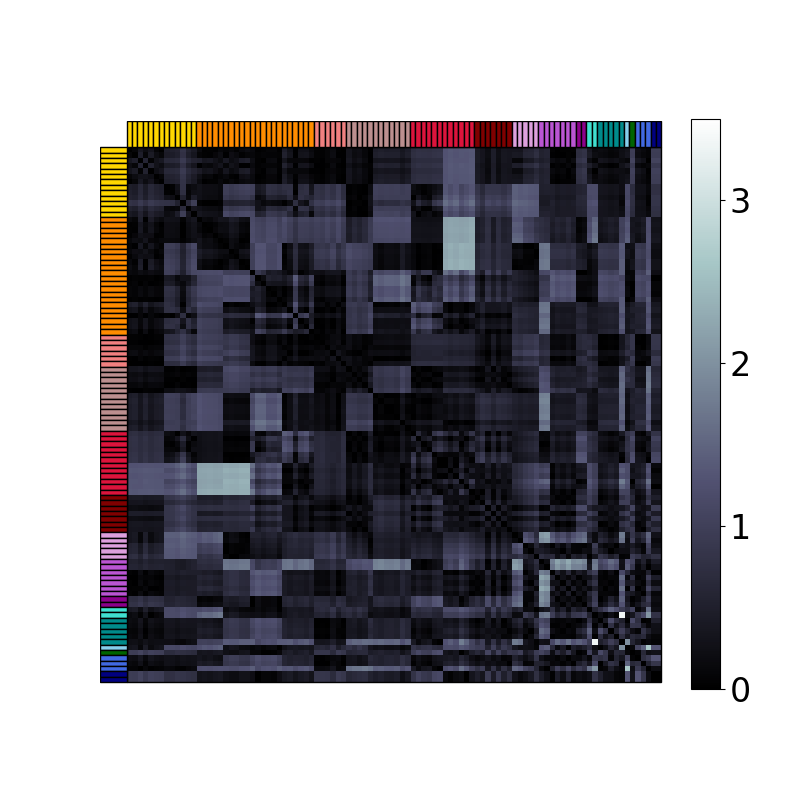}
		\includegraphics[trim={2.0cm 2.0cm 2.0cm 2.0cm},clip,width=0.9\textwidth]{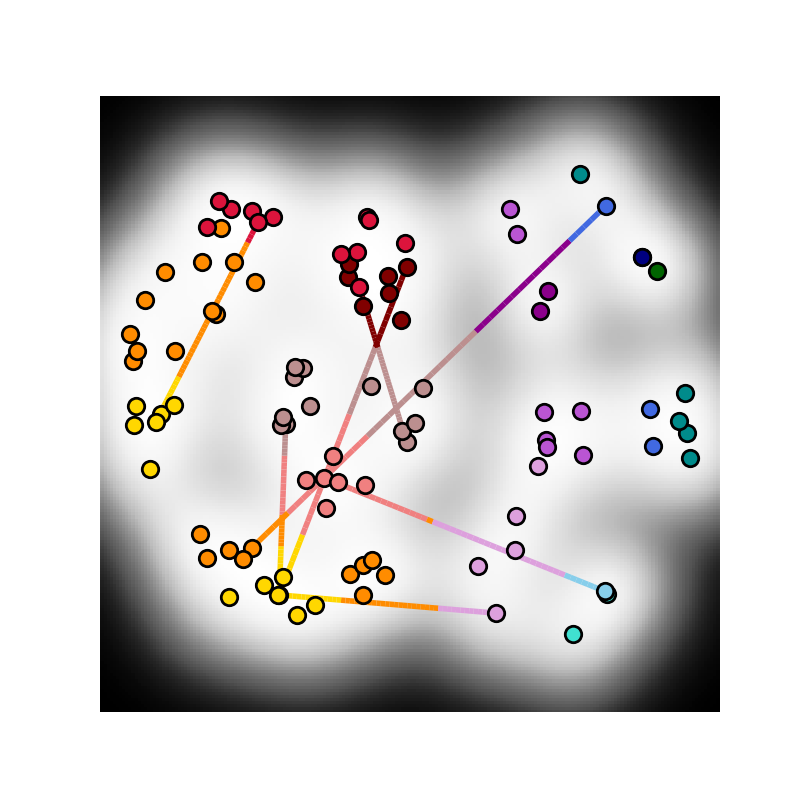}
		\includegraphics[trim={2.0cm 2.0cm 0.5cm 2.0cm},clip,width=0.9\textwidth]{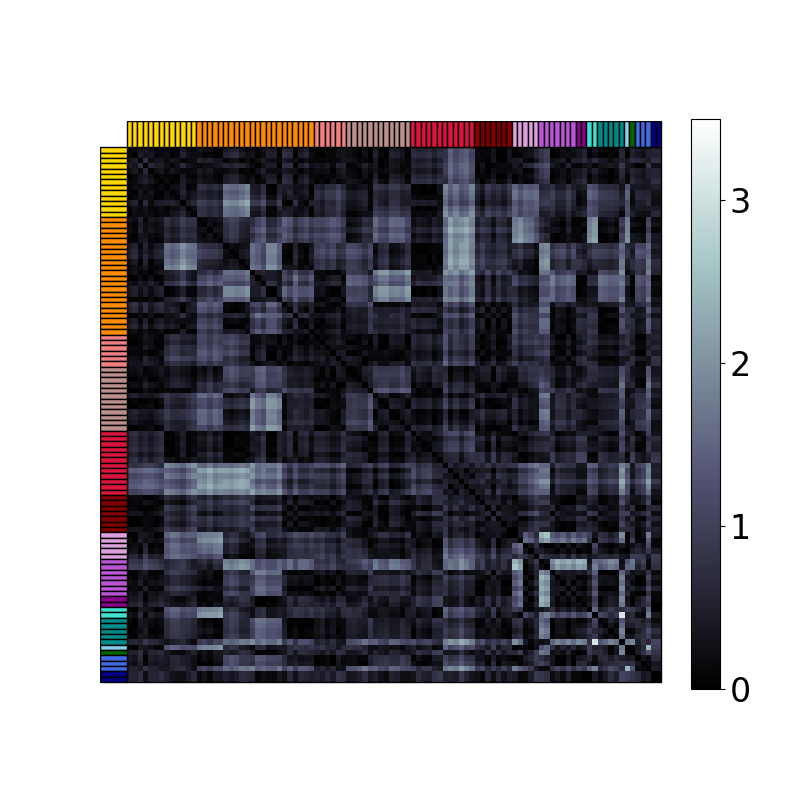}
		\caption{BC + stress prior}
		\label{fig:GPHLVM:backconstrained_and_stress}
	\end{subfigure}%
	\begin{subfigure}[b]{0.15\textwidth}
		\centering
		\includegraphics[trim={2.5cm 2.5cm 2.5cm 2.5cm},clip,width=\textwidth]{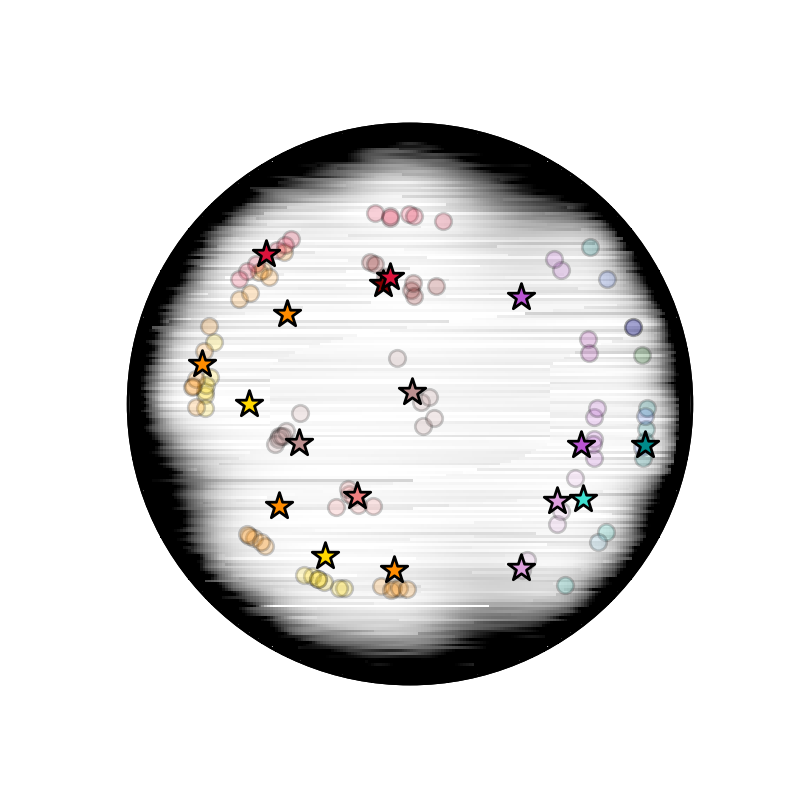}
		\includegraphics[trim={2.0cm 2.0cm 0.5cm 2.0cm},clip,width=0.9\textwidth]{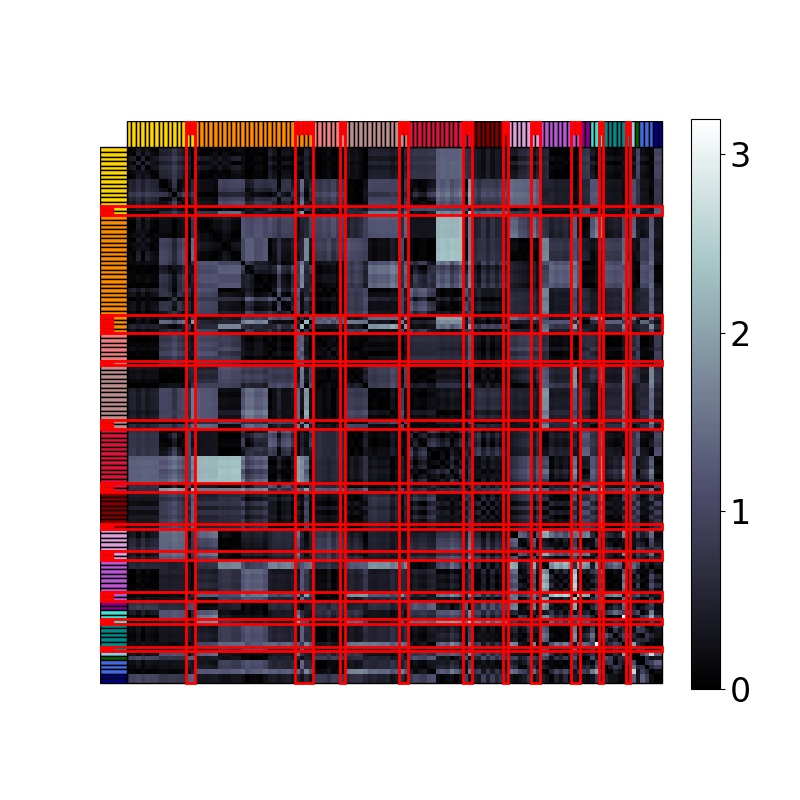}
		\includegraphics[trim={2.0cm 2.0cm 2.0cm 2.0cm},clip,width=0.9\textwidth]{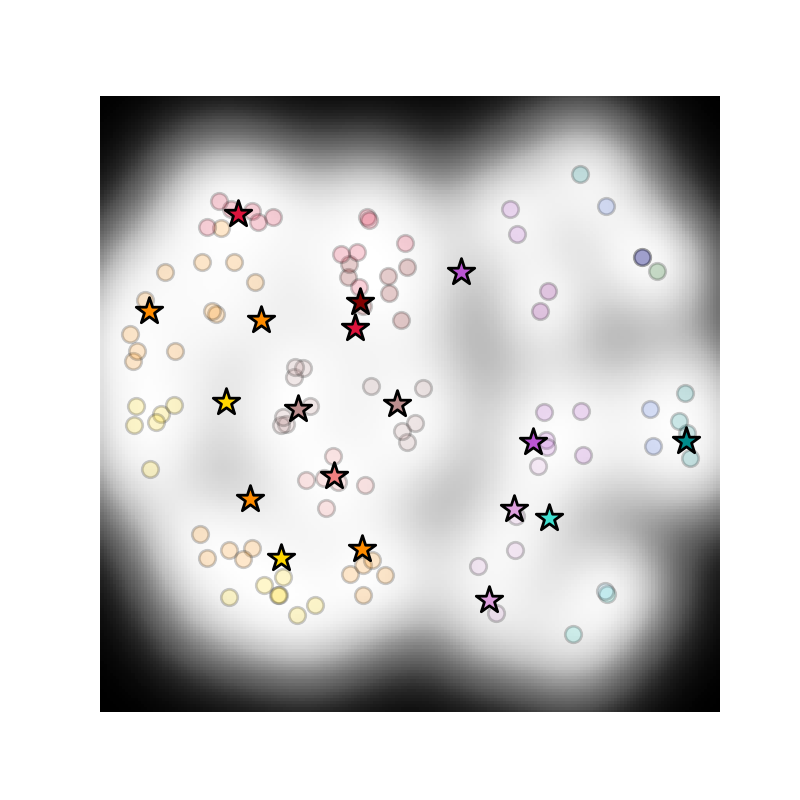}
		\includegraphics[trim={2.0cm 2.0cm 0.5cm 2.0cm},clip,width=0.9\textwidth]{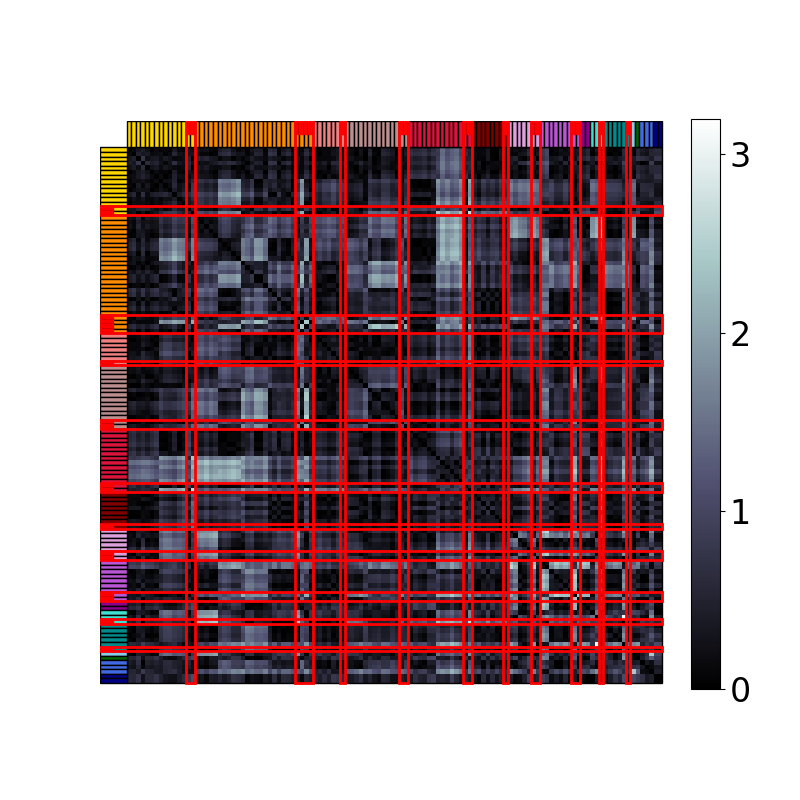}
		\caption{Adding poses}
		\label{fig:GPHLVM:added_poses}
	\end{subfigure}%
	\begin{subfigure}[b]{0.15\textwidth}
		\centering
		\includegraphics[trim={2.5cm 2.5cm 2.5cm 2.5cm},clip,width=\textwidth]{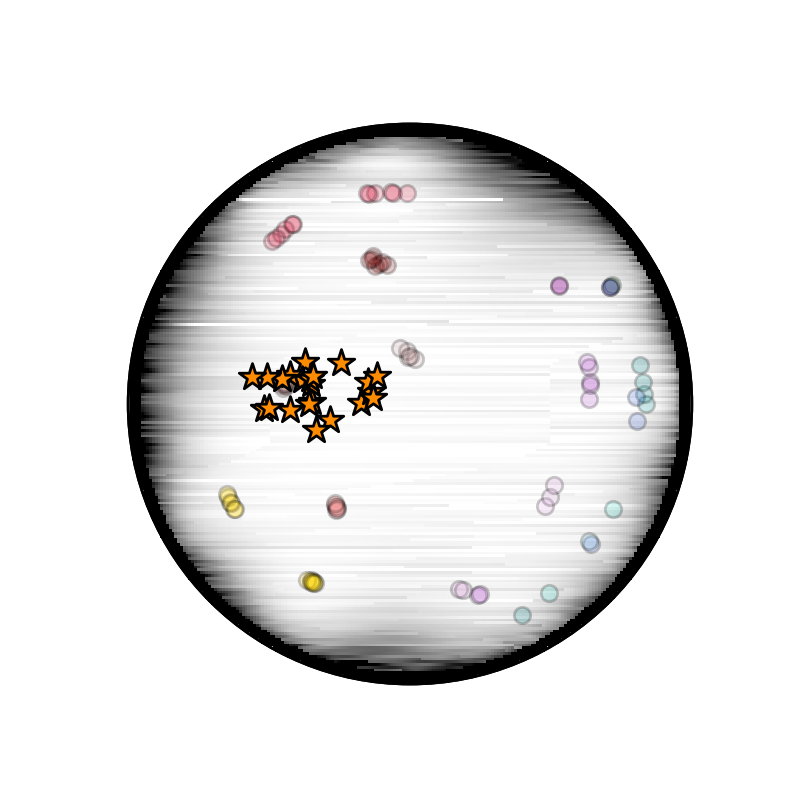}
		\includegraphics[trim={2.0cm 2.0cm 0.5cm 2.0cm},clip,width=0.9\textwidth]{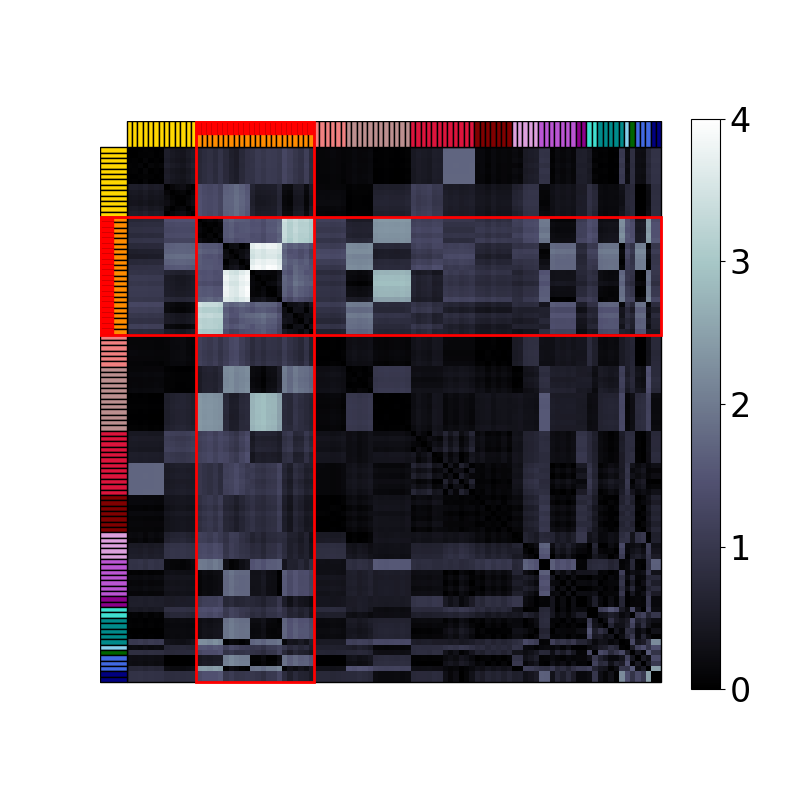}
		\includegraphics[trim={2.0cm 2.0cm 2.0cm 2.0cm},clip,width=0.9\textwidth]{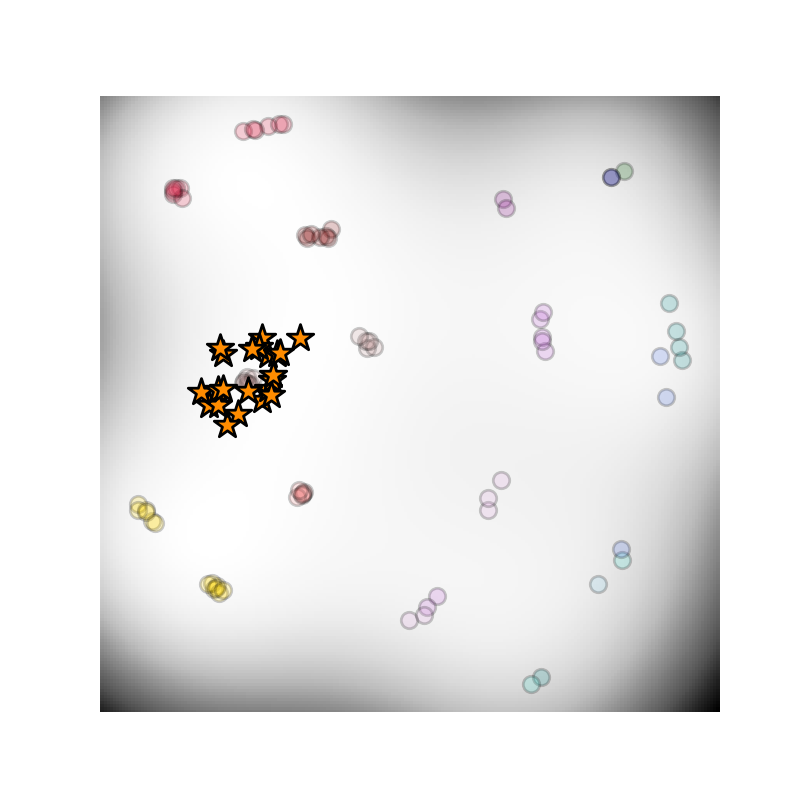}
		\includegraphics[trim={2.0cm 2.0cm 0.5cm 2.0cm},clip,width=0.9\textwidth]{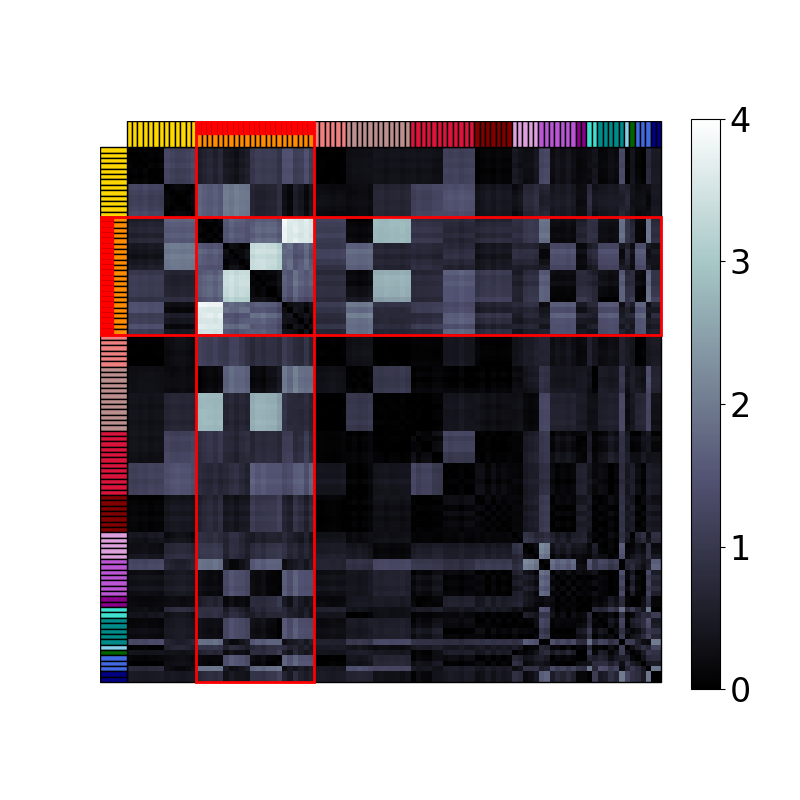}
		\caption{Adding a class}
		\label{fig:GPHLVM:added_class}
	\end{subfigure}
    \vspace{-0.25cm}
	\caption{Support poses: The first and last two rows show the latent embeddings and examples of interpolating geodesics in $\mathcal{P}^2$ and $\mathbb{R}^2$, followed by pairwise error matrices between geodesic and taxonomy graph distances. Embeddings colors match those of Fig.~\ref{fig:support-poses-taxonomy}, and background colors indicate the GPLVM uncertainty. Added poses \emph{(d)} and classes $\mathsf{FH}=\mathsf{F^{\{l,r\}}H^{\{l,r\}}}$ \emph{(e)} are marked with stars and highlighted with red in the error matrices.}
	\label{fig:GPHLVM_trained_models}
	\vspace{-0.3cm}
\end{figure}

\subsection{Hyperbolic embeddings in $\lorentz{3}$}
\label{appendix:3d-embeddings}

In this section, we embed the taxonomy data of the three taxonomies used in \S~\ref{sec:experiments} into $3$-dimensional hyperbolic and Euclidean spaces to analyze the performance of the proposed models in higher-dimensional latent spaces. 
We test the GPHLVM and GPLVM without regularization, with stress prior, and with back-constraints coupled with stress prior, similarly to the experiments on $2$-dimensional latent spaces reported in \S~\ref{sec:experiments} and App.~\ref{app:bimanual-taxonomy}.
Figs.~\ref{fig:GPHLVM:vanilla_3d-bimanual}-\ref{fig:GPHLVM:backconstrained_and_stress_3d-bimanual}, Figs.~\ref{fig:GPHLVM-grasps:vanilla_3d}-\ref{fig:GPHLVM-grasps:backconstrained_and_stress_3d}, and Figs.~\ref{fig:GPHLVM:vanilla_3d}-\ref{fig:GPHLVM:backconstrained_and_stress_3d} show the learned embeddings alongside the corresponding error matrices for the bimanual manipulation taxonomy, the hand grasps taxonomy, and the whole-body support pose taxonomy, respectively. 
As expected, and similarly to the $2$-dimensional embeddings, the models without regularization do not encode any meaningful distance structure in the latent spaces (see Figs.~\ref{fig:GPHLVM:vanilla_3d-bimanual},~\ref{fig:GPHLVM-grasps:vanilla_3d},~\ref{fig:GPHLVM:vanilla_3d}).
In contrast, the models with stress prior result in embeddings that comply with the taxonomy graph structure, and the back constraints further organize the embeddings inside a class according to the similarity between their observations (see Figs.~\ref{fig:GPHLVM:stress_prior_3d-bimanual}-\ref{fig:GPHLVM:backconstrained_and_stress_3d-bimanual},~\ref{fig:GPHLVM-grasps:stress_prior_3d}-\ref{fig:GPHLVM-grasps:backconstrained_and_stress_3d},~\ref{fig:GPHLVM:stress_prior_3d}-\ref{fig:GPHLVM:backconstrained_and_stress_3d}). As discussed in \S~\ref{sec:experiments}, we generally observe a prominent stress reduction for the Euclidean and hyperbolic $3$-dimensional latent spaces compared to the $2$-dimensional ones (see Table~\ref{table:mean_stress_of_models}). 
For taxonomies with a tree structure, such as the bimanual manipulation and hand grasps taxonomy, all Euclidean models are still outperformed by the $3$-dimensional hyperbolic embeddings. This is due to the fact that hyperbolic spaces are ideal to embed such purely-hierarchical taxonomies. For taxonomies with cyclic structure, such as the support pose taxonomy, the Euclidean models with $3$-dimensional latent space slightly outperform the $3$-dimensional hyperbolic embeddings. 
Moreover, similarly to the $2$-dimensional cases, the back-constrained GPHLVM and GPLVM allow us to properly place unseen poses or taxonomy classes into the latent space (see Figs.~\ref{fig:GPHLVM:added_poses_3d-bimanual}-\ref{fig:GPHLVM:added_class_3d-bimanual},~\ref{fig:GPHLVM-grasps:added_poses_3d}-\ref{fig:GPHLVM-grasps:added_class_3d},~\ref{fig:GPHLVM:added_poses_3d}-\ref{fig:GPHLVM:added_class_3d}).

\begin{figure}
	\centering
	\includegraphics[trim={2.8cm 1.2cm 2.0cm 1.5cm},clip,width=0.8\textwidth]{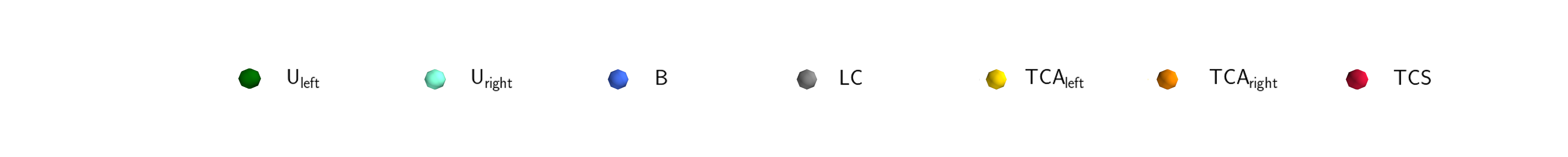}
	\begin{subfigure}[b]{0.15\textwidth}
		\centering
		\includegraphics[trim={2.0cm 1.8cm 2.0cm 1.8cm},clip,width=\textwidth]{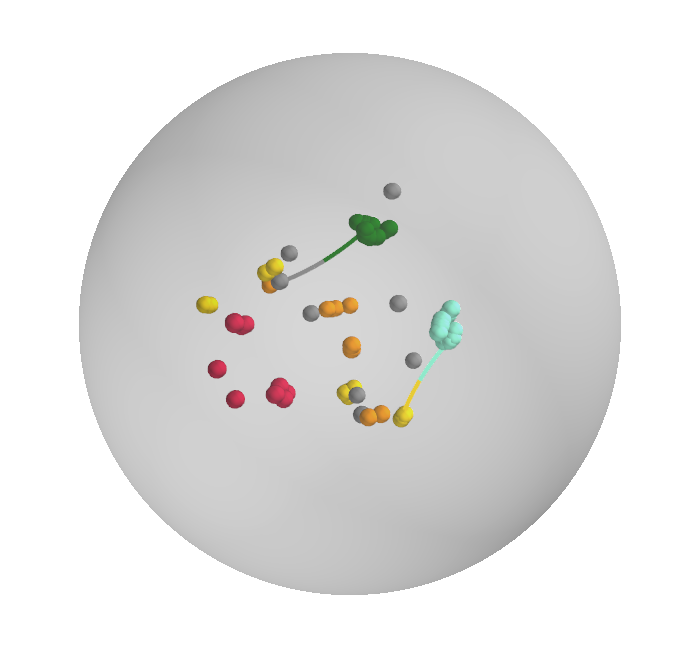}
		\includegraphics[trim={2.0cm 2.0cm 0.5cm 2.0cm},clip,width=0.9\textwidth]{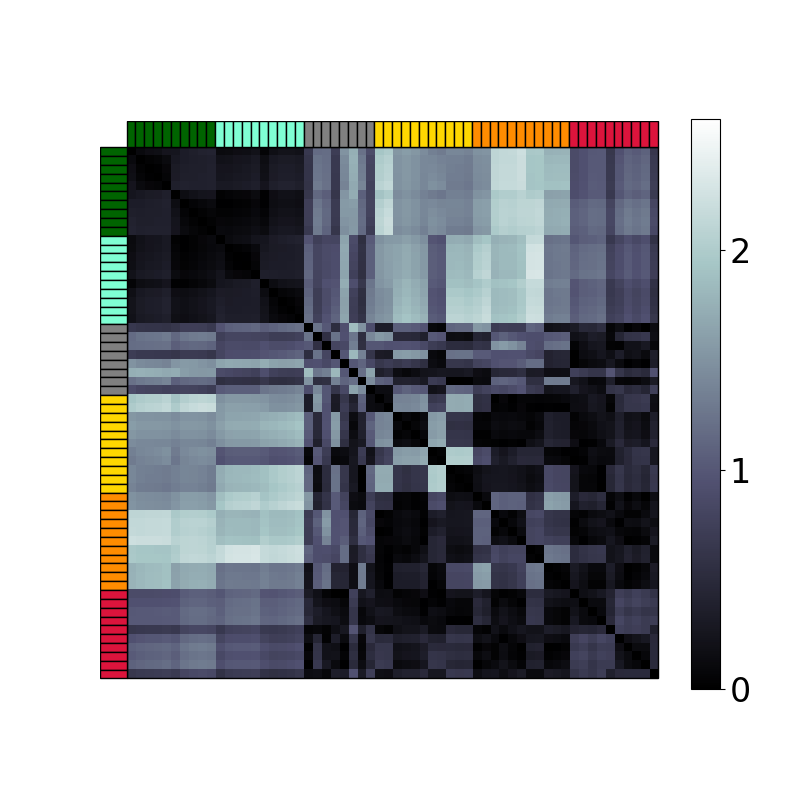}
		\includegraphics[trim={0.0cm 0.0cm 0.0cm 0.0cm},clip,width=\textwidth]{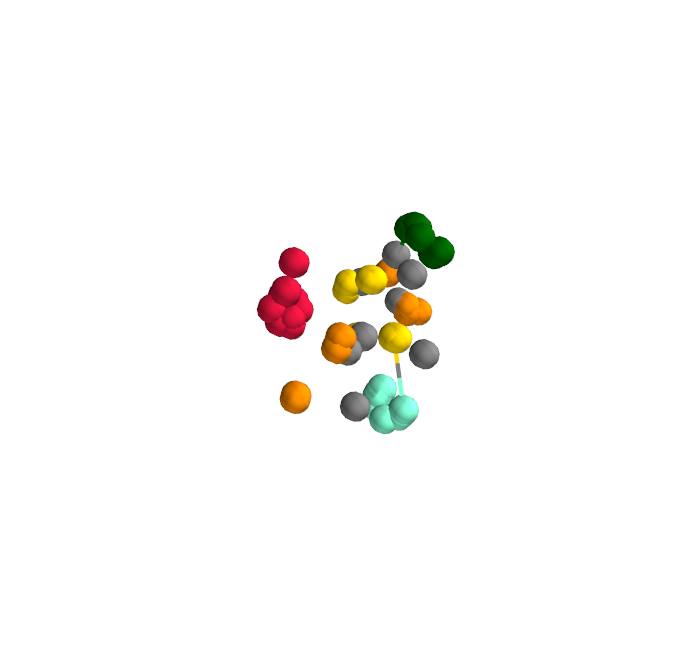}
		\includegraphics[trim={2.0cm 2.0cm 0.5cm 2.0cm},clip,width=0.9\textwidth]{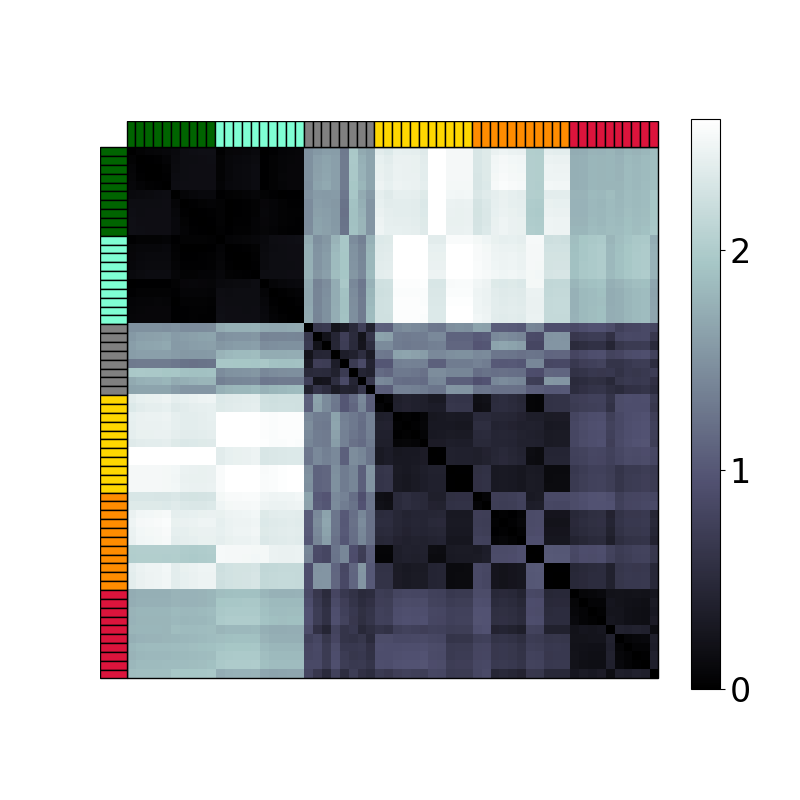}
		\caption{Vanilla}
		\label{fig:GPHLVM:vanilla_3d-bimanual}
	\end{subfigure}%
	\begin{subfigure}[b]{0.15\textwidth}
		\centering
		\includegraphics[trim={2.0cm 1.8cm 2.0cm 1.8cm},clip,width=\textwidth]{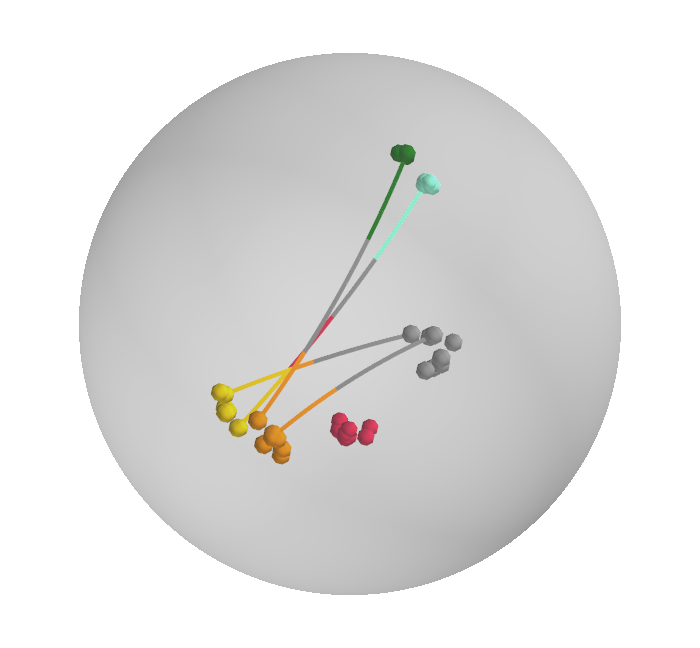}
		\includegraphics[trim={2.0cm 2.0cm 0.5cm 2.0cm},clip,width=0.9\textwidth]{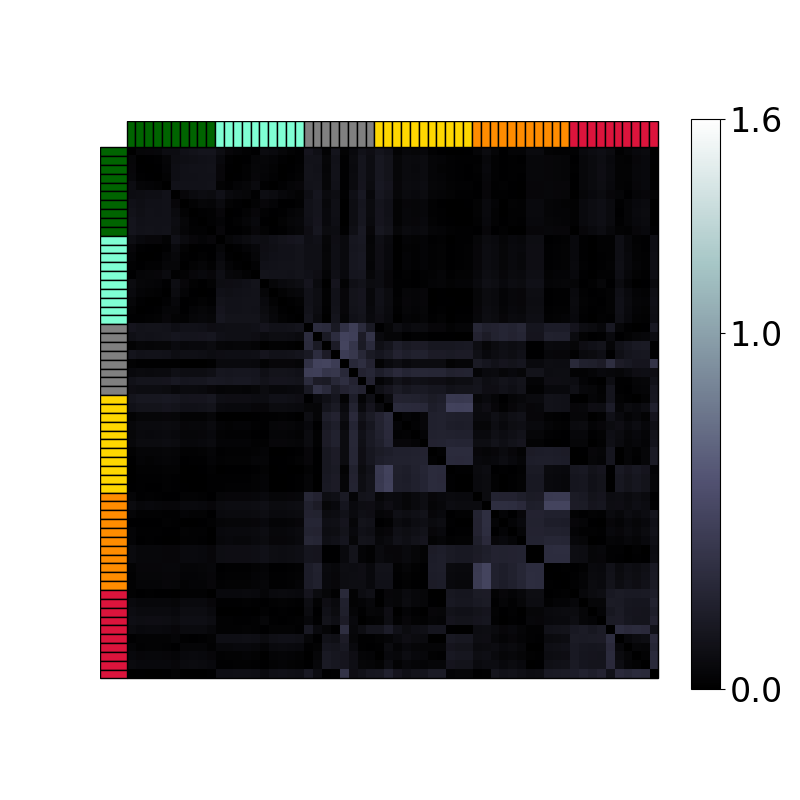}
		\includegraphics[trim={0.0cm 0.0cm 0.0cm 0.0cm},clip,width=\textwidth]{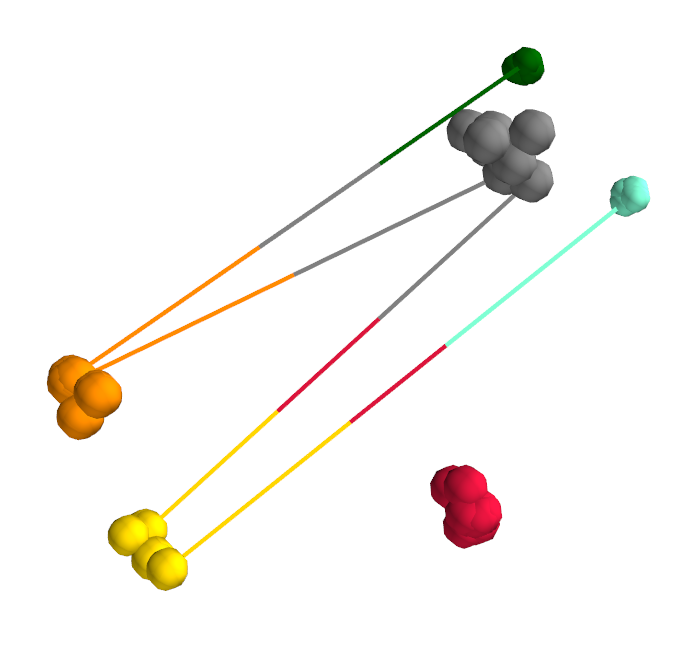}
		\includegraphics[trim={2.0cm 2.0cm 0.5cm 2.0cm},clip,width=0.9\textwidth]{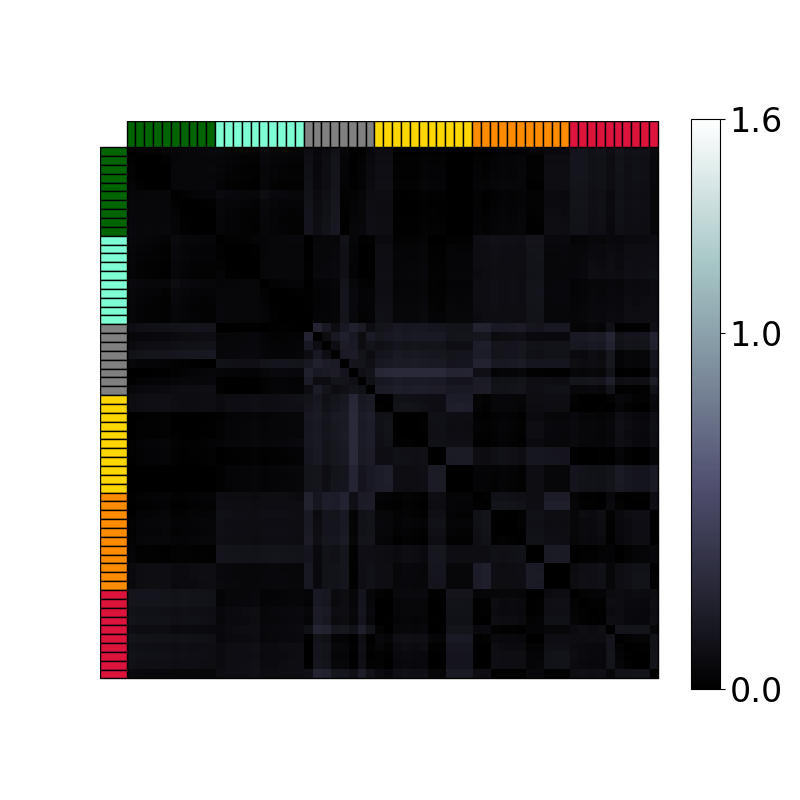}
		\caption{Stress prior}
		\label{fig:GPHLVM:stress_prior_3d-bimanual}
	\end{subfigure}%
	\begin{subfigure}[b]{0.15\textwidth}
		\centering
		\includegraphics[trim={2.0cm 1.8cm 2.0cm 1.8cm},clip,width=\textwidth]{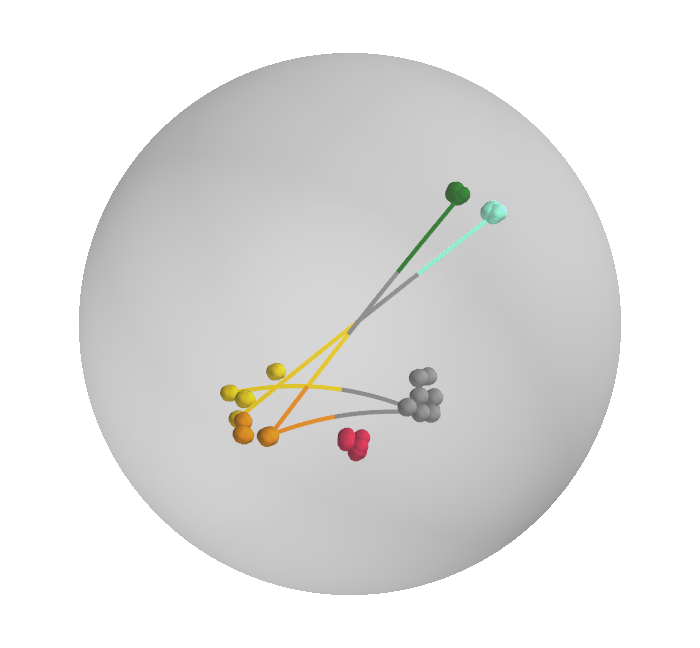}
		\includegraphics[trim={2.0cm 2.0cm 0.5cm 2.0cm},clip,width=0.9\textwidth]{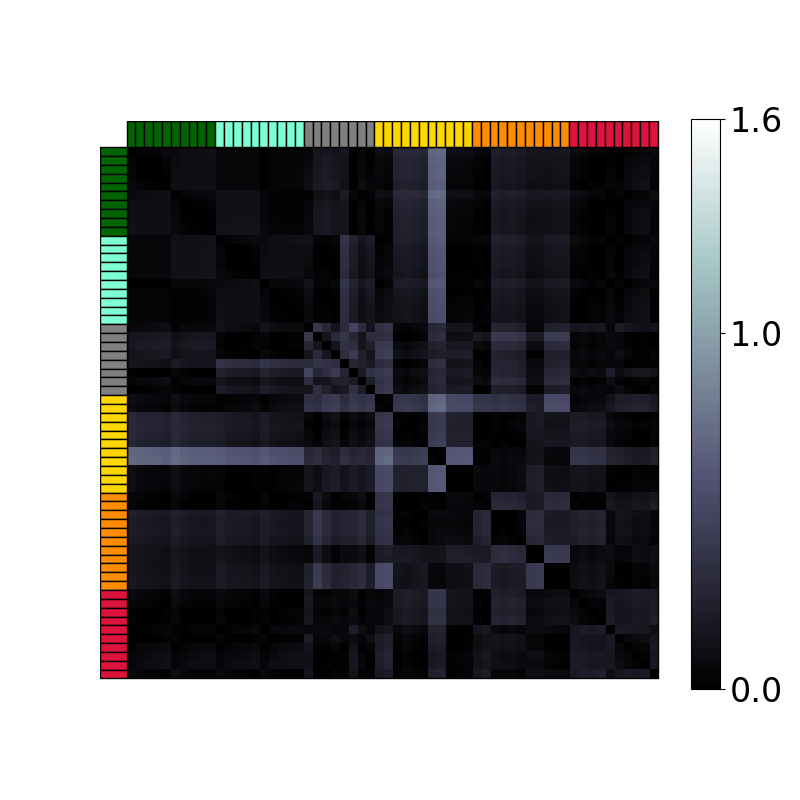}
		\includegraphics[trim={0.0cm 0.0cm 0.0cm 0.0cm},clip,width=\textwidth]{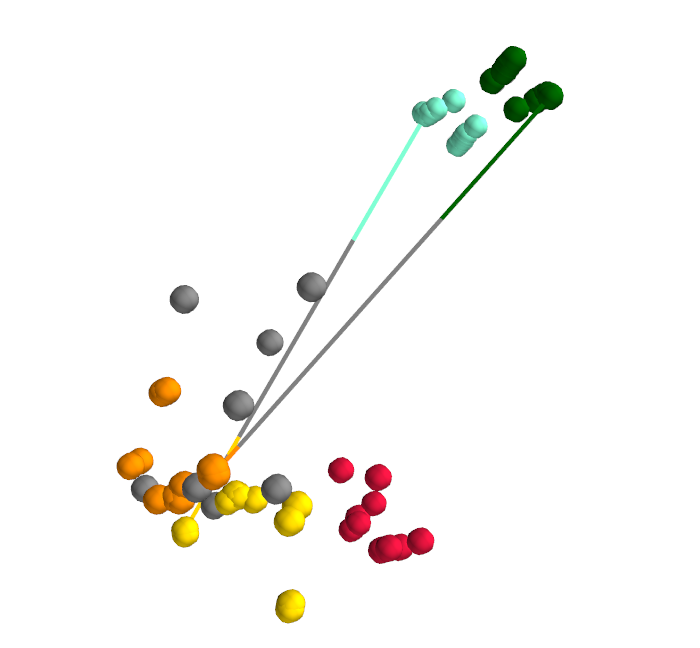}
		\includegraphics[trim={2.0cm 2.0cm 0.5cm 2.0cm},clip,width=0.9\textwidth]{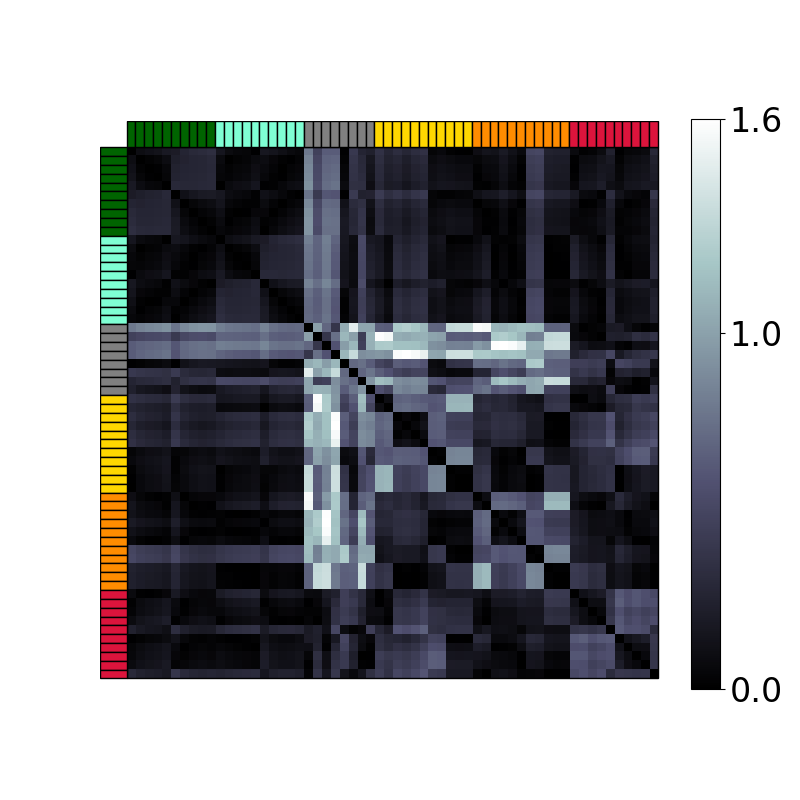}
		\caption{BC + stress prior}
		\label{fig:GPHLVM:backconstrained_and_stress_3d-bimanual}
	\end{subfigure}%
	\begin{subfigure}[b]{0.15\textwidth}
		\centering
        \includegraphics[trim={2.0cm 1.8cm 2.0cm 1.8cm},clip,width=\textwidth]{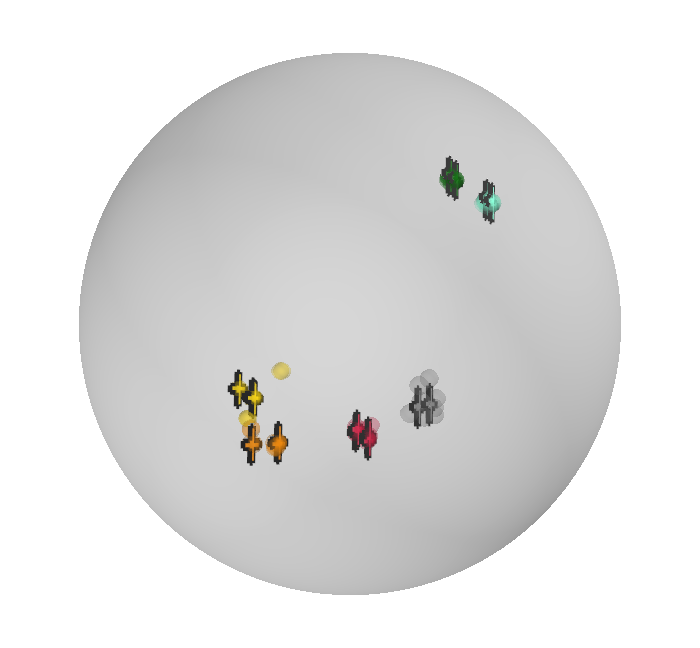}
		\includegraphics[trim={2.0cm 2.0cm 0.5cm 2.0cm},clip,width=0.9\textwidth]{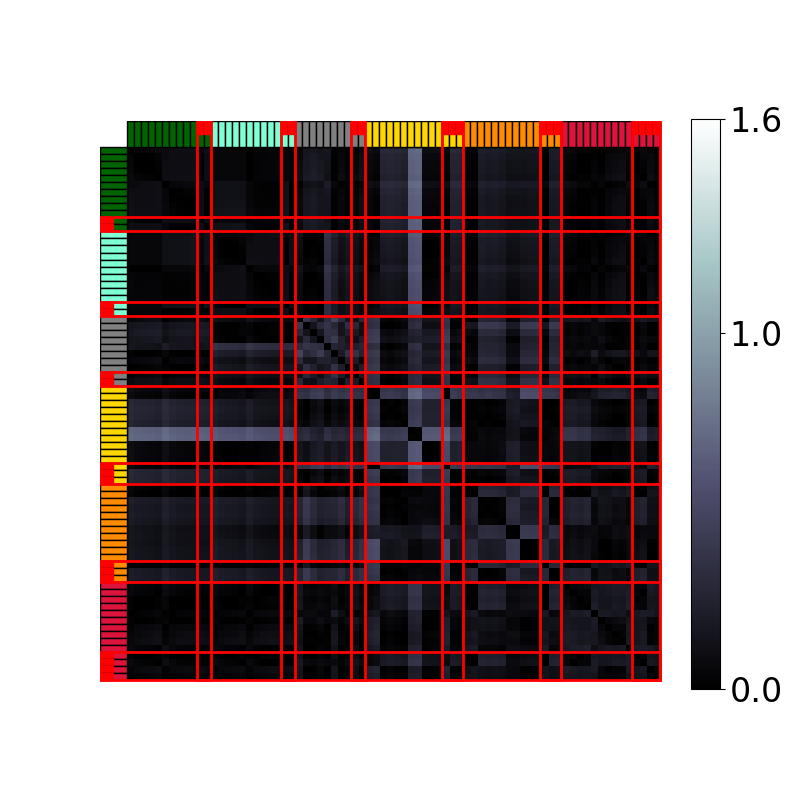}
		\includegraphics[trim={0.0cm 0.0cm 0.0cm 0.0cm},clip,width=\textwidth]{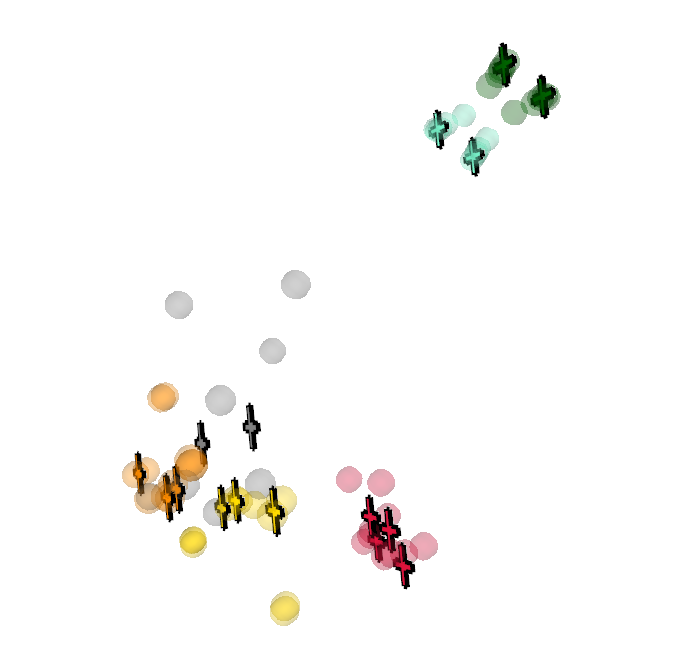}
		\includegraphics[trim={2.0cm 2.0cm 0.5cm 2.0cm},clip,width=0.9\textwidth]{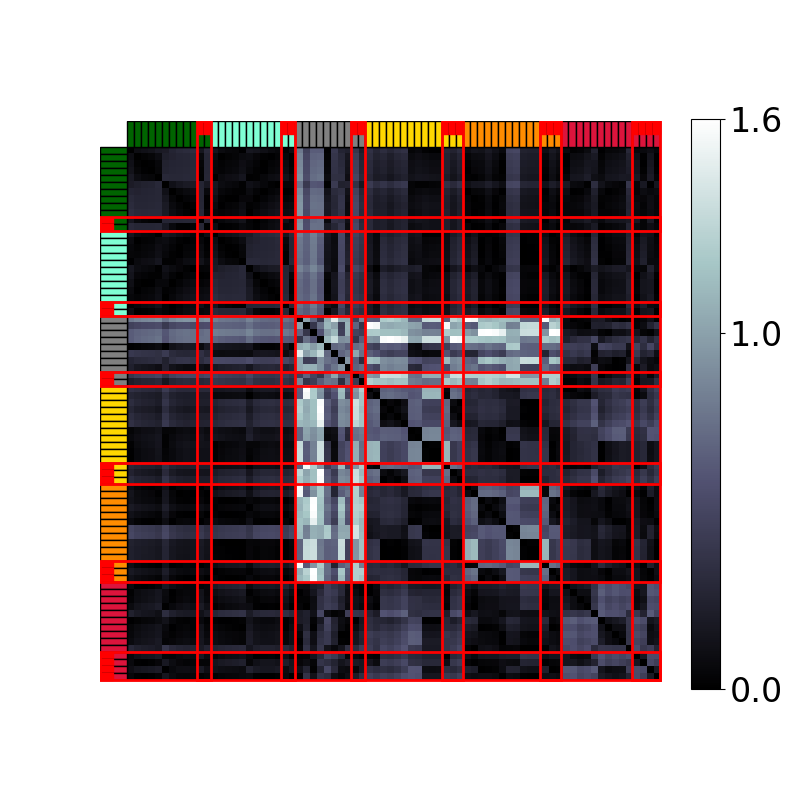}
		\caption{Adding poses}
		\label{fig:GPHLVM:added_poses_3d-bimanual}
	\end{subfigure}%
	\begin{subfigure}[b]{0.15\textwidth}
		\centering
		\includegraphics[trim={2.0cm 1.8cm 2.0cm 1.8cm},clip,width=\textwidth]{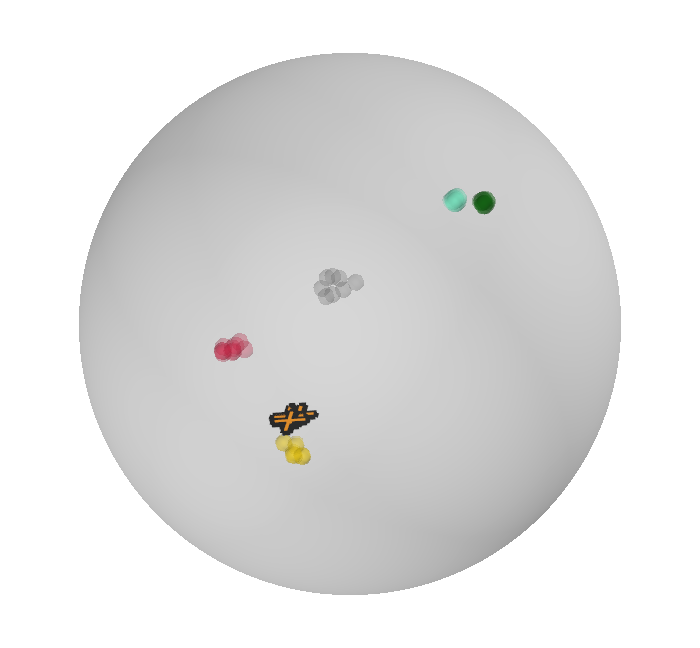}
		\includegraphics[trim={2.0cm 2.0cm 0.5cm 2.0cm},clip,width=0.9\textwidth]{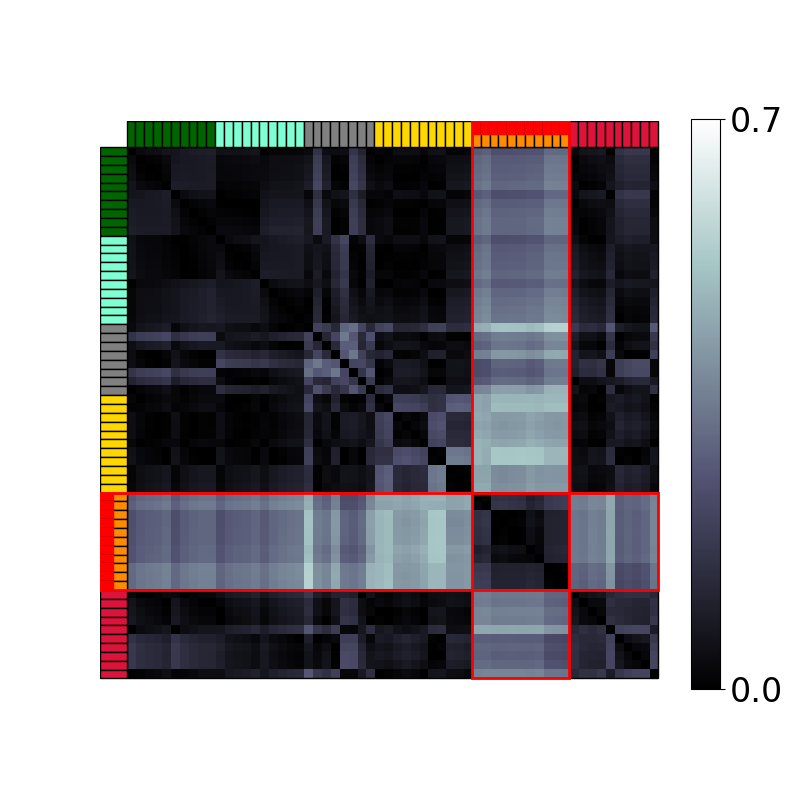}
		\includegraphics[trim={0.0cm 0.0cm 0.0cm 0.0cm},clip,width=\textwidth]{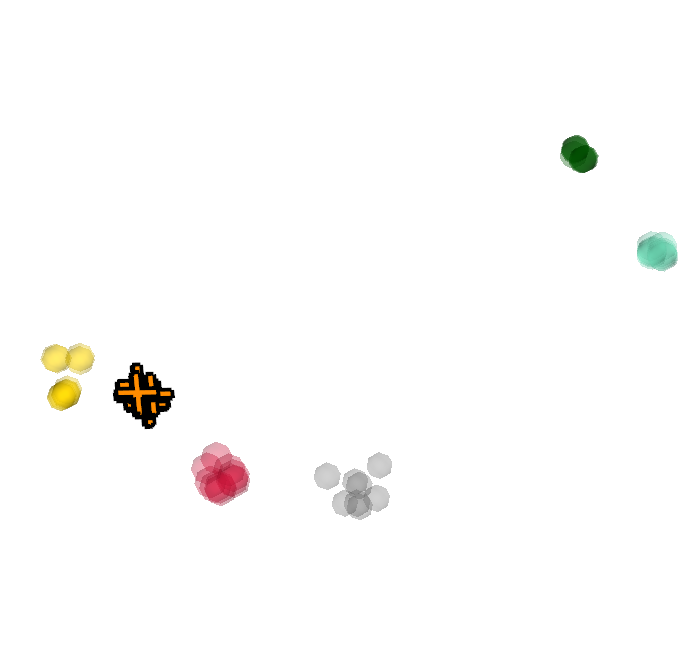}
		\includegraphics[trim={2.0cm 2.0cm 0.5cm 2.0cm},clip,width=0.9\textwidth]{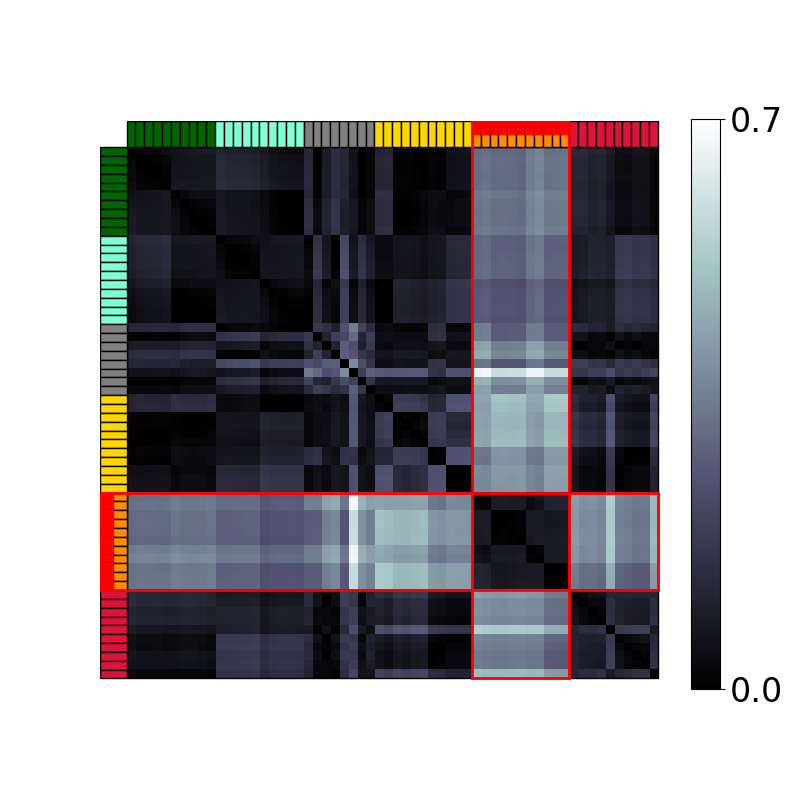}
		\caption{Adding a class}
		\label{fig:GPHLVM:added_class_3d-bimanual}
	\end{subfigure}
    \vspace{-0.2cm}
	\caption{Bimanual manipulation categories: The first and last two rows show the latent embeddings and examples of interpolating geodesics in $\mathcal{P}^3$ and $\mathbb{R}^3$, followed by pairwise error matrices between geodesic and taxonomy graph distances. Added poses \emph{(d)} and classes \emph{(e)} are marked with crosses and highlighted with red in the error matrices.}
	\label{fig:GPHLVM:trained_models_3d-bimanual}
    \vspace{-0.3cm}
\end{figure}

\begin{figure}
	\centering
	\includegraphics[trim={2.8cm 1.2cm 2.0cm 1.5cm},clip,width=0.8\textwidth]{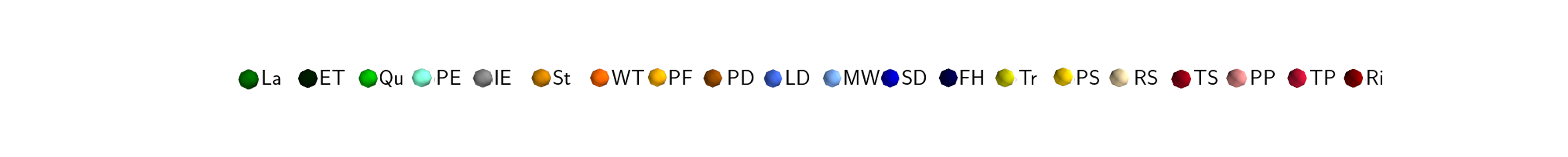}
	\begin{subfigure}[b]{0.15\textwidth}
		\centering
		\includegraphics[trim={2.0cm 1.8cm 2.0cm 1.8cm},clip,width=\textwidth]{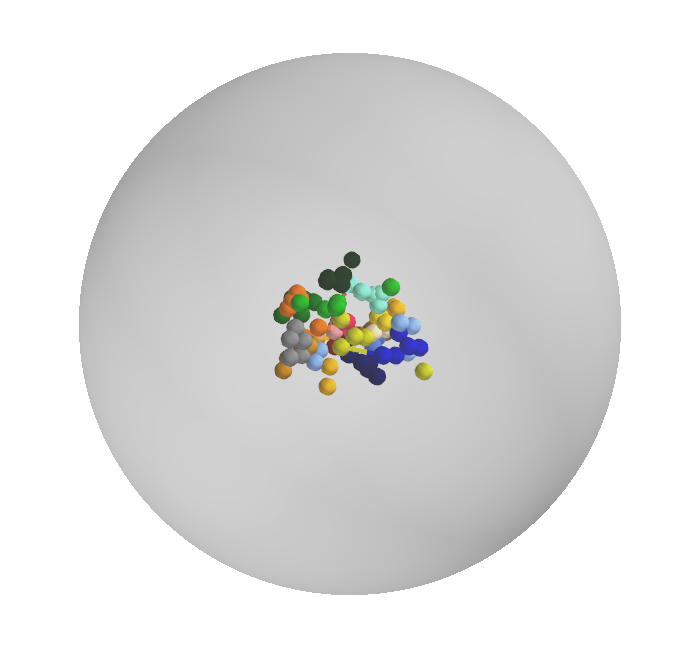}
		\includegraphics[trim={2.0cm 2.0cm 0.5cm 2.0cm},clip,width=0.9\textwidth]{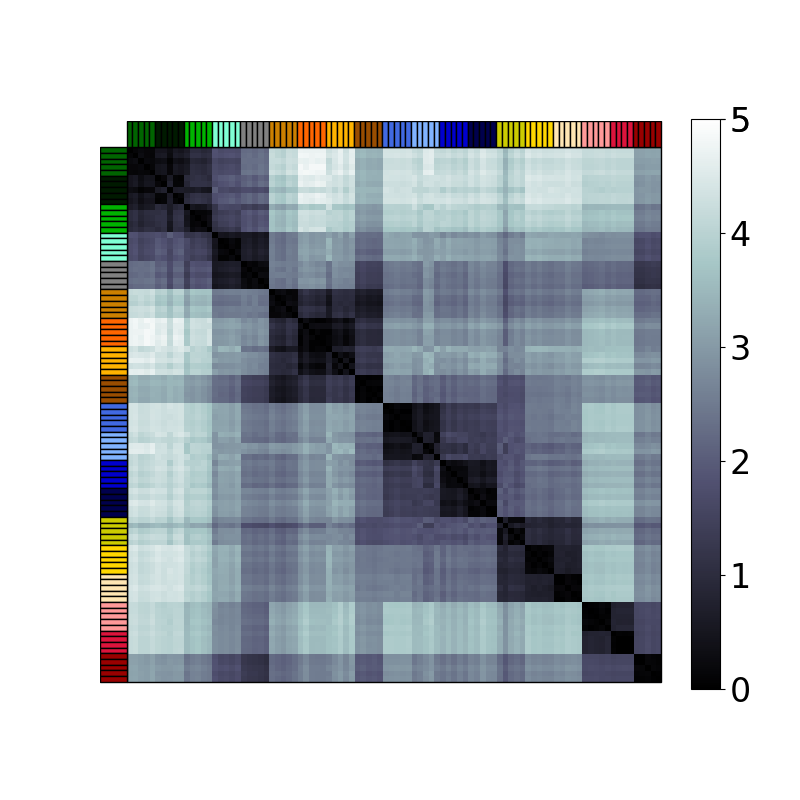}
		\includegraphics[trim={0.0cm 0.0cm 0.0cm 0.0cm},clip,width=\textwidth]{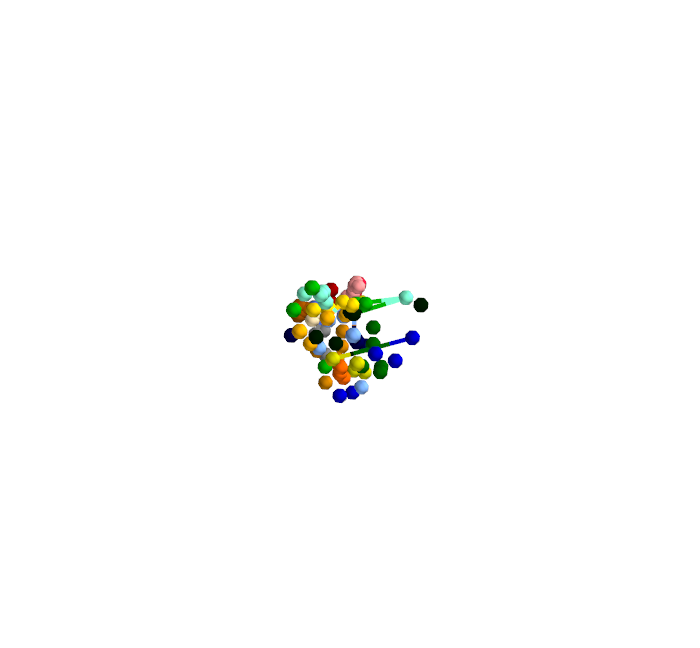}
		\includegraphics[trim={2.0cm 2.0cm 0.5cm 2.0cm},clip,width=0.9\textwidth]{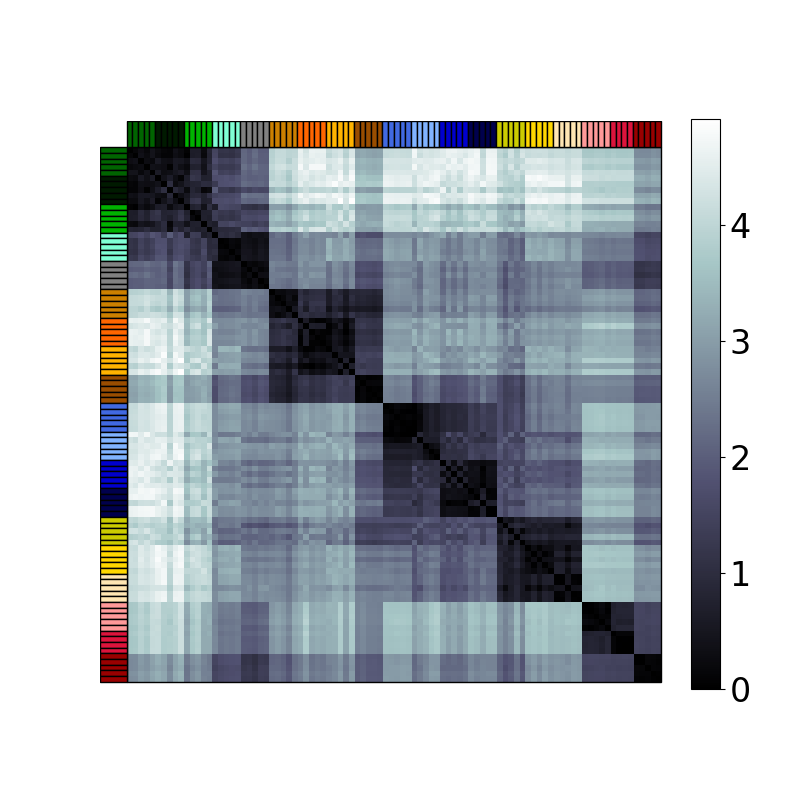}
		\caption{Vanilla}
		\label{fig:GPHLVM-grasps:vanilla_3d}
	\end{subfigure}%
	\begin{subfigure}[b]{0.15\textwidth}
		\centering
		\includegraphics[trim={2.0cm 1.8cm 2.0cm 1.8cm},clip,width=\textwidth]{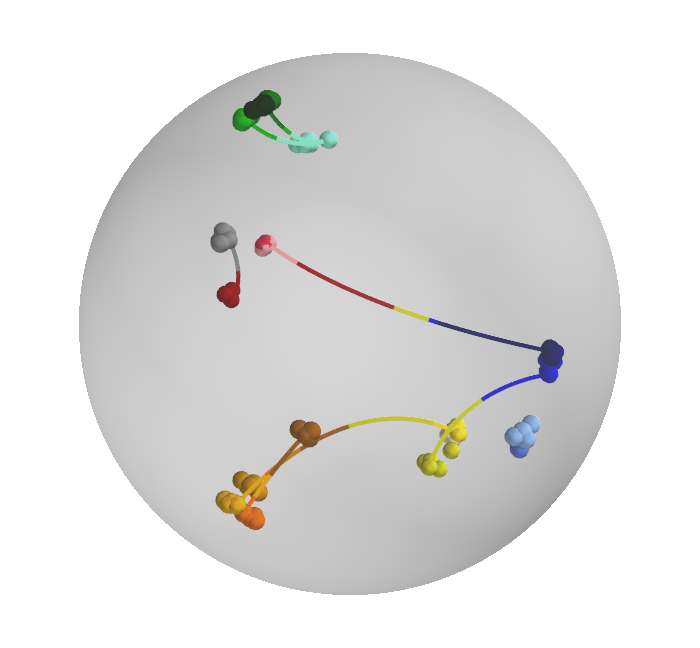}
		\includegraphics[trim={2.0cm 2.0cm 0.5cm 2.0cm},clip,width=0.9\textwidth]{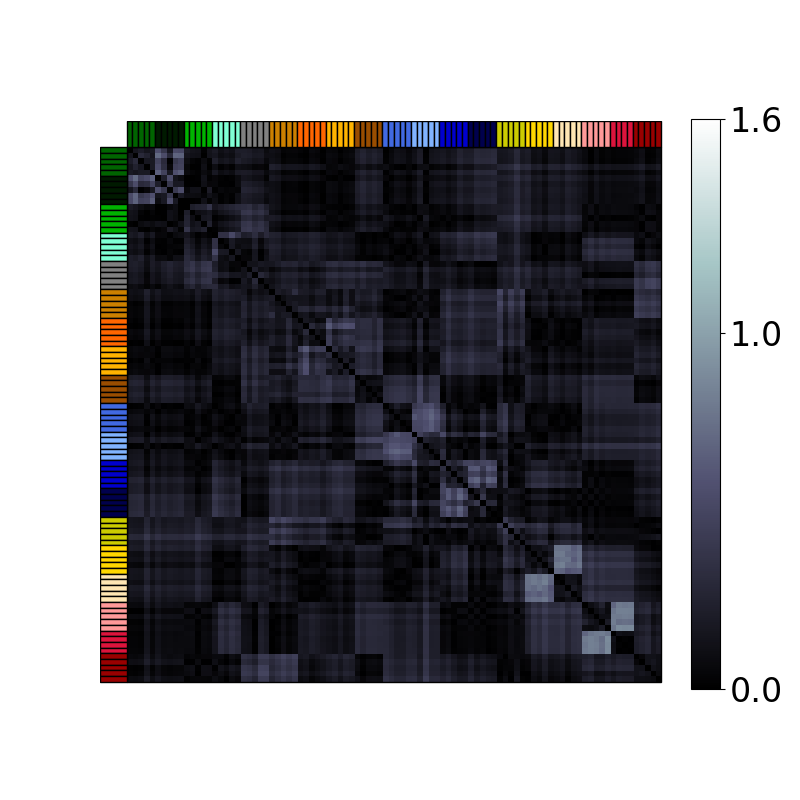}
		\includegraphics[trim={0.0cm 0.0cm 0.0cm 0.0cm},clip,width=\textwidth]{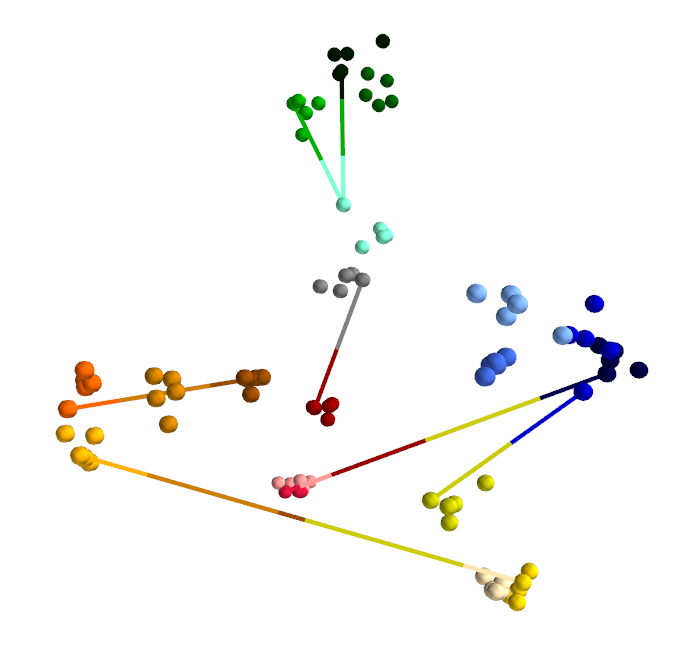}
		\includegraphics[trim={2.0cm 2.0cm 0.5cm 2.0cm},clip,width=0.9\textwidth]{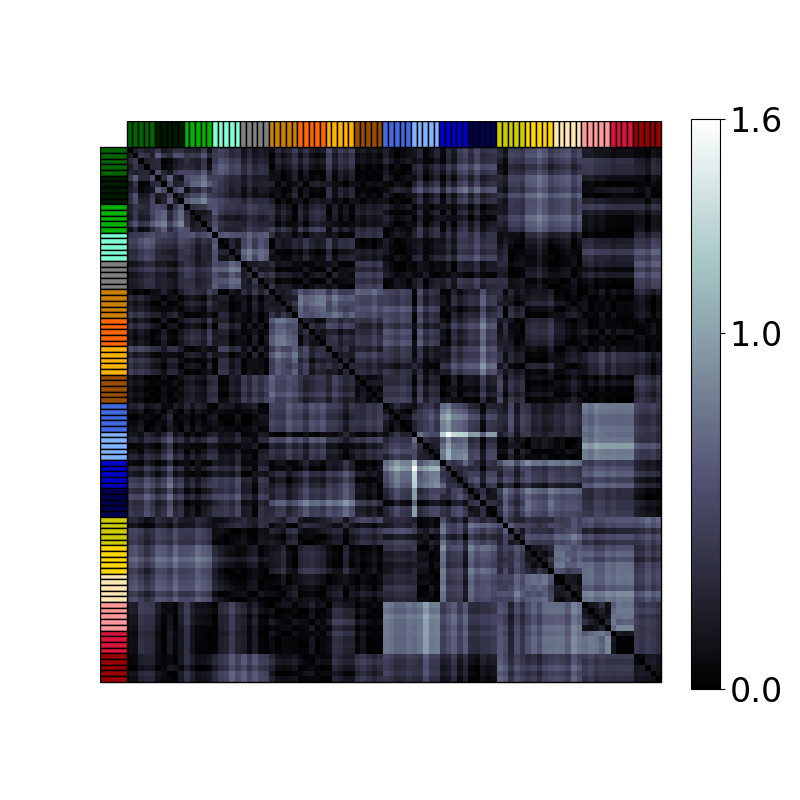}
		\caption{Stress prior}
		\label{fig:GPHLVM-grasps:stress_prior_3d}
	\end{subfigure}%
	\begin{subfigure}[b]{0.15\textwidth}
		\centering
		\includegraphics[trim={2.0cm 1.8cm 2.0cm 1.8cm},clip,width=\textwidth]{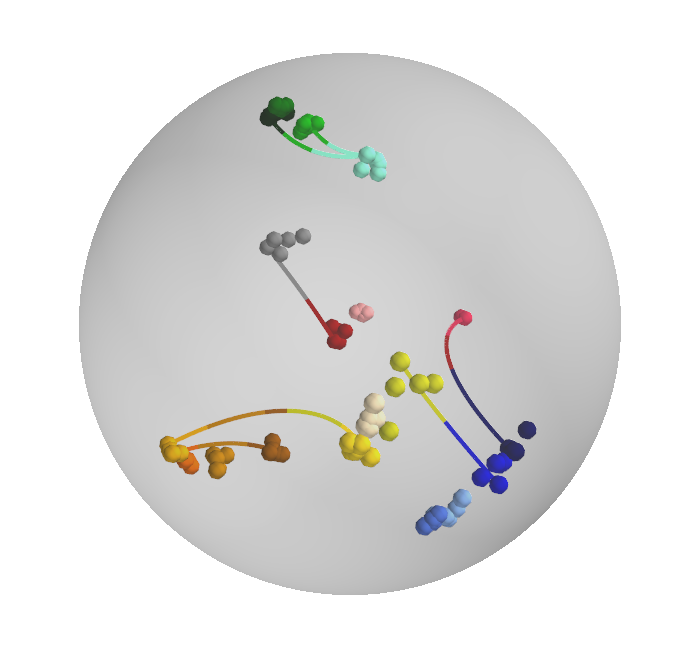}
		\includegraphics[trim={2.0cm 2.0cm 0.5cm 2.0cm},clip,width=0.9\textwidth]{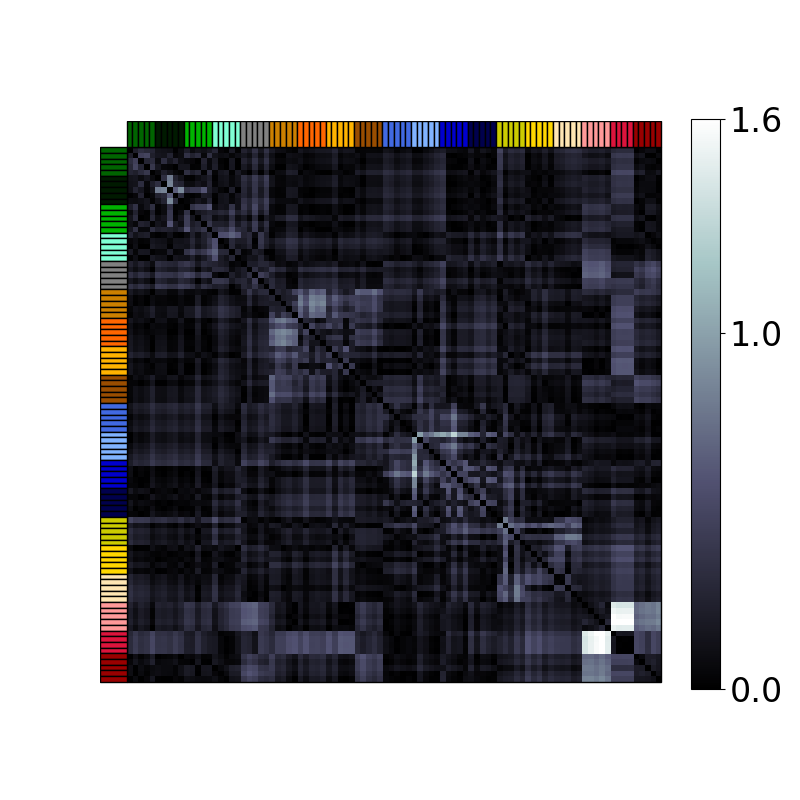}
		\includegraphics[trim={0.0cm 0.0cm 0.0cm 0.0cm},clip,width=\textwidth]{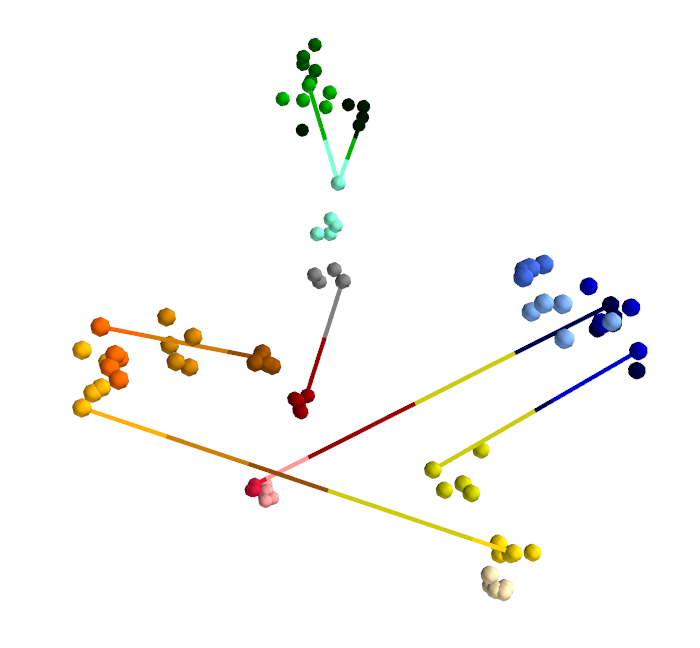}
		\includegraphics[trim={2.0cm 2.0cm 0.5cm 2.0cm},clip,width=0.9\textwidth]{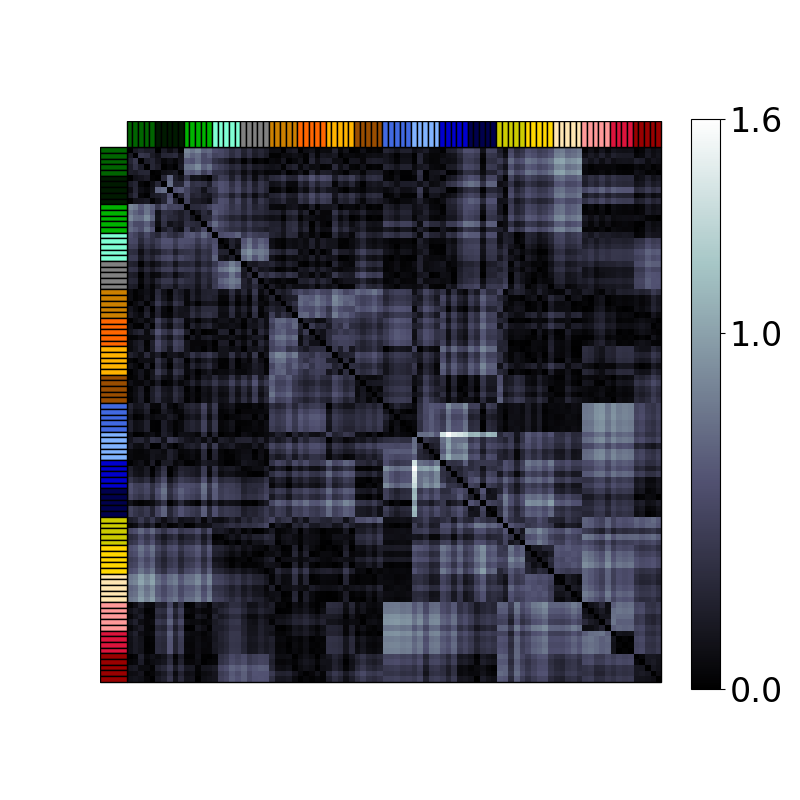}
		\caption{BC + stress prior}
		\label{fig:GPHLVM-grasps:backconstrained_and_stress_3d}
	\end{subfigure}%
	\begin{subfigure}[b]{0.15\textwidth}
		\centering
        \includegraphics[trim={2.0cm 1.8cm 2.0cm 1.8cm},clip,width=\textwidth]{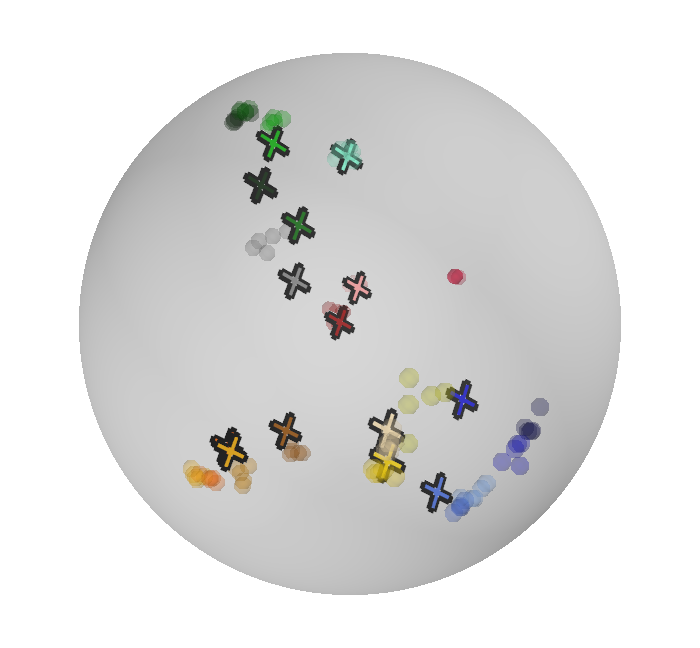}
		\includegraphics[trim={2.0cm 2.0cm 0.5cm 2.0cm},clip,width=0.9\textwidth]{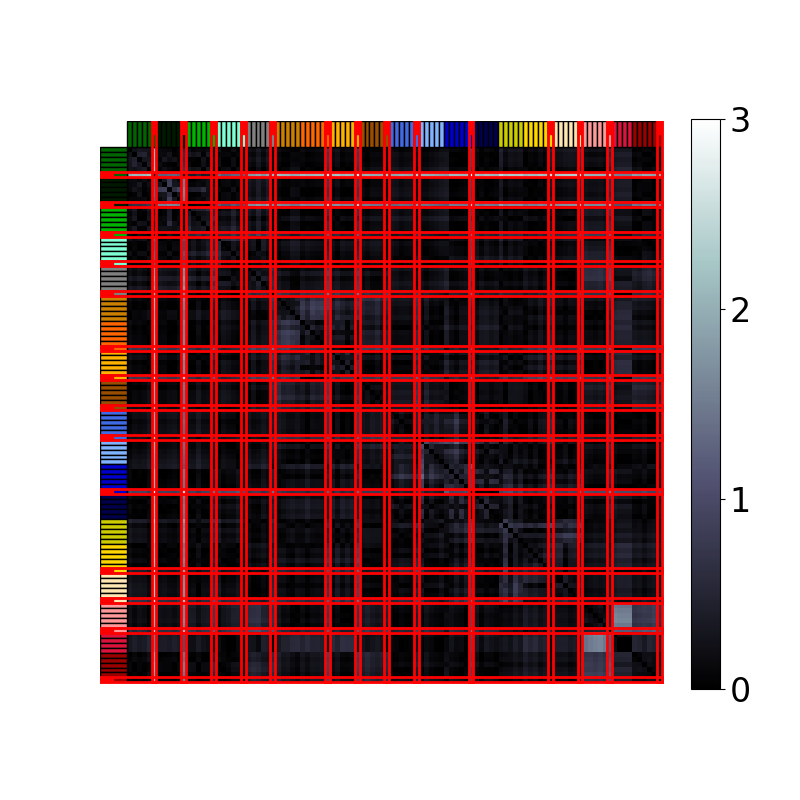}
		\includegraphics[trim={0.0cm 0.0cm 0.0cm 0.0cm},clip,width=\textwidth]{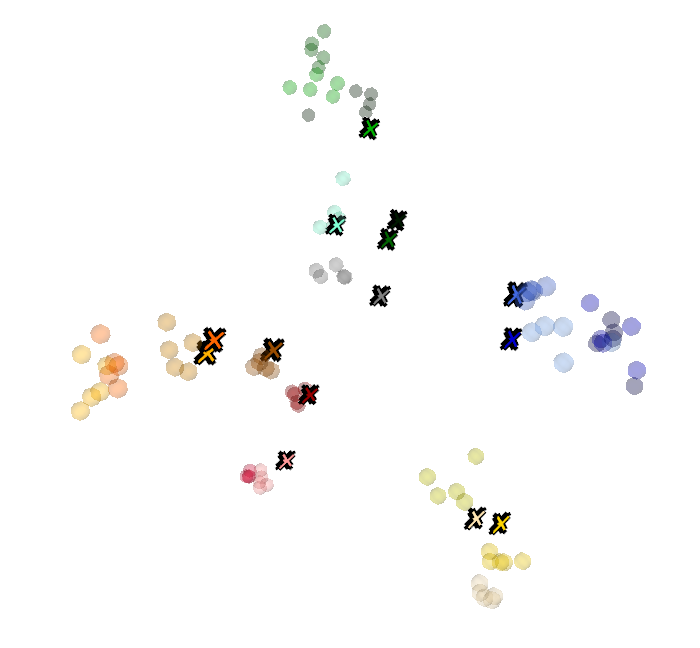}
		\includegraphics[trim={2.0cm 2.0cm 0.5cm 2.0cm},clip,width=0.9\textwidth]{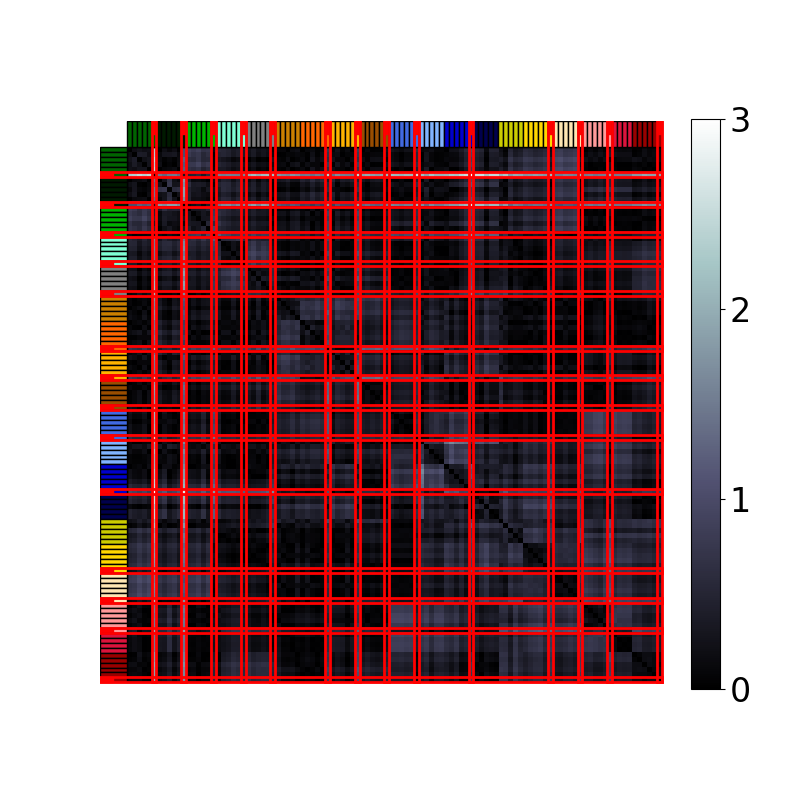}
		\caption{Adding poses}
		\label{fig:GPHLVM-grasps:added_poses_3d}
	\end{subfigure}%
	\begin{subfigure}[b]{0.15\textwidth}
		\centering
        \includegraphics[trim={2.0cm 1.8cm 2.0cm 1.8cm},clip,width=\textwidth]{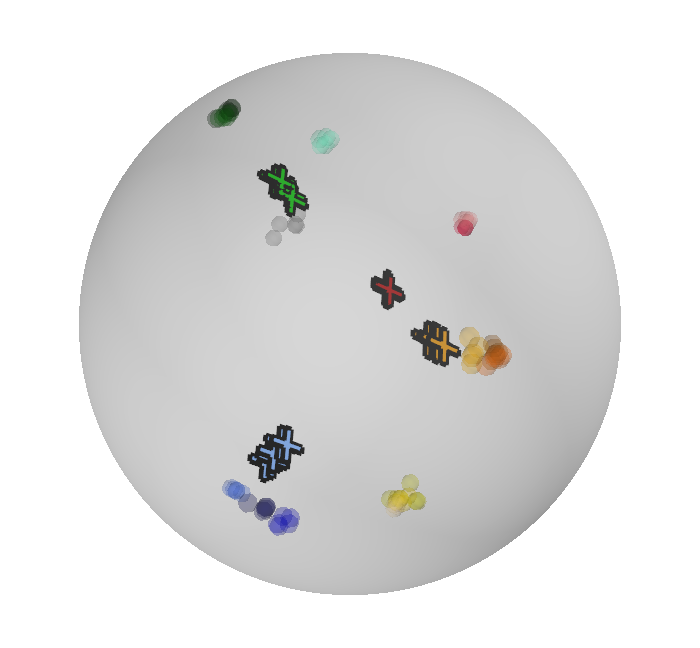}
		\includegraphics[trim={2.0cm 2.0cm 0.5cm 2.0cm},clip,width=0.9\textwidth]{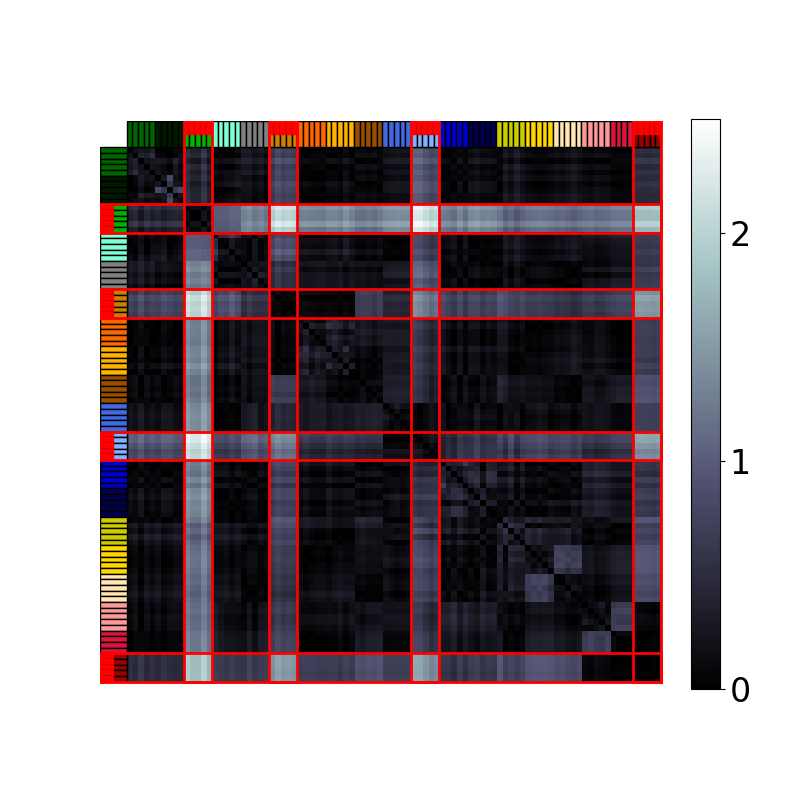}
		\includegraphics[trim={0.0cm 0.0cm 0.0cm 0.0cm},clip,width=\textwidth]{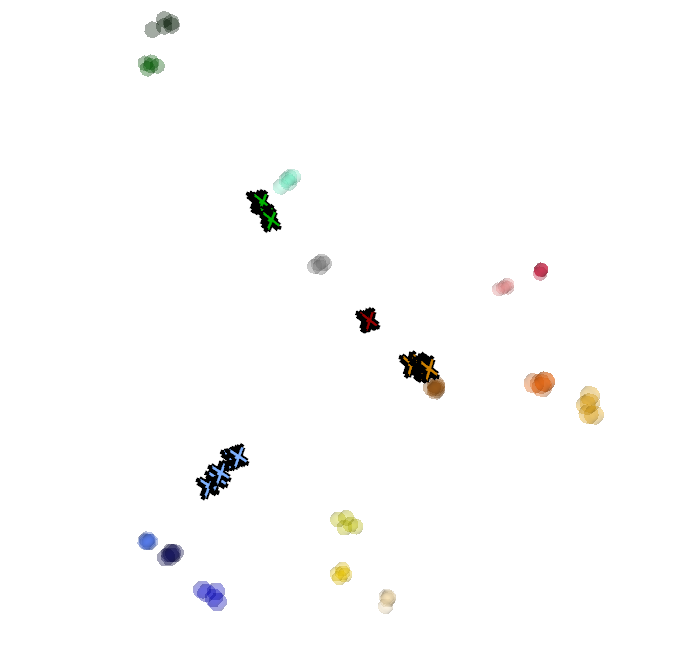}
		\includegraphics[trim={2.0cm 2.0cm 0.5cm 2.0cm},clip,width=0.9\textwidth]{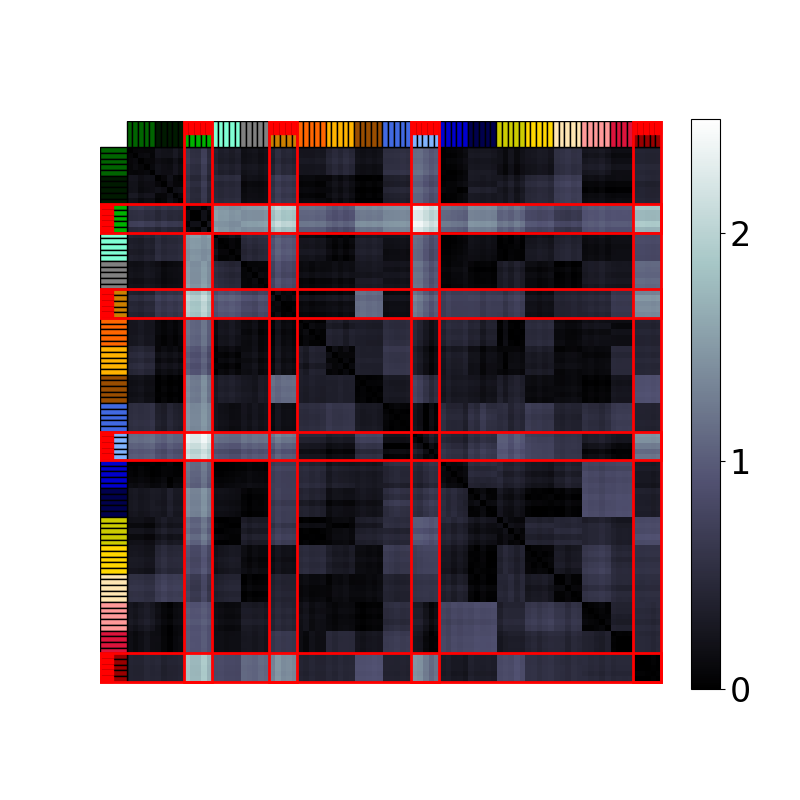}
		\caption{Adding a class}
		\label{fig:GPHLVM-grasps:added_class_3d}
	\end{subfigure}
    \vspace{-0.2cm}
	\caption{Grasps: The first and last two rows show the latent embeddings and examples of interpolating geodesics in $\mathcal{P}^3$ and $\mathbb{R}^3$, followed by pairwise error matrices between geodesic and taxonomy graph distances. Embeddings colors match those of Fig.~\ref{fig:geodesic_trajectories}. Added poses \emph{(d)} and classes \emph{(e)} are marked with crosses and highlighted with red in the error matrices.}
    \vspace{-0.3cm}
	\label{fig:GPHLVM-grasps:trained_models_3d}
\end{figure}

\begin{figure}
	\centering
	\includegraphics[trim={0cm 0cm 0cm 0cm},clip,width=.8\textwidth]{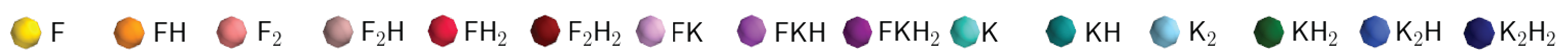}
	\begin{subfigure}[b]{0.15\textwidth}
		\centering
		\includegraphics[trim={2.0cm 1.8cm 2.0cm 1.8cm},clip,width=\textwidth]{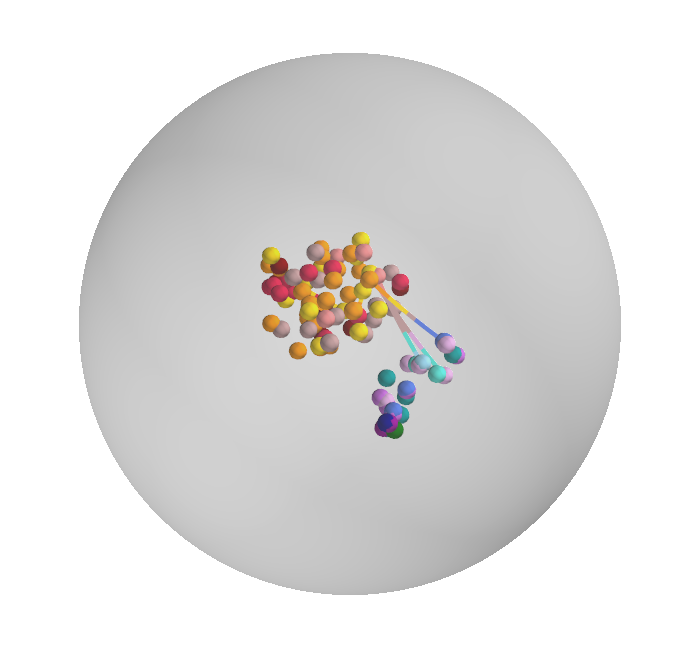}
		\includegraphics[trim={2.0cm 2.0cm 0.5cm 2.0cm},clip,width=0.9\textwidth]{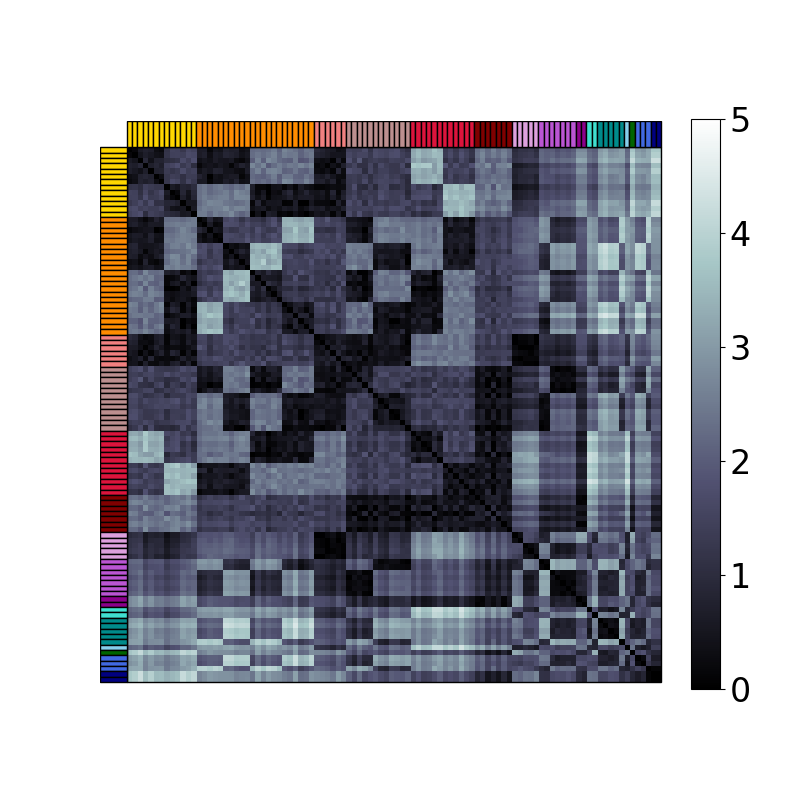}
		\includegraphics[trim={0.0cm 0.0cm 0.0cm 0.0cm},clip,width=\textwidth]{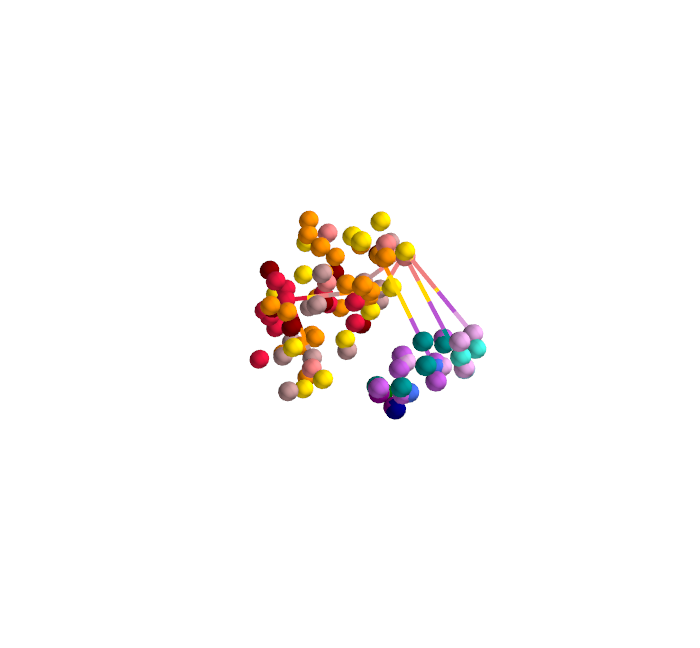}
		\includegraphics[trim={2.0cm 2.0cm 0.5cm 2.0cm},clip,width=0.9\textwidth]{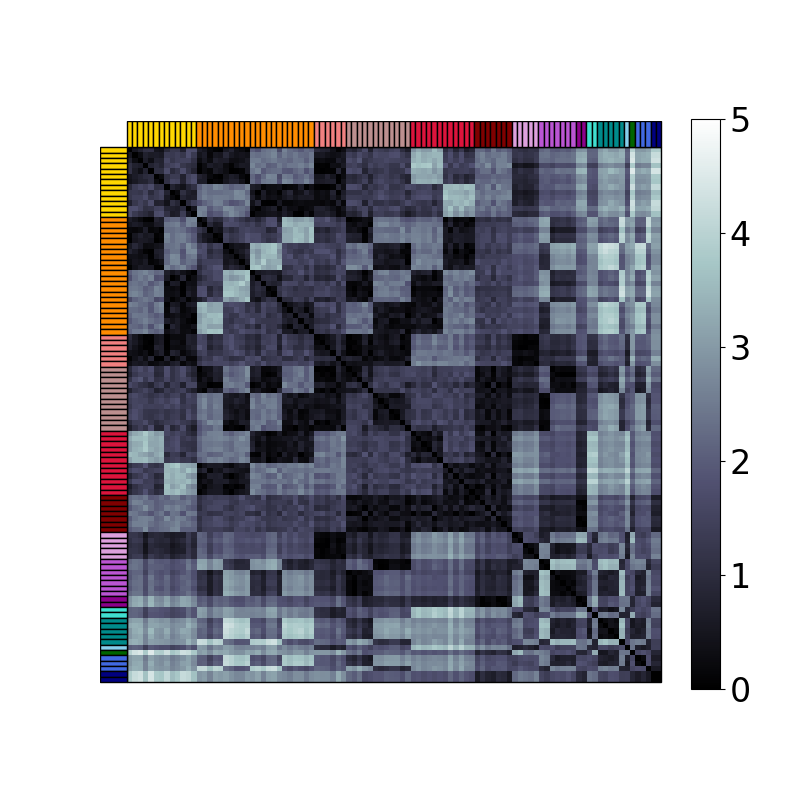}
		\caption{Vanilla}
		\label{fig:GPHLVM:vanilla_3d}
	\end{subfigure}%
	\begin{subfigure}[b]{0.15\textwidth}
		\centering
		\includegraphics[trim={2.0cm 1.8cm 2.0cm 1.8cm},clip,width=\textwidth]{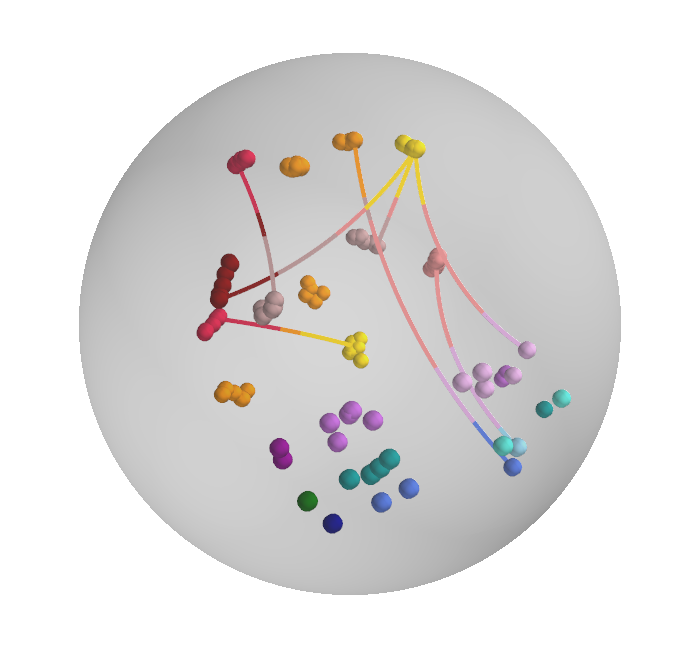}
		\includegraphics[trim={2.0cm 2.0cm 0.5cm 2.0cm},clip,width=0.9\textwidth]{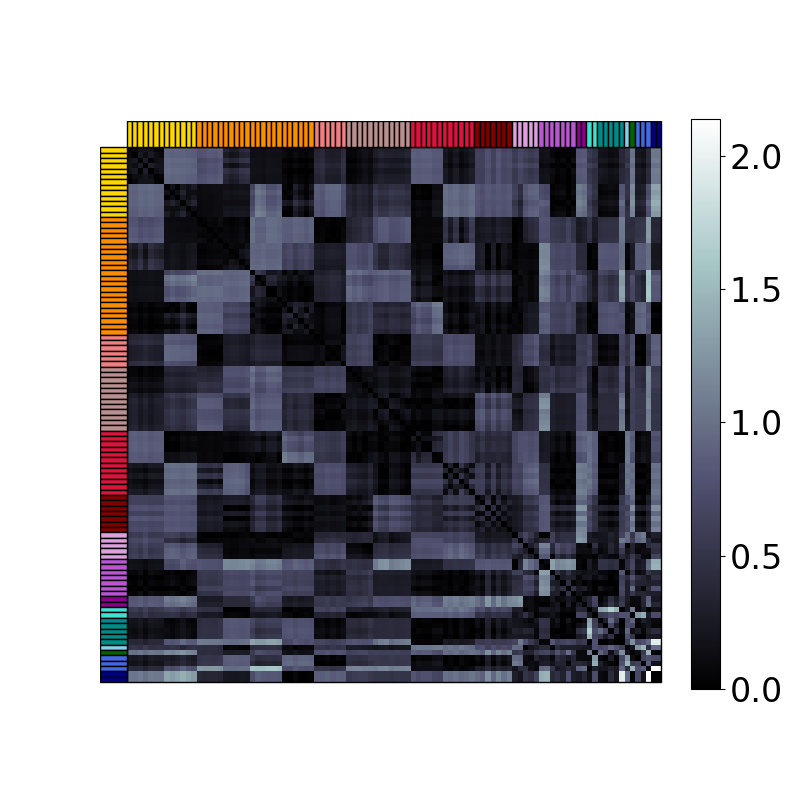}
		\includegraphics[trim={0.0cm 0.0cm 0.0cm 0.0cm},clip,width=\textwidth]{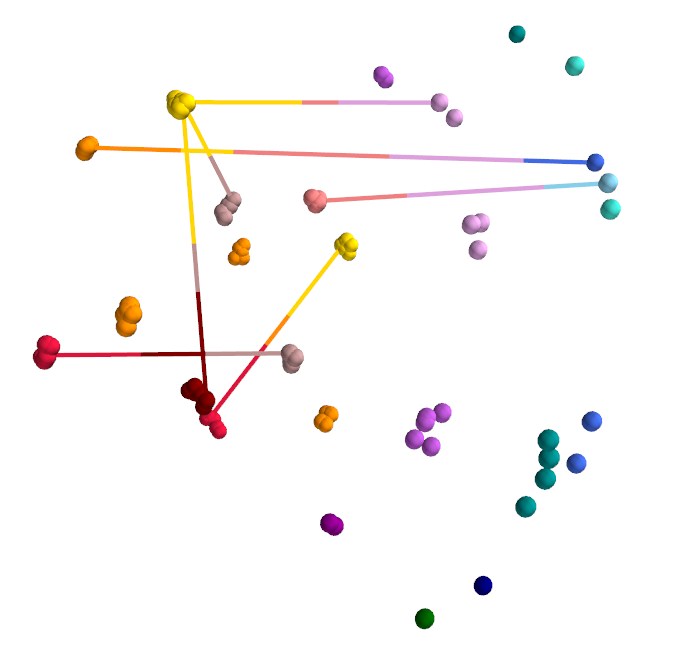}
		\includegraphics[trim={2.0cm 2.0cm 0.5cm 2.0cm},clip,width=0.9\textwidth]{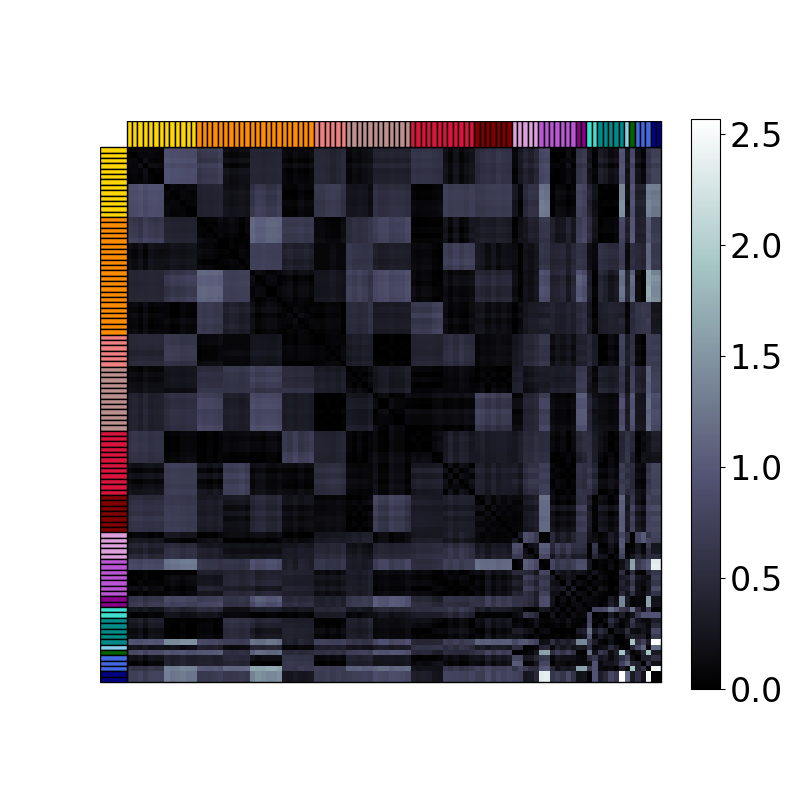}
		\caption{Stress prior}
		\label{fig:GPHLVM:stress_prior_3d}
	\end{subfigure}%
	\begin{subfigure}[b]{0.15\textwidth}
		\centering
		\includegraphics[trim={2.0cm 1.8cm 2.0cm 1.8cm},clip,width=\textwidth]{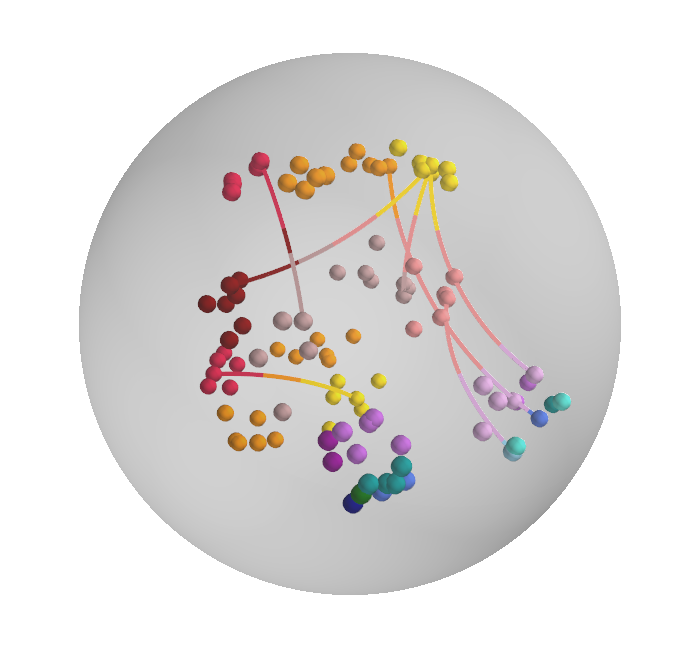}
		\includegraphics[trim={2.0cm 2.0cm 0.5cm 2.0cm},clip,width=0.9\textwidth]{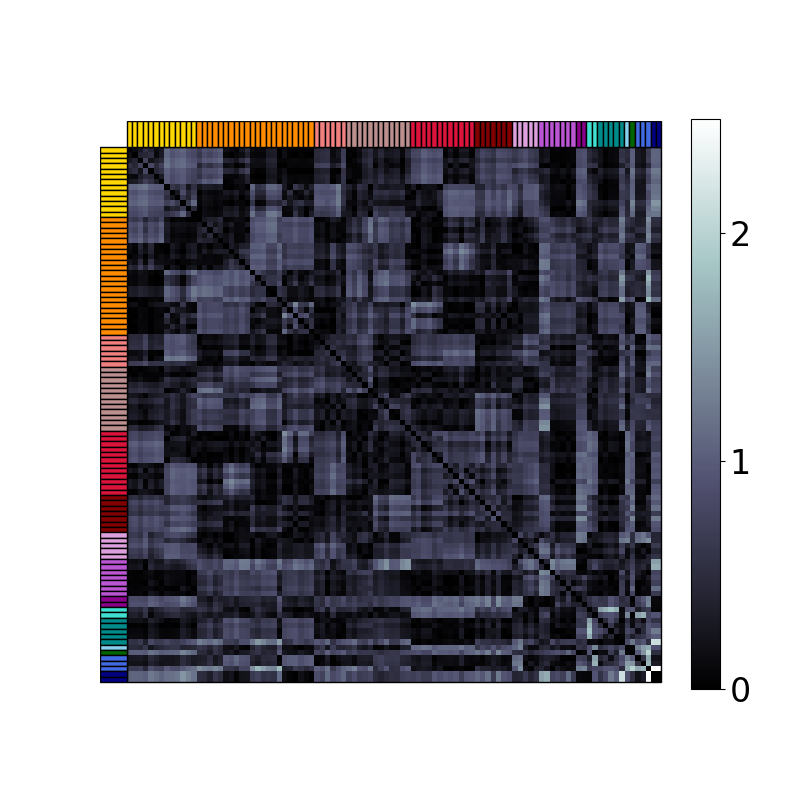}
		\includegraphics[trim={0.0cm 0.0cm 0.0cm 0.0cm},clip,width=\textwidth]{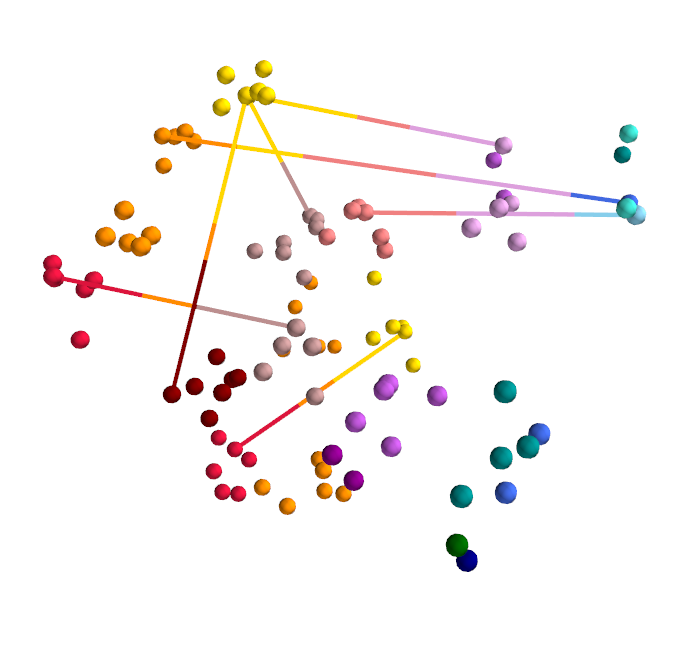}
		\includegraphics[trim={2.0cm 2.0cm 0.5cm 2.0cm},clip,width=0.9\textwidth]{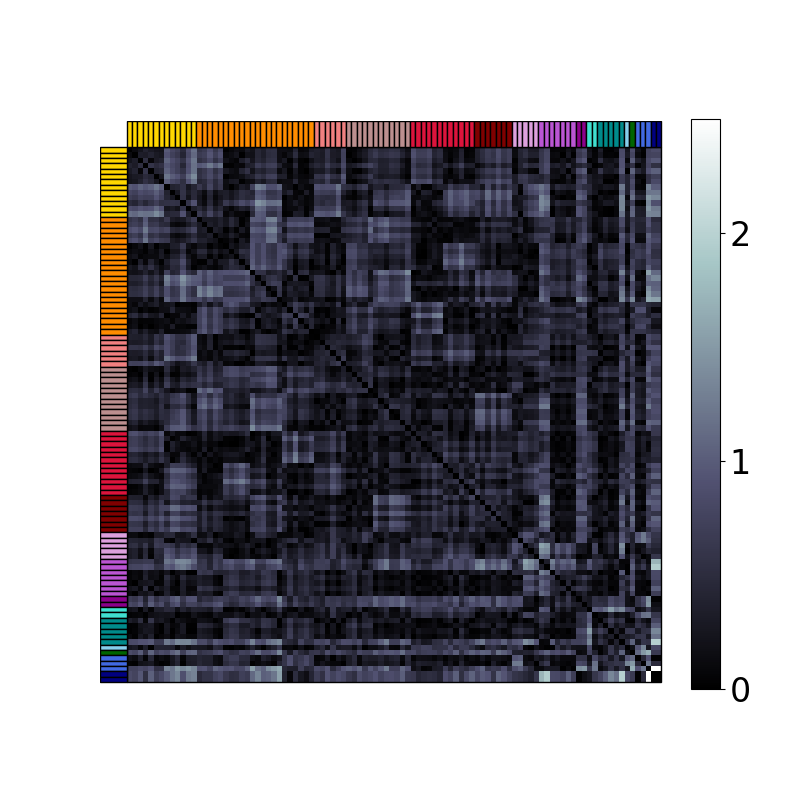}
		\caption{BC + stress prior}
		\label{fig:GPHLVM:backconstrained_and_stress_3d}
	\end{subfigure}%
	\begin{subfigure}[b]{0.15\textwidth}
		\centering
        \includegraphics[trim={2.0cm 1.8cm 2.0cm 1.8cm},clip,width=\textwidth]{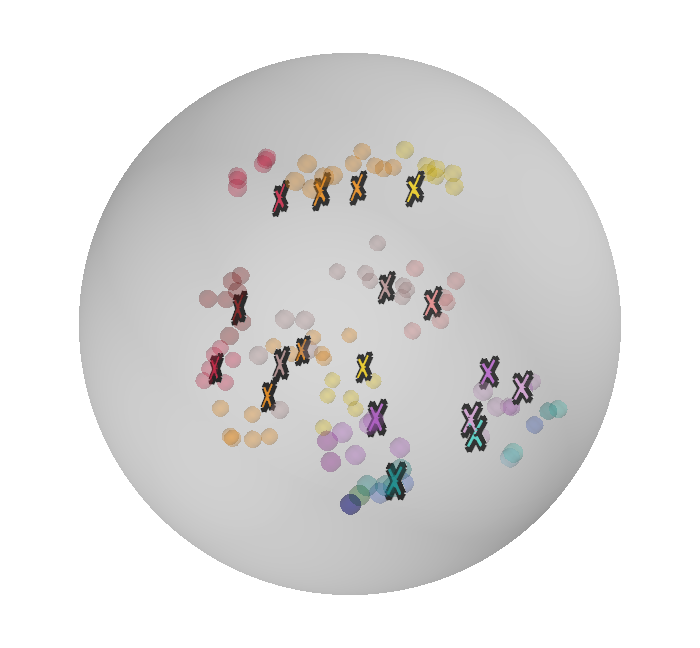}
		\includegraphics[trim={2.0cm 2.0cm 0.5cm 2.0cm},clip,width=0.9\textwidth]{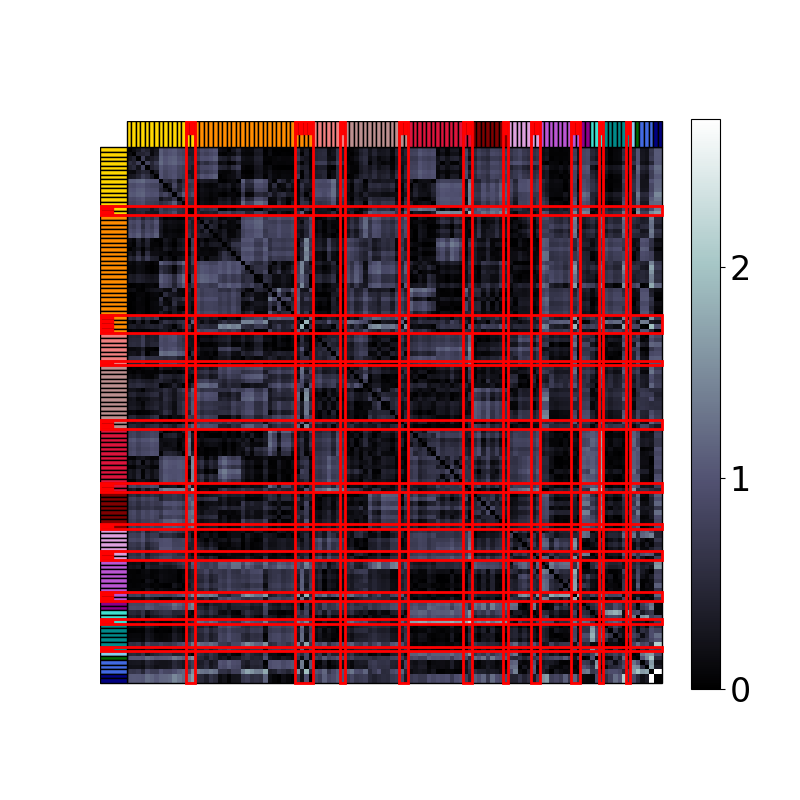}
		\includegraphics[trim={0.0cm 0.0cm 0.0cm 0.0cm},clip,width=\textwidth]{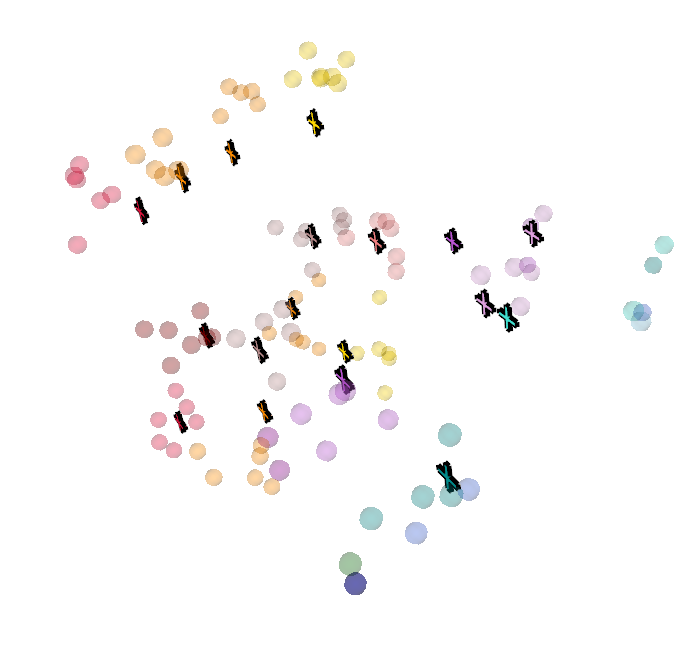}
		\includegraphics[trim={2.0cm 2.0cm 0.5cm 2.0cm},clip,width=0.9\textwidth]{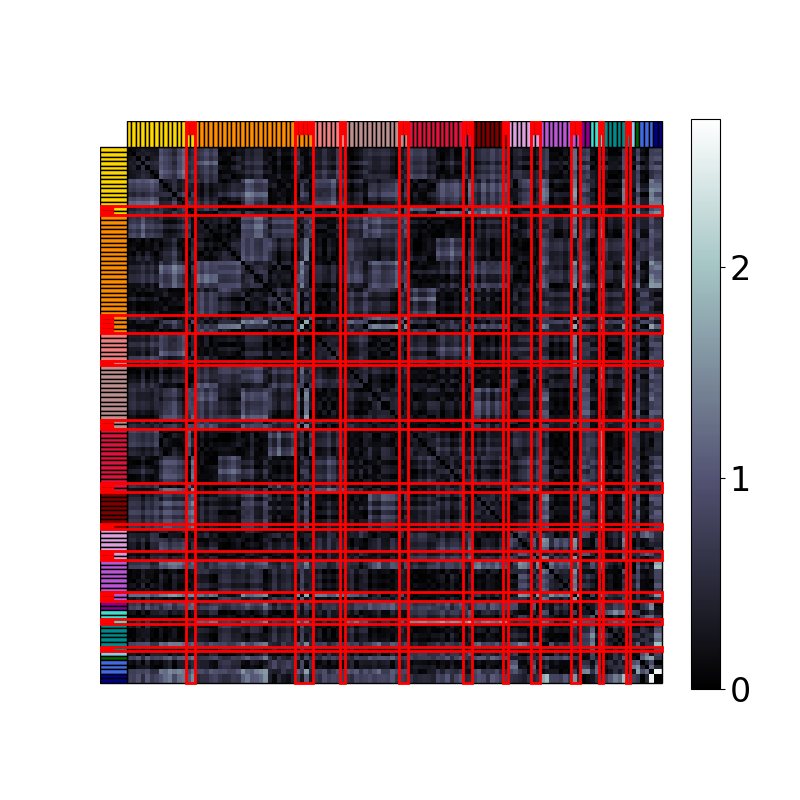}
		\caption{Adding poses}
		\label{fig:GPHLVM:added_poses_3d}
	\end{subfigure}%
	\begin{subfigure}[b]{0.15\textwidth}
		\centering
		\includegraphics[trim={2.0cm 1.8cm 2.0cm 1.8cm},clip,width=\textwidth]{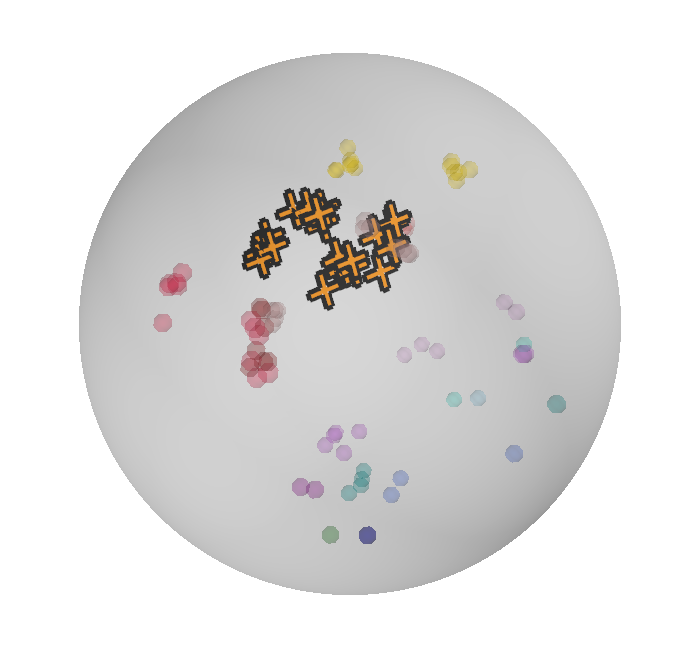}
		\includegraphics[trim={2.0cm 2.0cm 0.5cm 2.0cm},clip,width=0.9\textwidth]{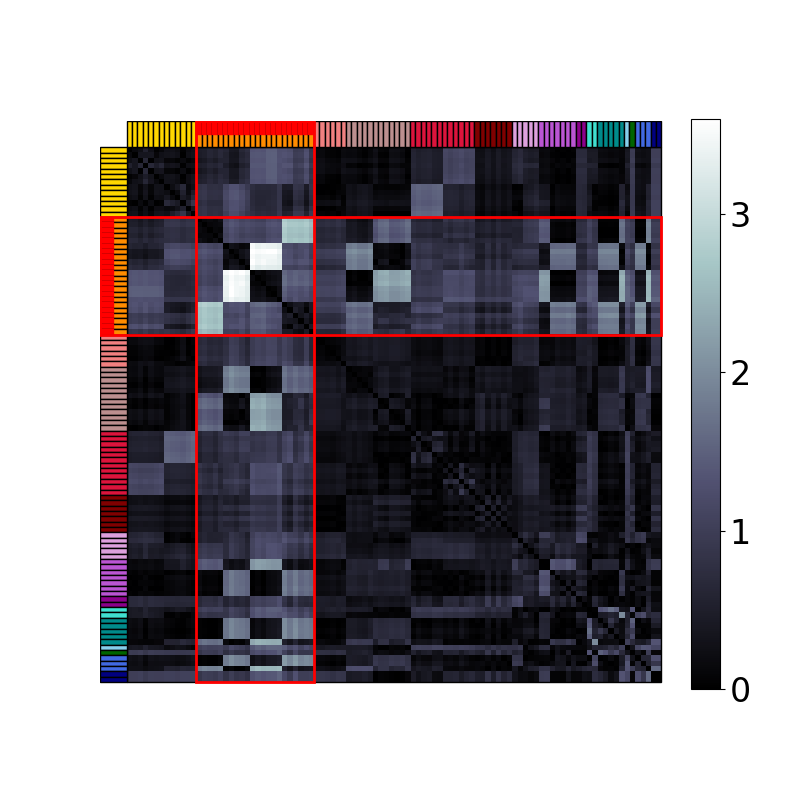}
		\includegraphics[trim={0.0cm 0.0cm 0.0cm 0.0cm},clip,width=\textwidth]{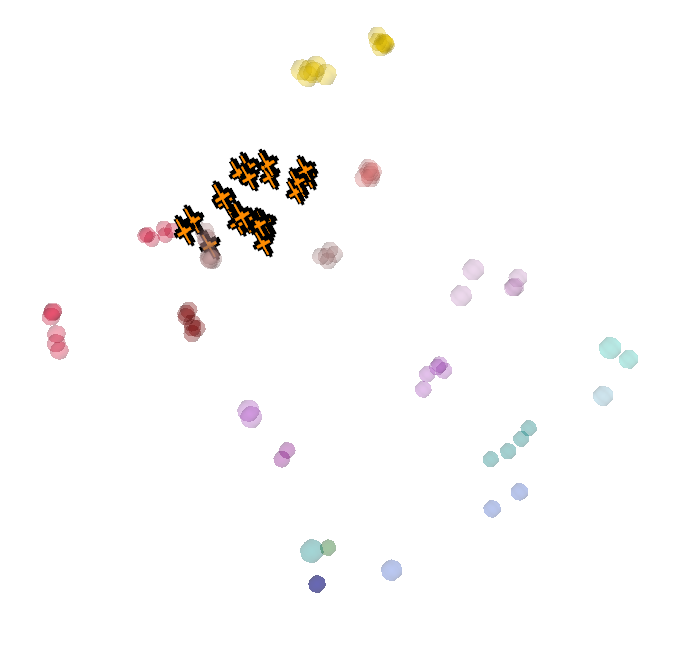}
		\includegraphics[trim={2.0cm 2.0cm 0.5cm 2.0cm},clip,width=0.9\textwidth]{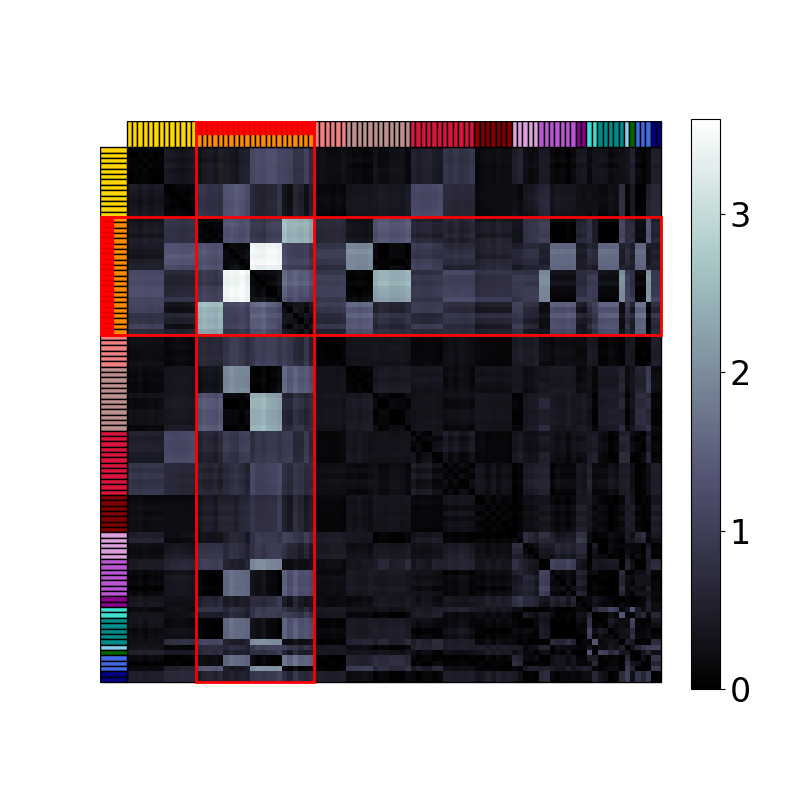}
		\caption{Adding a class}
		\label{fig:GPHLVM:added_class_3d}
	\end{subfigure}
    \vspace{-0.2cm}
	\caption{Support poses: The first and last two rows show the latent embeddings and examples of interpolating geodesics in $\mathcal{P}^3$ and $\mathbb{R}^3$, followed by pairwise error matrices between geodesic and graph distances. Embeddings colors match those of Fig.~\ref{fig:support-poses-taxonomy}. Added poses \emph{(d)} and classes \emph{(e)} are marked with crosses and highlighted with red in the error matrices.}
	\label{fig:GPHLVM:trained_models_3d}
    \vspace{-0.3cm}
\end{figure}

\subsection{Marginal log-likelihoods of trained models}
\begin{table}[h!]
    \caption{Marginal log-likelihood per geometry and regularization.}
    \vspace{-0.2cm}
    \label{table:mll_of_models}
    \begin{center}
    \begin{small}
    \begin{sc}
		\begin{tabular}{lllll}
			\toprule
			    \textbf{Taxonomy} & \textbf{Model}  & \textbf{No reg.} & \textbf{Stress} & \textbf{BC + Stress} \\
			\toprule
			\multirow{4}{*}{\parbox{2.2cm}{Bimanual manipulation categories}} & GPLVM, $\mathbb{R}^2$ &  $79.50$ &  $75.11$ &  $68.76$ \\
			& GPHLVM, $ \lorentz{2}$ & $78.42$ &  $73.38$ & $73.49$ \\
            \cmidrule(lr){2-5}
			& GPLVM, $ \mathbb{R}^3$ &  $83.13$ &  $69.86$ & $83.93$ \\
            & GPHLVM, $\lorentz{3}$ &  $84.55$ &  $68.44$ & $79.77$ \\
            \toprule
			\multirow{4}{*}{Grasps} & GPLVM, $\mathbb{R}^2$ &  $9.97$ &  $4.55$ &  $9.49$\\
			& GPHLVM, $ \lorentz{2}$ &  $7.91$ &  $4.19$ & $5.82$ \\
            \cmidrule(lr){2-5}
			& GPLVM, $ \mathbb{R}^3$ &  $12.15$ &  $5.00$ &  $9.58$ \\
            & GPHLVM, $\lorentz{3}$ &  $9.60$ &  $3.45$ &  $9.15$ \\
            \toprule
            \multirow{4}{*}{\parbox{1.5cm}{Support poses}} & GPLVM, $\mathbb{R}^2$ &  $6.96$ &  $-13.30$ &  $-6.06$\\
			& GPHLVM, $ \lorentz{2}$ &  $5.52$ &  $-12.29$ &  $-7.47$ \\
            \cmidrule(lr){2-5}
			& GPLVM, $ \mathbb{R}^3$ &  $10.63$ &  $-14.35$ &  $-4.90$ \\
            & GPHLVM, $\lorentz{3}$ &  $8.71$ &  $-15.43$ &  $-4.14$\\
			\bottomrule
	\end{tabular}
    \end{sc}
    \end{small}
    \end{center}
\end{table}

Table~\ref{table:mll_of_models} shows the marginal log-likelihood (MLL) 
\begin{equation}
    p(\mathbf{Y}) = p(\mathbf{Y}|\mathbf{X}, \mathbf{\Theta})p(\mathbf{X})p(\mathbf{\Theta})
    \label{eq:MLL}
\end{equation}
of the GPHLVM and GPLVM described in \S~\ref{sec:experiments}. 
We observe that the marginal log-likelihood of the models with regularization is slightly lower than that of the models without regularization. This is due to the combination of the two losses $\ell_{\text{MAP}}$ and $\ell_{\text{stress}}$ when training the regularized models, resulting in a trade-off. In other words, we expect the non-regularized models to achieve the highest MLL. Interestingly, the gap between the MLL of non-regularized and regularized models is reduced for the bimanual manipulation and grasping taxonomies compared to the support pose taxonomy. We hypothesize that this is due to the tree structure of the two former taxonomies, which are ideally embedded in hyperbolic spaces. 
We would like to emphasize that the MLL~\eqref{eq:MLL} depends on the prior distribution $p(\mathbf{X})$, which itself is defined based on the geometry of the manifold, and on the prior $p(\mathbf{\Theta})$ imposed on the model parameters, which also differs across geometries (see Table~\ref{tab:experiment_hyperparameters}). Therefore, comparing the values of the MLL across geometries may generally be misleading.

\subsection{Additional motions obtained via geodesic interpolation and comparisons}
\label{app:experiment_geodesic_motions}
\begin{figure}[h!]
	\centering
    \begin{subfigure}[b]{0.8\textwidth}
		\centering
		\includegraphics[trim={0cm 0cm 0cm 0cm},clip,width=\textwidth]{Figures/Motions/Ri-IE.png}
        \includegraphics[trim={0cm 0cm 0cm 0cm},clip,width=\textwidth]{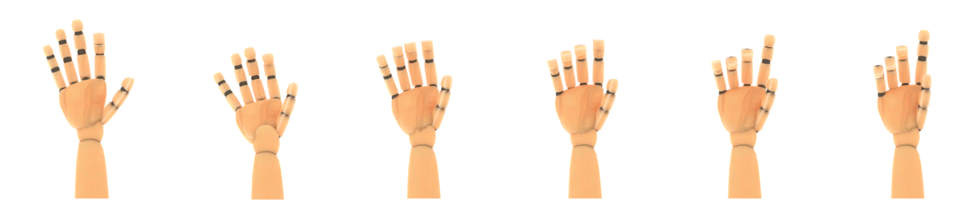}
		\caption{Ring ($\mathsf{Ri}$) to
			index finger extension ($\mathsf{IE}$)}
		\label{fig:grasps_Ri-IE}
	\end{subfigure}
    \\
	\begin{subfigure}[b]{0.8\textwidth}
		\centering
		\includegraphics[trim={0cm 0cm 0cm 0.25cm},clip,width=\textwidth]{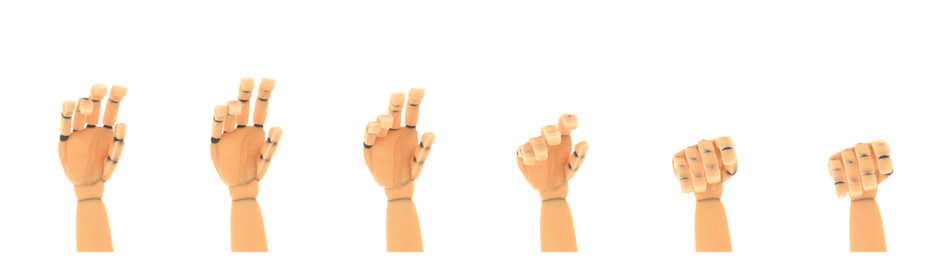}
        \includegraphics[trim={0cm 0cm 0cm 0.5cm},clip,width=\textwidth]{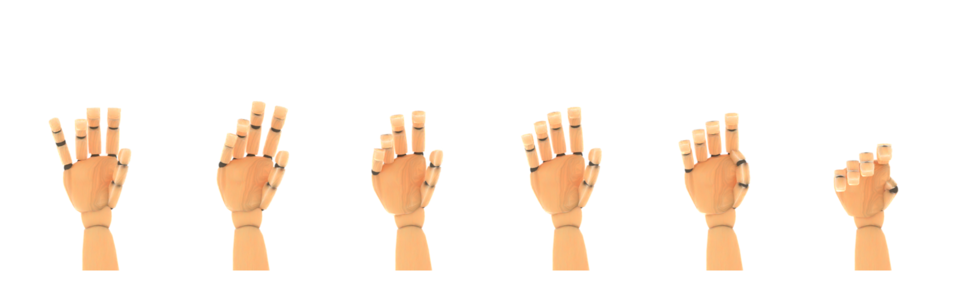}
		\caption{Quadpod ($\mathsf{Qu}$) to
			parallel extension ($\mathsf{PE}$)}
		\label{fig:grasps_Qu-PE}
	\end{subfigure}
	\\
	\begin{subfigure}[b]{0.8\textwidth}
		\centering
		\includegraphics[trim={0cm 0cm 0cm 0.25cm},clip,width=\textwidth]{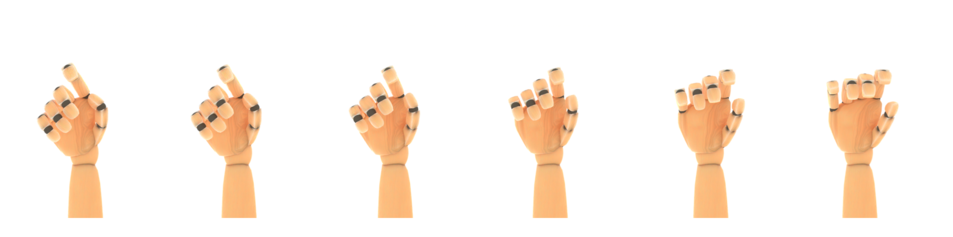}
        \includegraphics[trim={0cm 0cm 0cm 0.4cm},clip,width=\textwidth]{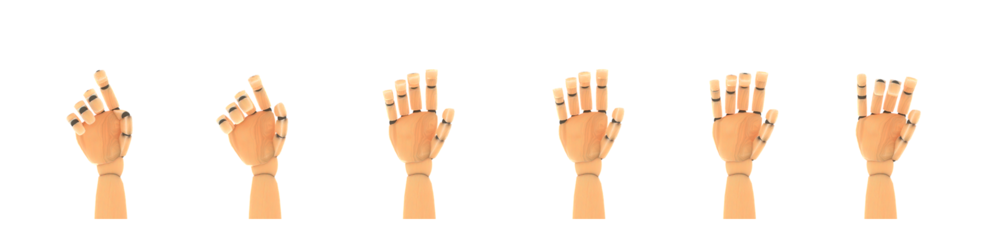}
		\caption{Small diameter ($\mathsf{SD}$) to
			tripod ($\mathsf{Tr}$)}
		\label{fig:grasps_SD-Tr}
	\end{subfigure}
	\caption{Generated motions for grasps. \emph{Top:} Motions obtained via geodesic interpolation in the latent space of the back-constrained GPHLVM trained on the the hand grasp taxonomy (Fig.~\ref{fig:GPHLVM-grasps:backconstrained_and_stress}). \emph{Bottom:} Motions obtained via linear interpolation in the latent space of the corresponding back-constrained GPLVM.}
	\label{fig:GPHLVM:trajectories_grasps1}
\end{figure}

\begin{figure}[h!]
	\centering
	\begin{subfigure}[b]{0.8\textwidth}
		\centering
		\includegraphics[trim={0cm 0cm 0cm 0.5cm},clip,width=\textwidth]{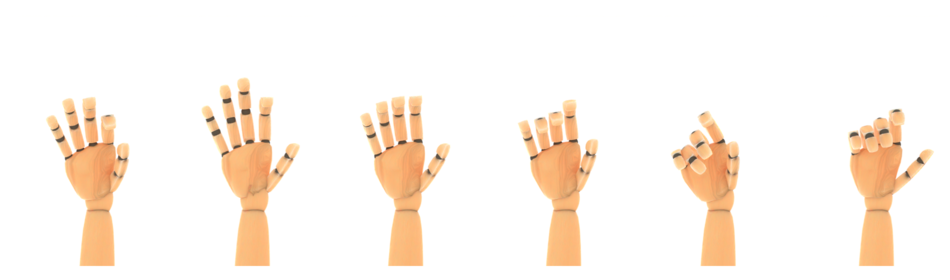}
        \includegraphics[trim={0cm 0cm 0cm 0.5cm},clip,width=\textwidth]{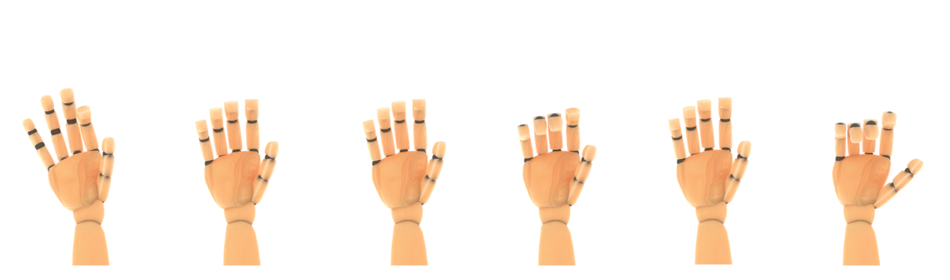}
		\caption{Tip pinch ($\mathsf{TP}$) to
			fixed hook ($\mathsf{FH}$)}
		\label{fig:grasps_TP-FH}
	\end{subfigure}
	\\
	\begin{subfigure}[b]{0.8\textwidth}
		\centering
		\includegraphics[trim={0cm 0cm 0cm 0.25cm},clip,width=\textwidth]{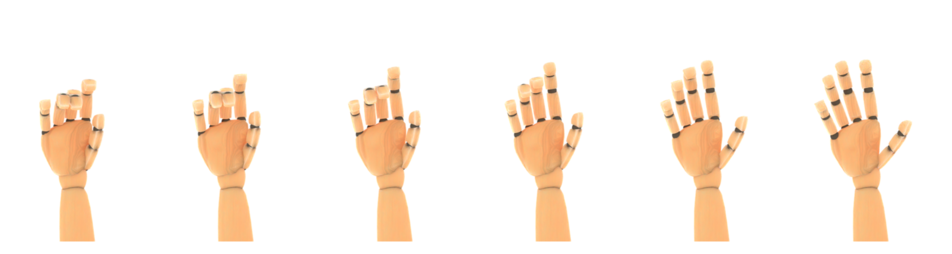}
        \includegraphics[trim={0cm 0cm 0cm 0.25cm},clip,width=\textwidth]{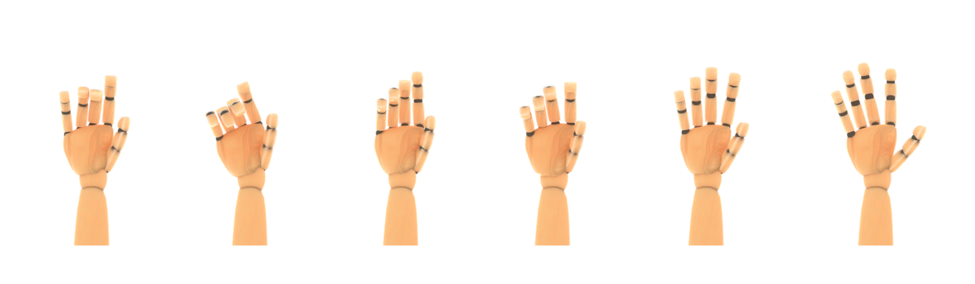}
		\caption{Writing tripod ($\mathsf{WT}$) to
			power disk ($\mathsf{PD}$)}
		\label{fig:grasps_WT-PD}
	\end{subfigure}
	\caption{Generated motions for grasps. \emph{Top:} Motions obtained via geodesic interpolation in the latent space of the back-constrained GPHLVM trained on the the hand grasp taxonomy (Fig.~\ref{fig:GPHLVM-grasps:backconstrained_and_stress}). \emph{Bottom:} Motions obtained via linear interpolation in the latent space of the corresponding back-constrained GPLVM.}
	\label{fig:GPHLVM:trajectories_grasps2}
\end{figure}

Figs.~\ref{fig:GPHLVM:trajectories_grasps1} and~\ref{fig:GPHLVM:trajectories_grasps2} show additional examples of motions obtained via geodesic interpolation between two embeddings of the hand grasps taxonomy in the latent space of the GPHLVM. The generated motions look realistic and smoothly interpolate between the given initial and final grasps. 
In comparison, motions obtained via linear interpolation between two embeddings in the Euclidean latent space of the GPLVM are less realistic. They display less regular interpolation patterns (see Fig.~\ref{fig:grasps_SD-Tr}) and are often noisy, featuring wavering wrist or finger motions (see Figs.~\ref{fig:grasps_Ri-IE},~\ref{fig:grasps_Qu-PE}, and~\ref{fig:grasps_WT-PD}). This is supported by the higher average jerkiness of the motions generated from the GPLVM compared to those generated from the GPHLVM, as reported in Table~\ref{table:jerkiness_of_models}. Moreover, the generated grasps reflect less accurately the taxonomy categories (see, e.g., the parallel extension ($\mathsf{PE}$) grasp in Fig.~\ref{fig:grasps_Qu-PE} or the tip pinch ($\mathsf{TP}$ grasp of Fig.~\ref{fig:grasps_TP-FH}).
Interestingly, the geodesic interpolation between two grasps in the latent space of the GPHLVM allows us to generate unobserved transitions between hand grasps. As such, it offers us a mechanism to generate data that are generally difficult to collect via human motion recordings.

\begin{figure}[h!]
	\centering
    \begin{subfigure}[b]{0.65\textwidth}
        \centering
        \includegraphics[trim={0.0cm 0.55cm 0.0cm 0.25cm},clip,width=\textwidth]{Figures/Motions/LF-LKRKRH.png}
        \includegraphics[trim={0.0cm 0.65cm 0.0cm 0.25cm},clip,width=\textwidth]{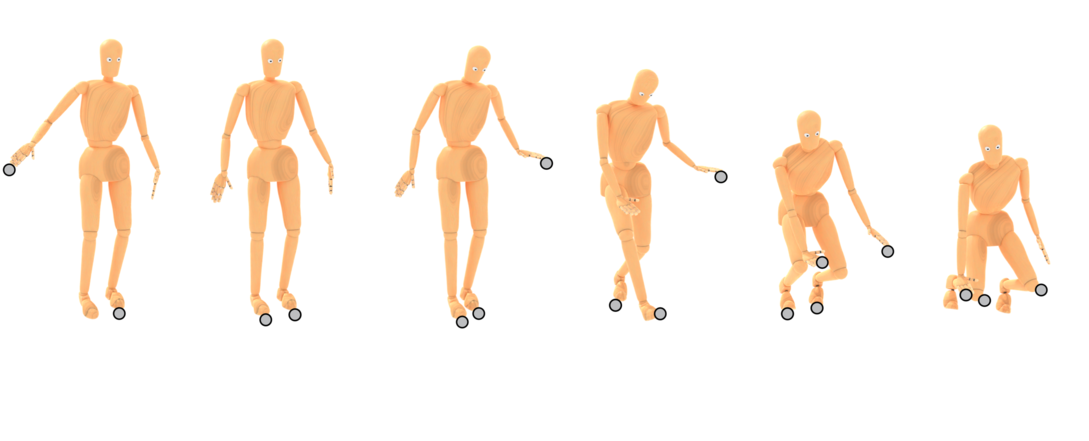}
        \includegraphics[trim={0.0cm 0.0cm 0.0cm 0.0cm},clip,width=\textwidth]{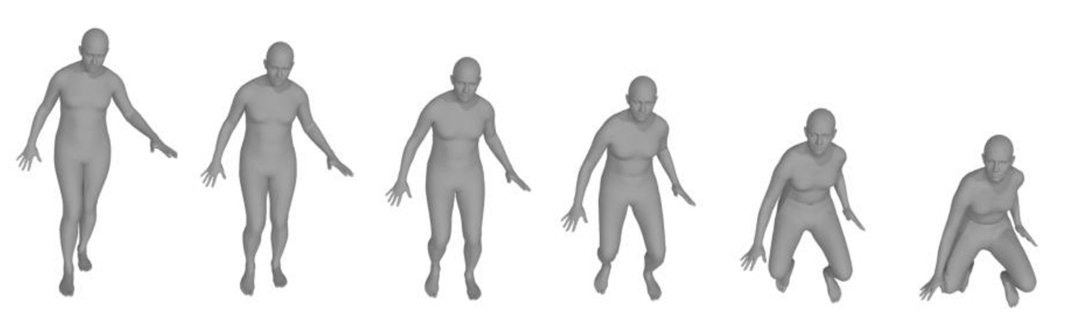}
        \caption{$\mathsf{LF}\mathsf{RH}$ to $\mathsf{K}_2\mathsf{RH}$}
        \label{fig:augmentedfeet_geodesic}
    \end{subfigure}
    \vspace{-0.2cm}
	\caption{Generated motions for support poses. \emph{Top:} Motion obtained via geodesic interpolation in the latent space of the back-constrained GPHLVM trained on the support pose taxonomy (Fig.~\ref{fig:GPHLVM:backconstrained_and_stress}). \emph{Middle:} Motion obtained via linear interpolation in the latent space of the corresponding back-constrained GPLVM. \emph{Bottom:} Motion obtained via linear interpolation in the latent space of VPoser. Contacts are depicted by gray circles in the two first rows. }
	\label{fig:GPHLVM:trajectories_support_poses1}
\end{figure}

\begin{figure}[h!]
	\centering
	\begin{subfigure}[b]{0.65\textwidth}
		\centering
		\includegraphics[trim={0.0cm 0.25cm 0.0cm 0.25cm},clip,width=\textwidth]{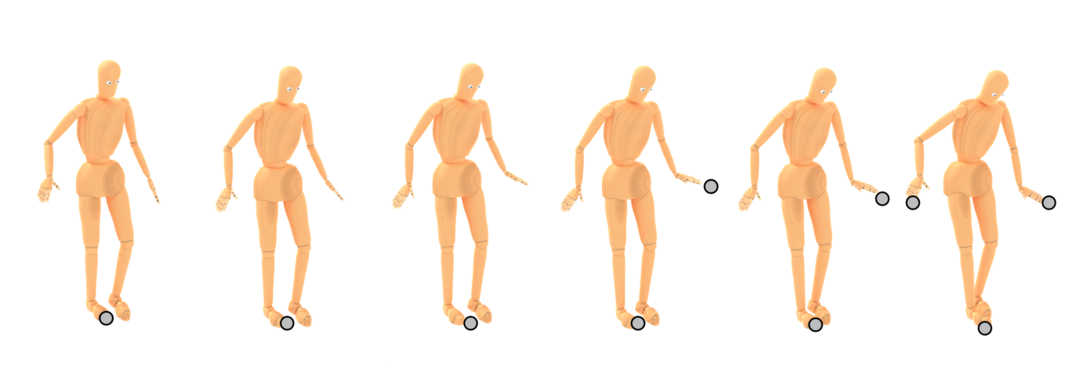}
        \includegraphics[trim={0.0cm 0.1cm 0.0cm 0.4cm},clip,width=\textwidth]{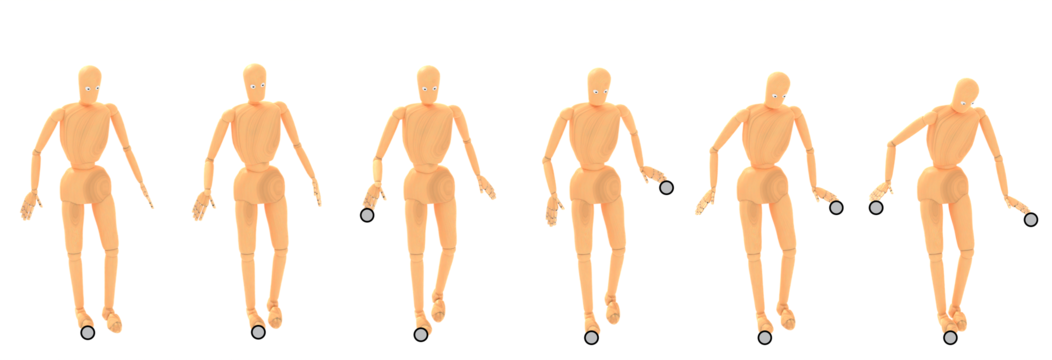}
        \includegraphics[trim={0.0cm 0.0cm 0.0cm 0.0cm},clip,width=\textwidth]{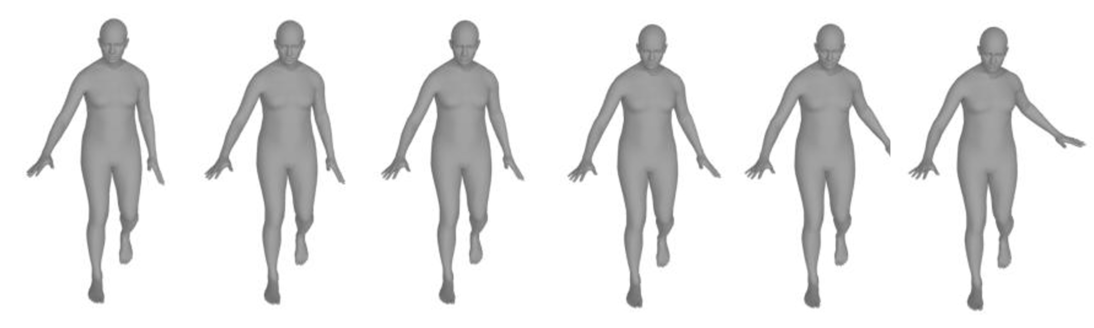}
		\caption{$\mathsf{RF}$ to $\mathsf{RF H}_2$}
		\label{fig:augmentedfeet_geodesic0}
	\end{subfigure}
	\\
	\begin{subfigure}[b]{0.65\textwidth}
		\centering
		\includegraphics[trim={0.0cm 0.4cm 0.0cm 0.1cm},clip,width=\textwidth]{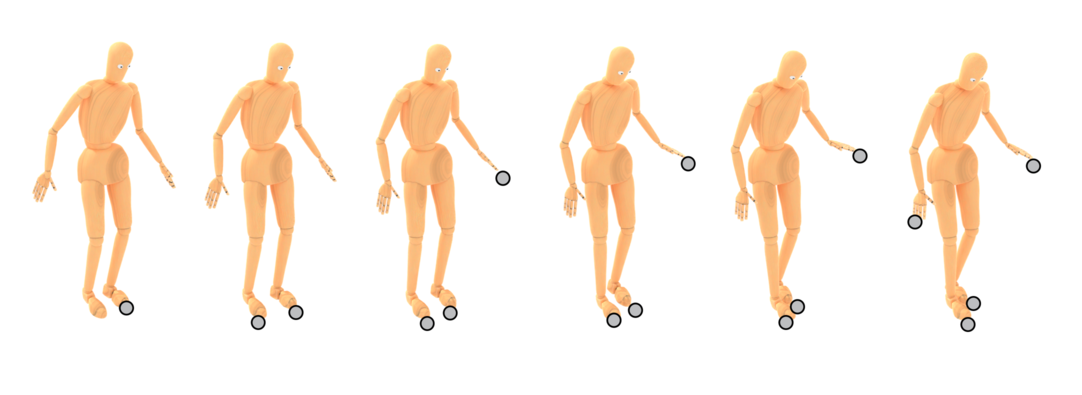}
        \includegraphics[trim={0.0cm 0.3cm 0.0cm 0.15cm},clip,width=\textwidth]{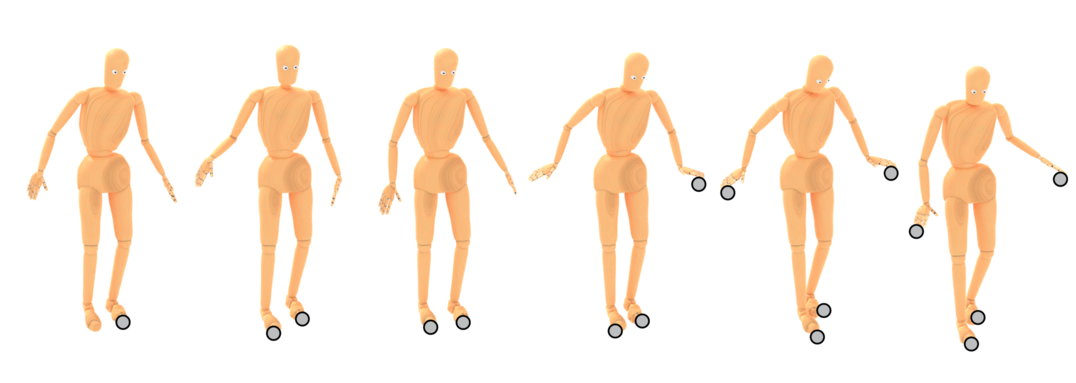}
        \includegraphics[trim={0.0cm 0.0cm 0.0cm 0.0cm},clip,width=\textwidth]{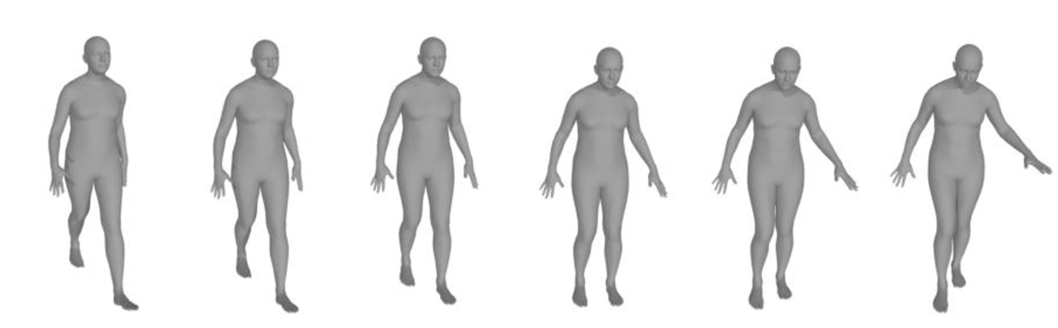}
		\caption{$\mathsf{LF}$ to $\mathsf{F_2} \mathsf{H}_2$}
		\label{fig:augmentedfeet_geodesic1}
	\end{subfigure}
    \vspace{-0.2cm}
	\caption{Generated motions for support poses. \emph{Top:} Motions obtained via geodesic interpolation in the latent space of the back-constrained GPHLVM trained on the support pose taxonomy (Fig.~\ref{fig:GPHLVM:backconstrained_and_stress}). \emph{Middle:} Motions obtained via linear interpolation in the latent space of the corresponding back-constrained GPLVM. \emph{Bottom:} Motions obtained via linear interpolation in the latent space of VPoser. Contacts are depicted by gray circles. }
	\label{fig:GPHLVM:trajectories_support_poses2}
	\vspace{-0.3cm}
\end{figure}

\begin{figure}[h!]
	\centering
	\begin{subfigure}[b]{0.65\textwidth}
		\centering
		\includegraphics[trim={0.0cm 0.25cm 0.0cm 0.25cm},clip,width=\textwidth]{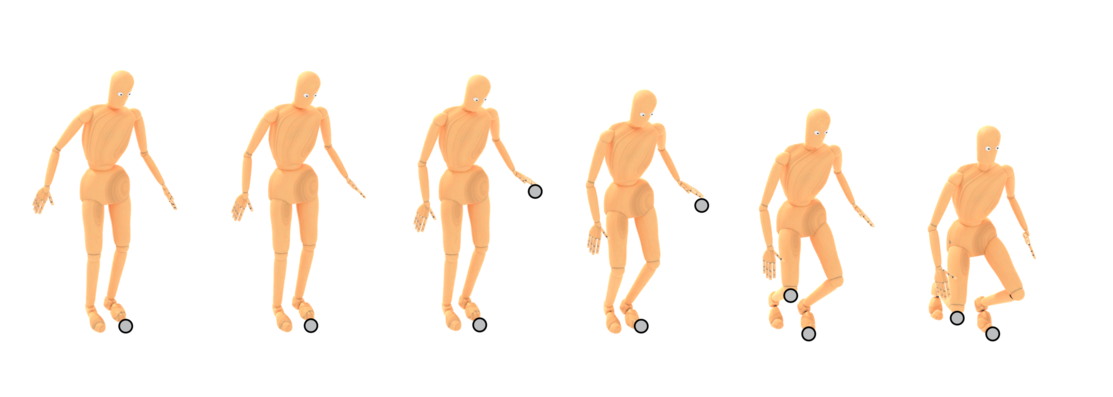}
        \includegraphics[trim={0.0cm 0.1cm 0.0cm 0.55cm},clip,width=\textwidth]{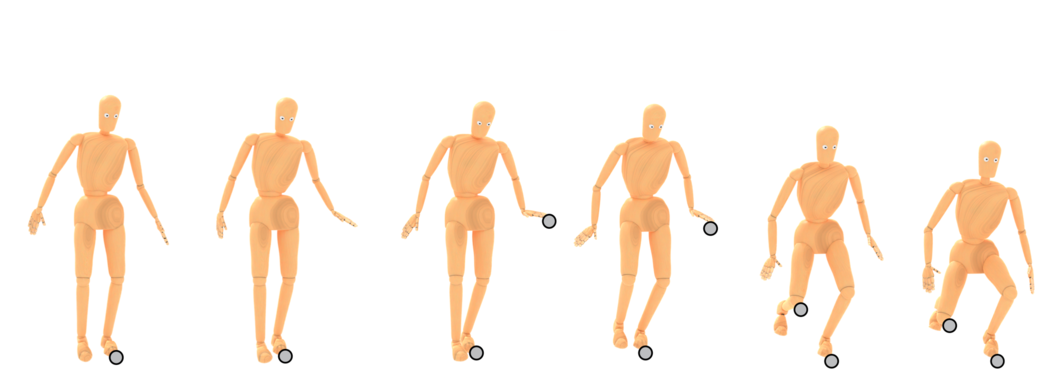}
        \includegraphics[trim={0.0cm 0.0cm 0.0cm 0.0cm},clip,width=\textwidth]{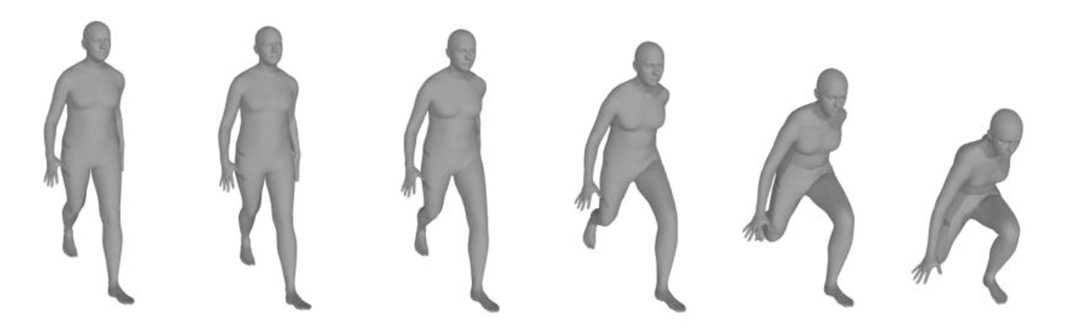}
		\caption{$\mathsf{LF}$ to $\mathsf{LF RK}$}
		\label{fig:augmentedfeet_geodesic2}
	\end{subfigure}
	\\
	\begin{subfigure}[b]{0.65\textwidth}
		\centering
		\includegraphics[trim={0.0cm 0.65cm 0.0cm 0.4cm},clip,width=\textwidth]{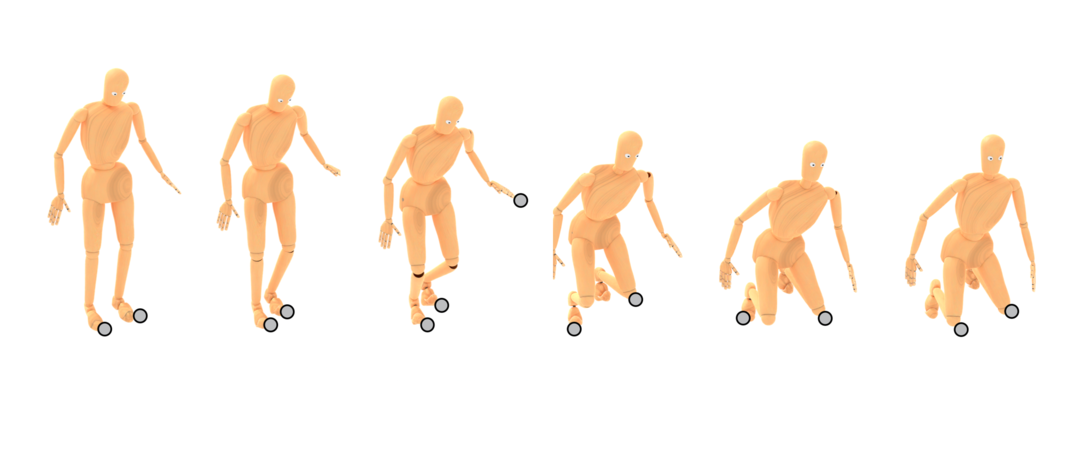}
        \includegraphics[trim={0.0cm 0.4cm 0.0cm 0.65cm},clip,width=\textwidth]{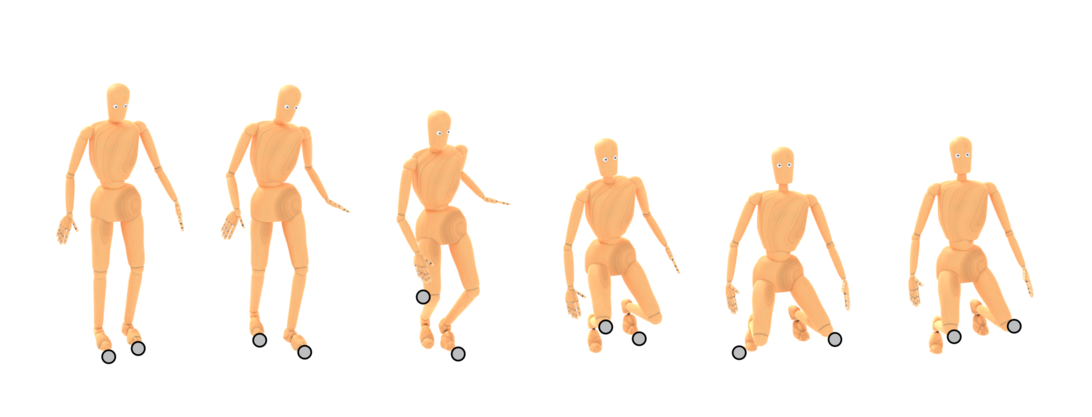}
        \includegraphics[trim={0.0cm 0.0cm 0.0cm 0.0cm},clip,width=\textwidth]{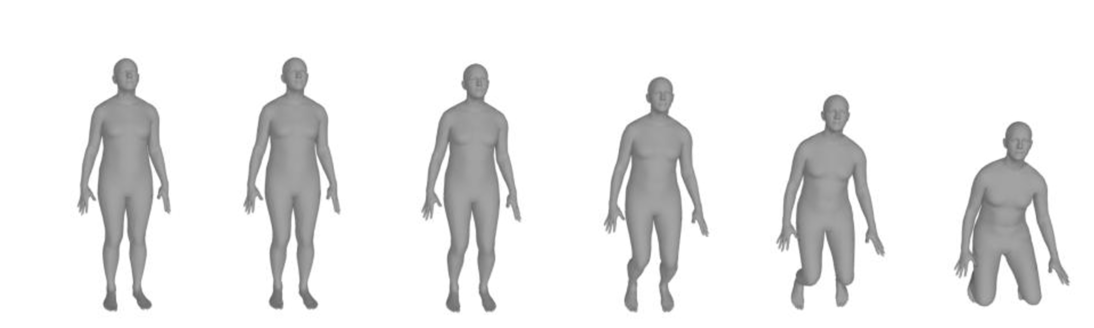}
		\caption{$\mathsf{F}_2$ to $\mathsf{K}_2$}
		\label{fig:augmentedfeet_geodesic3}
	\end{subfigure}
    \vspace{-0.2cm}
	\caption{Generated motions for support poses. \emph{Top:} Motions obtained via geodesic interpolation in the latent space of the back-constrained GPHLVM trained on the support pose taxonomy (Fig.~\ref{fig:GPHLVM:backconstrained_and_stress}). \emph{Middle:} Motions obtained via linear interpolation in the latent space of the corresponding back-constrained GPLVM. \emph{Bottom:} Motions obtained via linear interpolation in the latent space of VPoser. Contacts are depicted by gray circles. }
	\label{fig:GPHLVM:trajectories_support_poses3}
	\vspace{-0.3cm}
\end{figure}

Figs.~\ref{fig:GPHLVM:trajectories_support_poses1}-\ref{fig:GPHLVM:trajectories_support_poses3} show additional examples of motions obtained via geodesic interpolation between two embeddings of the whole-body support pose taxonomy in the latent space of the GPHLVM. The generated motions look realistic, smoothly interpolate between the given initial and final body poses, and are consistent with the transitions between classes encoded in the taxonomy. 
In comparison, motions obtained via linear interpolation between two embeddings in the Euclidean latent space of the GPLVM look less realistic and are less smooth (see Table~\ref{table:jerkiness_of_models}). In particular, the resulting kneeling poses often look unnatural (see Figs.~\ref{fig:GPHLVM:trajectories_support_poses1} and~\ref{fig:GPHLVM:trajectories_support_poses3}).

\begin{table}[h!]
    \caption{Average jerkiness (a.k.a. smoothness~\citep{Balasubramanian15:Smoothness}) of the motions obtained via linear and geodesic interpolation in the latent space of the back-constrained GPLVMs and GPHLVMs.}
    \vspace{-0.3cm}
    \label{table:jerkiness_of_models}
    \begin{center}
    \begin{small}
    \begin{sc}
		\begin{tabular}{lll}
			\toprule
			    \textbf{Taxonomy} & \textbf{Model} & \textbf{Jerkiness} \\
			\toprule
   			\multirow{2}{*}{Grasps} & GPLVM, $\mathbb{R}^2$ & $1377.05\pm 1721.44$\\
			& GPHLVM, $ \lorentz{2}$ & $\bm{108.65\pm 140.54}$ \\
            \toprule
            \multirow{2}{*}{\parbox{1.5cm}{Support poses}} & GPLVM, $\mathbb{R}^2$ &  $210.08\pm 228.97$\\
			& GPHLVM, $ \lorentz{2}$ &  $\bm{27.15\pm 27.58}$ \\
			\bottomrule
	\end{tabular}
    \end{sc}
    \end{small}
    \end{center}
\end{table}

We also compare the trajectories generated via geodesic interpolation with the trajectories generated in the latent space of VPoser~\citep[Sec. 3.3]{Pavlakos19:VPoser}, a state-of-the art human pose latent space obtained from a VAE trained on MoCap data and used to generate human motions. VPoser was introduced by~\citet{Pavlakos19:VPoser} as a body pose prior to address the problem of building a full 3D model of human gestures by learning a deep neural network that jointly models the human body, face and hands from RBG images.
\citet{Pavlakos19:VPoser} released the weights of their model under a non-commercial licence.\footnote{\url{https://smpl-x.is.tue.mpg.de/}} Of the two models available, we downloaded version 2, and followed the instructions on their repository for set-up.\footnote{\url{https://github.com/nghorbani/human_body_prior} (\texttt{vposer.ipynb}).} Since our human poses used a different number of joints, we searched inside the KIT dataset part of the AMASS dataset~\citep{AMASS:2019} for similar poses with the same contacts configuration. Table~\ref{tab:app:exact_poses_used_for_vposer} shows the exact poses used in the comparison. These poses were embedded into the latent space of VPoser.
The motions obtained via linear interpolation in the space of VPoser are displayed in the bottom rows of Figs.~\ref{fig:GPHLVM:trajectories_support_poses1}-\ref{fig:GPHLVM:trajectories_support_poses3}. We observe that the motions generated by our approach are as realistic as the ones obtained from VPoser.
It is worth noticing that VPoser is trained on full human motion trajectories and a large dataset of $1$M datapoints. Therefore, it is natural that it can retrieve realistic human motions. This is also the case for other models such as TEACH~\cite{Athanasiou22:TEACH} and text-conditioned human motion diffusion models~\citep{Shafir23:MDM}, which are trained on full human motion trajectories and conditioned on textual prompts to generate sequences of human motions. In contrast, the GPHLVM is not trained on full trajectories, but only on $100$ single human poses. Instead, GPLHVM leverages the robotic taxonomy and geodesic interpolation as a motion generation mechanism. Notice that the latent space of the GPHLVM is of low dimension compared to the 32-dimensional latent space of VPoser.

\begin{table}[h]
    \caption{Poses used when comparing with VPoser~\citep{Pavlakos19:VPoser}. In our notation, the files inside the KIT subset of AMASS~\citep{AMASS:2019} are structured into subfolders of name \texttt{entry\_id}; each \texttt{.npz} file contains an array of body poses, and the exact pose used in the comparison is specified by the index $t$.}
    \vspace{-0.3cm}
    \label{tab:app:exact_poses_used_for_vposer}
    \begin{center}
    \begin{small}
    \resizebox{\textwidth}{!}{
    \begin{tabular}{lll}
        \toprule
        \textbf{\textsc{Trajectory}} & \textbf{\textsc{File for source} (\texttt{entry\_id, $t=$index})} & \textbf{\textsc{File for target} (\texttt{entry\_id, $t=$index})} \\
        \midrule
        $\mathsf{LF}\mathsf{RH}$ to $\mathsf{K}_2\mathsf{RH}$ (Fig.~\ref{fig:GPHLVM:trajectories_support_poses1}) & Walk w. handrail table beam, left, Nr. 01 (675, $t=250$) & Kneel up w. right hand, Nr. 01 (3, $t=185$) \\
        $\mathsf{RF}$ to $\mathsf{RF H}_2$ (Fig.~\ref{fig:augmentedfeet_geodesic0}) & Walk w. handrail table beam, left, Nr. 01 (675, $t=100$) & Walk w. handrail table beam, left, Nr. 01 (675, $t=300$) \\
        $\mathsf{LF}$ to $\mathsf{F_2} \mathsf{H}_2$ (Fig.~\ref{fig:augmentedfeet_geodesic1}) & Walk at medium speed Nr. 01 (450, $t=320$) & Walk w. handrail table beam, left, Nr. 01 (675, $t=250$) \\
        $\mathsf{LF}$ to $\mathsf{LF RK}$ (Fig.~\ref{fig:augmentedfeet_geodesic2}) & Walk at medium speed Nr. 01 (450, $t=320$) & Kneel up w. right hand Nr. 09 (3, $t=150$) \\
        $\mathsf{F}_2$ to $\mathsf{K}_2$ (Fig.~\ref{fig:augmentedfeet_geodesic3}) & Walk at medium speed Nr. 01 (450, $t=10$) & Kneel up w. left hand Nr. 01 (3, $t=50$) \\
        \bottomrule
    \end{tabular}
    }
    \end{small}
    \end{center}
\end{table}

It is important to emphasize that augmenting the support pose taxonomy to explicitly distinguish between left and right contact is crucial for generating realistic motions with the GPHLVM. With the original taxonomy, poses with very different feet and hands positions may belong to the same class. For instance, \emph{a right foot contact with a left hand contact on the handrail} or a \emph{left foot contact with a right hand contact on the table} both belong to the same $\mathsf{F}\mathsf{H}$ node in the original taxonomy. 
In contrast, differentiating between left and right contacts allows very different poses to be placed far apart in the latent space. For instance, the two aforementioned poses are identified with the nodes $\mathsf{LF}\mathsf{RH}$ and $\mathsf{RF}\mathsf{LH}$ in the augmented taxonomy.

\section{Additional comparisons}
\label{app:comparisons}
\subsection{Comparison against Variational Autoencoders}
\label{app:comparison_VAEs}
\begin{figure}
	\centering
	\includegraphics[trim={5.8cm 2.2cm 4.3cm 2.2cm},clip,width=0.8\textwidth]{Figures/legend_bimanual.png}
    \begin{subfigure}[b]{0.15\textwidth}
		\centering
		\includegraphics[trim={2.5cm 2.5cm 2.5cm 2.5cm},clip,width=\textwidth]{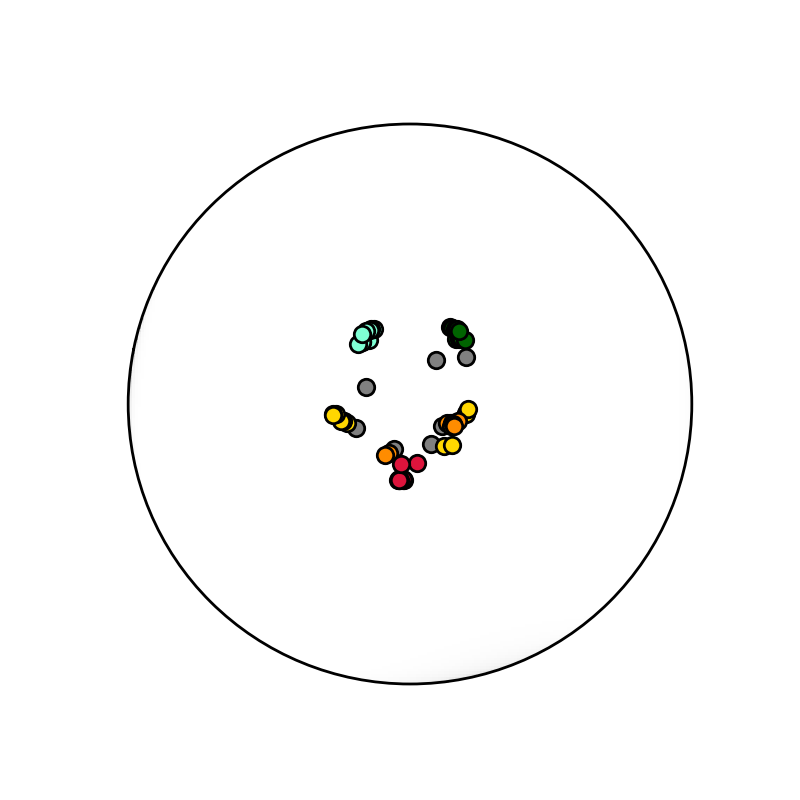}
		\includegraphics[trim={2.0cm 2.0cm 0.5cm 2.0cm},clip,width=0.9\textwidth]{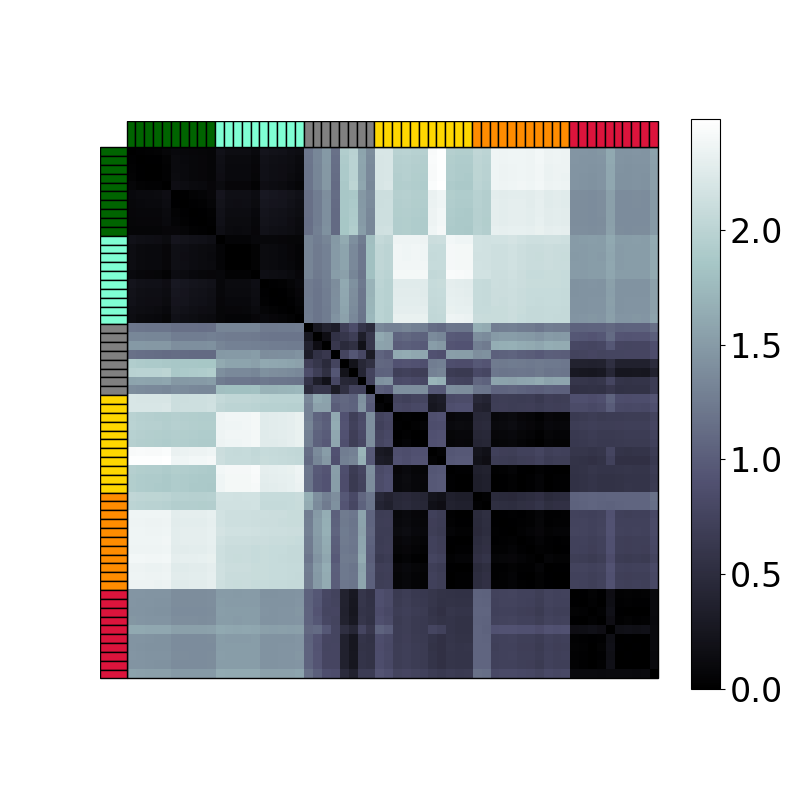}
		\includegraphics[trim={0.0cm 0.0cm 0.0cm 0.0cm},clip,width=\textwidth]{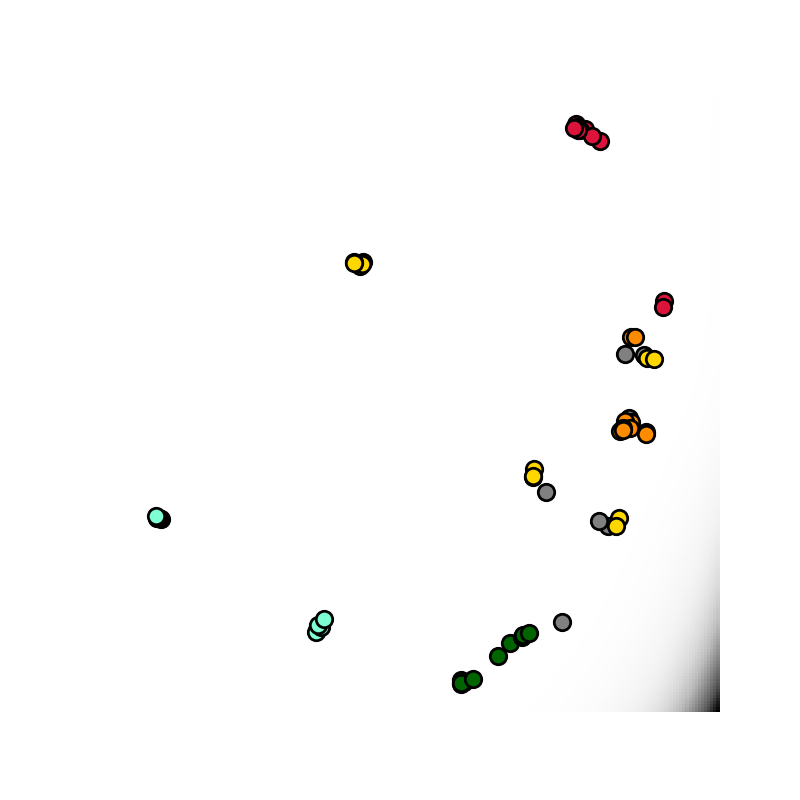}
		\includegraphics[trim={2.0cm 2.0cm 0.5cm 2.0cm},clip,width=0.9\textwidth]{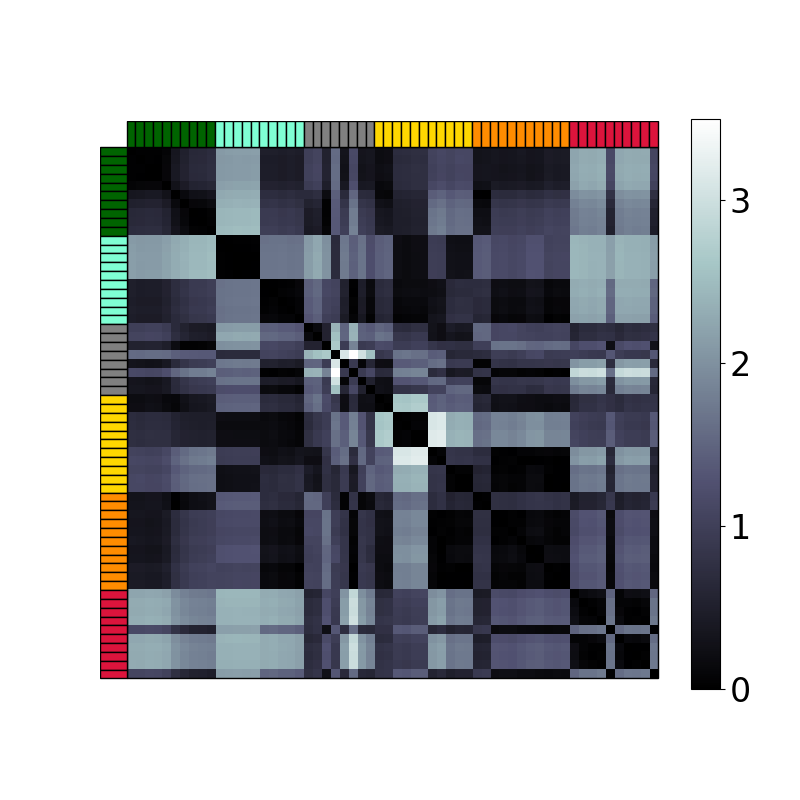}
		\caption{Vanilla (2d)}
		\label{fig:appendix:vae_baseline_2d:vanilla_bimanual}
	\end{subfigure}%
	\begin{subfigure}[b]{0.15\textwidth}
		\centering
		\includegraphics[trim={2.5cm 2.5cm 2.5cm 2.5cm},clip,width=\textwidth]{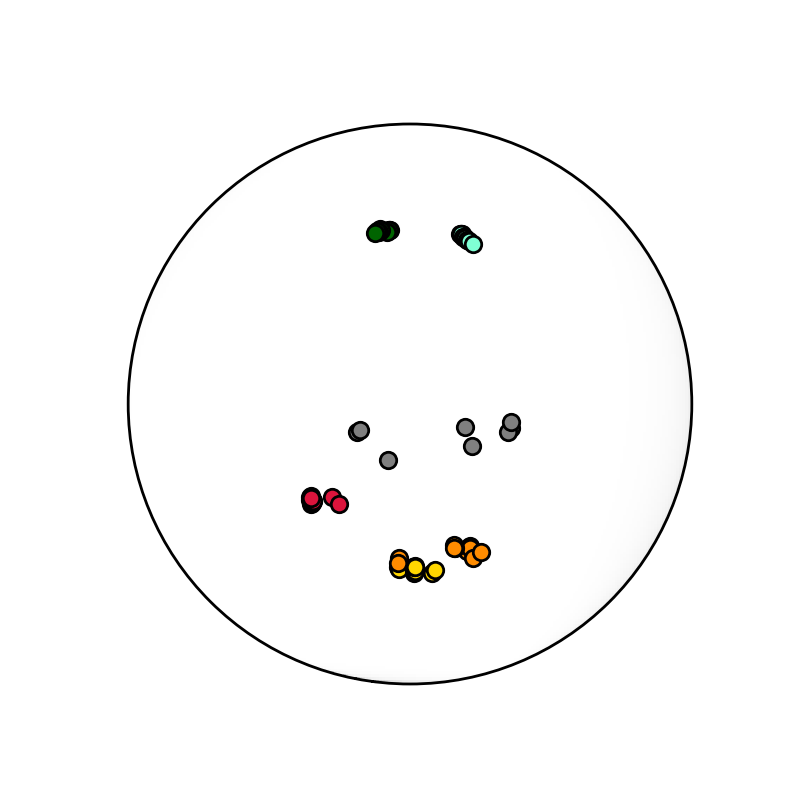}
		\includegraphics[trim={2.0cm 2.0cm 0.5cm 2.0cm},clip,width=0.9\textwidth]{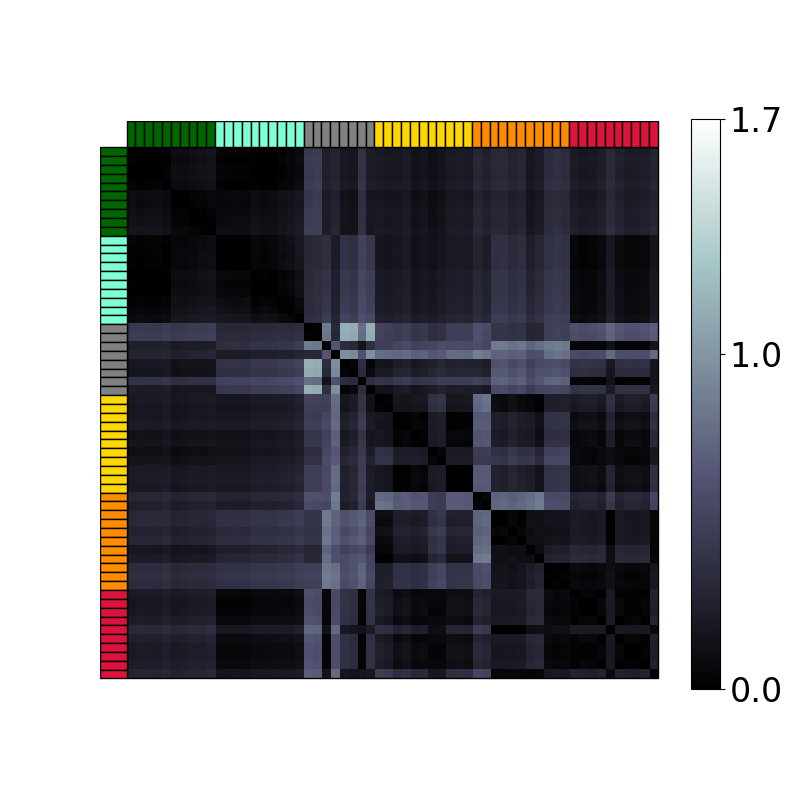}
		\includegraphics[trim={0.0cm 0.0cm 0.0cm 0.0cm},clip,width=\textwidth]{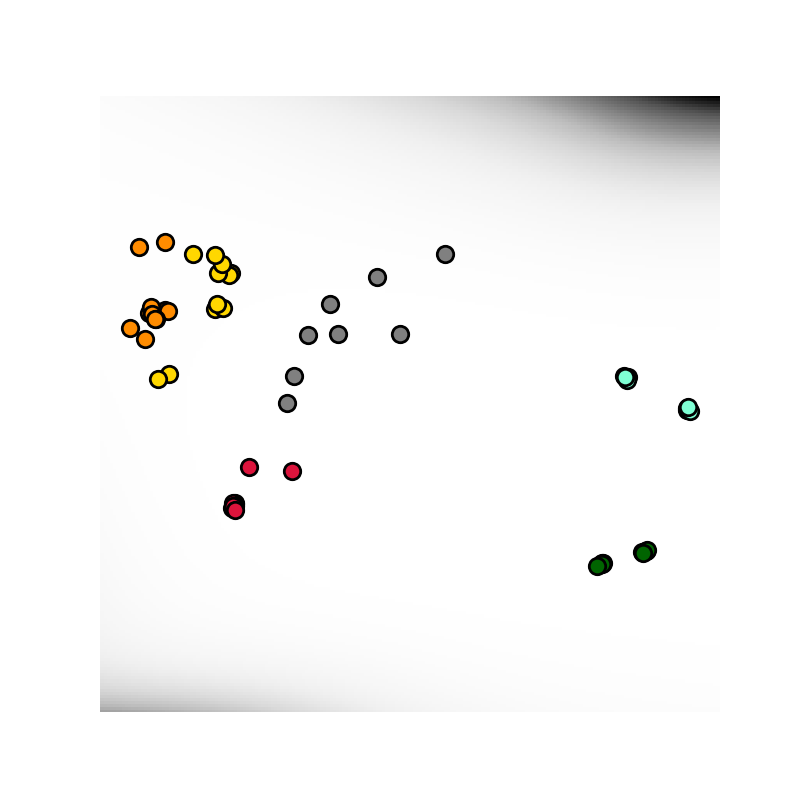}
		\includegraphics[trim={2.0cm 2.0cm 0.5cm 2.0cm},clip,width=0.9\textwidth]{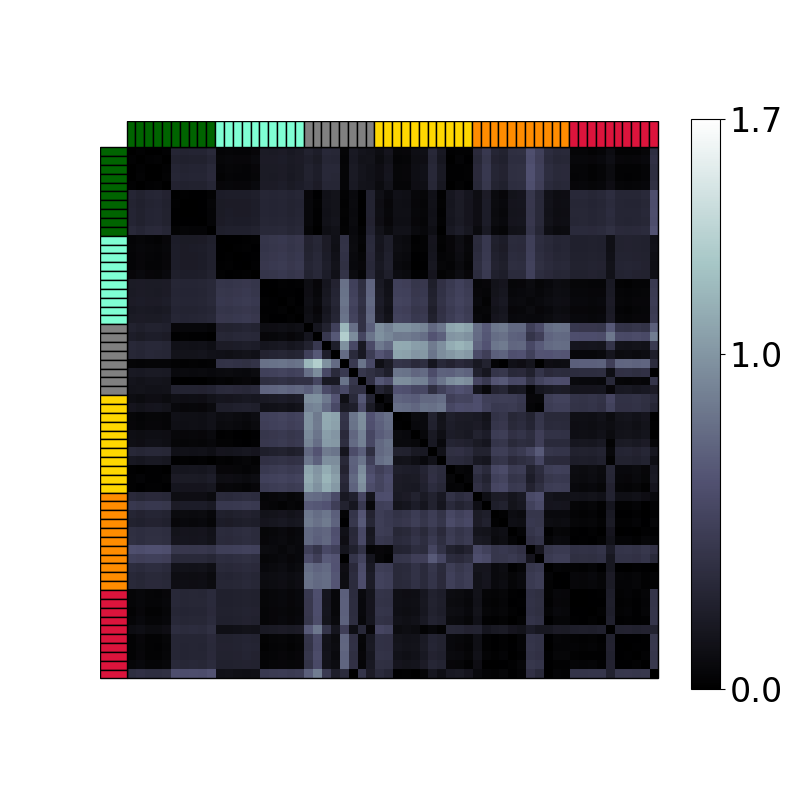}
		\caption{Stress (2d)}
		\label{fig:appendix:vae_baseline_2d:stress_bimanual}
	\end{subfigure}%
    \begin{subfigure}[b]{0.15\textwidth}
		\centering
		\includegraphics[trim={2.0cm 1.8cm 2.0cm 1.8cm},clip,width=\textwidth]{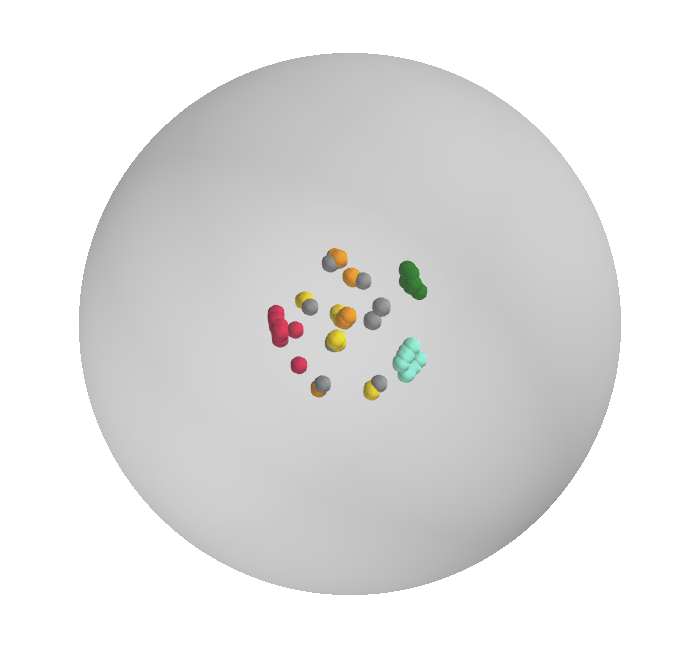}
		\includegraphics[trim={2.0cm 2.0cm 0.5cm 2.0cm},clip,width=0.9\textwidth]{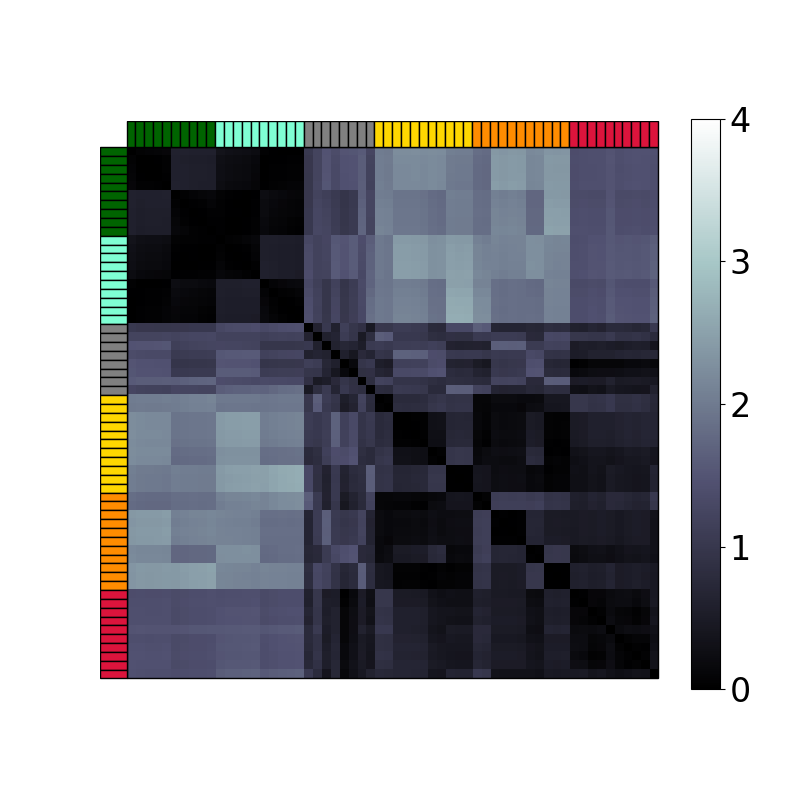}
  
        \vspace{0.2cm}
        
		\includegraphics[trim={0.0cm 0.0cm 0.0cm 0.0cm},clip,width=\textwidth]{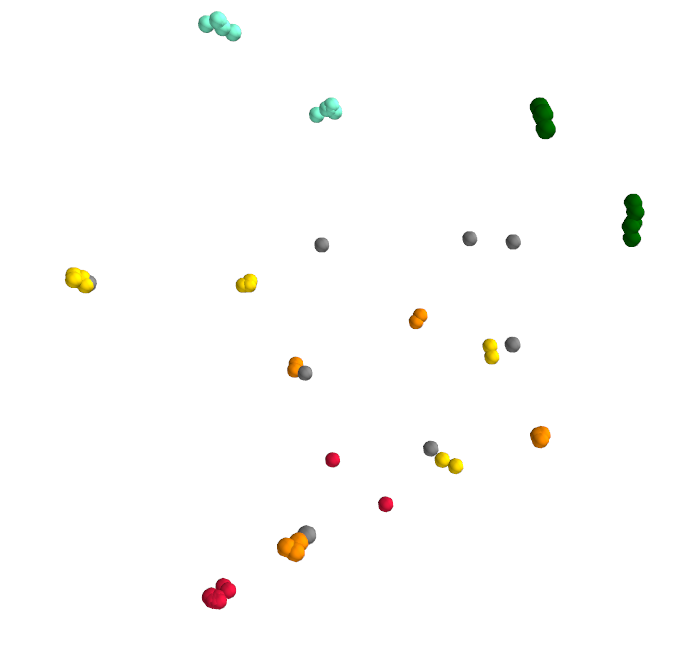}
		\includegraphics[trim={2.0cm 2.0cm 0.5cm 2.0cm},clip,width=0.9\textwidth]{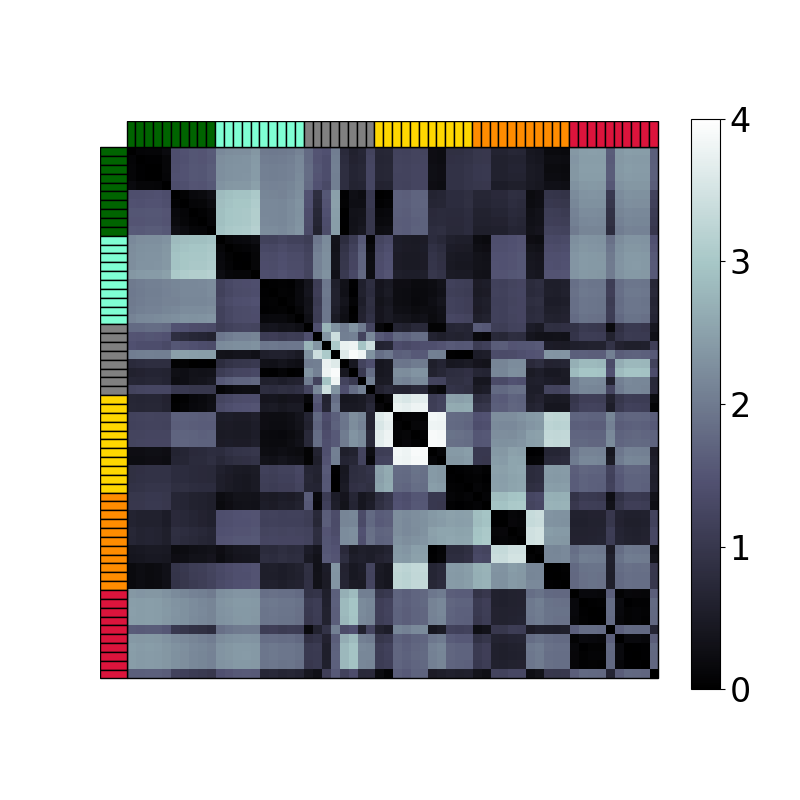}
		\caption{Vanilla (3d)}
		\label{fig:appendix:vae_baseline_3d:vanilla_bimanual}
	\end{subfigure}%
	\begin{subfigure}[b]{0.15\textwidth}
		\centering
		\includegraphics[trim={2.0cm 1.8cm 2.0cm 1.8cm},clip,width=\textwidth]{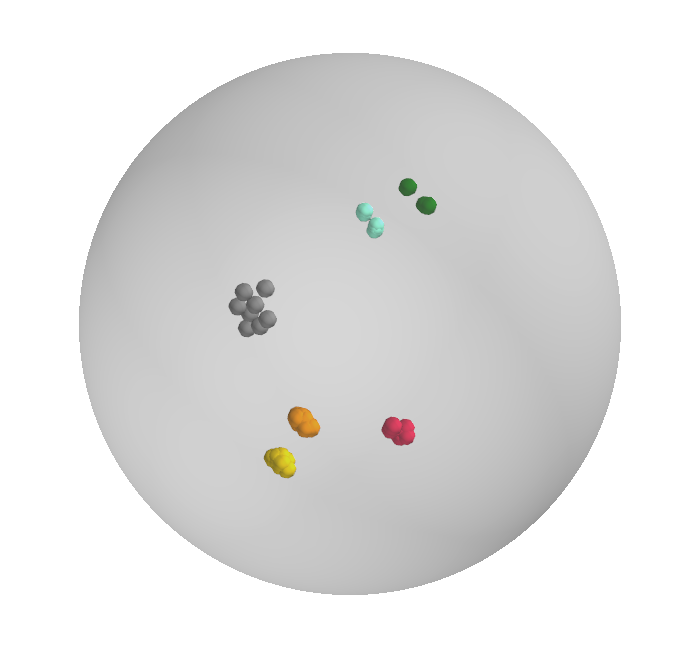}
		\includegraphics[trim={2.0cm 2.0cm 0.5cm 2.0cm},clip,width=0.9\textwidth]{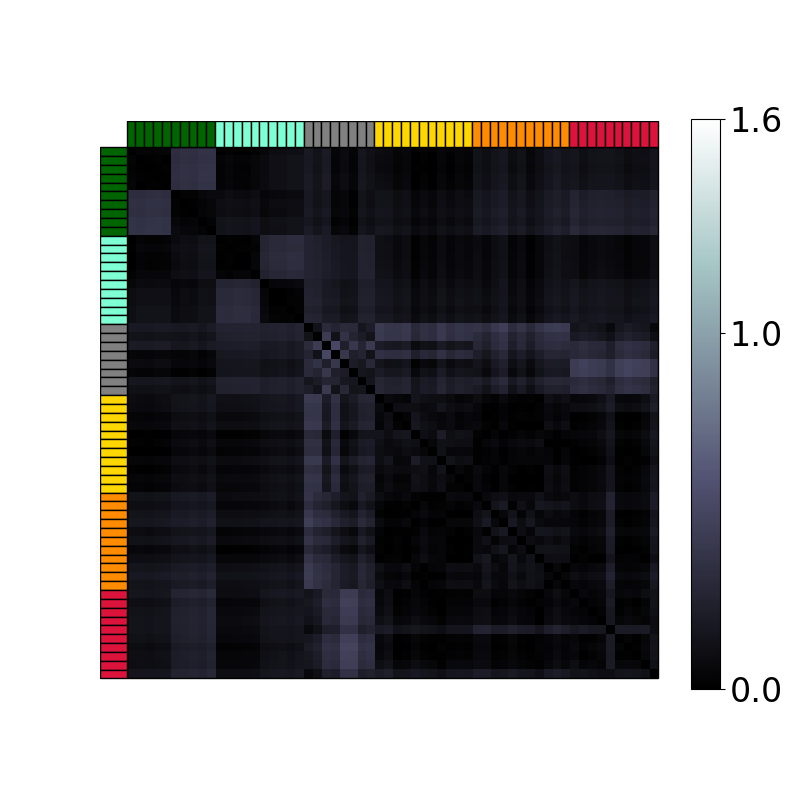}
  
        \vspace{0.2cm}
        
		\includegraphics[trim={0.0cm 0.0cm 0.0cm 0.0cm},clip,width=\textwidth]{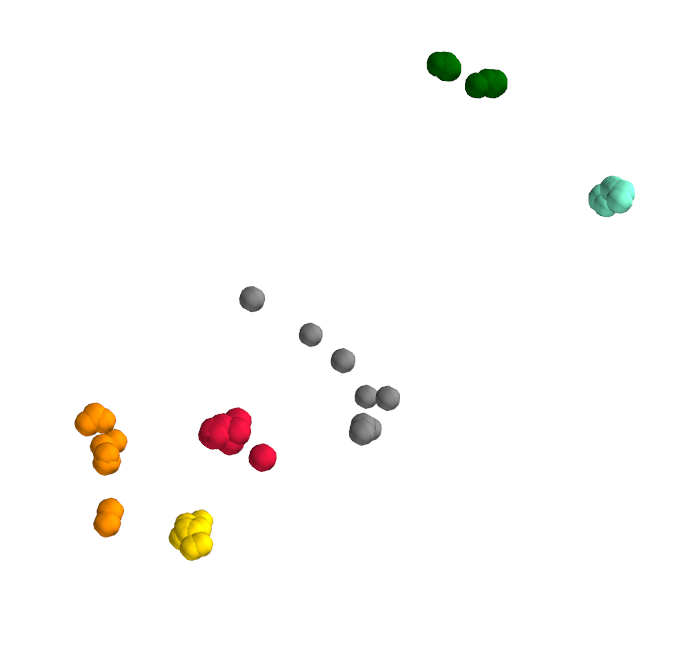}
		\includegraphics[trim={2.0cm 2.0cm 0.5cm 2.0cm},clip,width=0.9\textwidth]{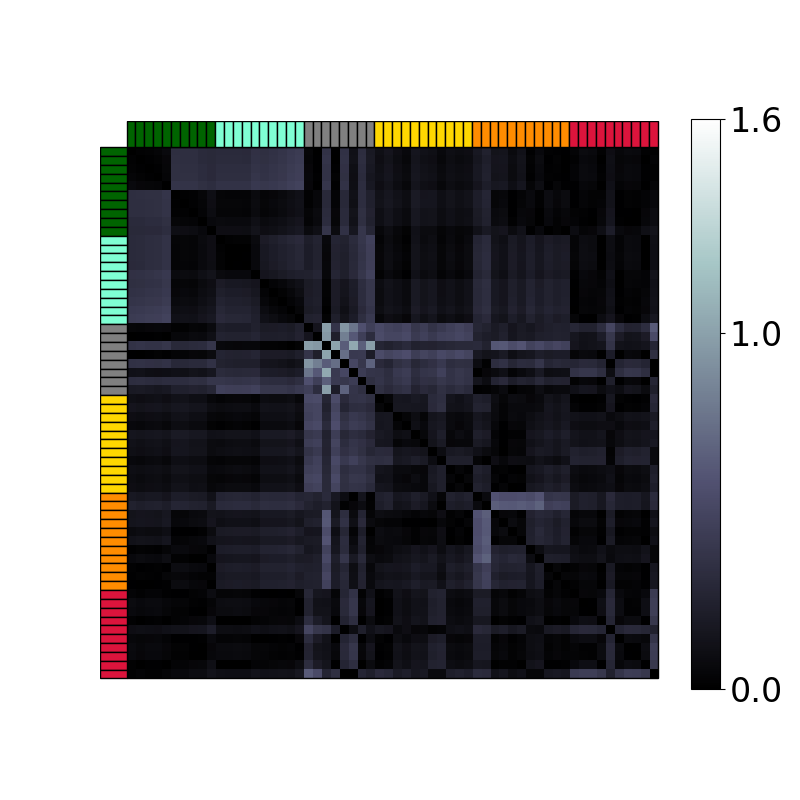}
		\caption{Stress (3d)}
		\label{fig:appendix:vae_baseline_3d:stress_bimanual}
	\end{subfigure}%
    \vspace{-0.2cm}
	\caption{Embeddings of bimanual manipulation categories with VAEs: The first and last two rows show the latent embeddings of the hyperbolic and Euclidean VAE in $\mathcal{P}^Q$ and $\mathbb{R}^Q$, followed by pairwise error matrices between geodesic and taxonomy graph distances.}
    \vspace{-0.3cm}
	\label{fig:appendix:vae_baselines_bimanual}
\end{figure}

\begin{figure}
	\centering
	\includegraphics[trim={5.8cm 2.2cm 4.3cm 2.2cm},clip,width=0.8\textwidth]{Figures/legend_grasps.png}
    \begin{subfigure}[b]{0.15\textwidth}
		\centering
		\includegraphics[trim={2.5cm 2.5cm 2.5cm 2.5cm},clip,width=\textwidth]{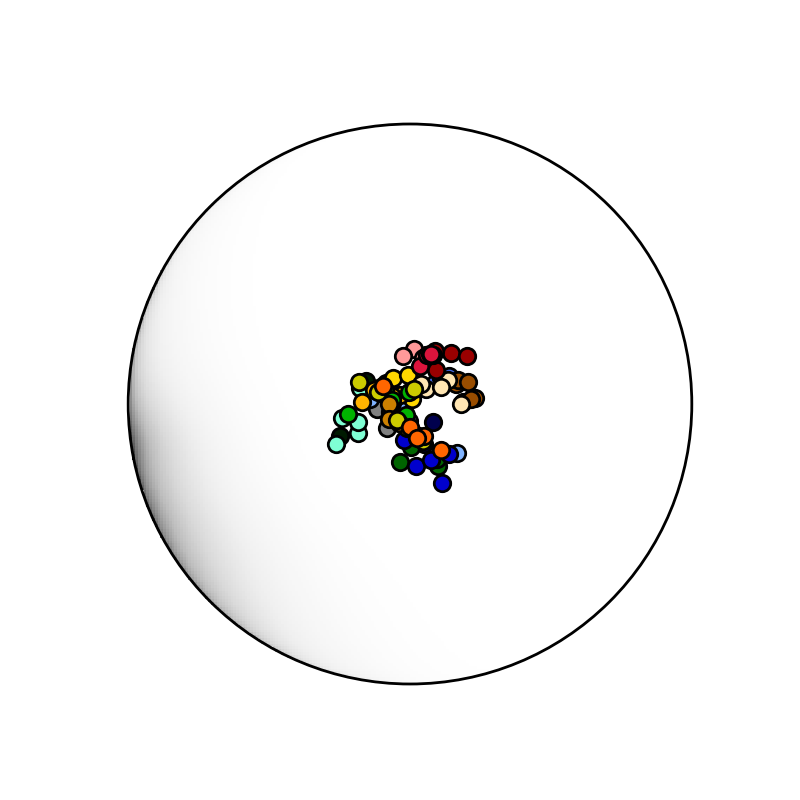}
		\includegraphics[trim={2.0cm 2.0cm 0.5cm 2.0cm},clip,width=0.9\textwidth]{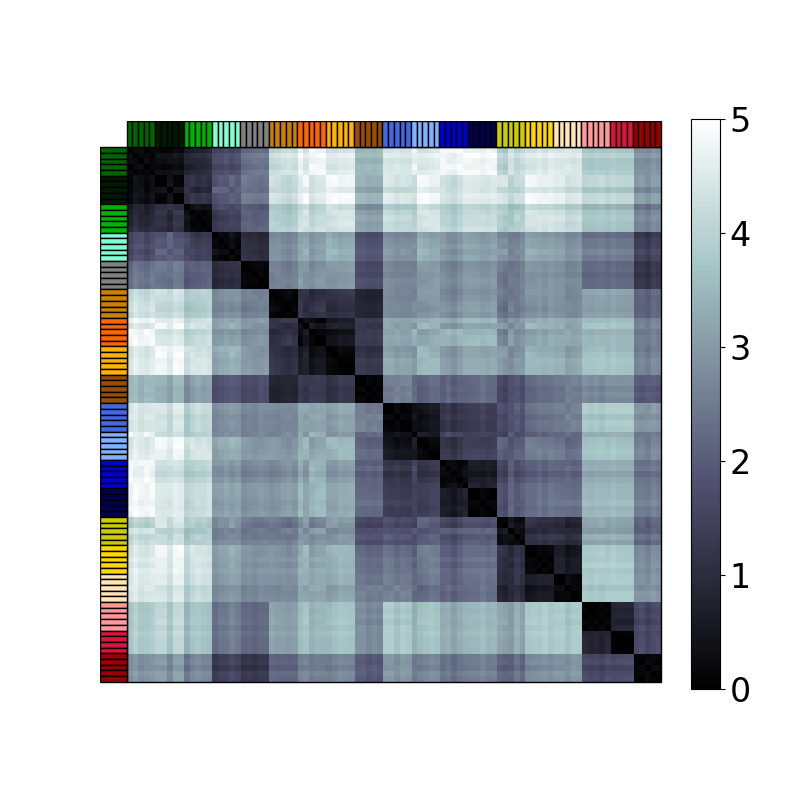}
		\includegraphics[trim={0.0cm 0.0cm 0.0cm 0.0cm},clip,width=\textwidth]{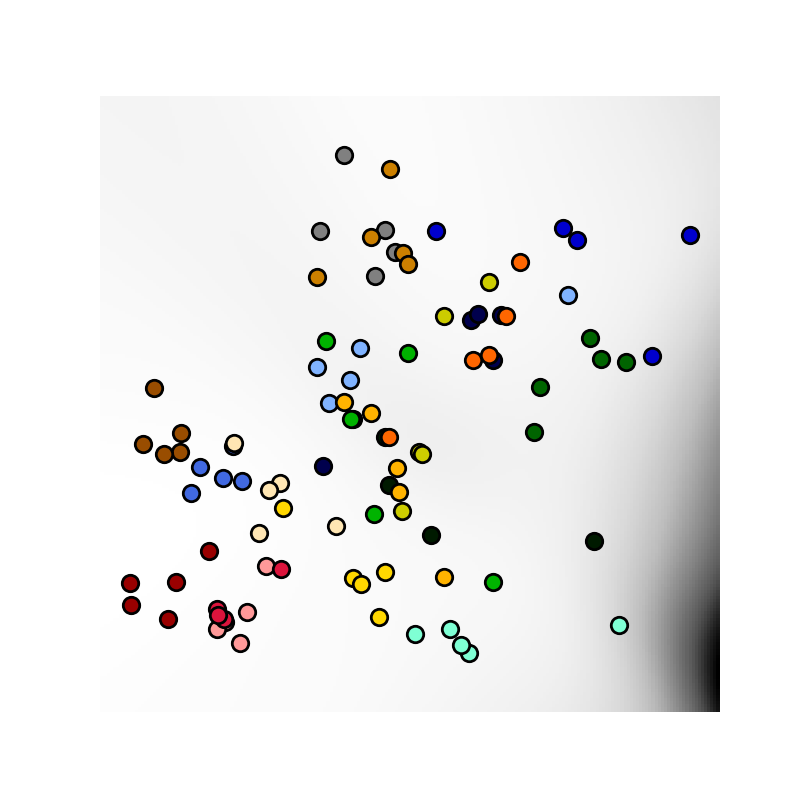}
		\includegraphics[trim={2.0cm 2.0cm 0.5cm 2.0cm},clip,width=0.9\textwidth]{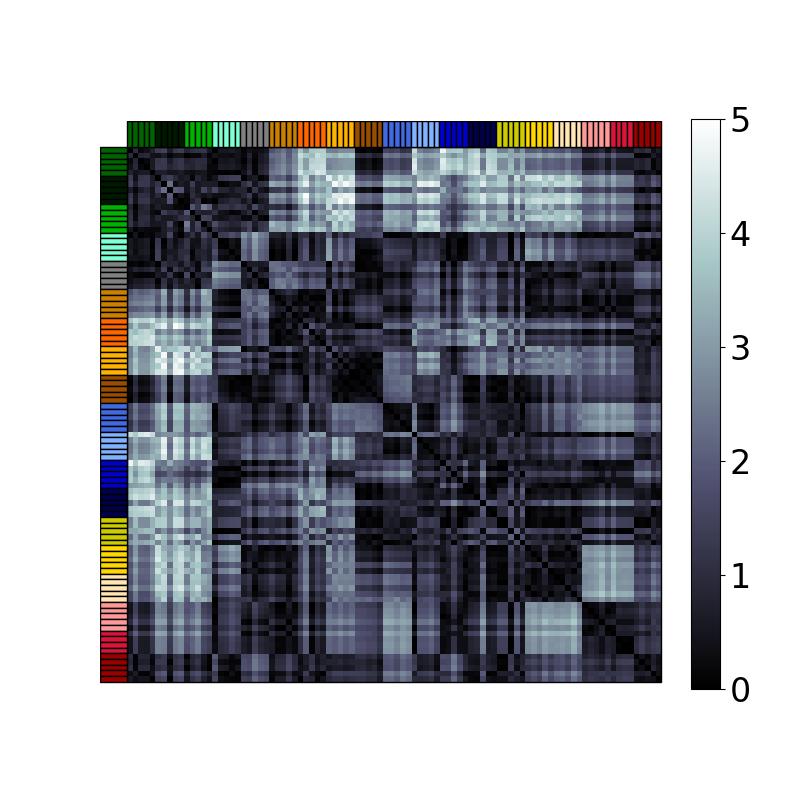}
		\caption{Vanilla (2d)}
		\label{fig:appendix:vae_baseline_2d:vanilla_grasps}
	\end{subfigure}%
	\begin{subfigure}[b]{0.15\textwidth}
		\centering
		\includegraphics[trim={2.5cm 2.5cm 2.5cm 2.5cm},clip,width=\textwidth]{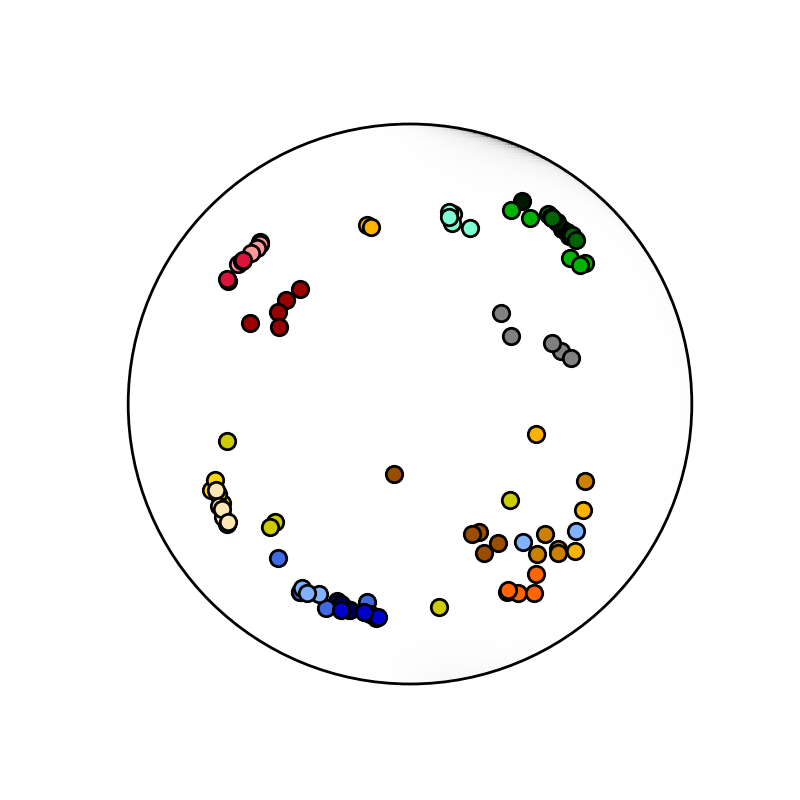}
		\includegraphics[trim={2.0cm 2.0cm 0.5cm 2.0cm},clip,width=0.9\textwidth]{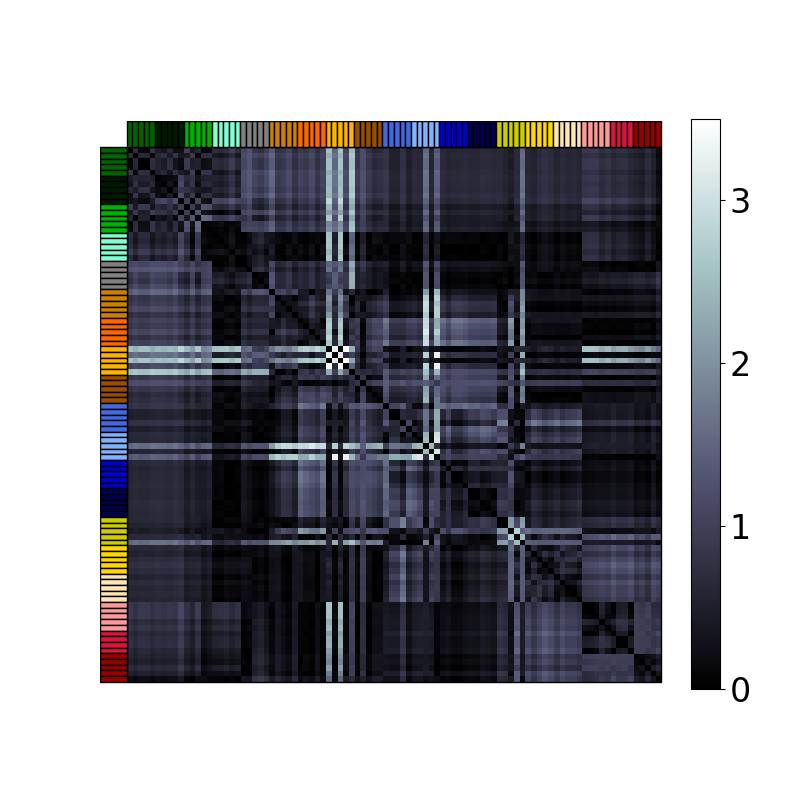}
		\includegraphics[trim={0.0cm 0.0cm 0.0cm 0.0cm},clip,width=\textwidth]{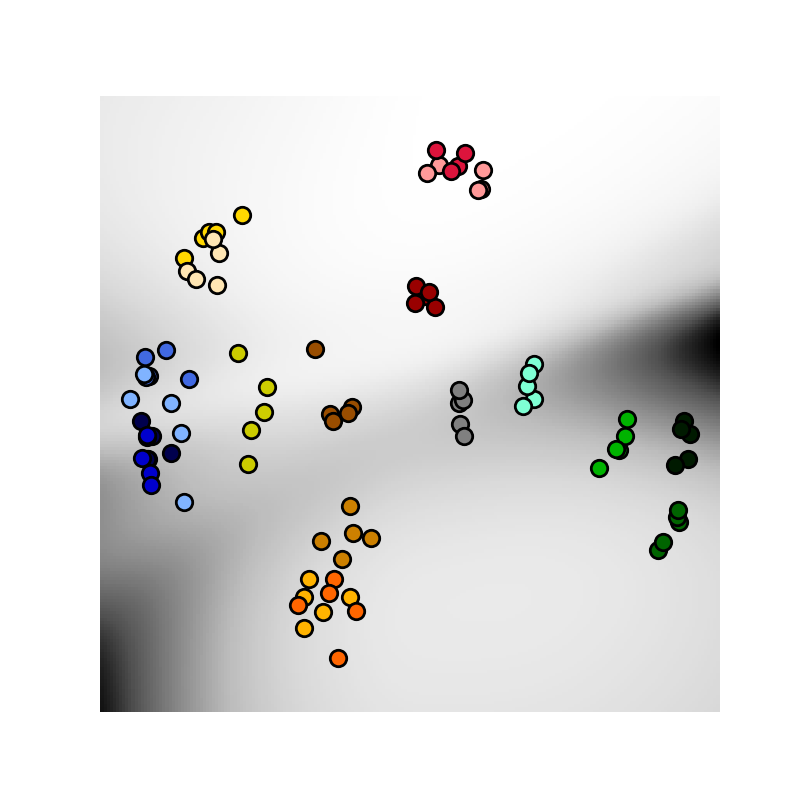}
		\includegraphics[trim={2.0cm 2.0cm 0.5cm 2.0cm},clip,width=0.9\textwidth]{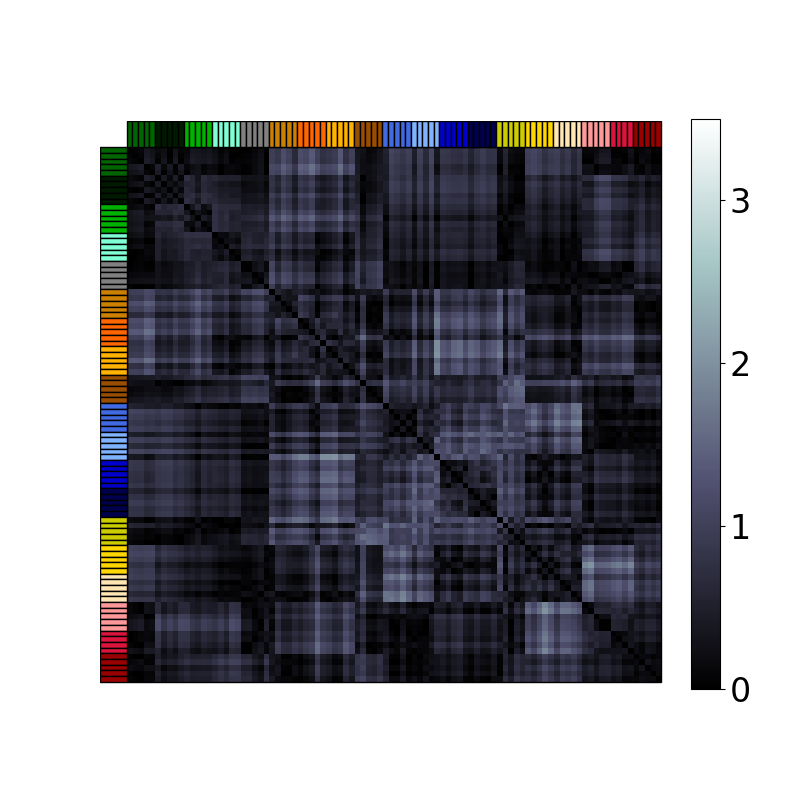}
		\caption{Stress (2d)}
		\label{fig:appendix:vae_baseline_2d:stress_grasps}
	\end{subfigure}%
    \begin{subfigure}[b]{0.15\textwidth}
		\centering
		\includegraphics[trim={2.0cm 1.8cm 2.0cm 1.8cm},clip,width=\textwidth]{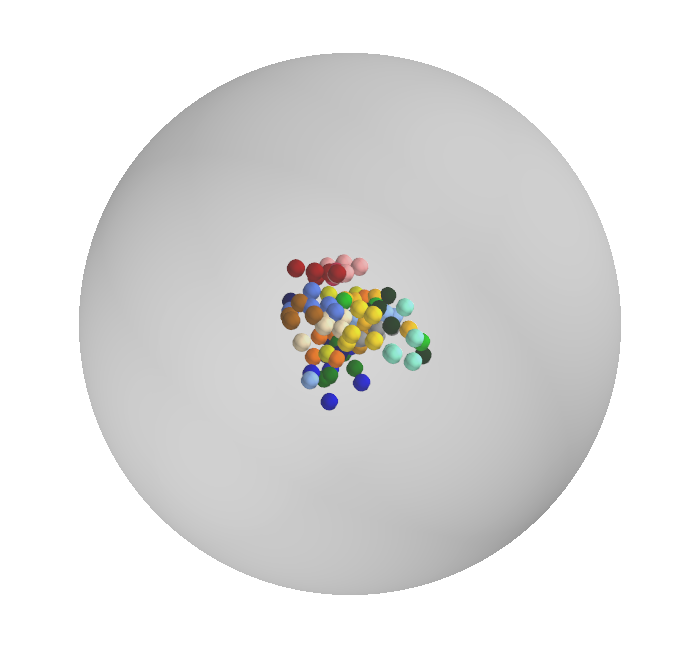}
		\includegraphics[trim={2.0cm 2.0cm 0.5cm 2.0cm},clip,width=0.9\textwidth]{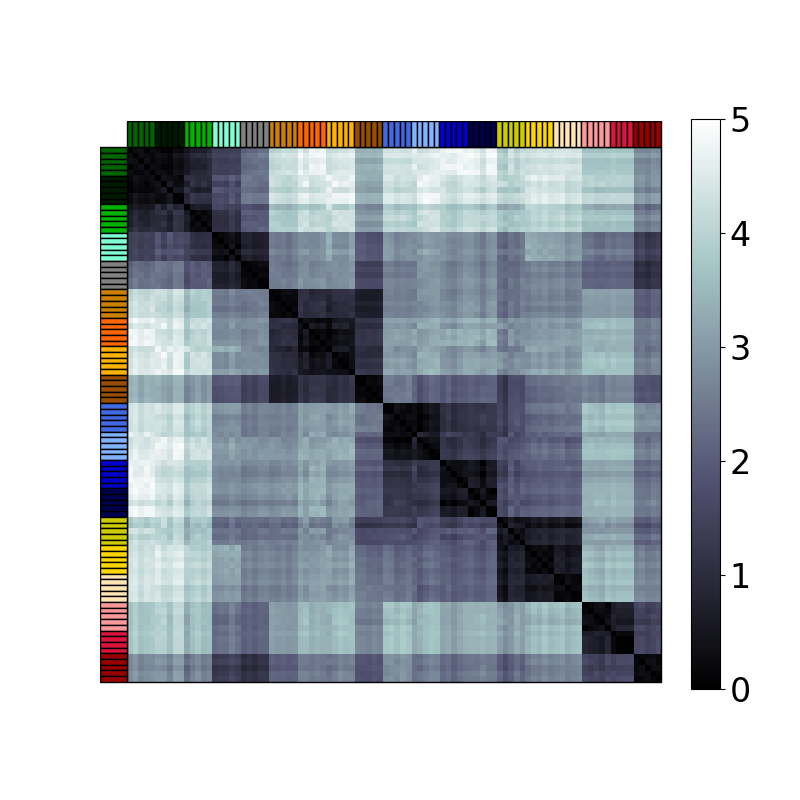}
  
        \vspace{0.2cm}
        
		\includegraphics[trim={0.0cm 0.0cm 0.0cm 0.0cm},clip,width=\textwidth]{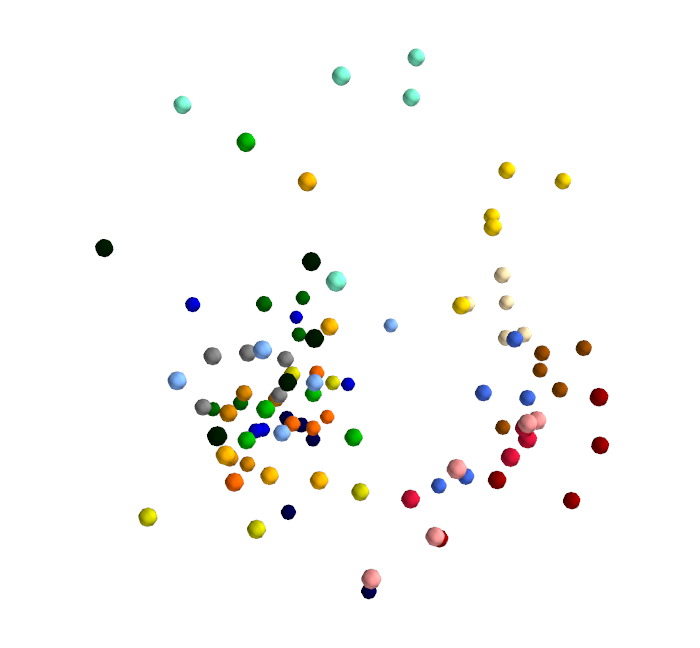}
		\includegraphics[trim={2.0cm 2.0cm 0.5cm 2.0cm},clip,width=0.9\textwidth]{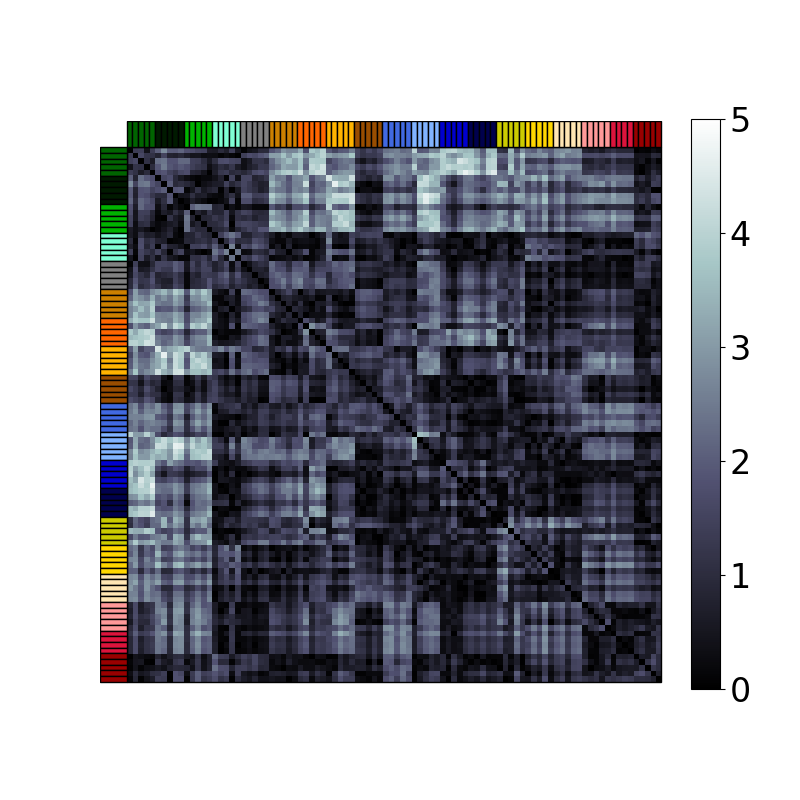}
		\caption{Vanilla (3d)}
		\label{fig:appendix:vae_baseline_3d:vanilla_grasps}
	\end{subfigure}%
	\begin{subfigure}[b]{0.15\textwidth}
		\centering
		\includegraphics[trim={2.0cm 1.8cm 2.0cm 1.8cm},clip,width=\textwidth]{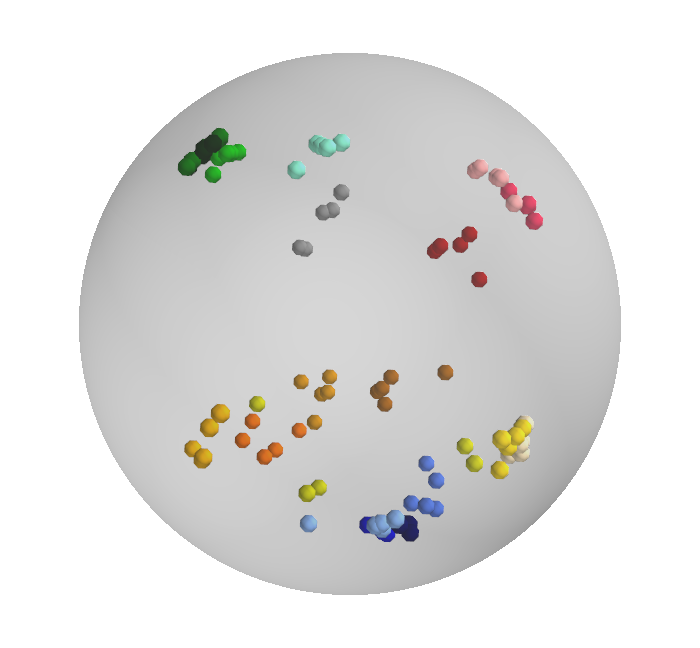}
		\includegraphics[trim={2.0cm 2.0cm 0.5cm 2.0cm},clip,width=0.9\textwidth]{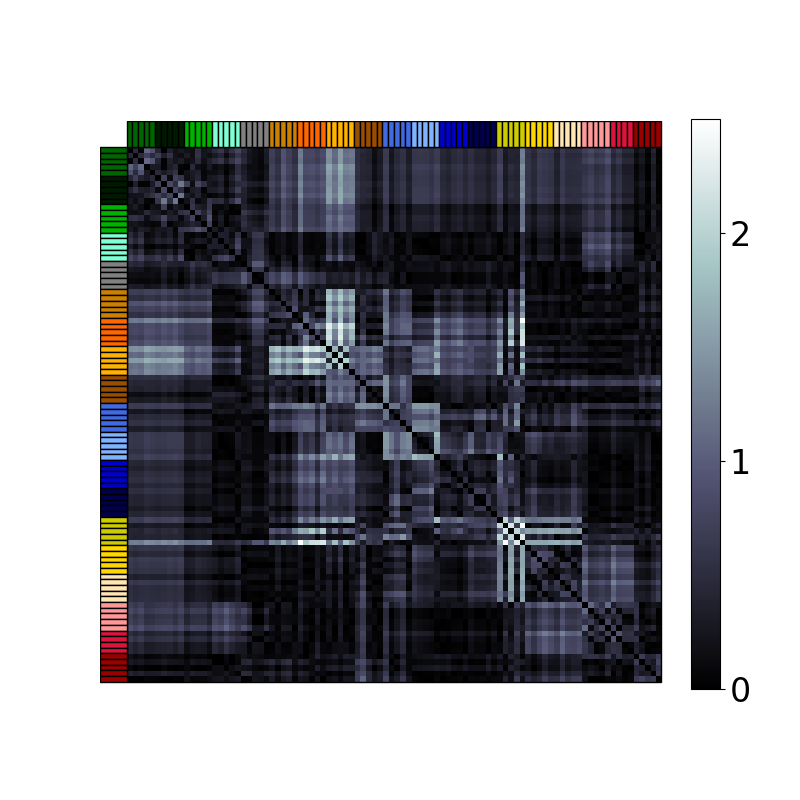}
  
        \vspace{0.2cm}
        
		\includegraphics[trim={0.0cm 0.0cm 0.0cm 0.0cm},clip,width=\textwidth]{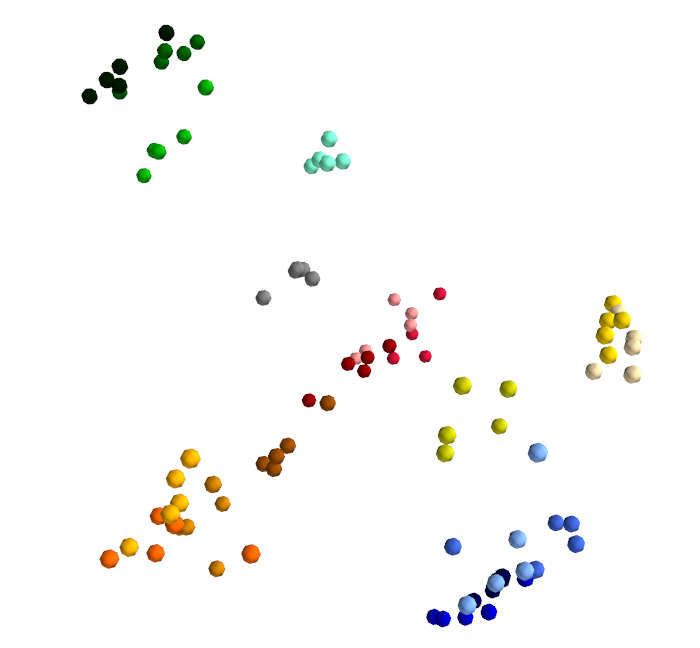}
		\includegraphics[trim={2.0cm 2.0cm 0.5cm 2.0cm},clip,width=0.9\textwidth]{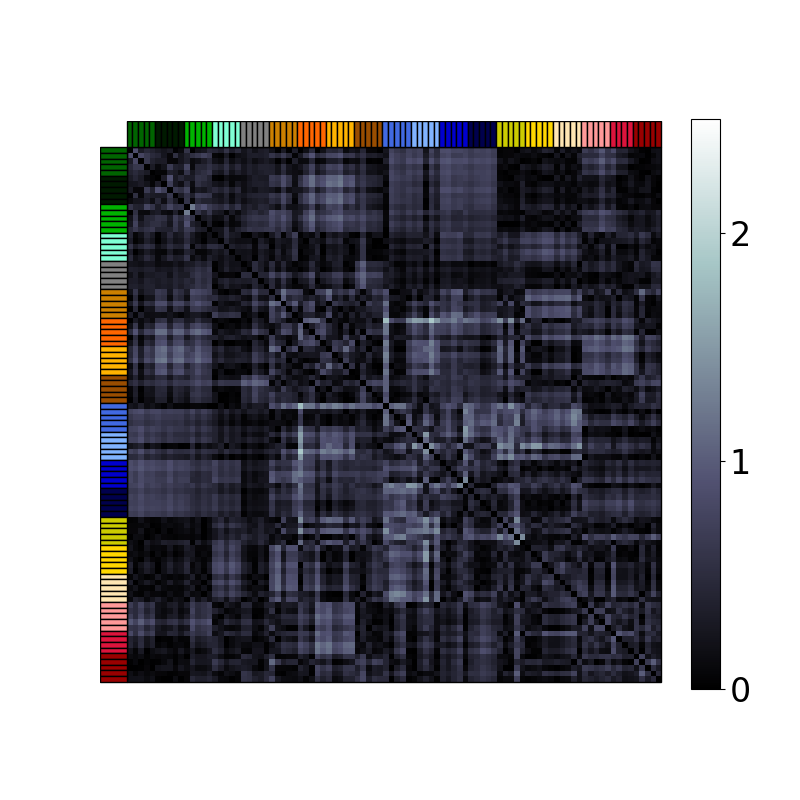}
		\caption{Stress (3d)}
		\label{fig:appendix:vae_baseline_3d:stress_grasps}
	\end{subfigure}%
    \vspace{-0.2cm}
	\caption{Embeddings of grasps with VAEs: The first and last two rows show the latent embeddings of the hyperbolic and Euclidean VAE in $\mathcal{P}^Q$ and $\mathbb{R}^Q$, followed by pairwise error matrices.}
    \vspace{-0.3cm}
	\label{fig:appendix:vae_baselines_grasps}
\end{figure}

\begin{figure}
	\centering
	\includegraphics[trim={5.3cm 2.2cm 4.3cm 2.2cm},clip,width=0.8\textwidth]{Figures/legend_semifull.png}
    \begin{subfigure}[b]{0.15\textwidth}
		\centering
		\includegraphics[trim={2.5cm 2.5cm 2.5cm 2.5cm},clip,width=\textwidth]{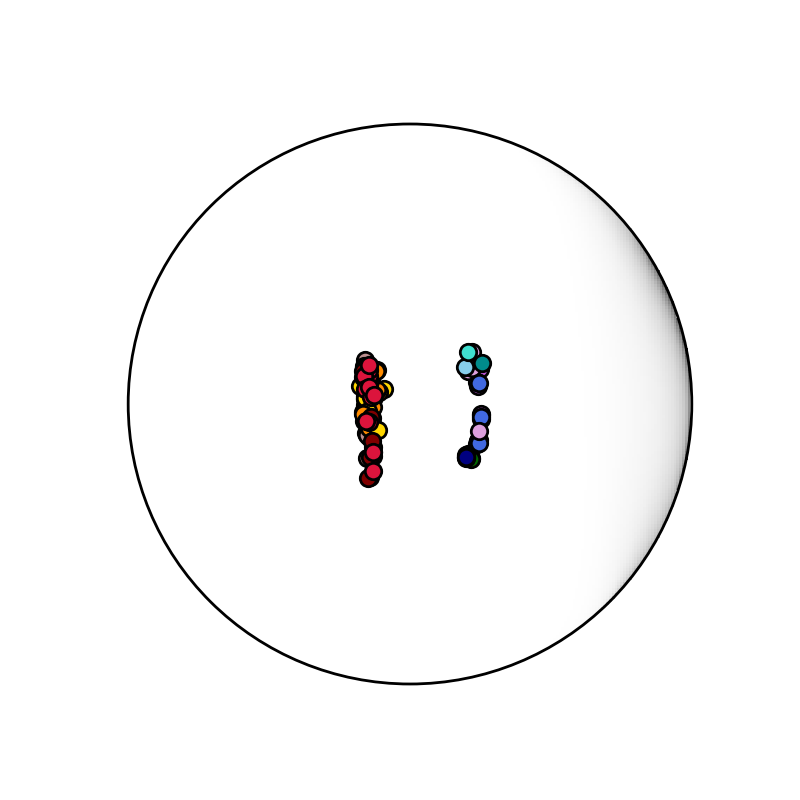}
		\includegraphics[trim={2.cm 2.0cm 1.0cm 2.0cm},clip,width=0.9\textwidth]{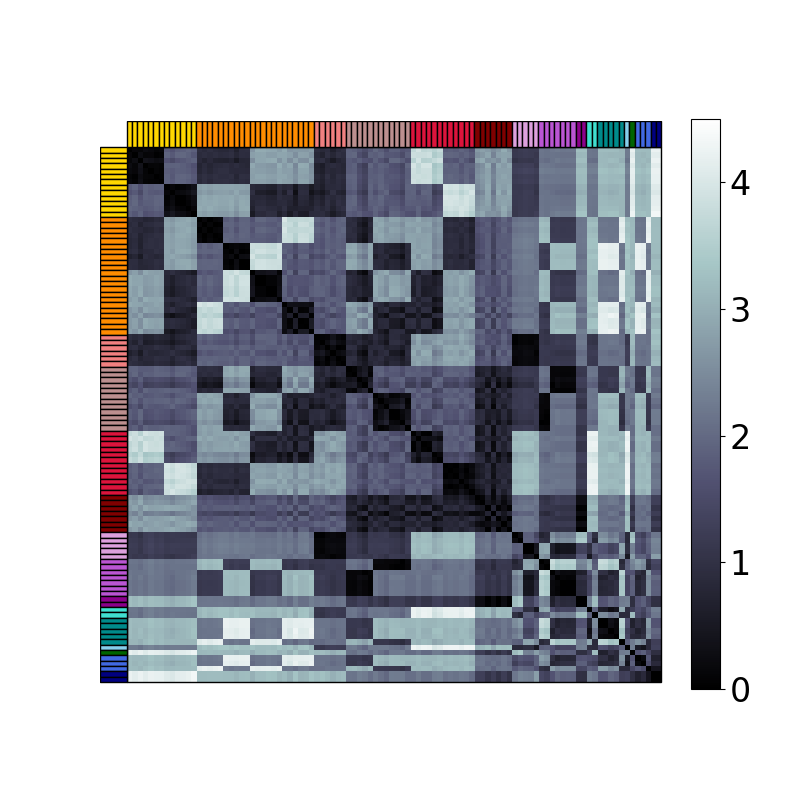}
		\includegraphics[trim={0.0cm 0.0cm 0.0cm 0.0cm},clip,width=\textwidth]{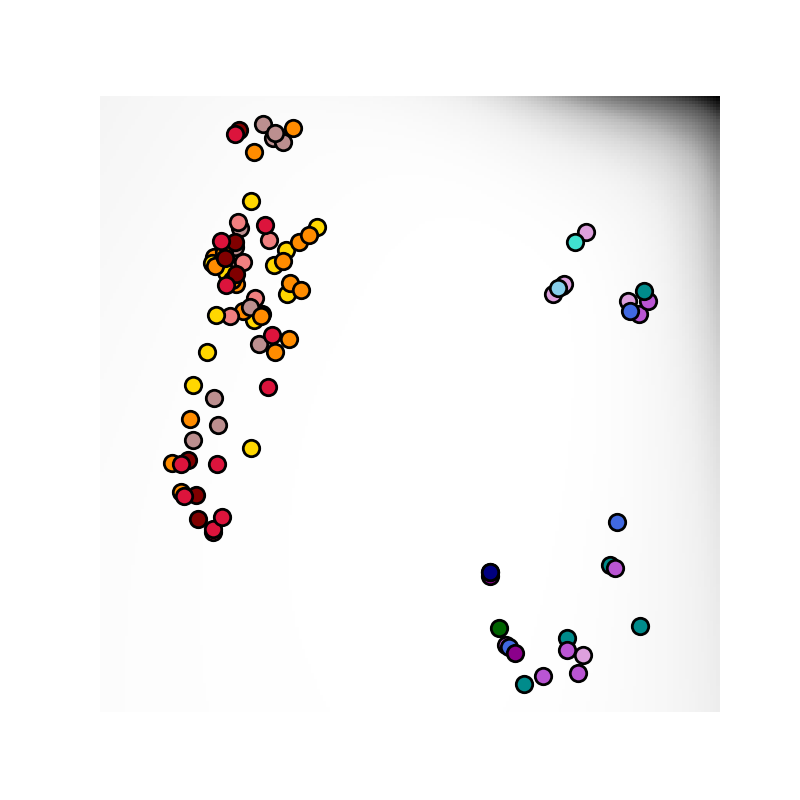}
		\includegraphics[trim={2.0cm 2.0cm 0.5cm 2.0cm},clip,width=0.9\textwidth]{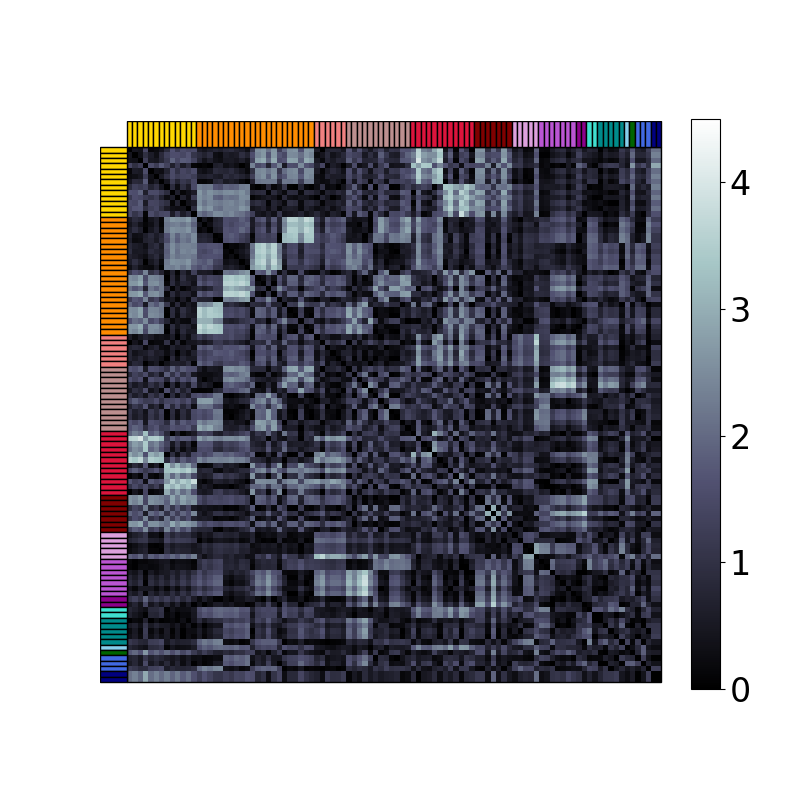}
		\caption{Vanilla (2d)}
		\label{fig:appendix:vae_baseline_2d:vanilla}
	\end{subfigure}%
	\begin{subfigure}[b]{0.15\textwidth}
		\centering
		\includegraphics[trim={2.5cm 2.5cm 2.5cm 2.5cm},clip,width=\textwidth]{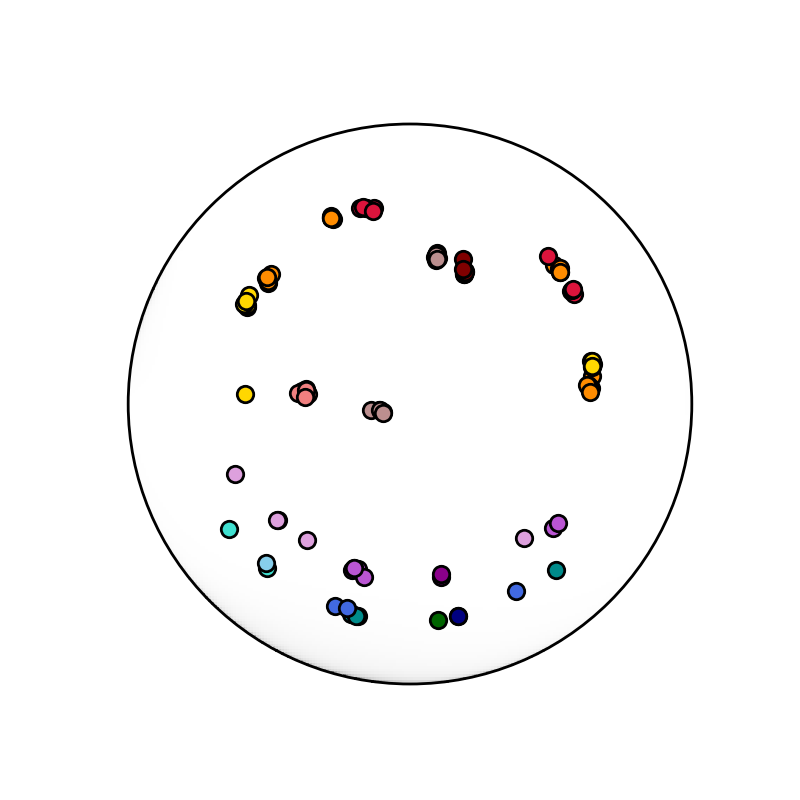}
		\includegraphics[trim={2.0cm 2.0cm 0.5cm 2.0cm},clip,width=0.9\textwidth]{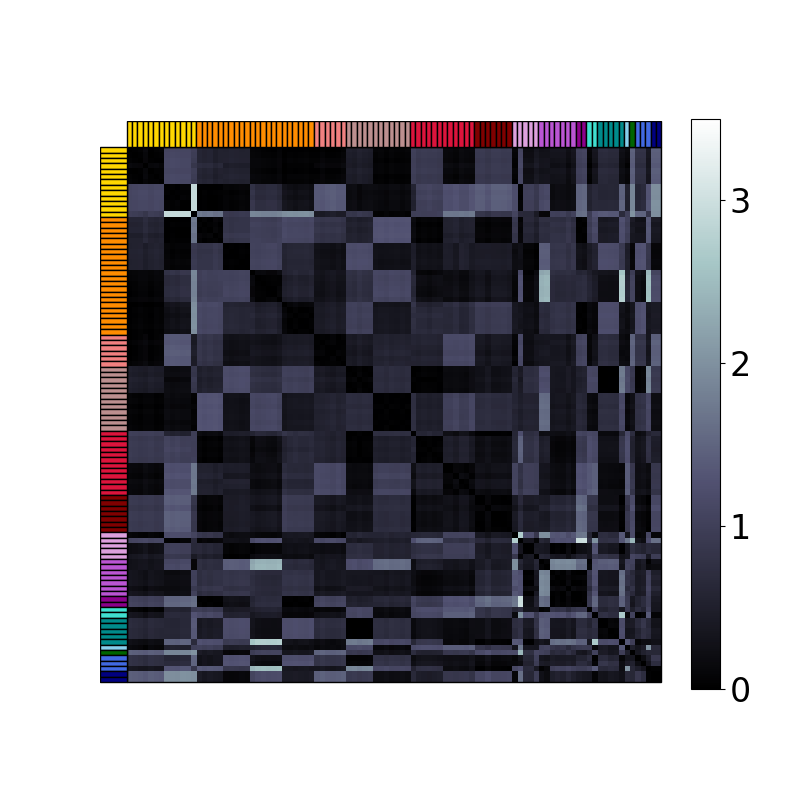}
		\includegraphics[trim={0.0cm 0.0cm 0.0cm 0.0cm},clip,width=\textwidth]{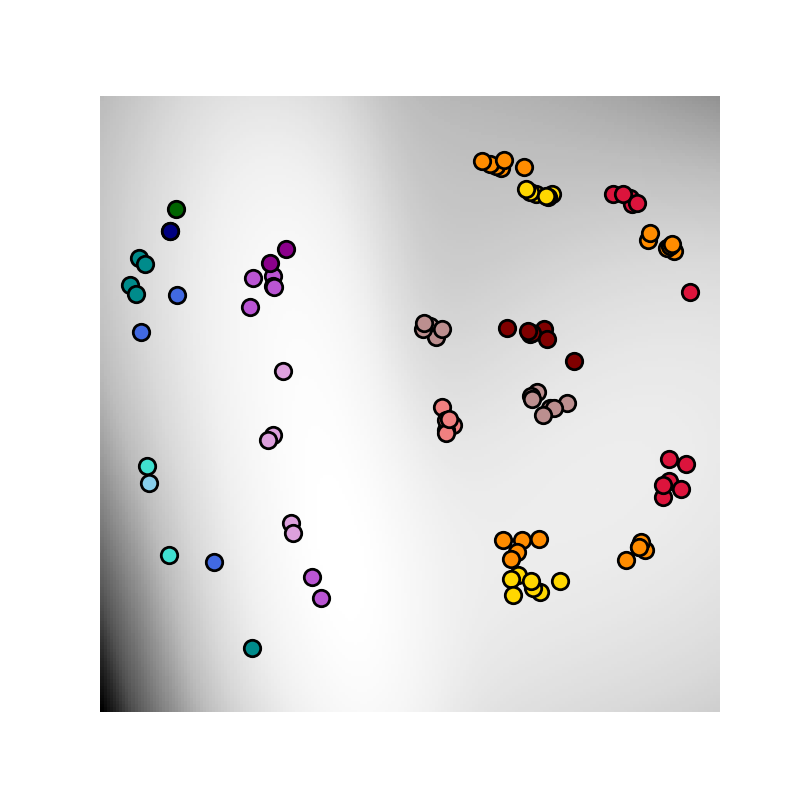}
		\includegraphics[trim={2.0cm 2.0cm 0.5cm 2.0cm},clip,width=0.9\textwidth]{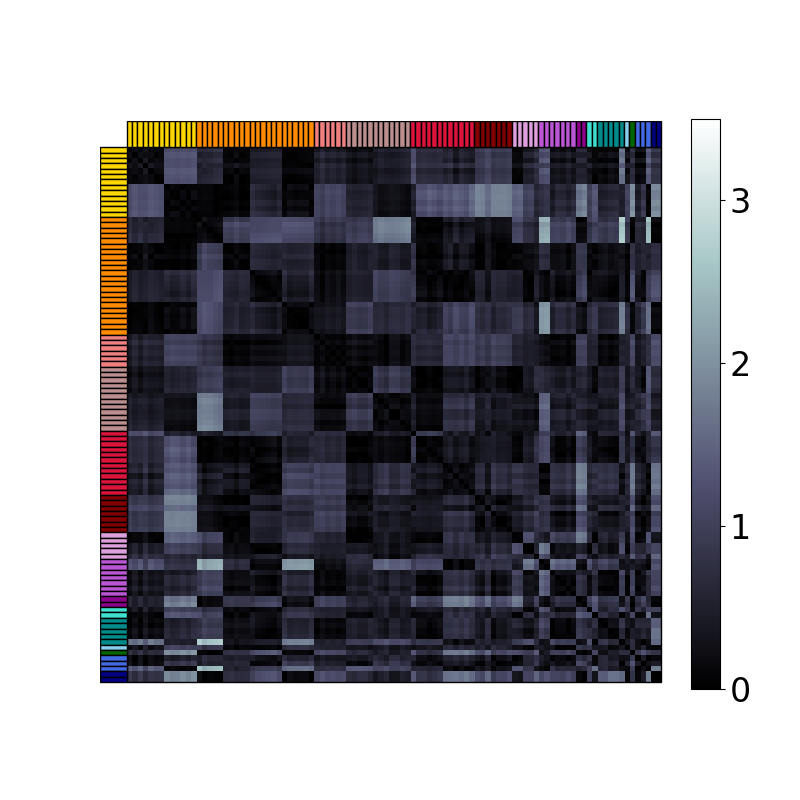}
		\caption{Stress (2d)}
		\label{fig:appendix:vae_baseline_2d:stress}
	\end{subfigure}%
    \begin{subfigure}[b]{0.15\textwidth}
		\centering
		\includegraphics[trim={2.0cm 1.8cm 2.0cm 1.8cm},clip,width=\textwidth]{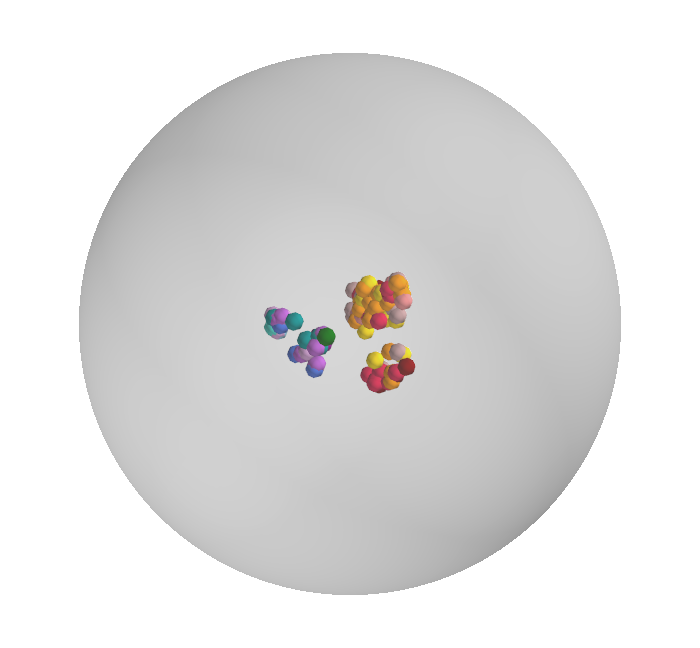}
		\includegraphics[trim={2.0cm 2.0cm 0.5cm 2.0cm},clip,width=0.9\textwidth]{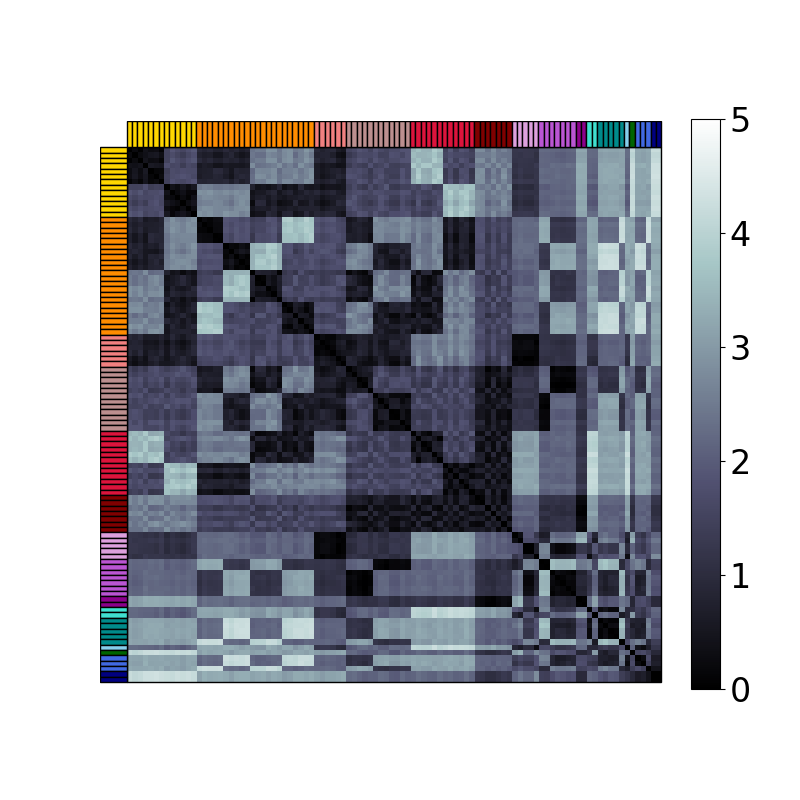}
    
        \vspace{0.2cm}
        
		\includegraphics[trim={0.0cm 0.0cm 0.0cm 0.0cm},clip,width=\textwidth]{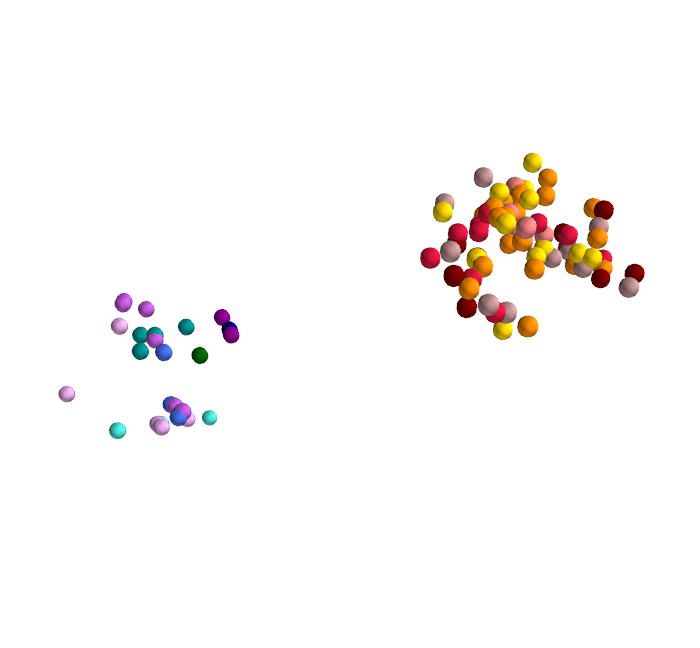}
		\includegraphics[trim={2.0cm 2.0cm 0.5cm 2.0cm},clip,width=0.9\textwidth]{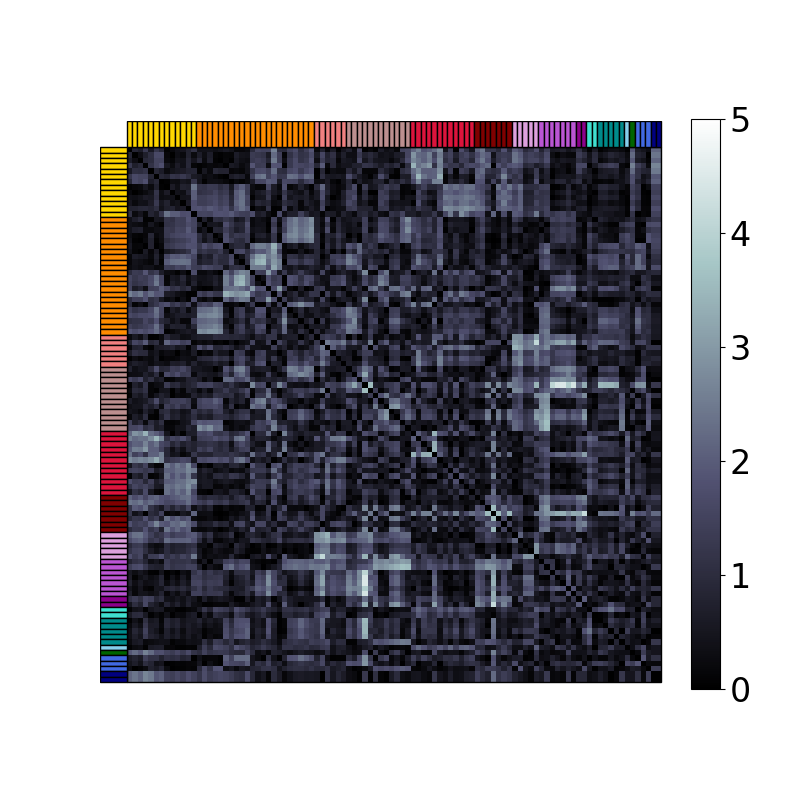}
		\caption{Vanilla (3d)}
		\label{fig:appendix:vae_baseline_3d:vanilla}
	\end{subfigure}%
	\begin{subfigure}[b]{0.15\textwidth}
		\centering
		\includegraphics[trim={2.0cm 1.8cm 2.0cm 1.8cm},clip,width=\textwidth]{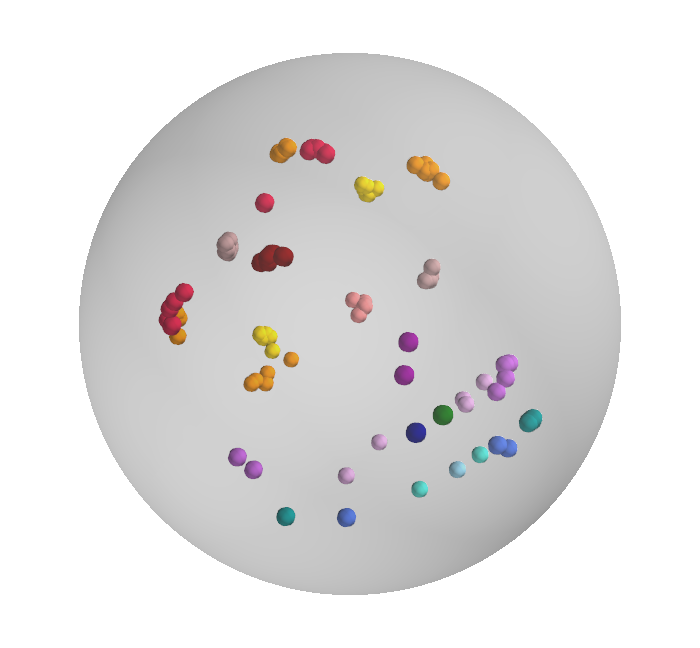}
		\includegraphics[trim={2.0cm 2.0cm 0.5cm 2.0cm},clip,width=0.9\textwidth]{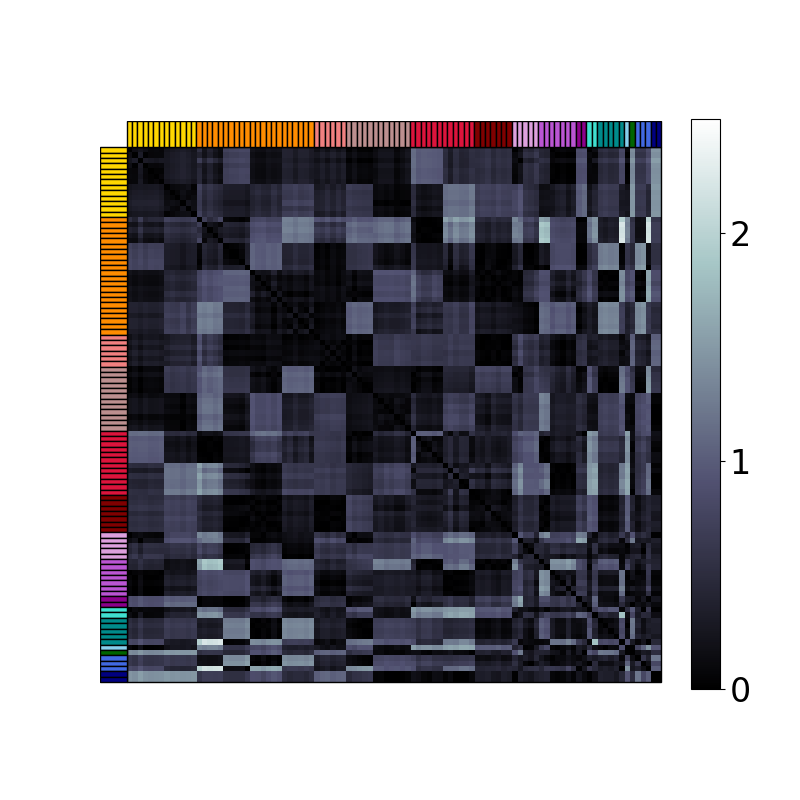}
    
        \vspace{0.2cm}
        
		\includegraphics[trim={0.0cm 0.0cm 0.0cm 0.0cm},clip,width=\textwidth]{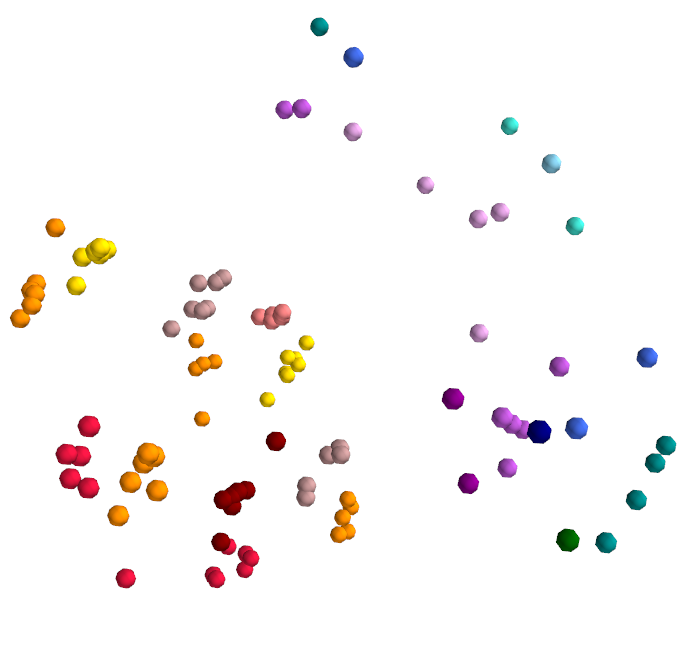}
		\includegraphics[trim={2.0cm 2.0cm 0.5cm 2.0cm},clip,width=0.9\textwidth]{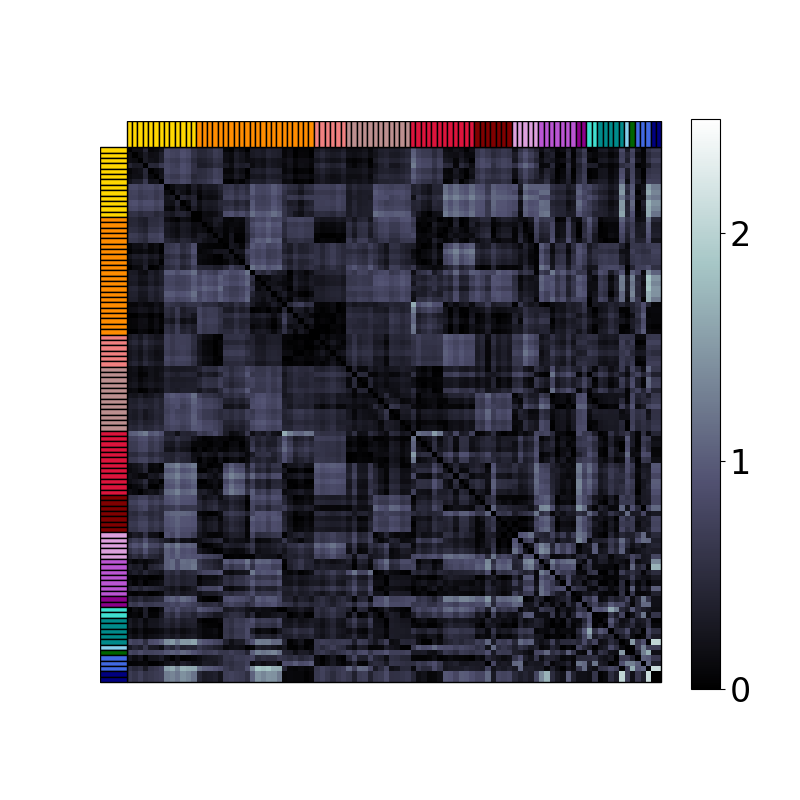}
		\caption{Stress (3d)}
		\label{fig:appendix:vae_baseline_3d:stress}
	\end{subfigure}%
    \vspace{-0.2cm}
	\caption{Embeddings of support poses with VAEs: The first and last two rows respectively show the latent embeddings of the hyperbolic and Euclidean VAE in $\mathcal{P}^Q$ and $\mathbb{R}^Q$, followed by pairwise error matrices.}
    \vspace{-0.3cm}
	\label{fig:appendix:vae_baselines_augmented_support_poses}
\end{figure}

In this section, we compare the trained GPHLVMs of Figs.~\ref{fig:GPHLVM_trained_models-bimanual},~\ref{fig:GPHLVM_trained_models-grasps}, and~\ref{fig:GPHLVM_trained_models} with two additional baselines: a vanilla variational autoencoder (VAE) and a hyperbolic variant of this VAE in which the latent space is the Lorentz model of hyperbolic geometry (akin to \cite{Mathieu19:HyperbolicVAE}). Both VAEs are designed with $12$ input nodes, $6$ hidden nodes, a $2$-dimensional latent space, and a symmetric decoder. Their encoder specifies the mean and standard deviation of a normal distribution (resp. wrapped normal for the hyperbolic VAE), and their decoder specifies the mean and standard deviation of the normal distribution that governs the reconstructions. Both models are trained by maximizing an Evidence Lower Bound (ELBO) under similar regimes as the GPHLVMs, i.e., $1000$ epochs with a learning rate of $0.05$. The KL divergence for the hyperbolic VAE is computed using Monte Carlo estimates.

Figs.~\ref{fig:appendix:vae_baselines_bimanual},~\ref{fig:appendix:vae_baselines_grasps}, and~\ref{fig:appendix:vae_baselines_augmented_support_poses} show the learned embeddings of the Euclidean and hyperbolic VAE with $2$ and $3$-dimensional latent spaces alongside the corresponding error matrices between geodesic and taxonomy graph distances for the bimanual manipulation, hand grasps and  support pose taxonomies. 
Although adding a stress regularization as for the GPHLVM helps preserve the graph distance structure, the embeddings of different classes are not as well separated as in our GPHLVM models (see Fig.~\ref{fig:appendix:vae_baselines_bimanual} vs~\ref{fig:GPHLVM_trained_models-bimanual}, Fig.~\ref{fig:appendix:vae_baselines_grasps} vs~\ref{fig:GPHLVM_trained_models-grasps}, and Fig.~\ref{fig:appendix:vae_baselines_augmented_support_poses} vs~\ref{fig:GPHLVM_trained_models}). Moreover, when compared to our proposed GPHLVM, all VAE models provide a subpar uncertainty modeling in their latent spaces. 

Table~\ref{table:mean_stress_and_reconstruction_of_models} shows that the VAE baselines result in higher average stress than the GPLVMs.  In other words, our proposed GPHLVM consistently outperforms all VAEs to encode meaningful taxonomy information in the latent space. Moreover, the GPLVMs consistently achieve a lower reconstruction error than the VAE baseline. We argue that VAEs are not the right tool for our target applications. When training VAEs, the Kullback-Leibler term in the ELBO tries to regularize the latent space to match a unit Gaussian. This regularization is in stark contrast with our goal of separating the embeddings to preserve the taxonomy graph distances.

\begin{table}[t]
    \caption{Average stress and reconstruction error per model, geometry, and regularization.}
    \vspace{-0.3cm}
    \label{table:mean_stress_and_reconstruction_of_models}
    \begin{center}
    \begin{small}
    \begin{sc}
    \resizebox{\textwidth}{!}{
		\begin{tabular}{llllllll}
			%\toprule
             &  & \multicolumn{3}{c}{\textbf{Stress}} & \multicolumn{3}{c}{\textbf{Reconstruction error}} \\
             \cmidrule(lr){3-5} \cmidrule(lr){6-8}
			     &  & \textbf{No reg.} & \textbf{Stress} & \textbf{BC + Stress} & \textbf{No reg.} & \textbf{Stress} & \textbf{BC + Stress} \\
			\toprule
   			\multirow{8}{*}{\parbox{2.2cm}{Bimanual manipulation categories}} & GPLVM, $\mathbb{R}^2$ &  $2.03\pm2.15$ &  $0.13\pm0.33$ &  $0.15\pm0.31$&  $0.01\pm0.02$ &  $0.01\pm0.01$ & $0.02\pm0.02$ \\
            & VAE, $\mathbb{R}^2$ & $1.70\pm1.97$ & \multicolumn{2}{c}{$0.12\pm0.20$} & $0.11\pm0.18$ & \multicolumn{2}{c}{$0.12\pm0.17$} \\
            & VAE, $\lorentz{2}$ & $1.89\pm1.85$ &  \multicolumn{2}{c}{$0.10\pm0.15$} & $0.12\pm0.18$ &  \multicolumn{2}{c}{$0.12\pm0.17$} \\
			& GPHLVM, $ \lorentz{2}$ &  $\bm{0.98\pm1.26}$ &  $\bm{0.11\pm0.33}$ &  $\bm{0.09\pm0.12}$&  $0.04\pm0.04$ &  $ 0.03\pm0.04$ & $0.04\pm0.04$ \\
            \cmidrule(lr){2-8}
			& GPLVM, $ \mathbb{R}^3$ &  $2.39\pm2.36$ &  $\bm{0.01\pm0.01}$ &  $0.20\pm0.38$&  $0.01\pm0.01$ &  $\bm{0.01\pm0.01}$ & $0.01\pm0.01$ \\
            & VAE, $\mathbb{R}^3$ & $2.58\pm2.76$ & \multicolumn{2}{c}{$0.05\pm0.09$} & $0.08\pm0.15$ & \multicolumn{2}{c}{$0.12\pm0.18$} \\
            & VAE, $\lorentz{3}$ & $1.76\pm1.84$ & \multicolumn{2}{c}{$0.11\pm0.17$} & $0.03\pm0.04$ & \multicolumn{2}{c}{$0.12\pm0.18$} \\
            & GPHLVM, $ \lorentz{3}$ &  $\bm{1.18\pm1.35}$ &  $\bm{0.01\pm0.03}$ &  $\bm{0.04\pm0.08}$&  $\bm{0.00\pm0.01}$ &  $\bm{0.01\pm0.01}$ & $\bm{0.00\pm0.01}$ \\
            \toprule
			\multirow{8}{*}{Grasps} & GPLVM, $\mathbb{R}^2$ &  $7.25\pm 5.40$ &  $0.39\pm 0.41$ &  $0.40\pm 0.44$& $\bm{0.04\pm0.04}$ & $\bm{0.03\pm0.03}$ & $\bm{0.03\pm0.03}$ \\   
            & VAE, $\mathbb{R}^2$ & $3.52\pm4.31$ & \multicolumn{2}{c}{$0.48\pm0.55$} & $0.11\pm0.12$ & \multicolumn{2}{c}{$0.13\pm0.15$} \\
            & VAE, $\lorentz{2}$ & $8.99\pm6.20$ & \multicolumn{2}{c}{$0.70\pm1.28$} & $0.13\pm0.16$ & \multicolumn{2}{c}{$0.14\pm0.15$} \\
			& GPHLVM, $ \lorentz{2}$ &  $\bm{5.47\pm 4.07}$ &  $\bm{0.14\pm 0.16}$ & $\bm{0.18 \pm 0.29}$ & $0.05\pm0.05$ & $0.08\pm0.07$ & $0.09\pm0.09$ \\
            \cmidrule(lr){2-8}
			& GPLVM, $ \mathbb{R}^3$ &  $\bm{8.15\pm 5.85}$ &  $0.14\pm 0.18$ &  $0.15\pm 0.19$& $0.03\pm0.03$ & $0.14\pm0.18$ & $0.15\pm0.19$\\
            & VAE, $\mathbb{R}^3$ & $2.71\pm3.47$ & \multicolumn{2}{c}{$0.25\pm0.32$} & $0.10\pm0.13$ & \multicolumn{2}{c}{$0.14\pm0.16$} \\
            & VAE, $\lorentz{3}$ & $8.28\pm5.94$ & \multicolumn{2}{c}{$0.33\pm0.59$} & $0.11\pm0.14$& \multicolumn{2}{c}{$0.12\pm0.14$} \\
            & GPHLVM, $ \lorentz{3}$ &  $8.37\pm 5.71$ &  $\bm{0.04\pm 0.08}$ &  $\bm{0.07\pm 0.18}$& $\bm{0.03\pm0.02}$ & $\bm{0.01\pm0.01}$ & $\bm{0.02\pm0.02}$ \\
            \toprule
            \multirow{8}{*}{\parbox{1.5cm}{Support poses}} & GPLVM, $\mathbb{R}^2$ &  $3.93\pm 3.97$ &  $0.58\pm 0.94$ &  $0.63\pm 0.94$& $\bm{0.05\pm0.05}$ & $0.17\pm0.18$ & $\bm{0.11\pm0.12}$ \\
            & VAE, $\mathbb{R}^2$ & $1.75\pm2.29$ & \multicolumn{2}{c}{$0.54\pm0.80$} & $0.15\pm0.18$ & \multicolumn{2}{c}{$0.18\pm0.20$} \\
            & VAE, $\lorentz{2}$ & $4.81\pm4.29$ & \multicolumn{2}{c}{$0.57\pm0.85$} & $0.18\pm0.21$ & \multicolumn{2}{c}{$0.18\pm0.20$} \\
			& GPHLVM, $ \lorentz{2}$ &  $\bm{2.05\pm 2.50}$ &  $\bm{0.51\pm 0.82}$ &  $\bm{0.53\pm 0.83}$& $0.07\pm0.07$ & $\bm{0.16\pm0.17}$ & $0.15\pm0.16$\\
            \cmidrule(lr){2-8}
			& GPLVM, $ \mathbb{R}^3$ &  $\bm{3.76\pm 3.74}$ &  $\bm{0.24\pm 0.40}$ &  $\bm{0.29\pm 0.39}$& $\bm{0.03\pm0.03}$ & $0.17\pm0.18$ & $\bm{0.08\pm0.09}$ \\
            & VAE, $\mathbb{R}^3$ & $2.10\pm2.64$ & \multicolumn{2}{c}{$0.31\pm0.40$} & $0.38\pm0.47$ & \multicolumn{2}{c}{$0.16\pm0.19$} \\
            & VAE, $\lorentz{3}$ & $4.53\pm4.23$ & \multicolumn{2}{c}{$0.38\pm0.55$} & $0.17\pm0.21$ & \multicolumn{2}{c}{$0.17\pm0.20$} \\
            & GPHLVM, $\lorentz{3}$ &  $3.78\pm 3.71$ &  $0.30\pm 0.38$ &  $0.35\pm 0.45$& $\bm{0.03\pm0.03}$ & $\bm{0.16\pm0.17}$ & $\bm{0.08\pm0.09}$\\
			\bottomrule
	\end{tabular}
 }
    \end{sc}
    \end{small}
    \end{center}
\end{table}

\subsection{Comparison against learned manifolds}
\label{app:comparison_Tosi}

We compare the proposed GPHLVM to a GPLVM that learns a Riemannian manifold from data~\citep{Tosi14:RiemannianGPLVM}. 
Fig.~\ref{fig:appendix:tosi_baselines} shows the learned latent space including the embeddings and the volume of the Riemannian metric of the learned manifold, alongside distance matrices for the three considered robotics taxonomies.
Overall, the model is unable to capture the local and global taxonomy structure. This is due to the fact that the learned Riemannian metric is designed to be high in regions with high uncertainty, thus leading to shortest paths, i.e., geodesics, avoiding these regions. As such, this model was not designed for hierarchical discrete data and does not embed any knowledge about the taxonomy. This is further reflected by the resulting high stress values (see Fig.~\ref{table:appendix:tosi_stresses}).

\begin{figure}
	\centering
	\includegraphics[trim={5.3cm 2.2cm 4.3cm 2.2cm},clip,width=0.8\textwidth]{Figures/legend_semifull.png}
  	\begin{subfigure}[b]{0.17\textwidth}
		\centering
		\includegraphics[trim={1.0cm 2.0cm 0.0cm 2.0cm},clip,width=\textwidth]{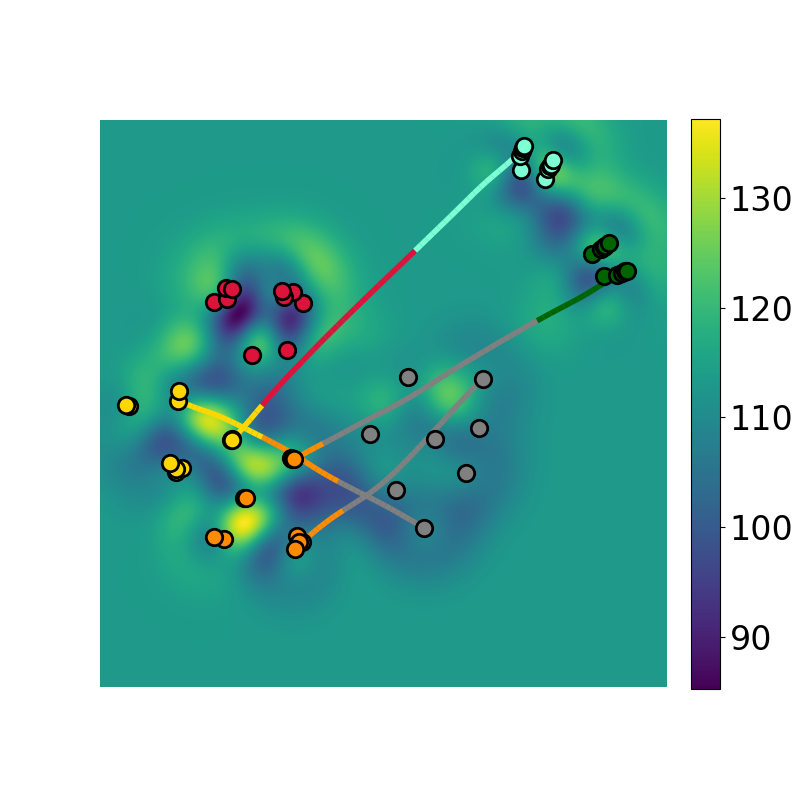}
		\includegraphics[trim={2.0cm 2.0cm 1.0cm 2.0cm},clip,width=0.9\textwidth]{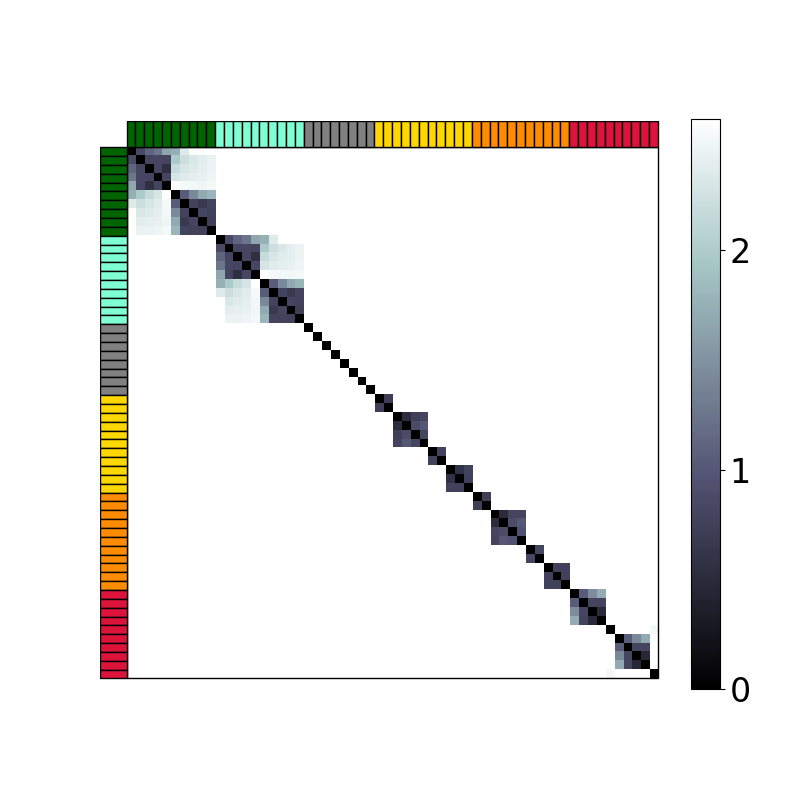}
		\caption{Bimanual categories}
		\label{fig:appendix:tosi-bimanual}
	\end{subfigure}%
	\begin{subfigure}[b]{0.17\textwidth}
		\centering
		\includegraphics[trim={1.0cm 2.0cm 0.0cm 2.0cm},clip,width=\textwidth]{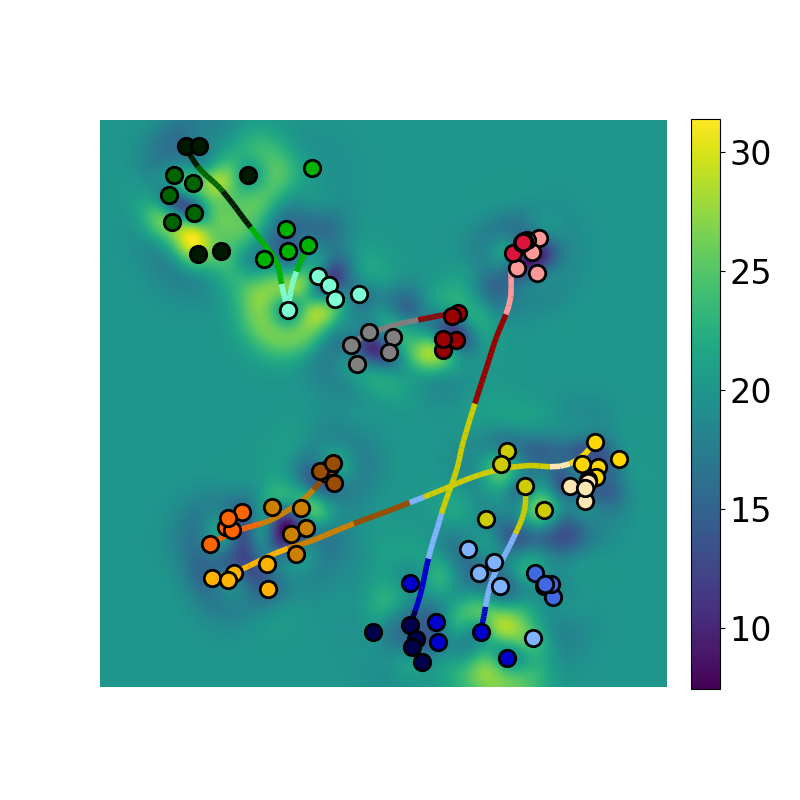}
		\includegraphics[trim={2.0cm 2.0cm 1.0cm 2.0cm},clip,width=0.9\textwidth]{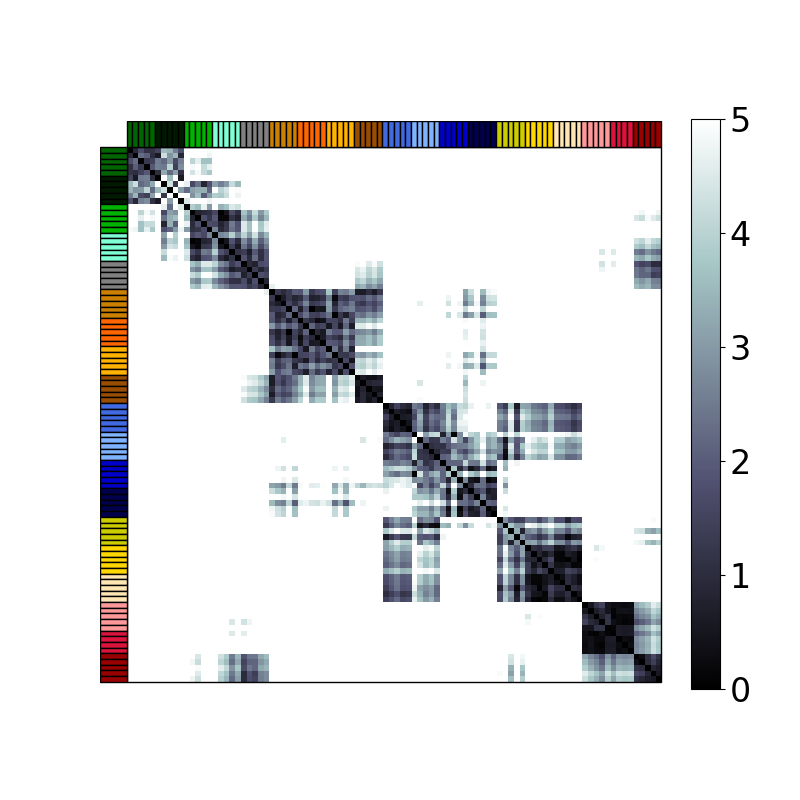}
		\caption{Grasps}
		\label{fig:appendix:tosi-grasps}
	\end{subfigure}%
    \begin{subfigure}[b]{0.17\textwidth}
		\centering
		\includegraphics[trim={1.0cm 2.0cm 0.0cm 2.0cm},clip,width=\textwidth]{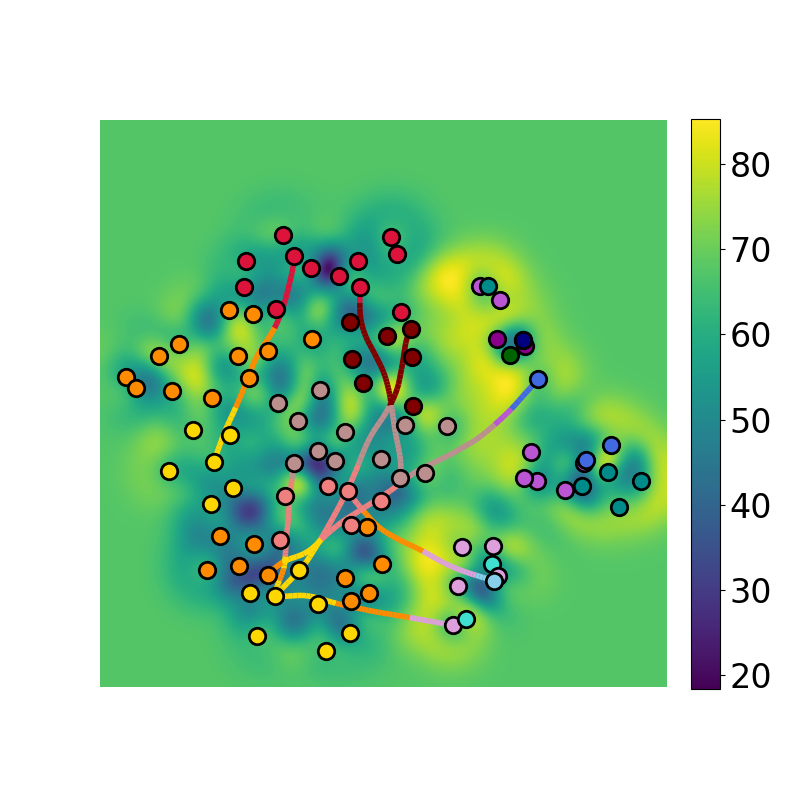}
		\includegraphics[trim={2.0cm 2.0cm 1.0cm 2.0cm},clip,width=0.9\textwidth]{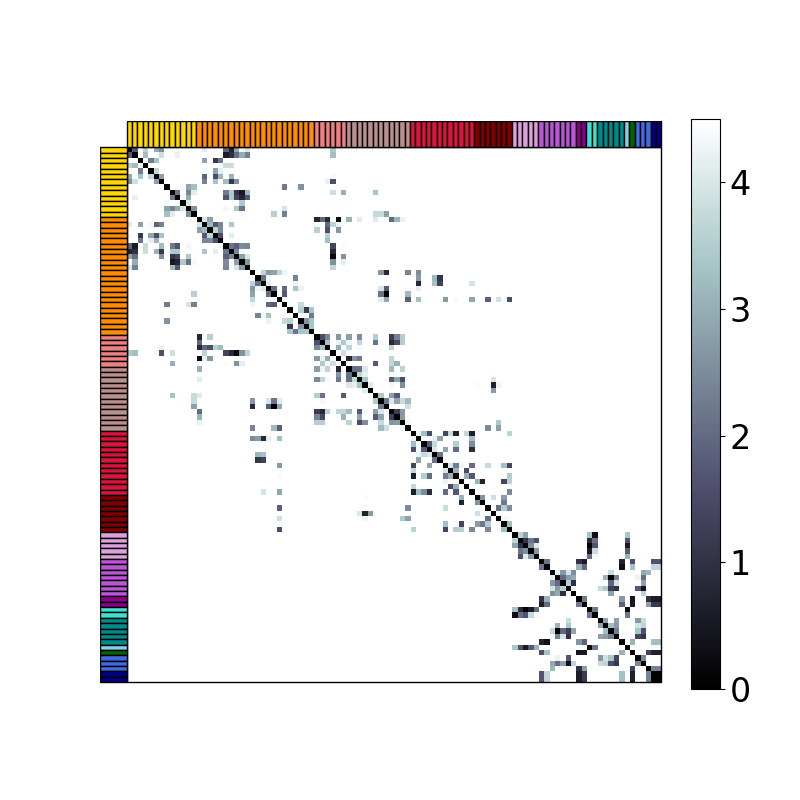}
		\caption{Support poses}
		\label{fig:appendix:tosi-support-poses}
	\end{subfigure}%
    \begin{subfigure}[b]{0.28\textwidth}
		\centering
		\resizebox{\textwidth}{!}{
		\begin{tabular}{ll}
			\toprule
			\textbf{Taxonomy}&  \textbf{Stress} \\
			\toprule
            \parbox{1.9cm}{Bimanual manipulation categories} &  $395.90\pm366.01$ \\
            \midrule
   			Grasps &  $108.56\pm106.11$ \\
            \midrule
			\parbox{1.9cm}{Support poses} &  $277.59\pm240.37$ \\
			\bottomrule
		\end{tabular}
	}
 
    \vspace{0.1cm}
 
	\caption{Average stress}
	\label{table:appendix:tosi_stresses}
	\end{subfigure}%
    \vspace{-0.2cm}
	\caption{Embeddings of taxonomy data on learned manifolds: The first row shows the latent spaces of the GPLVM. The background color is proportional to volume of the learned Riemannian metric. The second row displays the error matrix between the geodesic and taxonomy graph distances.}
    \vspace{-0.3cm}
	\label{fig:appendix:tosi_baselines}
\end{figure}

\clearpage

%%%%%%%%%%%%%%%%%%%%%%%%%%%%%%%%%%%%%%%%%%%%%%%%%%%%%%%%%%%%%%%%%%%%%%%%%%%%%%%
%%%%%%%%%%%%%%%%%%%%%%%%%%%%%%%%%%%%%%%%%%%%%%%%%%%%%%%%%%%%%%%%%%%%%%%%%%%%%%%

\end{document}